\pgfplotsset{compat=1.18}	 %
\pgfplotsset{compat/show suggested version=false}
\definecolor{olive}{rgb}{0.5, 0.5, 0.0}
\definecolor{maroon}{rgb}{0.69, 0.19, 0.38}
\definecolor{celestialblue}{rgb}{0.29, 0.59, 0.82}
\definecolor{darkgreen}{rgb}{0.0, 0.6, 0.0}
\definecolor{grey}{rgb}{0.5,0.5,0.5}
\definecolor{darkblue}{rgb}{0.19, 0.19, 0.62}
\definecolor{silver}{rgb}{0.7,0.7,0.7}
\definecolor{darkcyan}{rgb}{0.0, 0.55, 0.55}
\newcommand{\vparagraph}[1]{\vspace*{-1mm}\paragraph{#1}}
\theoremstyle{plain}
\newtheorem{theorem}{Theorem}[section]
\newtheorem{proposition}[theorem]{Proposition}
\newtheorem{lemma}[theorem]{Lemma}
\newtheorem{corollary}[theorem]{Corollary}
\theoremstyle{definition}
\newtheorem{assumption}[theorem]{Assumption}
\theoremstyle{remark}
\newtheorem{remark}[theorem]{Remark}
\newcommand{\assign}{:=}
\newcommand{\mathd}{\mathrm{d}}
\newcommand{\tmcolor}[2]{{\color{#1}{#2}}}
\newcommand{\tmmathbf}[1]{\ensuremath{\boldsymbol{#1}}}
\newcommand{\tmop}[1]{\ensuremath{\operatorname{#1}}}
\newcommand{\tmtextbf}[1]{\text{{\bfseries{#1}}}}
\newcommand{\tmtextit}[1]{\text{{\itshape{#1}}}}
\newcommand{\vect}[1]{\bm{#1}}
\newcommand{\x}{\mathbf{x}}
\newcommand{\y}{\mathbf{y}}
\newcommand{\epsilonv}{\vect\epsilon}
\newcommand{\kv}{\vect k}
\newcommand{\uv}{\vect u}
\newcommand{\Av}{\vect A}
\newcommand{\Bv}{\vect B}
\title{SEEDS: Exponential SDE Solvers for Fast High-Quality Sampling from Diffusion Models}
\author{%
  Martin~Gonzalez\thanks{Corresponding author:  \href{mailto:martin.gonzalez@irt-systemx.fr}{\texttt{martin.gonzalez@irt-systemx.fr}}} \\
  IRT SystemX \\
   \And
   Nelson~Fernandez \\
   Air Liquide \\
   \And
   Thuy Tran \\
   IRT SystemX \\
   \And
   Elies Gherbi \\
   IRT SystemX \\
   \And
   Hatem Hajri \\
   IRT SystemX \\ \& Safran \\
   \And
   Nader Masmoudi \\
   New York University \\
}
\begin{document}
\newcommand{\genTrainingPlotImgcOrig}{%
(0, 2.6363)
(5, 2.2398)
(10, 2.0252)
(15, 1.9378)
(20, 1.8534)
(25, 1.7925)
(30, 1.7597)
(35, 1.7399)
(40, 1.6788)
(45, 1.6419)
(50, 1.6459)
(55, 1.6285)
(60, 1.618)
(65, 1.6022)
(70, 1.6166)
(75, 1.5726)
(80, 1.5874)
(85, 1.6096)
(90, 1.5889)
(95, 1.6165)
(100, 1.5827)
}

\newcommand{\genTrainingPlotImgcOrigLo}{%
(0, 2.6363)
(5, 2.2398)
(10, 2.0252)
(15, 1.9378)
(20, 1.8534)
(25, 1.7925)
(30, 1.7597)
(35, 1.7399)
(40, 1.6788)
(45, 1.6419)
(50, 1.6459)
(55, 1.6285)
(60, 1.618)
(65, 1.6022)
(70, 1.6166)
(75, 1.5726)
(80, 1.5874)
(85, 1.6096)
(90, 1.5889)
(95, 1.6165)
(100, 1.5827)
}

\newcommand{\genTrainingPlotImgcOrigHi}{%
(0, 2.715)
(5, 2.3212)
(10, 2.1144)
(15, 2.0056)
(20, 1.9126)
(25, 1.8176)
(30, 1.7838)
(35, 1.7964)
(40, 1.7284)
(45, 1.7034)
(50, 1.6926)
(55, 1.6852)
(60, 1.6732)
(65, 1.6715)
(70, 1.6736)
(75, 1.6274)
(80, 1.6567)
(85, 1.6528)
(90, 1.6198)
(95, 1.6314)
(100, 1.6554)
}

\newcommand{\genTrainingPlotImgcEDM}{%
(0, 2.2246)
(5, 1.8202)
(10, 1.6285)
(15, 1.5386)
(20, 1.478)
(25, 1.436)
(30, 1.4037)
(35, 1.4192)
(40, 1.365)
(45, 1.3639)
(50, 1.3651)
(55, 1.3823)
(60, 1.3693)
(65, 1.3878)
(70, 1.3801)
(75, 1.3977)
(80, 1.4338)
(85, 1.449)
(90, 1.4653)
(95, 1.4788)
(100, 1.4859)
}

\newcommand{\genTrainingPlotImgcSEEDS}{%
(0, 1.66643)
(6, 1.46441)
(7, 1.47835)
(8, 1.45303)
(9, 1.4388)
(10, 1.41371)
(11, 1.38183)
(12, 1.41755)
(13, 1.42523)
(14, 1.4023)
(20, 1.50312)
}

\newcommand{\genTrainingPlotImgcEDMLo}{%
(0, 2.2246)
(5, 1.8202)
(10, 1.6285)
(15, 1.5386)
(20, 1.478)
(25, 1.436)
(30, 1.4037)
(35, 1.4192)
(40, 1.365)
(45, 1.3639)
(50, 1.3651)
(55, 1.3823)
(60, 1.3693)
(65, 1.3878)
(70, 1.3801)
(75, 1.3977)
(80, 1.4338)
(85, 1.449)
(90, 1.4653)
(95, 1.4788)
(100, 1.4859)
}

\newcommand{\genTrainingPlotImgcEDMHi}{%
(0, 2.2246)
(5, 1.8621)
(10, 1.6915)
(15, 1.5913)
(20, 1.5428)
(25, 1.4666)
(30, 1.4549)
(35, 1.4505)
(40, 1.4049)
(45, 1.4137)
(50, 1.4202)
(55, 1.446)
(60, 1.4265)
(65, 1.4225)
(70, 1.4526)
(75, 1.4308)
(80, 1.4598)
(85, 1.4863)
(90, 1.5)
(95, 1.5114)
(100, 1.514)
}

\newcommand{\genTrainingPlotImgcOrigMarks}{%
(75, 1.5726)
}

\newcommand{\genTrainingPlotImgcEDMMarks}{%
(40, 1.365)
}

\newcommand{\drawSeedsThree}{%
(36, 226.734)
(39, 189.543)
(42, 151.946)
(60, 25.0673)
(90, 3.19528)
(120, 2.17488)
(123, 2.15103)
(126, 2.12534)
(129, 2.082)
(150, 2.1534)
(180, 2.199)
}

\newcommand{\drawSeedsThreeMarks}{%
(129, 2.082)
}

\newcommand{\drawSeedsTwo}{%
(8, 438.937)
(18, 308.343)
(28, 210.349)
(48, 18.678)
(68, 3.303)
(88, 2.332)
(108, 2.349)
(148, 2.365)
}

\newcommand{\drawSeedsOne}{%
(5, 396.04)
(10, 281.30)
(15, 257.426)
(25, 117.109)
(35, 57.267)
(45, 32.584 )
(55, 21.143)
(100, 6.413)
(120, 4.912)
(150, 3.618)
(200, 2.8152)
(300, 2.349)
}

\newcommand{\drawDpmThree}{%
(9, 50.5133)
(18, 11.8015)
(30, 6.18883)
(48, 4.74407)
(66, 4.27767)
(87, 3.99994)
(105, 3.86144)
(147, 3.68234)
(195, 3.57076)
}

\newcommand{\drawDpmTwo}{%
(8, 16.768)
(18, 4.66595)
(28, 3.87879)
(48, 3.72524)
(68, 3.65129)
(88, 3.61933)
(108, 3.58095)
(148, 3.52136)
}

\newcommand{\drawDpmOne}{%
(9, 15.062)
(14, 9.71389)
(24, 6.22725)
(34, 4.95124)
(44, 4.29372)
(54, 3.92382)
(74, 3.5164)
(99, 3.29297)
}

\newcommand{\drawCifarSeedsThree}{%
(9	,	483.044)
(12	,	482.192)
(15	,	479.639)
(21	,	462.612)
(30	,	280.484)
(51	,	62.620)
(54	,	43.352)
(60	,	21.592)
(66	,	12.448)
(81	,	5.093)
(90	,	4.051)
(99	,	3.531)
(102	,	3.393)
(108	,	3.33)
(120	,	3.25)
(141	,	3.135)
(150	,	3.1254)
 (159	,	3.1247)
(201	,	3.081)
(300	,	3.146)
(510, 3.244)
}

\newcommand{\drawCifarSeedsTwo}{%
(10	,	481.097)
(12	,	473.484)
(16	,	430.986)
(20		,305.885)
(30		,223.019)
(40		,51.435)
(50		,11.100)
(60		,4.823)
(70		,3.611)
(80		,3.288)
(90		,3.195)
(100		,3.197)
(200		,3.241)
}

\newcommand{\drawCifarSeedsOne}{%
(10	,	303.481)
(12	,	239.799)
(15	,	279.846)
(20	,	192.683)
(30	,	84.781)
(40	,	45.265)
(50	,	28.186)
(100	,	8.244)
(200	,	4.070)
(300	,	3.403)
(400	,	3.245)
(500	,	3.138)
(700	,	3.159)
}

\newcommand{\drawCifarDpmsolverThree}{%
(9	,	66.925)
(12	,	9.722)
(15	,	5.320)
(18	,	3.897)
(21	,	3.830)
(24	,	3.737)
(27	,	3.676)
(30	,	3.666)
(36	,	3.629)
(45	,	3.604)
(51	,	3.611)
(60	,	3.591)
(66	,	3.595)
(99	,	3.589)
(126, 3.586)
(150	,	3.593)
(201, 3.583)
(510, 3.585)
}

\newcommand{\drawCifarDpmsolverTwo}{%
(10	,	12.225)
(12	,	6.523)
(16	,	4.550)
(30	,	3.757)
(40	,	3.689)
(50	,	3.640)
(100,	3.601)
}

\newcommand{\drawCifarDpmsolverOne}{%
(10	,	22.907)
(12	,	17.735)
(15	,	13.367)
(20	,	9.782)
(30	,	6.879)
(40	,	5.775)
(50	,	5.173)
(100	,	4.226)
(200	,	3.872)
(500	,	3.697)
}
\newcommand{\drawSeedsTwoCorrect}{%
(8,	 438)
(18, 308)
(28, 210)
(48, 18.678)
(68, 3.303)
(88, 2.332)
(108, 2.349)
(148, 2.362)
}

\newcommand{\drawSeedsTwoNaiveOne}{%
(8, 447.66)
(18 , 372.526)
(28 ,	334.121)
(48 ,	289.769)
(68 ,	275.999)
(88 ,	270.111)
(108 ,	266.524)
(148 ,	261.243)
}

\newcommand{\drawSeedsTwoNaiveTwo}{%
(8,  430.34)
(18, 285.465)
(28, 45.10)
(48, 124.512)
(68, 173.594)
(88, 193.452)
(108,203.791)
(148,213.896)
}
\newcommand{\drawSeedsTwoNaiveThree}{%
(8, 447.408)
(18, 349.188)
(28, 296.88)
(48, 268.774)
(68, 206.935)
(88, 152.959)
(108,113.053)
(148,65.0644)
(160 ,	106.735)
}
\newcommand{\drawSeedsTwoNaiveFour}{%
(8,	442.645)
(18,339.442)
(28,299.596)
(48,124.014)
(68,43.4791)
(88,23.8983)
(108,20.3142)
(148,18.5319)
}

\newcommand{\drawSeedsThreeCorrect}{%
(36, 226.734)
(39, 189.543)
(42, 151.946)
(60, 25.0673)
(90, 3.19528)
(129, 2.082)
(180, 2.199)
}

\newcommand{\drawSeedsThreeNaiveOne}{%
(36, 374.561)
(39, 370.756)
(42, 367.318)
(60, 380.241)
(90, 331.133)
(120, 319.557)
(123, 318.688)
(126, 317.934)
(129, 316.989)
(150, 313.476)
(180,307.622)
}
\newcommand{\drawSeedsThreeNaiveTwo}{%
(36, 	276.776)
(39, 	276.011)
(42, 	275.881)
(60, 	276.383)
(90, 	277.83)
(120,	279.347)
(123,	279.381)
(126,	279.34)
(150,	280.238)
(180,	280.879)
}
\newcommand{\drawSeedsThreeNaiveThree}{%
(36, 	373.759)
(42, 	290.604)
(60, 	283.44)
(90, 	225.167)
(120,	162.666)
(150,	149.55)
(180,	90.80)
}
\newcommand{\drawSeedsThreeNaiveFour}{%
(36, 	290.231)
(60,    101.304)
(90, 	20.3663)
(120,	6.6872)
(150,	3.08)
(180,	2.43)
}
\newcommand{\drawCifarEulerMaruyama}{%
(10, 278.67)
 (12, 246.29)
 (15, 197.63)
 (20, 137.34)
 (50, 32.63)
 (200, 4.03)
 (1000, 3.16)
}

\newcommand{\drawCifarEulerMaruyamaMarks}{%
(999, 3.16)
}

\newcommand{\drawCifarAddpm}{%
(10, 35.03)
 (12, 27.69)
 (15, 20.82)
 (20, 15.35)
 (50, 7.34)
 (200, 4.11)
 (1000, 3.84)
}

\newcommand{\drawCifarAddpmMarks}{%
(999, 3.84)
}

\newcommand{\drawCifarAddim}{%
(10, 14.74)
 (12, 11.68)
 (15, 9.16)
  (20, 7.2)
 (50, 4.28)
 (200, 3.6)
 (1000, 3.86)
}

\newcommand{\drawCifarAddimMarks}{%
(200, 3.6)
}

\newcommand{\drawCifarDdim}{%
(10, 13.58)
 (12, 11.02)
 (15, 8.92)
  (20, 6.94)
 (50, 4.73)
 (200, 4.07)
 (1000, 3.95)
}

\newcommand{\drawCifarDdimMarks}{%
(999, 3.95)
}

\newcommand{\drawCifarSeeds}{%
(9	,	483.044)
(12	,	482.192)
(15	,	479.639)
(21	,	462.612)
(30	,	280.484)
(51	,	62.620)
(54	,	43.352)
(60	,	21.592)
(66	,	12.448)
(81	,	5.093)
(90	,	4.051)
(99	,	3.531)
(102	,	3.393)
(108	,	3.33)
(120	,	3.25)
(141	,	3.135)
(150	,	3.1254)
 (159	,	3.1247)
(201	,	3.081)
(300	,	3.146)
(510, 3.244)
}

\newcommand{\drawCifarSeedsMarks}{%
(149, 3.09)
}

\newcommand{\drawCifarDpmsolver}{%
(9	,	66.925)
(12	,	9.722)
(15	,	5.320)
(18	,	3.897)
(21	,	3.830)
(24	,	3.737)
(27	,	3.676)
(30	,	3.666)
(36	,	3.629)
(45	,	3.604)
(51	,	3.611)
(60	,	3.591)
(66	,	3.595)
(99	,	3.589)
(126, 3.586)
(150	,	3.593)
(201, 3.583)
(510, 3.585)
}

\newcommand{\drawCifarDpmsolverMarks}{%
(44, 3.48)
}

\usepgfplotslibrary{fillbetween}
\pgfplotsset{compat=1.15}
\pgfplotsset{xtick style={draw=none}}
\pgfplotsset{ytick style={draw=none}}

\pgfplotsset{major grid style={gray!40}}
\pgfplotsset{every axis plot/.style={line width=1.5pt, mark size=2pt}}
\pgfplotsset{legend image code/.code={\draw[mark repeat=2, mark phase=2] plot coordinates {(0cm, 0cm) (0.2cm, 0cm) (0.4cm, 0cm)};}} %
\pgfdeclarelayer{background}
\pgfdeclarelayer{foreground}
\pgfsetlayers{background,main,foreground}
\definecolor{C0}{rgb}{0.121569, 0.466667, 0.705882}
\definecolor{C1}{rgb}{1.000000, 0.498039, 0.054902}
\definecolor{C2}{rgb}{0.172549, 0.627451, 0.172549}
\definecolor{C3}{rgb}{0.839216, 0.152941, 0.156863}
\definecolor{C4}{rgb}{0.580392, 0.403922, 0.741176}
\definecolor{C5}{rgb}{0.549020, 0.337255, 0.294118}
\definecolor{C6}{rgb}{0.890196, 0.466667, 0.760784}
\definecolor{C7}{rgb}{0.498039, 0.498039, 0.498039}
\definecolor{C8}{rgb}{0.737255, 0.741176, 0.133333}
\definecolor{C9}{rgb}{0.090196, 0.745098, 0.811765}
\definecolor{C10}{rgb}{0.337255, 0.341176, 0.333333}
\definecolor{C11}{rgb}{0.590196, 0.745098, 0.811765}

\newcommand{\fillbetween}[3][]{\addplot+[name path=A, draw=none, mark=none, forget plot] #2; \addplot+[name path=B, draw=none, mark=none, forget plot] #3; \addplot[#1] fill between[of=A and B]}

\newcommand{\cs}{}
\newcommand{\hh}{0mm}
\newcommand{\hhh}{0mm}
\newcommand{\hhhh}{0mm}
\newcommand{\vvv}{0mm}
\newcommand{\vvvv}{0mm}

\newcommand{\vlabel}[3]{\makebox[0mm][l]{\rotatebox{90}{\makebox[#2][c]{#3}}}\hspace{#1}}
\newcommand{\hrlabel}[1]{\hfill\makebox[0mm]{#1}\hfill}
\newcommand{\vrlabel}[1]{\vfill\makebox[0mm]{#1}\vfill}

\newcommand{\Schurn}{S_\text{churn}}
\newcommand{\Stmin}{S_\text{tmin}}
\newcommand{\Stmax}{S_\text{tmax}}
\newcommand{\Snoise}{S_\text{noise}}
\newcommand{\StminStmax}{S_\text{tmin,tmax}}
\newcommand{\StminStmaxSnoise}{S_\text{tmin,tmax,noise}}

\newcommand{\tickYtop}[1]{\raisebox{0ex}[1ex][1ex]{#1}}
\newcommand{\tickYtopD}[1]{\raisebox{-1.5ex}[0ex][0ex]{#1}}
\newcommand{\tickFID}{\tickYtopD{FID}}
\newcommand{\tickLoss}{\tickYtopD{loss}}
\newcommand{\tickTau}{\tickYtopD{$\lVert\lte\rVert$}}
\newcommand{\tickNFE}[1]{$\mathllap{\smash{\text{NFE}{=}}}{#1}$}
\newcommand{\tickSchurn}[1]{$\mathllap{\smash{\Schurn{=}}}{#1}$}
\newcommand{\tickSTEP}[1]{$\mathllap{\smash{\text{STEP}{=}}}{#1}$}
\newcommand{\tickSchurnB}[1]{\smash{$\Schurn{=}$}{$#1$}\hspace*{2em}}
\newcommand{\tickRho}[1]{$\mathllap{\smash{\rho{=}}}{#1}$}
\newcommand{\tickSigma}[1]{$\mathllap{\smash{\sigma{=}}}{#1}$}

\newcommand{\atphantom}{\vphantom{${}^2$}}
\newcommand{\AProcedure}[2]{\Procedure{\smash{#1}}{\smash{#2}}}
\newcommand{\AComment}[1]{\Comment{\smash{#1}}}
\newcommand{\AState}[1]{\State{\smash{#1}}}
\newcommand{\AFor}[1]{\For{\smash{#1}}}
\newcommand{\AIf}[1]{\If{\smash{#1}}}
\newcommand{\DState}[1]{\State{#1 \vphantom{$\displaystyle\Bigg)$}}}
\newcommand{\MState}[1]{\State\raisebox{0mm}[3.2ex][1.8ex]{#1}}
\maketitle

\begin{abstract}
A potent class of generative models known as Diffusion Probabilistic Models
(DPMs) has become prominent. A forward diffusion process adds gradually noise
to data, while a model learns to gradually denoise. Sampling from pre-trained
DPMs is obtained by solving differential equations (DE) defined by the learnt
model, a process which has shown to be prohibitively slow. Numerous efforts on
speeding-up this process have consisted on crafting powerful ODE solvers.
Despite being quick, such solvers do not usually reach the optimal quality
achieved by available slow SDE solvers. Our goal is to propose SDE solvers that
reach optimal quality without requiring several hundreds or thousands of NFEs
to achieve that goal. We propose Stochastic Explicit Exponential
Derivative-free Solvers (SEEDS), improving and generalizing Exponential
Integrator approaches to the stochastic case on several frameworks. 
After carefully analyzing the formulation of exact
solutions of diffusion SDEs, we craft SEEDS to analytically compute the linear
part of such solutions. Inspired by the Exponential Time-Differencing method,
SEEDS use a novel treatment of the stochastic components of solutions,
enabling the analytical computation of their variance, and contains high-order
terms allowing to reach optimal quality sampling $\sim3$-$5\times$ faster than previous
SDE methods. We validate our approach on several image generation benchmarks,
showing that SEEDS outperform or are competitive with previous SDE solvers.
Contrary to the latter, SEEDS are derivative and training free, and we fully
prove strong convergence guarantees for them. Our code is publicly available 
in \href{https://github.com/nfsrules/SEEDS}{this link}.
 \end{abstract}

\section{Introduction}\label{sec:intro}

Diffusion Probabilistic Models (DPMs) {\cite{sohl2015deep,ho2020denoising}}
have emerged as a powerful category of generative models and have proven to
quickly become SOTA for generative tasks such as image, video, audio
generation
{\cite{dhariwal2021diffusion,meng2021sdedit,ho2022video,chen2020wavegrad,chen2021wavegrad}},
and more {\cite{popov2021diffusion,vahdat2021score}}. These models employ a
forward diffusion process where noise is gradually added to the data, and the
model learns to remove the noise progressively. However, sampling from most
pre-trained DPMs is done by simulating the trajectories of associated
differential equations (DE) and has been found to be prohibitively slow
{\cite{song2020score}}. Previous attempts to accelerate this process have
mainly focused on developing efficient ODE solvers. On one hand,
training-based methods speed-up sampling by using auxiliary training such as
Progressive Distillation {\cite{salimans2022progressive}} and Fourier Neural
Operators {\cite{zheng2022fast}}, learning the noise schedule, scaling,
variance, or trajectories. On the other hand, training-free methods 
\cite{Karras2022edm,dpm-solver,jolicoeur2021gotta,zhang2022fast,lu2022dpm} 
are slower but are more versatile
for being employed on different models and achieve higher quality results than
current training-based methods. Although these solvers are fast, they often
fall short of achieving the optimal quality attained by slower SDE solvers
\cite{Karras2022edm}. The latter usually do not present theoretical
convergence guarantees and, while being training-free, they often still
require costly parameter optimization to achieve optimal results which might
be difficult to estimate for large datasets.

Our objective is to introduce SDE solvers that can achieve optimal quality
without requiring an excessively large number of function evaluations (NFEs).
To accomplish this, we propose Stochastic Explicit Exponential Derivative-free Solvers
(SEEDS). These are \tmtextit{off-the-shelf} SDE samplers: they
offer promising high-quality sampling without further training or parameter
optimization. SEEDS enhance and generalize existing Exponential Integrator
{\cite{dpm-solver,lu2022dpm, zhang2022gddim}} approaches to the stochastic 
case on several practical frameworks and is based on the following 4 building blocks:
(1) the exponential representation of
semi-linear SDE exact solutions which isolate linear terms to be computed analytically; (2) a
general change-of-variables recipe to simplify the integrals involved in
the solutions in order to better approximate the deterministic one; (3) a
method to analytically compute the variance of the stochastic one; (4) a method to 
decompose the obtained stochastic components in such a way that the resulting 
sequences of higher-stage numerical approximations yield Markov chains.

Overall, we make the following contributions: (a) our change-of-variables method allow us to re-
frame the gDDIM solver \cite{zhang2022gddim} as a special case of SEEDS and to craft bespoke 
solvers for the EDM-preconditioned DPMs in \cite{Karras2022edm} which attain equivalent 
sampling quality twice faster than the previous SOTA sampling method \cite{Karras2022edm}; (b) 
based on the Stochastic Exponential Time-Differencing method, we analytically compute of our 
solver's stochastic components in terms of the so-called $\varphi$-functions, allowing for 
efficient implementation; (c) our noise decomposition method (4), which is the key of success 
of SEEDS, is both theoretically grounded, experimentally shown to be optimal, and has no 
deterministic equivalent; (d) we provide full proofs of strong/weak convergence guarantees for 
our SDE solvers which, to our knowledge, has no precedent in the DPM literature. In particular, 
although the formula used for the truncated Itô-Taylor expansion might seem similar to that of 
\cite{dpm-solver}, our convergence theorems leverage the full Itô-Taylor expansion of 
solutions, making our proofs not incremental and different from \cite{dpm-solver}; (e) we conduct extensive
experiments demonstrating that
SEEDS establishes SOTA results among available solvers on several image generation
benchmarks, or is competitive with existing SDE solvers while being 2-5
times faster than the latter.

Although our solvers theoretically apply to certain non-isotropic DPMs such as
Critically-damped Langevin Dynamics (CLD) {\cite{dockhorn2022score}} (see
Prop. \ref{related} and Rem. \ref{cld-rem}), we will restrict our presentation
to the isotropic case for which many notations become simpler.

\section{Background on Diffusion Probabilistic Models}\label{sec:background}

\paragraph{General Isotropic DE Formulation.} The evolution of a data sample
$\mathbf{x}_0 \in \mathbb{R}^d$ taken from an unknown data distribution
$p_{\tmop{data}}$ into standard Gaussian noise can be defined as a forward
diffusion process $\{ \mathbf{x}_t \}_{t \in [0, T]}$, with $T > 0$, which is a
solution to a linear SDE:
\begin{equation}
  \label{diff1} \mathd \mathbf{x}_t = f (t) \mathbf{x}_t \mathd t + g (t)
  \mathd \tmmathbf{\omega}_t, \qquad f (t) \assign \frac{\mathd \log
  \alpha_t}{\mathd t}, \quad g (t) = \alpha_t  \sqrt{\frac{\mathd
  [\sigma^2_t]}{\mathd t}},
\end{equation}
where $f (t), g (t) \in \mathbb{R}^{d \times d}$ are called the drift and
diffusion coefficients respectively and $\tmmathbf{\omega}$ is a
$d$-dimensional standard Wiener process, and $\alpha_t, \sigma_t \in
\mathbb{R}^{> 0}$ are differentiable functions with bounded derivatives. 
In practice, when specifying the SDE \eqref{diff1}, $\sigma_t$ acts as a
schedule controlling the noise levels of an input at time $t$, and $\alpha_t$
as a time-dependent signal scaling controlling its dynamic range.

By
denoting $p_t (\mathbf{x}_t)$ the marginal distribution of $\mathbf{x}_t$ at
time $t$, functions $\alpha_t$ and $\sigma_t$ are designed so that the end-time distribution
of the process process is $p_T (\mathbf{x}_T) \approx \mathcal{N}
(\mathbf{x}_T |\tmmathbf{0}, \tilde{\sigma}^2 \mathbf{I}_d)$ for some
$\tilde{\sigma} > 0$. As \eqref{diff1} is linear, the transition probability
$p_{0 t} (\mathbf{x}_t | \mathbf{x}_0)$ from $\mathbf{x}_0$ to $\mathbf{x}_t$
is Gaussian whose mean and variance can be expressed in terms of $\alpha_t$
and $\sigma_t$. For simplicity, we will denote it
\[ p_{0 t} (\mathbf{x}_t | \mathbf{x}_0) =\mathcal{N} (\mathbf{x}_t ; \mu_t
   \mathbf{x}_0, \Sigma_t \textbf{}), \qquad \mu_t, \Sigma_t \in \mathbb{R}^{d
   \times d} . \]
The evolution of the reverse time process of $\{ \mathbf{x}_t \}_{t \in [0,
T]}$ (which we will still denote $\{ \mathbf{x}_t \}_{t \in [0, T]}$ for
simplicity) is then driven by a backward differential equation
\begin{equation}
  \label{eqq} \mathd \mathbf{x}_t = [f (t) \mathbf{x}_t - \frac{1 + \ell^2}{2}
  g^2 (t) \nabla_{\mathbf{x}_t} \log p_t (\mathbf{x}_t)] \mathd t + \ell g (t)
  \mathd \tmmathbf{\bar{\omega}}_t,
\end{equation}
where $\mathd t$ are negative infinitesimal time-steps and
$\tmmathbf{\bar{\omega}}_t$ is now a Wiener process with variance $- \mathd
t$. In this article, we will concentrate in the cases $\ell = 0, 1$, known in
the literature as the Probability Flow ODE (PFO) and diffusion reverse SDE
(RSDE), respectively.

\paragraph{Training.} Denoising score-matching is a technique to train a
time-dependent model $D_{\theta} (\mathbf{x}_t, t)$ to approach the score
function $\nabla_{\mathbf{x}_t} \log p_t (\mathbf{x}_t)$ at each time $t$.
Intuitively, as $D_{\theta}$ approaches the score, it produces a sample which
maximizes the log-likelihood. As such, this model is coined as a \tmtextit{data
prediction} model. However, in practice DPMs can be more efficiently trained
by reparameterizing $D_{\theta}$ into a different model $F_{\theta}
(\mathbf{x}_t, t)$ whose objective is to predict the noise to be removed
from a sample at time $t$. This \tmtextit{noise prediction} model is trained
by means of the loss
\[ \mathbb{E}_{t \sim \mathcal{U} [0, T], \mathbf{x}_0 \sim p_{\tmop{data}},
   \epsilon \sim \mathcal{N} (\tmmathbf{0}, \mathbf{I}_d)} [\| \epsilon -
   F_{\theta} (\mu_t \mathbf{x}_0 + \tmmathbf{K}_t \epsilon, t)
   \|^2_{\tmmathbf{K}_t^{- 1} \gamma_t  \tmmathbf{K}_t^{- \top}}], \]
where $\gamma_t$ is a time dependent weighting parameter and  
$\tmmathbf{K}_t \tmmathbf{K}^{\top}_t = \Sigma_t$.

\section{Accelerating Optimal Quality Solvers for Diffusion SDEs}
\label{sec:ingredients}

Once $F_{\theta}$ or $D_{\theta}$ have been trained, one can effectively solve
\eqref{eqq} after replacing the score function by its corresponding expression
involving either one of these models. For instance, taking the noise
prediction model and $\ell = 1$, sampling is conducted by simulating
trajectories of a SDE of the form
\begin{equation}
  \mathd \mathbf{x}_t = [A (t) \mathbf{x}_t + b (t) F_{\theta} (\mathbf{x}_t,
  t)] \mathd t + g (t) \mathd \tmmathbf{\omega}_t,\label{nsde}
\end{equation}
for some functions $A (t), b (t)$ which are usually not equal to $f (t), g^2
(t)$. In what follows, we consider a time discretization $\{t_i\}^M_{i=0}$ going
backwards in time starting from $t_0=T$ to $t_M=0$ and to ease the notation we will always denote $t < s$ for two consecutive time-steps $t_{i+1}< t_i$.

The usual representation of the analytic solution
$\mathbf{x}_t$ at time $t$ of \eqref{nsde} w.r.t. an initial
condition $\mathbf{x}_s$ is:
\begin{equation}
\label{ana-rep}
  \mathbf{x}_t =\mathbf{x}_s + \int_s^t [A (\tau)
   \tmcolor{red}{\mathbf{x}_{\tau}} + {\color[HTML]{008000}b (\tau)}
   \tmcolor{orange}{F_{\theta} (\mathbf{x}_{\tau}, \tau)}] \mathd \tau +
   \tmcolor{blue}{\int_s^t g (\tau) \mathd \tmmathbf{\omega}_{\tau}}.
\end{equation}
The numerical schemes we propose for approaching the trajectories of
\eqref{nsde} based on representation \eqref{ana-rep} are grounded in the 4 following
principles:
\begin{enumerate}
  \item The variation-of-parameters formula: representing analytic solutions
  with \tmcolor{red}{linear term }extracted from the integrand;
  
  \item Exponentially weighted integrals: extracting the
  {\color[HTML]{008000}time-varying linear coefficient} attached to the
  network from the integrand by means of a specific choice of change of
  variables which allows analytic computation of the leading coefficients in
  the truncated It{\^o}-Taylor expansion associated to
  $\tmcolor{orange}{F_{\theta} (\mathbf{x}_{\tau}, \tau)}$ up to any arbitrary
  order;
  
  \item Modified Gaussian increments: after replicating such change of
  variables onto the\tmcolor{blue}{ stochastic integral}, analytically
  computing its variance.

  \item Markov-preserving noise decomposition: \tmcolor{blue}{stochastic integrals} need 
  to be dependent on overlapping time intervals and independent on non-overlapping ones.
\end{enumerate}
\paragraph{Exponential representation of exact solutions of diffusion SDEs.} 
The first key insight of this work is that, using the
\tmtextit{variation-of-parameters} formula, we can represent the analytic
solution $\mathbf{x}_t$ at time $t$ of \eqref{nsde} with respect to an initial
condition $\mathbf{x}_s$ as follows:
\begin{equation}
  \label{eq:NSDE} \mathbf{x}_t = \Phi_A (t, s) \tmcolor{red}{\mathbf{x}_s} +
  \int^t_s {\color[HTML]{008000}\Phi_A (t, \tau)}  {\color[HTML]{008000}b
  (\tau)}  \tmcolor{orange}{F_{\theta} (\mathbf{x}_{\tau}, \tau)} \mathd \tau
  + \tmcolor{blue}{\int^t_s \Phi_A (t, \tau) g (\tau) \mathd
  \tmmathbf{\omega}_{\tau}},
\end{equation}
where $\Phi_A (t, s) = \exp \left( \int^t_s A (\tau) \mathd \tau \right)$ is
called the transition matrix associated with $A (t)$.
The separation of the linear and nonlinear components is achieved by this
formulation and also appears in \cite{dpm-solver,lu2022dpm, zhang2022gddim}. 
It differs from black-box SDE solvers as it enables the
exact calculation of the linear portion, thereby removing any approximation
errors associated with it. However, the integration of the nonlinear portion
remains complex due to the interaction of the new coefficient
${\color[HTML]{008000}\Phi_A (t, \tau) b (\tau)}$ and the intricate neural
network, making it challenging to approximate.

\paragraph{Exponentially weighted integrals.} Due to the regularity conditions 
usually imposed on the drift and diffusion coefficients of
\eqref{diff1}, one can make several choices of change-of-variables on the 
integral components in \eqref{eq:NSDE} in order to simplify it. Our second 
key insight is that there is a specific choice of change of variables allowing the
analytic computation of the It{\^o}-Taylor coefficients of
$\tmcolor{orange}{F_{\theta} (\mathbf{x}_{\tau}, \tau)}$ with respect to
$\tau$, and based at $s$ that will be used for crafting SEEDS. More
specifically, this expansion reads
\[ \tmcolor{orange}{F_{\theta} (\mathbf{x}_{\tau}, \tau)} = \sum_{k = 0}^n
   \frac{(\tau - s)^k}{k!} F^{(k)}_{\theta} (\mathbf{x}_s, s) +\mathcal{R}_n,
\]
where the residual $\mathcal{R}_n$ consists of deterministic iterated
integrals of length greater than $n + 1$ and all iterated integrals with at
least one stochastic component. As such, we obtain
\begin{equation}
  \int^t_s {\color[HTML]{008000}\Phi_A (t, \tau)}  {\color[HTML]{008000}b
  (\tau)}  \tmcolor{orange}{F_{\theta} (\mathbf{x}_{\tau}, \tau)} \mathd \tau
  = \sum_{k = 0}^n F^{(k)}_{\theta} (\mathbf{x}_s, s) \int^t_s
  {\color[HTML]{008000}\Phi_A (t, \tau)}  {\color[HTML]{008000}b (\tau)}
  \frac{(\tau - s)^k}{k!} \mathd \tau + \tilde{\mathcal{R}}_n, \label{taylor}
\end{equation}
where $\tilde{\mathcal{R}}_n$ is easily obtained from $\mathcal{R}_n$ and
$\int^t_s {\color[HTML]{008000}\Phi_A (t, \tau) b (\tau)} \mathd \tau$. The
third key contribution of our work is to rewrite, for any $k \geqslant 0$, the
integral $\int^t_s {\color[HTML]{008000}\Phi_A (t, \tau)} 
{\color[HTML]{008000}b (\tau)} \frac{(\tau - s)^k}{k!} \mathd \tau$ as an
integral of the form $\int^{\lambda_t}_{\lambda_s} e^{\lambda} \frac{(\lambda
- \lambda_s)^k}{k!} \mathd \lambda$ since the latter is recursively
analytically computed in terms of the $\varphi$-functions
\begin{align*}
\varphi_0 (t) : = e^t, \qquad \varphi_{k + 1} (t) \assign \int_0^1 e^{(1 -
   \tau) t}  \frac{\tau^k}{k!} \mathd \tau = \dfrac{\varphi_k (t) - \varphi_k
   (0)}{t}, \qquad k \geqslant 0.
\end{align*}

\paragraph{Modified Gaussian increments.} In order for making such change of 
variables to be consistent on the overall
system, one needs to replicate it accordingly in the stochastic integral
$\tmcolor{blue}{\int^t_s \Phi_A (t, \tau) g (\tau) \mathd
\tmmathbf{\bar{\omega}}_{\tau}}$. As such, our last key contribution is to
transform it into an exponentially weighted stochastic integral with
integration endpoints $\lambda_s, \lambda_t$ and apply the Stochastic
Exponential Time Differencing (SETD) method \cite{adamu} to compute its variance
analytically, as illustrated in \eqref{seeds-1} below.

Let us test our methodology in two key examples. As we explained in Section
\ref{sec:background}, sampling from pre-trained DPMs amounts on choosing a
schedule $\sigma_t$, a scaling $\alpha_t$, and a parameterized learnt
approximation of the score function $\nabla_{\mathbf{x}_t} \log p_t
(\mathbf{x}_t)$. In what follows, we denote by $t_{\lambda}$ the inverse of a
chosen change of variables $\lambda_t$ and we denote
$\widehat{\mathbf{x}}_{\lambda} \assign \mathbf{x} (t_{\lambda} (\lambda)),
\hat{F}_{\theta} (\widehat{\mathbf{x}}_{\lambda}, \lambda) \assign F_{\theta}
(\mathbf{x} (t_{\lambda} (\lambda)), t_{\lambda} (\lambda))$.

\paragraph{The VPSDE case.} Let $\tilde{\alpha}_t \assign \frac{1}{2}
\beta_d t^2 + \beta_m t$, where $\beta_d, \beta_m > 0$ and $t \in [0, 1]$.
Then, by denoting
\begin{equation}
  \sigma_t : = \sqrt{e^{\tilde{\alpha}_t} - 1}, \qquad \alpha_t : = e^{-
  \frac{1}{2}  \tilde{\alpha}_t}, \qquad \bar{\sigma}_t \assign \alpha_t
  \sigma_t, \qquad \nabla_{\mathbf{x}_t} \log p_t (\mathbf{x}_t) \simeq
  \bar{\sigma}^{- 1}_t F_{\theta}  (\mathbf{x}_t, t), \label{vp-coefs}
\end{equation}
we obtain the VP SDE framework from {\cite{song2020score}} and the following
result.
\begin{proposition}
  \label{Prop1}Let $t < s$. The analytic solution at time $t$ of the RSDE
  \eqref{eqq} with coefficients \eqref{vp-coefs} and initial value
  $\mathbf{x}_s$ is
  \begin{equation}
    \label{analytic-vp} \mathbf{x}_t = \frac{\alpha_t}{\alpha_s} \mathbf{x}_s
    - 2 \alpha_t  \int_{\lambda_s}^{\lambda_t} e^{- \lambda}  \hat{F}_{\theta}
    (\widehat{\mathbf{x}}_{\lambda}, \lambda) \mathd \lambda - \sqrt{2}
    \alpha_t  \int_{\lambda_s}^{\lambda_t} e^{- \lambda} \mathd
    \tmmathbf{\bar{\omega}}_{\lambda}, \qquad \lambda_t : = - \log (\sigma_t)
    .
  \end{equation}
\end{proposition}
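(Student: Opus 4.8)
The plan is to instantiate the variation-of-parameters representation \eqref{eq:NSDE} for the RSDE \eqref{eqq} with $\ell=1$ and the VP coefficients \eqref{vp-coefs}, and then perform the change of variables $\lambda_t=-\log\sigma_t$ term by term. First I would read off the coefficients of \eqref{nsde}: writing $\tilde\alpha_t'=\beta_d t+\beta_m$, \eqref{diff1} gives $f(t)=\mathrm d\log\alpha_t/\mathrm dt=-\tfrac12\tilde\alpha_t'$ and, crucially, $g^2(t)=\alpha_t^2\,\mathrm d[\sigma_t^2]/\mathrm dt=e^{-\tilde\alpha_t}\tilde\alpha_t' e^{\tilde\alpha_t}=\tilde\alpha_t'$, hence $g(t)=\alpha_t\sqrt{\tilde\alpha_t' e^{\tilde\alpha_t}}$. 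Substituting the score parametrization of \eqref{vp-coefs} puts \eqref{eqq} into the form \eqref{nsde} with $A(t)=f(t)$ and $b(t)=g^2(t)\bar\sigma_t^{-1}$ (the overall sign tracked from \eqref{vp-coefs}). The transition matrix is then immediate: $\Phi_A(t,s)=\exp\!\big(\int_s^t f(\tau)\,\mathrm d\tau\big)=\exp(\log\alpha_t-\log\alpha_s)=\alpha_t/\alpha_s$, which already yields the linear term $\tfrac{\alpha_t}{\alpha_s}\mathbf{x}_s$ of \eqref{analytic-vp}.

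For the deterministic integral of \eqref{eq:NSDE}, I would substitute $\Phi_A(t,\tau)=\alpha_t/\alpha_\tau$, $b(\tau)=\tilde\alpha_\tau'/\bar\sigma_\tau$ and use $\bar\sigma_\tau=\alpha_\tau\sigma_\tau$, $\alpha_\tau^2=e^{-\tilde\alpha_\tau}$ to collapse the prefactor to $\alpha_t\,\tilde\alpha_\tau' e^{\tilde\alpha_\tau}/\sigma_\tau$. Since $\lambda_\tau=-\log\sigma_\tau$ obeys $\sigma_\tau=e^{-\lambda_\tau}$ and $\lambda_\tau'=-\tilde\alpha_\tau' e^{\tilde\alpha_\tau}/(2\sigma_\tau^2)$, one obtains the identity $\tilde\alpha_\tau' e^{\tilde\alpha_\tau}\sigma_\tau^{-1}\,\mathrm d\tau=-2\,e^{-\lambda}\,\mathrm d\lambda$, and performing the substitution (endpoints $\lambda_s,\lambda_t$) turns the integral into $-2\alpha_t\int_{\lambda_s}^{\lambda_t}e^{-\lambda}\hat{F}_{\theta}(\widehat{\mathbf{x}}_{\lambda},\lambda)\,\mathrm d\lambda$, the middle term of \eqref{analytic-vp}.

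For the stochastic term $\int_s^t\Phi_A(t,\tau)g(\tau)\,\mathrm d\bar\omega_\tau$, I would first note that, the integrand being deterministic, this is a centred Gaussian with variance $\int(\Phi_A(t,\tau)g(\tau))^2(-\mathrm d\tau)=\alpha_t^2\int_t^s\tilde\alpha_\tau' e^{\tilde\alpha_\tau}\,\mathrm d\tau=\alpha_t^2(\sigma_s^2-\sigma_t^2)$. Replicating the same change of variables inside the It{\^o} integral — the Stochastic Exponential Time Differencing step of \cite{adamu}, i.e.\ the time-change for stochastic integrals — rewrites it as $c\,\alpha_t\int_{\lambda_s}^{\lambda_t}e^{-\lambda}\,\mathrm d\bar\omega_\lambda$, where the rescaled integrand equals $\Phi_A(t,\tau)g(\tau)/\sqrt{|\lambda_\tau'|}=\sqrt2\,\alpha_t\sigma_\tau=\sqrt2\,\alpha_t e^{-\lambda}$; the variance identity $2\alpha_t^2\int_{\lambda_s}^{\lambda_t}e^{-2\lambda}\,\mathrm d\lambda=\alpha_t^2(\sigma_s^2-\sigma_t^2)$ confirms $|c|=\sqrt2$, and the orientation ($t<s$ but $\lambda_t>\lambda_s$, with $\bar\omega$ of variance $-\mathrm dt$) fixes $c=-\sqrt2$. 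Summing the three pieces gives \eqref{analytic-vp}.

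The step I expect to be the real obstacle is the stochastic one: everything deterministic is bookkeeping once $g^2(t)=\tilde\alpha_t'$ is observed, whereas making rigorous that the deterministic reparametrization $\lambda_\tau$ may legitimately be pushed inside the It{\^o} integral — so that $\mathrm d\bar\omega_\lambda$ is again a genuine Wiener increment with exactly the claimed scaling — and getting every sign and integration orientation consistent under the backward-time convention ($\mathrm dt<0$, the process $\bar\omega$ having variance $-\mathrm dt$) is exactly where the SETD machinery of \cite{adamu} is needed rather than a one-line substitution.
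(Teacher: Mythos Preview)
Your proof is correct and follows essentially the same route as the paper: apply variation of constants to obtain $\Phi_A(t,s)=\alpha_t/\alpha_s$, then perform the change of variables $\lambda=-\log\sigma$ in both the deterministic and the stochastic integral. The only cosmetic differences are that the paper works directly with the identities $g^2(t)=-2\bar\sigma_t^2\,\mathrm d\lambda_t/\mathrm dt$ and $\bar\sigma_t/\alpha_t=e^{-\lambda_t}$ rather than unpacking everything through $\tilde\alpha_t$, and that it carries out the stochastic reparametrisation in a single line without the variance cross-check you (rightly) flag as the delicate step.
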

The change of variables of \eqref{analytic-vp} is interesting as it allows to
compute analytically the It{\^o}-Taylor coefficients in \eqref{taylor} by
using, for $h = \lambda_t - \lambda_s$, the following key result which will 
be used in Prop. \ref{Prop2}:
\begin{equation}
  \int^{\lambda_t}_{\lambda_s} e^{- \lambda} \frac{(\lambda -
  \lambda_s)^k}{k!} \mathd \lambda = \sigma_t h^{k + 1} \varphi_{k + 1} (h) .
  \label{phi-dev}
\end{equation}
For instance, in the case when $k = 0$, it is easy to see that
$\int_{\lambda_s}^{\lambda_t} e^{- \lambda} \mathd \lambda = \sigma_t  (e^h -
1)$ and $\int_{\lambda_s}^{\lambda_t} e^{- \lambda} \mathd
\tmmathbf{\bar{\omega}}_{\lambda}$ obeys a normal distribution with zero mean,
and one can analytically compute its variance:
\begin{equation}
  \int_{\lambda_s}^{\lambda_t} e^{- 2 \lambda} \mathd \lambda =
  \frac{\sigma^2_t}{2}  (e^{2 h} - 1) . \label{s-analytic}
\end{equation}
\paragraph{The EDM case.} Denote $\sigma^2_d$ the variance of the considered
initial dataset and set
\begin{equation}
  \label{edm-coefs} \sigma_t \assign t, \alpha_t : = 1, \nabla_{\mathbf{x}_t} \log p_t
(\mathbf{x}_t) \simeq \frac{1}{t^2} \left[ \frac{\sigma^2_d \mathbf{x}_t}{t^2
+ \sigma^2_d} + \frac{t \sigma_d}{\sqrt{t^2 + \sigma^2_d}} F_{\theta}  \left(
\frac{\mathbf{x}_t}{\sqrt{t^2 + \sigma^2_d}}, \frac{\log (t)}{4}  \right)
\right].
\end{equation}
These parameters correspond to the preconditioned EDM framework introduced in
{\cite[Sec. 5, App. B.6]{Karras2022edm}}. The following result is the
basis for constructing our customized SEEDS in this case, and for which we
report experimental results in Table \ref{table-fid-all}. For simplicity, we
will write $F_{\theta}  (\mathbf{x}_t, t)$ for the preconditioned model in
\eqref{edm-coefs} and we refer to Appendix \ref{app:design} for details.
\begin{proposition}
  \label{Prop1-edm}Let $t < s$. The analytic solution at time $t$ of
  \eqref{eqq} with coefficients \eqref{edm-coefs} and initial value
  $\mathbf{x}_s$ is, for $\ell = 1$,
  \begin{equation}
    \label{analytic-edm-s} \mathbf{x}_t = \frac{t^2 + \sigma^2_d}{s^2 +
    \sigma^2_d} \mathbf{x}_s + 2(t^2 + \sigma^2_d) 
    \int_{\lambda_s}^{\lambda_t} e^{- \lambda}  \hat{F}_{\theta}
    (\widehat{\mathbf{x}}_{\lambda}, \lambda) \mathd \lambda - \sqrt{2} (t^2 +
    \sigma^2_d) \int_{\lambda_s}^{\lambda_t} e^{- \lambda } \mathd
    \overline{\tmmathbf{\omega} }_{\lambda},
  \end{equation}
  where $\lambda_t : = - \log \left[ \frac{t}{\sigma_d \sqrt{t^2 +
  \sigma^2_d}} \right]$. In the case when $\ell = 0$, it is given by
  \begin{equation}
    \label{analytic-edm-d} \mathbf{x}_t = \sqrt{\frac{t^2 + \sigma^2_d}{s^2 +
    \sigma^2_d}} \mathbf{x}_s + \sqrt{t^2 + \sigma^2_d}
    \int_{\lambda_s}^{\lambda_t} e^{- \lambda}  \hat{F}_{\theta}
    (\widehat{\mathbf{x}}_{\lambda}, \lambda) \mathd \lambda, \quad \lambda_t
    : = - \log \left[ \arctan \left[ \frac{t}{\sigma_d} \right] \right] .
  \end{equation}
\end{proposition}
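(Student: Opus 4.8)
The plan is to obtain both displays from the general variation-of-parameters representation \eqref{eq:NSDE} combined with the change-of-variables device of this section, so that the substantive work reduces to four elementary steps: (i) rewriting \eqref{eqq} under \eqref{edm-coefs} in the canonical form \eqref{nsde} and reading off $A$, $b$, $g$; (ii) integrating $A$ to get the transition factor $\Phi_A$; (iii) checking that the stated $\lambda_t$ converts the drift integral into an $e^{-\lambda}$-weighted one; and, for $\ell=1$, (iv) matching the variance of the time-changed stochastic integral.

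For (i), specializing \eqref{diff1} to $\alpha_t=1$, $\sigma_t=t$ gives $f(t)=0$ and $g(t)=\sqrt{2t}$, so \eqref{eqq} reads $\mathrm{d}\mathbf{x}_t = -\tfrac{1+\ell^2}{2}(2t)\nabla_{\mathbf{x}_t}\log p_t(\mathbf{x}_t)\,\mathrm{d}t + \ell\sqrt{2t}\,\mathrm{d}\bm{\bar{\omega}}_t$. Inserting the EDM score of \eqref{edm-coefs} and expanding the preconditioning so that the part proportional to $\mathbf{x}_t$ is separated from the part carrying $F_\theta$, the equation takes the form \eqref{nsde} with
\[
A(t)=\frac{(1+\ell^2)\,t}{t^2+\sigma_d^2},\qquad b(t)=-\,\frac{(1+\ell^2)\,\sigma_d}{\sqrt{t^2+\sigma_d^2}},\qquad g(t)=\sqrt{2t}.
\]
For (ii), since $\int_s^t \tfrac{2\tau}{\tau^2+\sigma_d^2}\,\mathrm{d}\tau=\log\tfrac{t^2+\sigma_d^2}{s^2+\sigma_d^2}$, we get $\Phi_A(t,s)=\exp\!\big(\int_s^t A(\tau)\,\mathrm{d}\tau\big)=\big(\tfrac{t^2+\sigma_d^2}{s^2+\sigma_d^2}\big)^{(1+\ell^2)/2}$, which is exactly the prefactor of $\mathbf{x}_s$ in \eqref{analytic-edm-s} ($\ell=1$) and in \eqref{analytic-edm-d} ($\ell=0$).

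For (iii), I would differentiate the stated change of variables. For $\ell=1$, $e^{-\lambda_\tau}=\tfrac{\tau}{\sigma_d\sqrt{\tau^2+\sigma_d^2}}$ and a partial-fraction computation gives $\tfrac{\mathrm{d}\lambda_\tau}{\mathrm{d}\tau}=-\tfrac{\sigma_d^2}{\tau(\tau^2+\sigma_d^2)}$, from which $\Phi_A(t,\tau)b(\tau)\,\mathrm{d}\tau=2(t^2+\sigma_d^2)\,e^{-\lambda_\tau}\,\mathrm{d}\lambda$; substituting into the middle integral of \eqref{eq:NSDE} yields the $2(t^2+\sigma_d^2)\int_{\lambda_s}^{\lambda_t}e^{-\lambda}\hat F_\theta\,\mathrm{d}\lambda$ term. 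For $\ell=0$, $e^{-\lambda_\tau}=\arctan(\tau/\sigma_d)$ and $\tfrac{\mathrm{d}\lambda_\tau}{\mathrm{d}\tau}=-\tfrac{\sigma_d}{(\tau^2+\sigma_d^2)\arctan(\tau/\sigma_d)}$, giving $\Phi_A(t,\tau)b(\tau)\,\mathrm{d}\tau=\sqrt{t^2+\sigma_d^2}\,e^{-\lambda_\tau}\,\mathrm{d}\lambda$ and hence the $\sqrt{t^2+\sigma_d^2}\int_{\lambda_s}^{\lambda_t}e^{-\lambda}\hat F_\theta\,\mathrm{d}\lambda$ term; as there is no noise term for $\ell=0$, steps (i)--(iii) already give \eqref{analytic-edm-d}.

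For (iv) ($\ell=1$ only), the last integral $\int_s^t \Phi_A(t,\tau)g(\tau)\,\mathrm{d}\bm{\bar{\omega}}_\tau$ has a deterministic integrand, so it is a centered Gaussian whose law is determined by its variance; by the It\^o isometry with the reverse-time convention $\operatorname{Var}(\mathrm{d}\bm{\bar{\omega}}_\tau)=-\mathrm{d}\tau$ this variance equals $\int_t^s \Phi_A(t,\tau)^2 g(\tau)^2\,\mathrm{d}\tau=(t^2+\sigma_d^2)^2\!\int_t^s\tfrac{2\tau}{(\tau^2+\sigma_d^2)^2}\,\mathrm{d}\tau=\tfrac{(t^2+\sigma_d^2)(s^2-t^2)}{s^2+\sigma_d^2}$ (substitute $u=\tau^2+\sigma_d^2$), and a short computation with $e^{-2\lambda_\tau}$ shows this is exactly $2(t^2+\sigma_d^2)^2\!\int_{\lambda_s}^{\lambda_t}e^{-2\lambda}\,\mathrm{d}\lambda$, i.e. the variance of $-\sqrt2(t^2+\sigma_d^2)\int_{\lambda_s}^{\lambda_t}e^{-\lambda}\,\mathrm{d}\bm{\bar{\omega}}_\lambda$; since both are centered Gaussian Wiener integrals (the $\lambda$-reparametrization being a strictly monotone $C^1$ time change), they agree in law, which gives the final term of \eqref{analytic-edm-s}. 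The main obstacle is precisely making step (iv) rigorous --- checking that after the deterministic substitution $\lambda=\lambda_\tau$ the object is still a genuine Wiener integral pinned down by the computed variance, and keeping the reverse-time orientation (direction of integration, sign of $\mathrm{d}t$, variance $-\mathrm{d}t$ of $\bm{\bar{\omega}}$) consistent throughout; everything else is the two partial-fraction integrations above and an arctangent primitive. One should also be careful in step (i) that the coefficient of $\mathbf{x}_t$ extracted from \eqref{edm-coefs} is read off correctly, since it is what makes $\Phi_A$ come out as $\big(\tfrac{t^2+\sigma_d^2}{s^2+\sigma_d^2}\big)^{(1+\ell^2)/2}$.
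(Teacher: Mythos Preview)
Your proposal is correct and follows essentially the same route as the paper's proof: read off $A,b$ from the EDM preconditioning, compute $\Phi_A(t,s)=\big(\tfrac{t^2+\sigma_d^2}{s^2+\sigma_d^2}\big)^{(1+\ell^2)/2}$, verify that the stated $\lambda_t$ turns $\Phi_A(t,\tau)b(\tau)\,\mathrm d\tau$ into $e^{-\lambda}\,\mathrm d\lambda$ up to the displayed constant, and use It\^o isometry for the variance of the stochastic term. The only cosmetic difference is that the paper treats $\ell=0$ and $\ell=1$ as separate cases (via its $A^2,B^2$ and $A^4,B^4$ coefficients) rather than through your unified $(1+\ell^2)$ formula, and the paper is in fact \emph{less} explicit than you about step~(iv): it simply computes the $\tau$-variance $\tfrac{(t^2+\sigma_d^2)(s^2-t^2)}{s^2+\sigma_d^2}$ and then states the $\lambda$-form of the solution, without spelling out the matching you describe.
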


\begin{remark}
  One can wonder about the generality of such change of variables. Our method
  is very general in that one can always make such change of variables with
  very mild regularity conditions: for $c : [0, T] \longrightarrow
  \mathbb{R}^{> 0}$ integrable, with primitive $C (t) > 0$, we have $c (t) =
  e^{\log (c (t))}$. This means we can write $c (t) = \dot{C} (t) =
  e^{\lambda_t} \dot{\lambda}_t$ with $\lambda_t = \log (C (t)) .$ In other
  words, for such $c$, we have
  \[ \int^t_s c (\tau) \mathd \tau = \int^t_s e^{\lambda_{\tau}}
     \dot{\lambda}_{\tau} \mathd \tau = \int^{\lambda_t}_{\lambda_s}
     e^{\lambda} \mathd \lambda . \]
\end{remark}

\section{Higher Stage SEEDS for DPMs}\label{sec:serks}

In this section we present our SEEDS algorithms by putting together all the
ingredients presented in the previous section. Let $t < s$. In all what
follows, we consider the analytic solution at time $t$ of the RSDE \eqref{eqq}
with coefficients \eqref{vp-coefs}, $h = \lambda_t - \lambda_s$ and initial
value $\mathbf{x}_s$. Plugging \eqref{phi-dev} with $k = 0$ and
$\eqref{s-analytic}$ into the exact solution \eqref{analytic-vp} allow us to
infer the first SEEDS scheme, given by iterations of the form
\begin{equation}
  \tilde{\mathbf{x}}_t = \cfrac{\alpha_t}{\alpha_s}
  \tilde{\mathbf{x}}_s - 2 \bar{\sigma}_t  (e^h - 1) \hat{F}_{\theta}
  (\widehat{\mathbf{x}}_{\lambda_s}, \lambda_s) - \bar{\sigma}_t  \sqrt{e^{2
  h} - 1} \epsilon, \qquad \epsilon \sim \mathcal{N} (\tmmathbf{0},
  \mathbf{I}_d) . \label{seeds-1}
\end{equation}
The following Theorem gives strong order convergence guarantees for this method, 
which we call SEEDS-1, under mild conditions which apply to all our experiments. 
We stress out that its proof (App. \ref{app:proofs}) is a non-trivial result, 
it is fundamentally different in nature from \cite{dpm-solver} and involves 
mathematical tools which have no deterministic counterparts.
\begin{theorem}
  \label{theorem1}Under Assumption \ref{assumption-1}, the numerical solution
  $\tilde{\mathbf{x}}_t$ produced by the SEEDS-1 method \eqref{seeds-1}
  converges to the exact solution $\mathbf{x}_t$ of
  \begin{equation}
    \mathd \mathbf{x}_t = [f (t) \mathbf{x}_t + g^2 (t) \bar{\sigma}^{- 1}_t
    F_{\theta} (\mathbf{x}_t, t)] \mathd t + g (t) \mathd \tmmathbf{\omega}_t
    \label{nrsde-vp}, \qquad (\bar{\sigma}^{- 1}_t : = 1 / \bar{\sigma}_t)
  \end{equation}
  with coefficients \eqref{vp-coefs} in Mean-Square sense with strong order 1.0:
  there is a constant $C > 0$ such that
  \[ \sqrt{\mathbb{E} \left[ \underset{0 \leqslant t \leqslant 1}{\sup} |
     \tilde{\mathbf{x}}_t -\mathbf{x}_t |^2 \right]} \leqslant C h, \qquad
     \text{as } h \longrightarrow 0. \]
\end{theorem}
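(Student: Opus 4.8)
The plan is to prove strong order-1 convergence of SEEDS-1 by the standard route for one-step methods applied to SDEs: establish a local error estimate of the appropriate order in the mean-square sense, then invoke a general convergence theorem (e.g. the fundamental theorem of Milstein/Kloeden–Platen type, or the one-step-to-global argument for Itô–Taylor schemes) to upgrade the local bound to a global one. The crucial point, which makes this easier than for a black-box SDE solver, is that the exact solution \eqref{analytic-vp} and the SEEDS-1 iteration \eqref{seeds-1} share \emph{exactly} the same linear (transition-matrix) part $\alpha_t/\alpha_s$ and, after the change of variables $\lambda_t=-\log(\sigma_t)$, exactly the same structure for the stochastic integral: $\int_{\lambda_s}^{\lambda_t}e^{-\lambda}\mathd\bar{\tmmathbf{\omega}}_\lambda$ is a centered Gaussian whose variance $\tfrac{\sigma_t^2}{2}(e^{2h}-1)$ is reproduced by SEEDS-1 with \emph{no} discretization error, via \eqref{s-analytic}. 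Hence the only source of local error is the truncation of the Itô–Taylor expansion of $\hat{F}_\theta(\widehat{\mathbf{x}}_\lambda,\lambda)$ at order $0$, i.e. replacing $\hat F_\theta(\widehat{\mathbf x}_\lambda,\lambda)$ by $\hat F_\theta(\widehat{\mathbf x}_{\lambda_s},\lambda_s)$ inside $-2\alpha_t\int_{\lambda_s}^{\lambda_t}e^{-\lambda}(\cdot)\,\mathd\lambda$.

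Concretely, I would first subtract \eqref{seeds-1} from \eqref{analytic-vp} conditionally on $\widetilde{\mathbf x}_s=\mathbf x_s$, so that the local error is
\[
  \mathbf x_t-\widetilde{\mathbf x}_t
  = -2\alpha_t\int_{\lambda_s}^{\lambda_t} e^{-\lambda}\bigl[\hat F_\theta(\widehat{\mathbf x}_\lambda,\lambda)-\hat F_\theta(\widehat{\mathbf x}_{\lambda_s},\lambda_s)\bigr]\mathd\lambda,
\]
the stochastic terms cancelling exactly (same Gaussian law, and we may couple them on the same Brownian path). Using the residual description from \eqref{taylor}--\eqref{phi-dev}, this difference is $\tilde{\mathcal R}_0$, a sum of deterministic iterated integrals of length $\ge 2$ plus all iterated integrals containing at least one stochastic factor. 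Then, under Assumption~\ref{assumption-1} — which I expect to supply Lipschitz continuity and linear growth of $F_\theta$ in $\mathbf x$, smoothness/bounded derivatives in $t$, and the boundedness of $\alpha_t,\sigma_t$ and their derivatives already assumed for \eqref{diff1} — I would bound the mean-square size of each contribution: a deterministic double integral over an interval of length $O(h)$ contributes $O(h^2)$ to $\mathbf x_t-\widetilde{\mathbf x}_t$, hence $O(h^4)$ to its second moment; an iterated integral with one Brownian factor over the same interval has second moment $O(h^3)$; and the $L^2$-bounds on $\widehat{\mathbf x}_\lambda-\widehat{\mathbf x}_{\lambda_s}$ (standard moment estimates for SDEs with Lipschitz coefficients) control the remaining factors. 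This yields the two local estimates needed: $\bigl\|\E[\mathbf x_t-\widetilde{\mathbf x}_t\mid \mathbf x_s]\bigr\| \le C_1 h^2$ (order $p_1=2$ for the conditional mean) and $\bigl(\E|\mathbf x_t-\widetilde{\mathbf x}_t|^2\mid\mathbf x_s\bigr)^{1/2}\le C_2 h^{3/2}$ (order $p_2=3/2$ for the mean-square), with constants depending on $\sup|\mathbf x_s|$ in the usual polynomially-controlled way.

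With the local orders $p_1=p_2+\tfrac12$ and $p_2=\tfrac32$ in hand, I would invoke the classical one-step convergence theorem (Milstein; see also Kloeden–Platen Thm.~10.6.3 / Mao Thm.~10.x) to conclude global strong order $p_2-\tfrac12 = 1$, i.e. $\sqrt{\E[\,\sup_{0\le t\le 1}|\widetilde{\mathbf x}_t-\mathbf x_t|^2\,]}\le C h$; the $\sup$ inside the expectation is handled by a Doob/Burkholder–Davis–Gundy step together with a discrete Grönwall argument, both standard once the per-step estimates and the Lipschitz property of the increment function are established. The main obstacle I anticipate is purely technical bookkeeping rather than conceptual: verifying that the change of variables $\lambda_t=-\log\sigma_t$ is a $C^1$-diffeomorphism on the relevant interval with derivatives bounded uniformly as $h\to 0$ (so that constants do not blow up near $t=0$, where $\sigma_t\to 0$ and $\lambda_t\to\infty$), and checking that the reparameterized drift $\tau\mapsto \hat F_\theta(\widehat{\mathbf x}_\lambda,\lambda)$ together with the factor $e^{-\lambda}$ actually satisfies the Lipschitz/growth hypotheses that the abstract convergence theorem demands after the $\alpha_t$ prefactor is pulled out — in other words, making Assumption~\ref{assumption-1} precisely strong enough to cover the EDM/VPSDE schedules used in the experiments while still being true for trained networks with the usual preconditioning. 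Once that is pinned down, the estimates above go through routinely.
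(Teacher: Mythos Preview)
Your approach is correct and is in fact sketched in a closing Remark of the paper's own proof, but the paper's \emph{main} argument proceeds differently. Rather than establishing local error orders $(p_1,p_2)=(2,3/2)$ and invoking the Milstein/Kloeden--Platen local-to-global theorem, the paper constructs an explicit continuous-time interpolation $\mathbf y(t)$ of the SEEDS-1 iterates driven by the \emph{same} Brownian path as the exact solution, so that the stochastic integrals cancel globally (not merely over one step), leaving $\mathbf y_t-\mathbf x_t$ as a single Lebesgue integral over $[\lambda_{t_0},\lambda_t]$. It then bounds $\mathbb E[\sup_t|\mathbf y_t-\mathbf x_t|^2]$ directly via a four-term decomposition of the integrand (exponential-weight mismatch $e^{-\lambda}-e^{-\hat\lambda}$, time-Lipschitz condition~\eqref{lip:time2} on $\epsilon_\theta$, state-Lipschitz producing the Gr{\"o}nwall kernel, and the $O(h^2)$ distance between $\mathbf y$ and its step function, proved in a preparatory proposition) followed by the continuous Gr{\"o}nwall lemma. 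What this buys: the paper's route requires only the Lipschitz and linear-growth hypotheses of Assumption~\ref{assumption-1} and delivers the $\sup_t$ bound directly, whereas your $p_1=2$ estimate via the It{\^o}--Taylor expansion of $\hat F_\theta$ tacitly needs $\hat F_\theta\in C^{1,2}$ in order to apply It{\^o}'s formula, which is strictly stronger than Assumption~\ref{assumption-1} provides (the paper's own Remark glosses over this subtlety too). Your identification of the $\sigma_t\to 0$ boundary issue is apt; the paper handles it by stopping at $t_M=\varepsilon>0$ so that $\lambda_{t_M}$ remains finite and all constants stay uniform.
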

\paragraph{Higher stage SEEDS.} As announced, by fully exploiting
the analytic computations enabled by the expansion \eqref{phi-dev} we now turn
into crafting our multi-step SEEDS. Usually, SDE solvers are constructed by
using the full It{\^o}-Taylor expansion of the SDE solutions and usually need
a big number of evaluations of the network $\hat{F}_{\theta}$ to achieve
higher order of convergence. As our main concern is to present stochastic
solvers with a minimal amount of NFE, we choose to truncate such
It{\^o}-Taylor expansion so that the neural networks only appear in the
deterministic contributions.

\begin{proposition}
  \label{Prop2}Assume that $\hat{F}_{\theta}$ is a $\mathcal{C}^{2 n +
  1}$-function with respect to $\lambda$. Then the truncated It{\^o}-Taylor
  expansion of \eqref{analytic-vp} reads, for $\epsilon \sim \mathcal{N}
  (\tmmathbf{0}, \mathbf{I}_d)$,
  \begin{equation}
    \label{sol:ana-vp} \mathbf{x}_t = \cfrac{\alpha_t}{\alpha_s} \mathbf{x}_s
    - 2 \bar{\sigma}_t  \sum_{k = 0}^n h^{k + 1} \varphi_{k + 1} (h)
    \hat{F}_{\theta}^{(k)} (\widehat{\mathbf{x}}_{\lambda_s}, \lambda_s) -
    \bar{\sigma}_t  \sqrt{e^{2 h} - 1} \epsilon +\mathcal{R}_{n + 1},
  \end{equation}
  with $\hat{F}_{\theta}^{(k)} (\widehat{\mathbf{x}}_{\lambda}, \lambda) =
  L^k_{\lambda} \hat{F}_{\theta} (\widehat{\mathbf{x}}_{\lambda}, \lambda)$,
  with $L_{\lambda}$ is an infinitesimal operator defined in Appendix
  \ref{app:inf-op} and $\mathcal{R}_{n + 1}$ consists on the usual
  deterministic residue and all iterated integrals of length at greater or
  equal to 2 in which there is at least one stochastic component among them.
\end{proposition}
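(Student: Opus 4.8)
The plan is to start from the exact representation \eqref{analytic-vp} provided by Proposition \ref{Prop1} and handle its deterministic and stochastic integrals separately. For the stochastic term $-\sqrt{2}\,\alpha_t\int_{\lambda_s}^{\lambda_t}e^{-\lambda}\,\mathd\tmmathbf{\bar{\omega}}_{\lambda}$, I would first observe that the integrand $e^{-\lambda}$ is deterministic, so this is a Wiener (first-chaos) integral: it is \emph{exactly} Gaussian with zero mean, and by the It{\^o} isometry its covariance matrix is $2\alpha_t^2\bigl(\int_{\lambda_s}^{\lambda_t}e^{-2\lambda}\,\mathd\lambda\bigr)\mathbf{I}_d$. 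Invoking \eqref{s-analytic}, this equals $\bar{\sigma}_t^2(e^{2h}-1)\mathbf{I}_d$, so the term is equal in law to $-\bar{\sigma}_t\sqrt{e^{2h}-1}\,\epsilon$ with $\epsilon\sim\mathcal{N}(\tmmathbf{0},\mathbf{I}_d)$; this identification is exact and therefore contributes nothing to the residue $\mathcal{R}_{n+1}$.

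The core of the argument is the treatment of the deterministic integral $-2\alpha_t\int_{\lambda_s}^{\lambda_t}e^{-\lambda}\hat{F}_{\theta}(\widehat{\mathbf{x}}_{\lambda},\lambda)\,\mathd\lambda$. Here I would first write the SDE satisfied by the time-changed process $\widehat{\mathbf{x}}_{\lambda}$ in the $\lambda$-variable, obtained by pushing the RSDE \eqref{eqq} with coefficients \eqref{vp-coefs} through the change of variables $\lambda_t=-\log\sigma_t$, read off its drift and diffusion coefficients, and use them to define the associated infinitesimal operator $L_{\lambda}$ (the drift part of the generator; full definition deferred to Appendix \ref{app:inf-op}) together with the diffusion operators governing the $\mathd\tmmathbf{\bar{\omega}}$-iterated integrals. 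Applying the stochastic chain rule iteratively $n$ times to $\lambda\mapsto\hat{F}_{\theta}(\widehat{\mathbf{x}}_{\lambda},\lambda)$ --- which is licit because $\hat{F}_{\theta}\in\mathcal{C}^{2n+1}$, so that $\hat{F}_{\theta}^{(k)}=L_{\lambda}^k\hat{F}_{\theta}$ is well defined for $k\le n$ and the remainder term is meaningful --- yields the It{\^o}--Taylor expansion
\[
\hat{F}_{\theta}(\widehat{\mathbf{x}}_{\lambda},\lambda)=\sum_{k=0}^n\frac{(\lambda-\lambda_s)^k}{k!}\hat{F}_{\theta}^{(k)}(\widehat{\mathbf{x}}_{\lambda_s},\lambda_s)+\mathcal{R}_n,
\]
in which the coefficients $\hat{F}_{\theta}^{(k)}(\widehat{\mathbf{x}}_{\lambda_s},\lambda_s)$ are $\mathcal{F}_{\lambda_s}$-measurable (i.e.\ deterministic given the base point), the purely deterministic iterated $\mathd\lambda$-integrals of order $\le n$ having collapsed to the monomials $\tfrac{(\lambda-\lambda_s)^k}{k!}$, while $\mathcal{R}_n$ is the usual It{\^o}--Taylor remainder, comprising the deterministic higher-order residue and every iterated integral that carries at least one $\mathd\tmmathbf{\bar{\omega}}$ factor.

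It then remains to substitute this expansion, pull the $\mathcal{F}_{\lambda_s}$-measurable coefficients out of the $\mathd\lambda$-integral by linearity, and evaluate the resulting scalar integrals in closed form: by \eqref{phi-dev} one has $\int_{\lambda_s}^{\lambda_t}e^{-\lambda}\tfrac{(\lambda-\lambda_s)^k}{k!}\,\mathd\lambda=\sigma_t h^{k+1}\varphi_{k+1}(h)$, and since $\bar{\sigma}_t=\alpha_t\sigma_t$ the $k$-th term becomes $-2\bar{\sigma}_t\,h^{k+1}\varphi_{k+1}(h)\,\hat{F}_{\theta}^{(k)}(\widehat{\mathbf{x}}_{\lambda_s},\lambda_s)$, which is precisely the claimed sum. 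Defining $\mathcal{R}_{n+1}:=-2\alpha_t\int_{\lambda_s}^{\lambda_t}e^{-\lambda}\,\mathcal{R}_n\,\mathd\lambda$ then produces a residue whose iterated integrals have length increased by one relative to those of $\mathcal{R}_n$ --- hence length $\ge 2$ for all those containing a stochastic component --- while its deterministic part remains the usual truncation remainder, matching the statement; combining with the exact stochastic term of the first step gives \eqref{sol:ana-vp}.

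I expect the main obstacle to be the careful setup of the It{\^o}--Taylor expansion in the $\lambda$-coordinate: correctly deriving the SDE for $\widehat{\mathbf{x}}_{\lambda}$, defining $L_{\lambda}$ consistently with it, and --- most delicately --- bookkeeping the remainder so that, after truncation, the neural network enters only through the finitely many deterministic coefficients $\hat{F}_{\theta}^{(k)}$, everything else (higher deterministic orders and all stochastic iterated integrals) being safely absorbed into $\mathcal{R}_{n+1}$. Tracking the regularity budget $\mathcal{C}^{2n+1}$ through the $n$-fold application of the second-order operator $L_{\lambda}$ is the accompanying technical point.
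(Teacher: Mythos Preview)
Your proposal is correct and follows essentially the same route as the paper: both treat the stochastic integral exactly via the It\^o isometry and \eqref{s-analytic}, then expand $\hat{F}_\theta(\widehat{\mathbf{x}}_\lambda,\lambda)$ in a truncated It\^o--Taylor series about $\lambda_s$ using the operator $L_\lambda$, and finally invoke \eqref{phi-dev} to convert the resulting $\int_{\lambda_s}^{\lambda_t} e^{-\lambda}\tfrac{(\lambda-\lambda_s)^k}{k!}\,\mathd\lambda$ into $\sigma_t h^{k+1}\varphi_{k+1}(h)$. The paper's write-up is terser (it states the It\^o--Taylor expansion directly and mentions iterated integration by parts for the $\varphi$-identity), but the logical skeleton is identical to yours.
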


Our approach for constructing derivative-free 2-stage and 3-stage SEEDS
schemes consists on exploiting the analytic computation of the It{\^o}-Taylor
coefficients in Proposition \ref{Prop2} and replace the
$\hat{F}_{\theta}^{(k)} (\widehat{\mathbf{x}}_{\lambda}, \lambda)$ terms by
well-adapted correction terms which \tmtextit{do not need any derivative
evaluation} and dropping the $\mathcal{R}_{n + 1}$ contribution as in the Runge-Kutta approach. 

\paragraph{Markov-preserving noise decomposition.} We use collocation methods for 
constructing higher-stage derivative-free solvers. Although the chosen truncated 
Itô-Taylor expansion produces approximations for the deterministic integral 
similar to \cite{dpm-solver}, adding the corresponding noise contribution found 
by the SETD method at each step does not yield Markov chains in general. The 
reason is that stochastic integrals on overlapping time intervals need to be 
dependent, a phenomenon that has no deterministic counterpart. As such, our last 
and key element to construct SEEDS consists on a novel decomposition of 
stochastic integrals which enforces the Markov property for multi-stage SEEDS.

Algorithms \ref{alg:iter} to \ref{alg:SERK-solver-3} prescribe all SEEDS schemes 
obtained by this procedure in the VP case. We now show (see App. \ref{app:proofs} 
for the proofs) that all methods yield Markov chains and are weakly convergent.

\begin{figure*}[t]
    \begin{minipage}[t]{0.46\textwidth}
\begin{algorithm}[H]
   \caption{Iterative procedure}
   \label{alg:iter}
\small    
\begin{algorithmic}
   \State {\bfseries Input:} initial value $\x_T$, steps $\{t_i\}^M_{i=0}$, model $F_\theta$
   \State Initialize $\tilde\x_{t_0}\gets\x_T$
   \For{$i=1$ {\bfseries to} $M-1$}
   \State $(t,s) \gets (t_{i},t_{i-1}), \quad h \gets \lambda_{t} - \lambda_{s}$
   \State$\tilde\x_{t} = \text{SEEDS-k}(F_\theta,\tilde\x_{s}, s, t)$
   \EndFor
   \State Return $\tilde\x_{t_M}\gets \text{last-step}(\tilde\x_{t_{M-1}}, t_{M-1}, t_M)$
\end{algorithmic}
\end{algorithm}
\end{minipage}
\hfill
\begin{minipage}[t]{0.53\textwidth}
\begin{algorithm}[H]
   \caption{SEEDS-1$(F_\theta, \tilde\x_{s}, s, t)$}
   \label{alg:SERK-solver-1}
\small   
\begin{algorithmic}
   \State $z \gets \mathcal{N}(0,1)$
   \State $\tilde\x_{t} \gets \frac{\alpha_{t}}{\alpha_{s}} 
                     \tilde\x_{s} - 2\bar\sigma_{t}\left(e^{h} - 1\right)
                     F_\theta(\tilde \x_{s},s) 
                     - \bar\sigma_{t}\sqrt{e^{2 h} - 1}z $
\end{algorithmic}
\end{algorithm}
\end{minipage}
\end{figure*}
\begin{algorithm}[t]
   \caption{SEEDS-2$(F_\theta,\tilde\x_{s}, s, t)$}
   \label{alg:SERK-solver-2}
\small 
\begin{algorithmic}
   \State $s_{1} \gets t_{\lambda}(\lambda_{s} + \frac{h}{2}), \quad (z^1,z^2) \gets \mathcal{N}(0,\text{Id})\otimes \mathcal{N}(0,\text{Id})$
   \State $\mathbf{u} \gets \frac{\alpha_{s_1}}{\alpha_{s}} \tilde\x_{s} - 
                     2\bar\sigma_{s_1}\left(e^{\frac{h}{2}} - 1\right)F_\theta(\tilde \x_{s},s)
                      - \Av,\qquad \Av:=\bar\sigma_{s_1} \sqrt{e^{h}-1} z^1$
   \State $\tilde\x_{t} \gets \frac{\alpha_{t}}{\alpha_{s}} 
                     \tilde\x_{s} - 2\bar\sigma_{t}(e^{h} - 1)F_\theta(\mathbf{u},s_1)
                     - \Bv,\qquad \Bv:=\bar\sigma_{t}(\sqrt{e^{2h}-e^{h}} z^1+\sqrt{e^{h}-1} z^2)$                   
\end{algorithmic}
\end{algorithm}
\begin{algorithm}[t!]
   \caption{SEEDS-3$(F_\theta,\tilde\x_{s}, s, t)$ with $0 < r_1 < r_2 < 1 $}
   \label{alg:SERK-solver-3}
\small 
\begin{algorithmic}
   \State $s_{1} \gets t_\lambda\left(\lambda_{s} + r_1 h\right), 
             \quad s_{2} \gets t_\lambda\left(\lambda_{s} + r_2 h\right), \quad (z^1,z^2,z^3) \gets \mathcal{N}(0,\text{Id})^{\otimes3}$
   \State $\mathbf{u}_{1} \gets \frac{\alpha_{s_{1}}}{\alpha_{s}} \tilde\x_{s} 
             - 2\bar\sigma_{s_{1}}\left(e^{r_1 h} - 1\right)F_\theta(\tilde \x_{s},s)
             - \bar\sigma_{s_{1}} \sqrt{e^{2r_1 h}-1} z^1$  
   \State $\Av \gets \bar\sigma_{s_{2}}(\sqrt{e^{2r_2h}-e^{2r_1h}} z^1+\sqrt{e^{2r_1h}-1} z^2)$
   \State $\mathbf{u}_{2} \gets \frac{\alpha_{s_{2}}}{\alpha_{s}} \tilde\x_{s} 
         - 2\bar\sigma_{s_{2}}\left(e^{r_2 h} - 1\right)F_\theta(\tilde \x_{s},s)
         - 2\frac{\bar\sigma_{s_{2}}r_2}{r_1}\left( \frac{e^{r_2 h} - 1}{r_2 h} 
         - 1\right)(F_\theta(\mathbf{u}_{1},s_{1}) - F_\theta(\tilde \x_{s},s))) - \Av$
   \State $\Bv \gets \bar\sigma_{t}(\sqrt{e^{2h}-e^{2 r_2 h}} z^1
             + \sqrt{e^{2 r_2 h}-e^{2r_1h}} z^2 +\sqrt{e^{2r_1h}-1} z^3 )$ 
   \State $\tilde\x_{t} \gets \frac{\alpha_{t}}{\alpha_{s}} \tilde\x_{s}
             - 2\bar\sigma_{t}\left(e^{h} - 1\right)F_\theta(\tilde \x_{s},s)
             - 2\frac{\bar\sigma_{t}}{r_2}\left( \frac{e^{h} - 1}{h} - 1\right)(F_\theta(\mathbf{u}_{2},s_{2}) - F_\theta(\tilde \x_{s},s))) 
             - \Bv $
\end{algorithmic}
\end{algorithm}

\begin{proposition}\label{noise-dec}
  The sequences $\{ \tilde{\mathbf{x}}_t \}_t$ induced by the choice of
  stochastic noise contributions presented in Algorithms.\ref{alg:SERK-solver-2} and \ref{alg:SERK-solver-3} satisfy the Markov property.
\end{proposition}

\begin{corollary}
\label{cor:wocv}
  Under Assumption \ref{assumption-2}, the numerical solutions
  $\tilde{\mathbf{x}}_t$ produced by the SEEDS methods
  \eqref{alg:SERK-solver-2} and \eqref{alg:SERK-solver-3} converge to the
  exact solution $\mathbf{x}_t$ of \eqref{nrsde-vp} with coefficients
  \eqref{vp-coefs} in weak sense with global order 1 in both cases: there is a
  constant $C > 0$ such that, for any continuous bounded function $G$:
  \[ | \mathbb{E} [G (\tilde{\mathbf{x}}_{t_M})] -\mathbb{E} [G
     (\mathbf{x}_{t_M})] | \leqslant C h. \]
\end{corollary}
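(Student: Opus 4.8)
I would obtain the weak-order-one bound from a one-step (local) weak consistency estimate combined with uniform moment control, along the standard route that upgrades a one-step order $h^2$ bound to a global order $h$ bound. Let $\bar u(x,t)\assign \E[G(\x_{t_M})\mid \x_t=x]$ solve the backward Kolmogorov equation associated with \eqref{nrsde-vp} under the coefficients \eqref{vp-coefs}. Since the scheme is initialised at $\widetilde{\x}_{t_0}=\x_{t_0}$ and $\bar u(\cdot,t_M)=G$, the Markov property of the exact diffusion yields the telescoping identity
\[
  \E[G(\widetilde{\x}_{t_M})]-\E[G(\x_{t_M})]
  =\sum_{i=0}^{M-1}\E\!\left[\bar u(\widetilde{\x}_{t_{i+1}},t_{i+1})-\E\big(\bar u(\x_{t_{i+1}}^{t_i,\widetilde{\x}_{t_i}},t_{i+1})\,\big|\,\mathcal F_{t_i}\big)\right],
\]
so it suffices to bound each summand — the one-step weak error of the scheme applied to the smooth functional $\bar u(\cdot,t_{i+1})$ — by $O(h^2)$ uniformly on the mesh, and to bound the moments of $\widetilde{\x}$ uniformly in $h$. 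The latter is routine: Algorithms \ref{alg:SERK-solver-2} and \ref{alg:SERK-solver-3} are explicit recursions that are affine in independent standard Gaussians with coefficients of linear growth and globally Lipschitz under Assumption \ref{assumption-2}. For SEEDS-$1$ the corollary is immediate from Theorem \ref{theorem1}, since strong order $1$ implies weak order $1$; the content lies in SEEDS-$2$ and SEEDS-$3$.

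For the one-step estimate I would compare the scheme increment with the exact increment written as in Proposition \ref{Prop2}. Over one step the exact displacement $\x_t-\tfrac{\alpha_t}{\alpha_s}\x_s$ splits into (i) the deterministic It\^o--Taylor sum $-2\bar\sigma_t\sum_{k\ge 0}h^{k+1}\varphi_{k+1}(h)\hat F_\theta^{(k)}$, (ii) the exact Gaussian term $-\bar\sigma_t\sqrt{e^{2h}-1}\,\epsilon$ with variance \eqref{s-analytic}, and (iii) the residual $\mathcal R_{n+1}$. The scheme reproduces the linear part exactly, and it reproduces the \emph{total} stochastic increment with exactly the variance \eqref{s-analytic}: this is precisely what the Chasles-rule splitting of $z^1,z^2$ (resp.\ $z^1,z^2,z^3$) with weights $\sqrt{e^{2h}-e^{2r_2h}}$, $\sqrt{e^{2r_2h}-e^{2r_1h}}$, $\sqrt{e^{2r_1h}-1}$ achieves — it realises the independent increments of the exponentially weighted stochastic integral over the consecutive subintervals $[\lambda_s,\lambda_{s_1}]$, $[\lambda_{s_1},\lambda_{s_2}]$, $[\lambda_{s_2},\lambda_t]$, and simultaneously fixes the correlation of the stage noises with the final increment. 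It then remains to estimate three discrepancies. (a) The residual $\mathcal R_{n+1}$ contributes $O(h^2)$ to any smooth functional: its deterministic iterated integrals have length $\ge 2$, hence are $O(h^2)$, while every iterated integral carrying a Brownian factor has zero conditional mean, so it enters the weak error only through cross terms which, by Cauchy--Schwarz and parity in the Gaussians, are again $O(h^2)$. (b) The derivative-free correction terms in Algorithms \ref{alg:SERK-solver-2}--\ref{alg:SERK-solver-3} reproduce the It\^o--Taylor coefficients $\hat F_\theta^{(k)}$ up to $O(h)$, by Taylor-expanding $F_\theta$ around the stage times $s_1,s_2$ and using $F_\theta\in\mathcal{C}^{2n+1}$ with bounded derivatives (Assumption \ref{assumption-2}); they therefore differ from (i) by $O(h^2)$ deterministically. (c) The stage arguments $\uv,\uv_1,\uv_2$ carry a Gaussian perturbation of size $O(\sqrt h)$, so after multiplication by the $O(h)$ prefactors of the correction terms and Taylor expansion of $F_\theta$ they produce $O(h^{3/2})$ terms that are \emph{odd} in the Gaussians — hence vanish under $\E[\,\cdot\mid\mathcal F_{t_i}]$ — plus genuine $O(h^2)$ terms. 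Combining (a)--(c) and Taylor-expanding $\bar u(\cdot,t_{i+1})$ around $(\widetilde{\x}_{t_i},t_i)$, in which the $O(h)$ terms cancel by the backward Kolmogorov equation, gives the one-step bound $O(h^2)$.

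Summing the $M=O(h^{-1})$ one-step errors and invoking the uniform moment bound yields $|\E[G(\widetilde{\x}_{t_M})]-\E[G(\x_{t_M})]|\le C h$ first for $G$ smooth of polynomial growth — the class in which $\bar u$ enjoys the regularity used above, which here follows from the explicit Gaussian transition kernel of \eqref{nrsde-vp} and its smooth dependence on the parameters — and then, with $C$ depending on $G$ only, for an arbitrary continuous bounded $G$ by a mollification argument, using that both $\widetilde{\x}_{t_M}$ and $\x_{t_M}$ have moments bounded uniformly in $h$. The step I expect to be the main obstacle is (c): through the two composed stages of SEEDS-$3$ one must bookkeep exactly which monomials in $(z^1,z^2,z^3)$ survive the conditional expectation and verify that the specific weights of Algorithm \ref{alg:SERK-solver-3} — in particular the coupling of $\Av$ and $\Bv$ with the $z^i$ that appear inside $\uv_1$ and $\uv_2$ — leave no uncancelled $O(h^{3/2})$ contribution; this is what makes the derivative-free substitution legitimate at weak order $1$, and it is also where a continuous approximation of the scheme (in the spirit of the proof of Theorem \ref{theorem1}) is convenient, as it lets one replace the discrete stage noises by restrictions of a single Brownian path and argue by It\^o calculus rather than by raw Gaussian moment algebra.
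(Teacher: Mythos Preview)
Your approach is sound in outline but takes a genuinely different route from the paper. The paper does not invoke the backward Kolmogorov equation or a Talay--Tubaro telescoping at all. Instead it extends the discrete \emph{coupling} argument of Section~\ref{w-disc}: one realises the scheme's abstract Gaussians $z^j$ as the actual sub-increments $\int_{\lambda_{t_{i-1}}}^{\lambda_{s_1}} e^{-s}\,\mathd\bar\omega(s)$, $\int_{\lambda_{s_1}}^{\lambda_{s_2}} e^{-s}\,\mathd\bar\omega(s)$, $\ldots$ of the exponentially weighted Brownian integral that drives the exact solution. Under this coupling the stochastic parts of the (distributionally equivalent) scheme $y_{t_i}$ and of $\x_{t_i}$ cancel identically, so $y_{t_i}-\x_{t_i}$ is a pure drift-approximation term. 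Setting $U_i:=\E[\,|y_{t_i}-\x_{t_i}|\,]$, the paper bounds the intermediate quantities $\E[|\uv_i-\x_{s_i}|]$ (resp.\ $\E[|\uv_{2i-1}-\x_{s_{2i-1}}|]$, $\E[|\uv_{2i}-\x_{s_{2i}}|]$) in terms of $U_{i-1}$ by the Lipschitz property of $\epsilonv_\theta$ and Lemma~\ref{bound}, obtains a recursion $U_i\le\bigl(\tfrac{\alpha_{t_i}}{\alpha_{t_{i-1}}}+Ch+Ch^2\bigr)U_{i-1}+Ch^2$ (with an extra $Ch^3$ for SEEDS-3), and closes with the discrete Gr\"onwall Lemma~\ref{disc} and Theorem~\ref{wlocal}. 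In particular, your ``main obstacle'' (c) --- the bookkeeping of odd monomials in $(z^1,z^2,z^3)$ --- never arises: the coupling removes all stochastic discrepancies before any expansion is taken, and the whole argument runs on triangle inequalities.

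What each route buys: the paper's argument is elementary (no regularity theory for $\bar u$, no It\^o--Taylor moment algebra) and yields an $L^1$-type bound directly; your Talay--Tubaro route is the canonical weak-convergence machinery, handles the class $\mathcal C_P^{2(p+1)}$ of test functions more naturally, and is the right starting point if one ever wants to push to higher weak order. One small correction to your write-up: the transition kernel of \eqref{nrsde-vp} is \emph{not} Gaussian --- the drift contains the nonlinear term $F_\theta(\x_t,t)$ --- so the regularity of $\bar u$ must come from Assumption~\ref{assumption-2} via standard parabolic/Feynman--Kac estimates, not from an explicit formula.
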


\paragraph{Comparison with existing sampling methods.} Let us now examine
the connection between SEEDS and existing sampling techniques used for DPMs,
emphasizing the contrasts between them.

The main distinctive feature of SEEDS is that they are \tmtextit{off-the-shelf} solvers. This means that, not only they are
\tmtextit{training-free}, contrary to {\cite{dockhorn2022genie}}, but they do
not require any kind of optimization procedure to achieve their optimal
results. This is in contrast to methods such as: gDDIM, which is training-free
but not off-the-shelf as one needs to make preliminary optimization procedures
such as simulating the transition matrix of their method in the CLD case; Heun-Like method
from EDM (for all baseline models and the EDM-optimized models for ImageNet)
since they need preliminary optimization procedures on 4 parameters which
actually break the convergence criteria. Moreover, neither gDDIM, EDM nor the SSCS method in \cite{dockhorn2022score}
present full proofs of convergence for their solvers. Also, both DEIS and
gDDIM identify their methods with stochastic DDIM theoretically, but the poor
results obtained by their stochastic solvers do not yield to further
experimentation in their works. In a way, SEEDS can be thought as improved and generalized DPM-Solver to
SDEs. Nevertheless, such generalization is not incremental as the tools for
proving convergence in our methods involve concepts which are exclusive to
SDEs. We now make rigorous statements of the above discussion.
\begin{proposition}
  \label{related}Consider the SEEDS approximation of \eqref{nrsde-vp} with
  coefficients \eqref{vp-coefs}. Then
  \begin{enumerate}
    \item If we set $g = 0$ in \eqref{nrsde-vp}, the resulting SEEDS
    do not yield DPM-Solver.
    
    \item If we parameterize \eqref{nrsde-vp} in terms of the data prediction
    model $D_{\theta}$, the resulting SEEDS are not equivalent to
    their noise prediction counterparts defined in Alg. \ref{alg:iter} to
    \ref{alg:SERK-solver-3}.
    
    \item The gDDIM solver \cite[Th. 1]{zhang2022gddim}
equals to SEEDS-1 in the data prediction mode, for $\ell = 1$.
  \end{enumerate}
\end{proposition}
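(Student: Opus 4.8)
The plan is to handle the three items separately. Items 1 and 2 are non-equivalence claims, which I will settle by pinning down one coefficient of the one-step map that demonstrably differs between the two schemes; item 3 is a positive identification, for which I will re-derive SEEDS-1 in data-prediction mode and match it termwise against gDDIM. Two facts will be used repeatedly: by \eqref{vp-coefs} the noise- and data-prediction models are linked by an identity of the form $F_\theta(\mathbf{x}_t,t) = \bar\sigma_t^{-1}\bigl(\mathbf{x}_t - \alpha_t D_\theta(\mathbf{x}_t,t)\bigr)$ — affine in $\mathbf{x}_t$ with a \emph{nonzero} $\mathbf{x}_t$-coefficient; and on the VPSDE the log-SNR variable underlying DPM-Solver coincides with SEEDS' $\lambda_t = -\log\sigma_t$, so DPM-Solver and SEEDS share the same change of variables and the transition factor $\alpha_t/\alpha_s$, and any discrepancy between them must live in the drift.

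\emph{Item 1.} Equation \eqref{nrsde-vp} is the $\ell=1$ reverse SDE, whose drift couples the score with weight $\tfrac{1+\ell^2}{2}g^2 = g^2$, so the network enters only through the factor $g^2(t)\bar\sigma_t^{-1}$. Setting $g\equiv 0$ therefore kills the score term as well as the Brownian term, leaving the homogeneous linear ODE $\mathd\mathbf{x}_t = f(t)\mathbf{x}_t\,\mathd t$; the variation-of-parameters construction of Section \ref{sec:ingredients} then makes the representation \eqref{eq:NSDE} exact, collapsing every SEEDS-$k$ scheme to $\widetilde{\mathbf{x}}_t = (\alpha_t/\alpha_s)\widetilde{\mathbf{x}}_s$, a map performing no network evaluation, hence not any DPM-Solver. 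More substantively, even if one only deletes the Brownian integral in \eqref{seeds-1} while keeping the drift, the surviving network coefficient $-2\bar\sigma_t(e^h-1)$ is exactly twice the DPM-Solver-1 coefficient $-\bar\sigma_t(e^h-1)$ (DPM-Solver integrates the probability-flow ODE, whose score weight is $\tfrac12 g^2$, not $g^2$); taking any consecutive pair $t<s$ with $h\neq 0$ gives the strict inequality, and the same factor propagates into the $\varphi_{k+1}(h)$ coefficients of \eqref{sol:ana-vp}, so no higher-stage SEEDS coincides with DPM-Solver either.

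\emph{Item 2.} Substituting the $F_\theta$–$D_\theta$ identity into \eqref{nrsde-vp} turns the drift into $\bigl[f(t) + g^2(t)\bar\sigma_t^{-2}\bigr]\mathbf{x}_t - g^2(t)\bar\sigma_t^{-2}\alpha_t D_\theta(\mathbf{x}_t,t)$, so in \eqref{eq:NSDE} the linear coefficient is now $A_D(t) := f(t) + g^2(t)\bar\sigma_t^{-2} = \tfrac{\mathd}{\mathd t}\log(\alpha_t\sigma_t^2)$ rather than $A_F(t) = f(t)$; hence the transition matrix is $\Phi_{A_D}(t,s) = \alpha_t\sigma_t^2/(\alpha_s\sigma_s^2)$, different from $\alpha_t/\alpha_s$. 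Since SEEDS is \emph{defined} by exponentiating $A$ for the linear part and changing variables adapted to $A$, the $\widetilde{\mathbf{x}}_s$-coefficient of the data-mode SEEDS-1 is $\alpha_t\sigma_t^2/(\alpha_s\sigma_s^2)$, which for $t\neq s$ already differs from the $\alpha_t/\alpha_s$ of the noise-mode SEEDS-1 in Algorithm \ref{alg:SERK-solver-1}; the same comparison applies to Algorithms \ref{alg:SERK-solver-2}–\ref{alg:SERK-solver-3}. Thus the one-step maps produced in the two parameterisations are not equal as functions of $(\widetilde{\mathbf{x}}_s,s,t)$ — which is precisely what distinguishes SEEDS from DPM-Solver, where the reparameterisation is harmless because the flow ODE is integrated exactly in either variable.

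\emph{Item 3.} I would run the derivation of Proposition \ref{Prop1}/\eqref{seeds-1} but in data-prediction mode and for $\ell=1$: with linear part $A_D$ as in item 2 the transition factor is $\Phi_{A_D}(t,s) = \alpha_t\sigma_t^2/(\alpha_s\sigma_s^2)$, the adapted change of variables $\lambda_t$ is the one rendering $\int_s^t \Phi_{A_D}(t,\tau)\,g^2(\tau)\bar\sigma_\tau^{-2}\alpha_\tau\,\mathd\tau$ exponentially weighted, and then the $k=0$ instance of the $\varphi$-identity \eqref{phi-dev} handles the deterministic part while the SETD variance computation (as in \eqref{s-analytic}) handles the Brownian part; this yields a one-step map $\widetilde{\mathbf{x}}_t = c_1\widetilde{\mathbf{x}}_s + c_2\,D_\theta(\widetilde{\mathbf{x}}_s,s) + c_3\,\epsilon$ with $c_1,c_2,c_3$ explicit in $\alpha_t,\alpha_s,\sigma_t,\sigma_s$. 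It then remains to specialise gDDIM \cite[Th.~1]{zhang2022gddim} to the isotropic VP diffusion at its maximal stochasticity level in data-prediction form and to verify that its update has precisely these $c_1,c_2$ and a Gaussian increment of variance $c_3^2$. The main obstacle is this matching: gDDIM is written in a general, possibly anisotropic matrix formalism with its own normalisation of the injected-noise level, so the work is bookkeeping — reducing their transition-matrix and noise-covariance expressions to the scalar VP quantities used here and checking the two Gaussian increments share the same covariance — rather than anything conceptually deep, with the one genuine checkpoint being that their stochasticity parameter at its maximum corresponds to our $\ell=1$ and not to a rescaling of it.
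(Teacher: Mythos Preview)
Your items 1 and 3 are on the same track as the paper. For item 1 the paper's argument is exactly your ``more substantively'' paragraph: dropping the Brownian term from SEEDS-1 leaves the network coefficient $-2\bar\sigma_t(e^h-1)$, twice that of DPM-Solver-1. For item 3 the paper executes precisely the plan you sketch --- it writes down data-mode SEEDS-1 explicitly, translates the gDDIM update into $(\alpha_t,\sigma_t)$ notation, and matches termwise after rewriting via $D_\theta(\widetilde{\mathbf x}_s,s)=\widetilde{\mathbf x}_s/\alpha_s-\sigma_s\,\epsilon_\theta(\widetilde{\mathbf x}_s,s)$; the bookkeeping is purely scalar, no matrix reduction needed.

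Your item 2, however, has a real gap: the discriminator you chose does not distinguish the two schemes. Because $D_\theta$ is affine in $\widetilde{\mathbf x}_s$, substituting $D_\theta=\widetilde{\mathbf x}_s/\alpha_s-\sigma_s\epsilon_\theta$ into the data-mode SEEDS-1 update and collecting the $\widetilde{\mathbf x}_s$ terms gives
\[
\frac{\sigma_t^2\alpha_t}{\sigma_s^2\alpha_s}-\frac{\alpha_t}{\alpha_s}\bigl(e^{-2h}-1\bigr)=\frac{\alpha_t}{\alpha_s},
\]
so the linear-in-$\widetilde{\mathbf x}_s$ parts of the two one-step maps \emph{coincide} once both are expressed through the same model variable; comparing the raw transition matrices $\Phi_{A_D}$ and $\Phi_{A_F}$ therefore proves nothing about equivalence of the schemes. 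The paper's proof compares the Gaussian-increment variances instead: $\bar\sigma_t^2(1-e^{-2h})$ in data mode versus $\bar\sigma_t^2(e^{2h}-1)$ in noise mode, which differ for every $h\neq 0$ (equivalently, the $\epsilon_\theta$ coefficients after substitution are $\bar\sigma_t(e^{-h}-e^h)$ versus $-2\bar\sigma_t(e^h-1)$). Your closing remark that the reparameterisation is ``harmless'' for DPM-Solver is also off: the paper explicitly cites the DPM-Solver2 versus DPM-Solver++(2S) discrepancy as the deterministic analogue of this item --- it is harmless only at first order, for the same cancellation reason you just witnessed.
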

The first point makes it explicit that SEEDS are not incremental
based on DPM-Solver. The second point in Prop. \ref{related} is analog to the
result in \href{https://openreview.net/pdf?id=4vGwQqviud5}{Appendix B} 
of {\cite{lu2022dpm}}, where the authors
compare DPM-Solver2 and DPM-Solver++(2S), that is the noise and data
prediction approaches, and find that they do not equate. The last point exhibits 
gDDIM as a special case of SEEDS-1 for isotropic DPMs.

\begin{remark}
  \label{cld-rem}
  Building solvers from the representation of the exact
  solution in \eqref{eq:NSDE} requires computing the transition
  matrix $\Phi_A (t, s)$, which cannot be analytically computed for
  non-isotropic DPMs such as CLD {\cite{dockhorn2022score}}. Nevertheless, the
  SEEDS approach can be applied in this scenario in at least two different
  ways.
  On the one hand, the SSCS method from {\cite{dockhorn2022score}} resides
  splitting $\Phi_A (t, s)$ into two separate terms. The first can be
  analytically computed. The second describes the evolution of a semi-linear
  differential equation {\cite[Eq. 92]{dockhorn2022score}}. While
  {\cite{dockhorn2022score}} approximates the latter by the Euler method,
  crafting exponential integrators for approximating such DE may yield an
  acceleration of the SSCS method.
  On the other hand, gDDIM {\cite{zhang2022gddim}} proposes an extension of
  DEIS sampling {\cite{zhang2022fast}} to CLD by setting a pre-sampling phase
  {\cite[App. C.4]{zhang2022gddim}} in which they compute an approximation of
  $\Phi_A (t, s)$ in order to apply their method, and the latter was shown in
  Prop. \ref{related}. to be a special case of our method. Unfortunately, the authors
  did not release pre-trained models in {\cite{zhang2022gddim}}, and the latter
  are not the same as those in {\cite{dockhorn2022score}}. Sampling in this 
  scenario may also benefit from our approach.
\end{remark}

\section{Experiments}
\label{sec:exps}

We compare SEEDS with several previous methods on discretely and continuously
pre-trained DPMs. We report results of many available sources, such as DDPM
{\cite{ho2020denoising}}, Analytic DDPM {\cite{bao2022analytic}}, PNDM
{\cite{liu2022pseudo}}, GGF {\cite{jolicoeur2021gotta}}, DDIM
{\cite{song2020denoising}}, gDDIM {\cite{zhang2022gddim}}, 
DEIS {\cite{zhang2022fast}} and DPM-Solver
{\cite{dpm-solver}}. Although we do not include training-based schemes here,
we still included GENIE {\cite{dockhorn2022genie}}, which trains a small
additional network but still solves the correct generative ODE at
higher-order. For each experiment, we compute the FID score for 50K sampled
images on multiple runs and report the minimum along different solvers. 
Details on model specifications and experiment illustrations are shown in Appendix
\ref{app:exp-details}.

\paragraph{Practical considerations.} For continuously trained models, SEEDS
use the EDM discretization {\cite[Eq. 5]{Karras2022edm}} with default
parameters and does \tmtextit{not} use the \tmtextit{last-step iteration
trick}, meaning that the last iteration of SEEDS is trivial. For discretely
trained models, SEEDS use the linear step schedule in the interval
$[\lambda_{t_0}, \lambda_{t_N}]$ interval following \ {\cite[Sec. 3.3,
3.4]{dpm-solver}}. All the reported SEEDS results were obtained using the
noise prediction mode. We conducted comparative experiments on SEEDS for both
the data and noise prediction modes and found better results with the latter
(see Tab. \ref{tab:grid-cifar-cont-dp-vs-np} for details). EDM solvers
{\cite[Alg. 2]{Karras2022edm}} depend on four parameters controlling the
amount of noise to be injected in a specific sub-interval of the iterative
procedure. We consider three scenarios: when stochasticity is injected along
all the iterative procedure, we denote it stochastic EDM, when no stochasticity
is injected we denote it EDM ($S_{\tmop{churn}} = 0$) and we denote EDM
(Optimized) the case where such parameters were subject to an optimization
procedure. To better evaluate sampling quality along the sample pre-trained
DPM, we recalculate DPM-Solver for sampling from the \tmtextit{non-deep} VP
continuous model on the CIFAR-10 dataset. All implementation details 
can be found in Appendix \ref{app:imple-details}.

\begin{table*}[t]
\caption{Sample quality measured by FID$\downarrow$ on pre-trained DPMs. We report 
the minimum FID obtained by each model and the NFE at which it was obtained. For 
CIFAR, CelebA and FFHQ, we use baseline pre-trained models 
\cite{song2020score,Karras2022edm}. For ImageNet, we use the optimized pre-trained 
model from \cite{Karras2022edm}.  *discrete-time model, $^{\star}$continuous-time 
model, $^\dagger$recomputed FID for the non-deep model.}
\label{table-fid-all}
\vskip 0.15in
\centering
\begin{small}
\begin{sc}
\begin{tabular}{cc}
{\begin{tabular}{lll}
    \toprule
  Sampling method & FID$\downarrow$ & NFE  \\
  \midrule
  CIFAR-10* vp-uncond. \\
  \hline
    DDIM {\cite{song2020denoising}} & 3.95 & 1000 \\
  Analytic-DDPM {\cite{bao2022analytic}} & 3.84 & 1000 \\
  GENIE {\cite{dockhorn2022genie}} & 3.64 & 25 \\
  Analytic-DDIM {\cite{bao2022analytic}} & 3.60 & 200 \\
  F-PNDM (linear) {\cite{liu2022pseudo}} & 3.60 & 250 \\
  DPM-Solver$^\dagger$ {\cite{dpm-solver}} & 3.48 & 44 \\
  F-PNDM (cosine) {\cite{liu2022pseudo}} & 3.26 & 1000 \\
  DDPM {\cite{ho2020denoising}} & 3.16 & 1000 \\
  SEEDS-3 (Ours) & \tmtextbf{3.08} & \tmtextbf{201} \\

  \hline
  CIFAR-10$^{\star}$ vp-cond. \\
  \hline
  DPM-solver$^\dagger$ {\cite{dpm-solver}} & 3.57 & 195 \\
  EDM ($S_{\textnormal{churn}}=0$) {\cite{Karras2022edm}} & 2.48 & 35 \\
  SEEDS-3 (Ours) & \tmtextbf{2.08} & \tmtextbf{129} \\
  
  \hline
  CIFAR-10$^{\star}$ vp-uncond. \\
  \hline
  DPM-Solver$^\dagger$ {\cite{dpm-solver}} & 2.59 & 51 \\
  GGF {\cite{jolicoeur2021gotta}} & 2.59 & 180 \\
  gDDIM {\cite{zhang2022gddim}} & 2.56 & 100 \\
  DEIS $\rho$3Kutta {\cite{zhang2022fast}} & 2.55 & 50 \\
  Euler-Maruyama {\cite{song2020score}} & 2.54 & 1024 \\
  Stochastic EDM {\cite{Karras2022edm}} & 2.54 &1534 \\
  SEEDS-3 (Ours) & 2.39 & 165 \\
  EDM (Optimized)  {\cite{Karras2022edm}} & \tmtextbf{2.27} & \tmtextbf{511} \\
  \bottomrule
\end{tabular}}
&
{\begin{tabular}{lll}
\toprule
  Sampling method & FID$\downarrow$ & NFE  \\
  \midrule
  CelebA-64* vp-uncond.\\
    \hline
    Analytic-DDPM {\cite{bao2022analytic}} & 5.21 & 1000 \\
   DDIM {\cite{song2020denoising}} & 4.78 & 200 \\
  DDPM {\cite{ho2020denoising}} & 3.50 & 1000 \\
  gDDIM {\cite{zhang2022gddim}} & 3.85 & 50 \\
    Analytic-DDIM {\cite{bao2022analytic}} & 3.13 & 1000 \\
3-DEIS {\cite{zhang2022fast}} & 2.95 & 50 \\
  F-PNDM (linear) {\cite{liu2022pseudo}} & 2.71 & 250 \\
  DPM-Solver {\cite{dpm-solver}} & 2.71 & 36 \\
  SEEDS-3 (Ours) & \tmtextbf{1.88} & \tmtextbf{90} \\
    \hline
  FFHQ-64$^{\star}$ vp-uncond.\\
    \hline
  DPM-Solver$^\dagger$  \cite{dpm-solver} & 3.52 & 90\\ 
SEEDS-3 (Ours) & 3.40 & 150\\
  EDM ($S_{\textnormal{churn}}=0$) \cite{Karras2022edm} & \textbf{3.39} & \textbf{79} \\

  \hline
  ImageNet-64 EDM-cond.\\
    \hline
    DPM-solver$^\dagger$ {\cite{dpm-solver}} & 3.01 & 270 \\
    EDM ($S_{\textnormal{churn}}=0$) {\cite{Karras2022edm}} & 2.22  & 511 \\
    SEEDS-3 (Ours) & 1.38 & 270 \\
    EDM (Optimized) {\cite{Karras2022edm}} & \textbf{1.36} & \textbf{511} \\
\bottomrule
\end{tabular}}
\end{tabular}
\end{sc}
\end{small}

\end{table*} 

 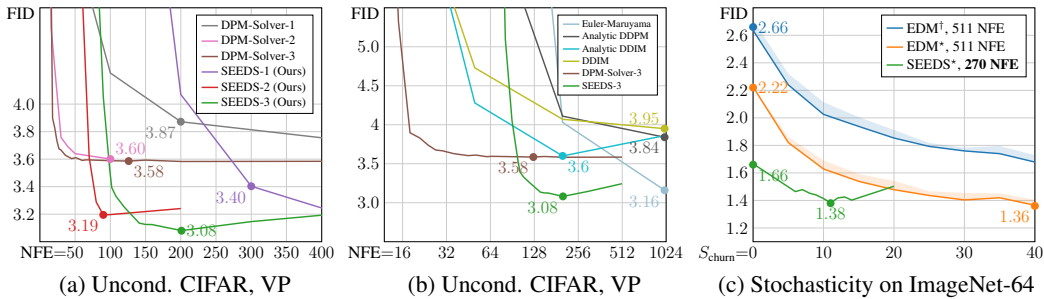
\begin{figure*}[t]
 \noindent\resizebox{\textwidth}{!}{
  \hspace*{-0.15cm}\begin{tikzpicture}
\begin{axis}[
  xmin={0}, xmax={400}, domain={0:400}, xmode={linear}, xtick={0, 50, 100, 150, 200, 250, 300, 350, 400}, xticklabels={, \tickNFE{50}, $100$, $150$, $200$, $250$, $300$, $350$, $400$},
  ymin={3}, ymax={4.7}, restrict y to domain*=3:16, ymode={linear}, ytick={3.2, 3.4, 3.6, 3.8, 4.0, 4.7}, yticklabels={$3.2$, $3.4$, $3.6$, $3.8$, $4.0$, \tickFID},
  grid={major}, legend pos={north east}, legend cell align={left}, legend style={nodes={scale=0.87, transform shape}, font=\normalsize, /tikz/every even column/.append style={column sep=1.6mm}},
  label style={font=\large},
        tick label style={font=\large} 
]
\addplot[C7, line width=1pt] coordinates {\drawCifarDpmsolverOne};
\addplot[C6, line width=1pt] coordinates {\drawCifarDpmsolverTwo};
\addplot[C5, line width=1pt] coordinates {\drawCifarDpmsolverThree};
\addplot[C4, line width=1pt] coordinates {\drawCifarSeedsOne};
\addplot[C3, line width=1pt] coordinates {\drawCifarSeedsTwo};
\addplot[C2, line width=1pt] coordinates {\drawCifarSeedsThree};

\addplot[C7, only marks, forget plot]  coordinates {(200,3.872)};\node at (axis cs:(200,3.872) [anchor={north east}] {\textcolor{C7}{\large $3.87$}};
\addplot[C6, only marks, forget plot]  coordinates {(100,	3.601)};\node at (axis cs:100,	3.601) [anchor={south west}] {\textcolor{C6}{\large $3.60$}};
\addplot[C5, only marks, forget plot]  coordinates {(126, 3.586)};\node at (axis cs:126, 3.586) [anchor={north west}] {\textcolor{C5}{\large $3.58$}};
\addplot[C4, only marks, forget plot]  coordinates {(300,3.403)};\node at (axis cs:300	,3.403) [anchor={north east}] {\textcolor{C4}{\large $3.40$}};
\addplot[C3, only marks, forget plot]  coordinates {(90,3.195)};\node at (axis cs:90,3.195) [anchor={north east}] {\textcolor{C3}{\large $3.19$}};
\addplot[C2, only marks, forget plot]  coordinates {(201	,	3.08)};\node at (axis cs:201,3.081) [anchor={west}] {\textcolor{C2}{\large $3.08$}};

\legend{
  {DPM-Solver-1},
  {DPM-Solver-2},
  {DPM-Solver-3},
  {SEEDS-1 (Ours)},
  {SEEDS-2 (Ours)},
  {SEEDS-3 (Ours)}
}
\end{axis}
\end{tikzpicture}
  \begin{tikzpicture}
\begin{axis}[
  xmin={12}, xmax={1024}, domain={12:1024}, xmode={log}, xtick={12, 16, 32, 64, 128, 256, 512, 1024}, xticklabels={, \tickNFE{16}, $32$, $64$, $128$, $256$, $512$, $1024$},
  ymin={2.5}, ymax={5.5},restrict y to domain*=2.5:16,  ymode={linear}, ytick={3.0, 3.5, 4, 4.5, 5, 5.5}, yticklabels={$3.0$, $3.5$, $4$, $4.5$, $5.0$, \tickFID},
  grid={major}, legend pos={north east}, legend cell align={left}, legend style={nodes={scale=0.7, transform shape}, font=\normalsize, /tikz/every even column/.append style={column sep=1.6mm}},
  label style={font=\large},
        tick label style={font=\large} 
]
\addplot[C11, line width=1pt] coordinates {\drawCifarEulerMaruyama};
\addplot[C10, line width=1pt] coordinates {\drawCifarAddpm};
\addplot[C9, line width=1pt] coordinates {\drawCifarAddim};
\addplot[C8, line width=1pt] coordinates {\drawCifarDdim};
\addplot[C5, line width=1pt] coordinates {\drawCifarDpmsolver};
\addplot[C2, line width=1pt] coordinates {\drawCifarSeeds};

\addplot[C11, only marks, forget plot]  coordinates {(999, 3.16)};
\node at (axis cs:999,3.16) [anchor={north east}] {\textcolor{C11}{\large$3.16$}};
\addplot[C10, only marks, forget plot]  coordinates {(999, 3.84)};
\node at (axis cs:999, 3.84) [anchor={north east}] {\textcolor{C10}{\large$3.84$}};
\addplot[C9, only marks, forget plot]  coordinates {(200, 3.6)};
\node at (axis cs:200, 3.6) [anchor={north west}] {\textcolor{C9}{\large$3.6$}};
\addplot[C8, only marks, forget plot]  coordinates {(999, 3.95)};
\node at (axis cs:999, 3.95) [anchor={south east}] {\textcolor{C8}{\large$3.95$}};
\addplot[C5, only marks, forget plot]  coordinates {(126, 3.586)};
\node at (axis cs:126, 3.586) [anchor={north east}] {\textcolor{C5}{\large$3.58$}};
\addplot[C2, only marks, forget plot]  coordinates {(201, 3.081)};
\node at (axis cs:201, 3.081) [anchor={north east}] {\textcolor{C2}{\large$3.08$}};
\legend{
  {Euler-Maruyama},
  {Analytic DDPM},
  {Analytic DDIM},
  {DDIM},
  {DPM-Solver-3},
  {SEEDS-3}
}
\end{axis}

\end{tikzpicture}
 \hspace*{0.4cm}
  \begin{tikzpicture}
\begin{axis}[
  xmin={0}, xmax={40}, xmode={linear}, xtick={0, 10, 20, 30, 40}, xticklabels={\tickSchurn{0}, $10$, $20$, $30$, $40$},
  ymin={1.1}, ymax={2.8}, ymode={linear}, ytick={1.2, 1.4, 1.6, 1.8, 2.0, 2.2, 2.4, 2.6, 2.8}, yticklabels={$1.2$, $1.4$, $1.6$, $1.8$, $2.0$, $2.2$, $2.4$, $2.6$, \tickFID},
  grid={major}, legend pos={north east}, legend cell align={left}, legend style={font=\normalsize, /tikz/every even column/.append style={column sep=1.6mm}},
  label style={font=\large},
        tick label style={font=\large} 
]
\fillbetween[C0, opacity=0.15, forget plot]{coordinates {\genTrainingPlotImgcOrigLo}}{coordinates {\genTrainingPlotImgcOrigHi}};
\fillbetween[C1, opacity=0.15, forget plot]{coordinates {\genTrainingPlotImgcEDMLo}}{coordinates {\genTrainingPlotImgcEDMHi}};

\addplot[C0, line width=1pt] coordinates {\genTrainingPlotImgcOrig};
\addplot[C1, line width=1pt] coordinates {\genTrainingPlotImgcEDM};
\addplot[C2, line width=1pt] coordinates {\genTrainingPlotImgcSEEDS};

\addplot[C1, only marks, forget plot] coordinates {(40, 1.36)}; 
\node at (axis cs:40,1.36) [anchor={north east}] {\textcolor{C1}{\large$1.36$}};
\addplot[C2, only marks, forget plot] coordinates {(11, 1.38)}; 
\node at (axis cs:11,1.38) [anchor={north}] {\textcolor{C2}{\large$1.38$}};
\addplot[C0, only marks, forget plot] coordinates {(0, 2.66)}; 
\node at (axis cs:0,2.66) [anchor={west}] {\textcolor{C0}{\large$2.66$}};
\addplot[C1, only marks, forget plot] coordinates {(0, 2.22)}; 
\node at (axis cs:0,2.22) [anchor={west}] {\textcolor{C1}{\large$2.22$}};
\addplot[C2, only marks, forget plot] coordinates {(0, 1.66)}; 
\node at (axis cs:0,1.66) [anchor={north west}] {\textcolor{C2}{\large$1.66$}};

\legend{
  {EDM$^\dagger$, 511 NFE},
  {EDM$^\star$, 511 NFE},
  {SEEDS$^\star$, \textbf{270 NFE}}
}
\end{axis}
\end{tikzpicture}
 }
 \\\hfill%
\makebox[0.33\linewidth]{\footnotesize (a) Uncond. CIFAR, VP  }\hfill%
\makebox[0.33\linewidth]{\hspace*{-.30em}\footnotesize (b) Uncond. CIFAR, VP}\hfill%
\makebox[0.33\linewidth]{\hspace*{-.15em}\footnotesize (c) Stochasticity on ImageNet-64}\hfill
\caption{\label{fig:fid-cont}\small{(a-b) Comparison of sample quality measured by 
FID $\downarrow$ of SEEDS, DPM-Solver and other methods for discretely trained DPMs on 
CIFAR-10 with varying number of function evaluations. (c) Effect of $S_{\text{churn}}$ on 
SEEDS-3 (at NFE = 270) and EDM method (at NFE = 511) on class-conditional ImageNet-64. 
$^\dagger$baseline ADM model. $^\star$EDM preconditioned model.} }
 \end{figure*}

\paragraph{Comparison with previous works.} In Table \ref{table-fid-all} we compare 
SEEDS with other sampling methods for pre-trained DPMs, and report the minimum FID 
obtained and their respective NFE. For each of the reported pre-trained models in 
CIFAR-10, CelebA-64 and ImageNet-64, SEEDS outperform all off-the-shelf methods in 
terms of quality with relatively low NFEs. For the discrete pre-trained DPM on 
CIFAR-10 (VP Uncond.) it is $\sim 5\times$ faster than the second best performant 
solver. Additionally, SEEDS remain competitive with the optimized EDM sampler. For 
ImageNet-64, it is nearly as good as the optimized EDM sampler while being almost 
twice faster than the latter.
Figure \ref{fig:fid-cont} (a) compares the FID score of SEEDS and DPM-Solver with 
varying NFEs. While DPM-Solver methods stabilize faster in a very low NFE regime, 
our methods eventually surpass them. Interestingly, after reaching their minimum, 
SEEDS methods tend to become worse at higher NFEs, a fact that is also visible in 
Figure \ref{fig:fid-cont} (b), where we notice that such phenomenon is also present 
on other SDE methods. We report in Appendix \ref{app:exp-details}, the results of 
our SEEDS methods in the low NFE regime
and connect their behavior with their proven convergence rate.

\paragraph{Combining SEEDS with other methods.} While being an off-the-shelf
method, SEEDS can be combined with the Churn-like method used in EDM incurring
into SDE solvers with an additional source of stochasticity. As done in
{\cite{Karras2022edm}}, we evaluate the effect of this second kind of
stochasticity, measured by a parameter denoted $S_{\tmop{churn}}$. Figure
\ref{fig:fid-cont} (c) shows that SEEDS and EDM show similar behavior, although
SEEDS-3 is twice faster, more sensitive to $S_{\tmop{churn}}$, and quickly achieves 
comparable performance to EDM. This indicates that SEEDS could possibly outperform 
EDM after a proper parameter optimization that will be left for future works. 
Nevertheless, we highlight the fact that obtaining such optimal parameters is 
costly and might scale poorly.

\pgfplotsset{
compat=1.15,
width=\columnwidth,     
height=0.9\columnwidth,
every axis plot/.append style={line width=1.5pt},
}
\pgfplotsset{xtick style={draw=none}, xtickmin=0, xtickmax=50, xtick align=outside}
\begin{figure}[t]


\noindent\resizebox{\linewidth}{!}{
    \makebox[0.33\linewidth]{\hspace*{1.3cm}\scriptsize disc=vp, schedule=vp, scaling=vp  }\hfill%
\makebox[0.33\linewidth]{\hspace*{0.9cm}\scriptsize disc=edm, schedule=vp, scaling=vp}\hfill%
\makebox[0.33\linewidth]{\hspace*{0.3cm}\scriptsize disc=edm, schedule=linear, scaling=none}\hfill
}
    \centering
    \noindent\resizebox{\linewidth}{!}{
    \begin{tikzpicture}
    \input{data/drawPixel1.tex}
\node[label={[label distance=0.5cm,text depth=-1ex,rotate=90]right:{\Huge SEEDS-3}}] at (-2,3) {};
\begin{axis}[
  ymin={-2}, ymax={3}, ymode={linear}, ytick={1.2, 1.4, 1.6, 1.8, 2.0, 2.2, 2.4, 2.6, 2.8}, yticklabels={},
  grid={minor}, legend pos={north east}, legend cell align={left}, legend style={font=\normalsize, /tikz/every even column/.append style={column sep=1.6mm}},
  xmin=0,
        xmax=55,
		axis y line*=left,
		axis x line*=bottom,
		xtick={0, 10, 20, 30, 40, 50, 55}, xticklabels={\tickSTEP{0}, $10$, $20$, $30$, $40$, $50$},
        yticklabels={},
        ylabel={Pixel state},
        label style={font=\huge},
        tick label style={font=\huge}
]

\addplot[C0] coordinates {\drawPixelOne}; 
\addplot[C1] coordinates {\drawPixelTwo};
\addplot[C2] coordinates {\drawPixelThree};
\addplot[C3] coordinates {\drawPixelFour};
\addplot[C4] coordinates {\drawPixelFive};
\addplot[C5] coordinates {\drawPixelSix};
\addplot[C6] coordinates {\drawPixelSeven};
\addplot[C7] coordinates {\drawPixelEight};
\addplot[C8] coordinates {\drawPixelNine};
\addplot[C9] coordinates {\drawPixelTen};

\end{axis}
    \end{tikzpicture}
    \begin{tikzpicture}
    \input{data/drawPixel2.tex}
\begin{axis}[
  xmin={0}, xmax={50}, xmode={linear}, xtick={0, 10, 20, 30, 40, 50}, xticklabels={$0$, $10$, $20$, $30$, $40$, $50$},
  ymin={-2}, ymax={3}, ymode={linear}, ytick={1.2, 1.4, 1.6, 1.8, 2.0, 2.2, 2.4, 2.6, 2.8}, yticklabels={},
  grid={minor}, legend pos={north east}, legend cell align={left}, legend style={font=\normalsize, /tikz/every even column/.append style={column sep=1.6mm}},
  xmin=0,
        xmax=55,
		axis y line*=left,
		axis x line*=bottom,
		xtick={0, 10, 20, 30, 40, 50, 55}, xticklabels={$0$, $10$, $20$, $30$, $40$, $50$},
        yticklabels={},
        label style={font=\huge},
        tick label style={font=\huge}
]

\addplot[C0] coordinates {\drawPixelOne}; 
\addplot[C1] coordinates {\drawPixelTwo};
\addplot[C2] coordinates {\drawPixelThree};
\addplot[C3] coordinates {\drawPixelFour};
\addplot[C4] coordinates {\drawPixelFive};
\addplot[C5] coordinates {\drawPixelSix};
\addplot[C6] coordinates {\drawPixelSeven};
\addplot[C7] coordinates {\drawPixelEight};
\addplot[C8] coordinates {\drawPixelNine};
\addplot[C9] coordinates {\drawPixelTen};

\end{axis}
    \end{tikzpicture}
    \begin{tikzpicture}
    \input{data/drawPixel3.tex}
    \begin{pgfonlayer}{background}
    \begin{axis}[
        ymin=-120,
        ymax=152,
		xmin=0,
        xmax=55,
		axis y line*=left,
		axis x line*=bottom,
		xtick={0, 10, 20, 30, 40, 50, 55}, xticklabels={$0$, $10$, $20$, $30$, $40$, $50$},
        yticklabels={},
        label style={font=\huge},
        tick label style={font=\huge} 
    ]
			\addplot[C0] coordinates {\drawPixelOne}; 
			\addplot[C1] coordinates {\drawPixelTwo};
			\addplot[C2] coordinates {\drawPixelThree};
			\addplot[C3] coordinates {\drawPixelFour};
			\addplot[C4] coordinates {\drawPixelFive};
			\addplot[C5] coordinates {\drawPixelSix};
			\addplot[C6] coordinates {\drawPixelSeven};
			\addplot[C7] coordinates {\drawPixelEight};
			\addplot[C8] coordinates {\drawPixelNine};
			\addplot[C9] coordinates {\drawPixelTen};
			
			\coordinate (inset) at (axis description cs:0.95,0.95);
    \end{axis}
    \end{pgfonlayer}

    \begin{pgfonlayer}{foreground}
    \begin{axis}[
        at={(inset)},
        anchor=north east,
        small,
        %
        width=7cm,
        height=4cm,
        xmin=25,
        xmax=35,
        ymin=-5,
        ymax=5,
        yticklabels={},
        xtick={25, 35},
        xticklabels={$25$, $35$},
        axis background/.style={
            fill=white,
        },
        name=insetAxis,
        label style={font=\huge},
        tick label style={font=\huge} 
    ]
        \addplot[C0] coordinates {\drawPixelOne}; 
		\addplot[C1] coordinates {\drawPixelTwo};
		\addplot[C2] coordinates {\drawPixelThree};
		\addplot[C3] coordinates {\drawPixelFour};
		\addplot[C4] coordinates {\drawPixelFive};
		\addplot[C5] coordinates {\drawPixelSix};
		\addplot[C6] coordinates {\drawPixelSeven};
		\addplot[C7] coordinates {\drawPixelEight};
		\addplot[C8] coordinates {\drawPixelNine};
		\addplot[C9] coordinates {\drawPixelTen};
    	\end{axis}
    \end{pgfonlayer}
    \end{tikzpicture}
    }

    \noindent\resizebox{\linewidth}{!}{
    \begin{tikzpicture}
    \input{data/drawPixel4.tex}
\node[label={[label distance=-0.5cm,text depth=-1ex,rotate=90]right:{\Huge Stochastic EDM}}] at (-2,3) {};
\begin{axis}[
  ymin={-3}, ymax={3}, 
  grid={minor}, legend pos={north east}, legend cell align={left}, legend style={font=\normalsize, /tikz/every even column/.append style={column sep=1.6mm}},
  xmin=0,
        xmax=55,
		axis y line*=left,
		axis x line*=bottom,
		xtick={0, 10, 20, 30, 40, 50, 55}, xticklabels={\tickSTEP{0}, $10$, $20$, $30$, $40$, $50$},
        yticklabels={},
        ylabel={Pixel state},
        label style={font=\huge},
        tick label style={font=\huge} 
]

\addplot[C0] coordinates {\drawPixelOne}; 
\addplot[C1] coordinates {\drawPixelTwo};
\addplot[C2] coordinates {\drawPixelThree};
\addplot[C3] coordinates {\drawPixelFour};
\addplot[C4] coordinates {\drawPixelFive};
\addplot[C5] coordinates {\drawPixelSix};
\addplot[C6] coordinates {\drawPixelSeven};
\addplot[C7] coordinates {\drawPixelEight};
\addplot[C8] coordinates {\drawPixelNine};
\addplot[C9] coordinates {\drawPixelTen};

\end{axis}
    \end{tikzpicture}
    \begin{tikzpicture}
    \input{data/drawPixel5.tex}

\begin{axis}[
  xmin={0}, xmax={50}, xmode={linear}, xtick={0, 10, 20, 30, 40, 50}, xticklabels={$0$, $10$, $20$, $30$, $40$, $50$},
  ymin={-3}, ymax={3}, ymode={linear}, ytick={1.2, 1.4, 1.6, 1.8, 2.0, 2.2, 2.4, 2.6, 2.8}, yticklabels={},
  grid={minor}, legend pos={north east}, legend cell align={left}, legend style={font=\normalsize, /tikz/every even column/.append style={column sep=1.6mm}},
  xmin=0,
        xmax=55,
		axis y line*=left,
		axis x line*=bottom,
		xtick={0, 10, 20, 30, 40, 50, 55}, xticklabels={$0$, $10$, $20$, $30$, $40$, $50$},
        yticklabels={},
        label style={font=\huge},
        tick label style={font=\huge} 
]

\addplot[C0] coordinates {\drawPixelOne}; 
\addplot[C1] coordinates {\drawPixelTwo};
\addplot[C2] coordinates {\drawPixelThree};
\addplot[C3] coordinates {\drawPixelFour};
\addplot[C4] coordinates {\drawPixelFive};
\addplot[C5] coordinates {\drawPixelSix};
\addplot[C6] coordinates {\drawPixelSeven};
\addplot[C7] coordinates {\drawPixelEight};
\addplot[C8] coordinates {\drawPixelNine};
\addplot[C9] coordinates {\drawPixelTen};

\end{axis}
    \end{tikzpicture}
    \begin{tikzpicture}
    \input{data/drawPixel6.tex}

    \begin{pgfonlayer}{background}
    \begin{axis}[
        ymin=-145,
        ymax=135,
        xmin=0,
        xmax=55,
		axis y line*=left,
		axis x line*=bottom,
		xtick={0, 10, 20, 30, 40, 50, 55}, xticklabels={$0$, $10$, $20$, $30$, $40$, $50$},
        yticklabels={},
        label style={font=\huge},
        tick label style={font=\huge} 
    ]
			\addplot[C0] coordinates {\drawPixelOne}; 
			\addplot[C1] coordinates {\drawPixelTwo};
			\addplot[C2] coordinates {\drawPixelThree};
			\addplot[C3] coordinates {\drawPixelFour};
			\addplot[C4] coordinates {\drawPixelFive};
			\addplot[C5] coordinates {\drawPixelSix};
			\addplot[C6] coordinates {\drawPixelSeven};
			\addplot[C7] coordinates {\drawPixelEight};
			\addplot[C8] coordinates {\drawPixelNine};
			\addplot[C9] coordinates {\drawPixelTen};
			
			\coordinate (inset) at (axis description cs:0.95,0.95);
    \end{axis}
    \end{pgfonlayer}

    \begin{pgfonlayer}{foreground}
    \begin{axis}[
        at={(inset)},
        anchor=north east,
        small,
        %
        width=7cm,
        height=4cm,
        ymin=-5,
        ymax=5,
        xmin=25,
        xmax=35,
        yticklabels={},
        xtick={25, 35},
        xticklabels={$25$, $35$},
        label style={font=\huge},
        tick label style={font=\huge},
        axis background/.style={
            fill=white,
        },
        name=insetAxis,
    ]
        \addplot[C0] coordinates {\drawPixelOne}; 
		\addplot[C1] coordinates {\drawPixelTwo};
		\addplot[C2] coordinates {\drawPixelThree};
		\addplot[C3] coordinates {\drawPixelFour};
		\addplot[C4] coordinates {\drawPixelFive};
		\addplot[C5] coordinates {\drawPixelSix};
		\addplot[C6] coordinates {\drawPixelSeven};
		\addplot[C7] coordinates {\drawPixelEight};
		\addplot[C8] coordinates {\drawPixelNine};
		\addplot[C9] coordinates {\drawPixelTen};
    	\end{axis}
    \end{pgfonlayer}
    \end{tikzpicture}
    }
    \caption{Trajectories of 10 pixels (R channel) sampled from SEEDS (1st line) and Stochastic EDM (2nd line) on the optimized pre-trained model \cite{Karras2022edm} on ImageNet64. Schedule=scaling=vp corresponds to the VP coefficients in \eqref{vp-coefs} and schedule=linear, scaling=none to the EDM coefficients \eqref{edm-coefs}. We use the time discretizations disc=vp (linear) and disc=edm given in \cite[Tab.1]{Karras2022edm}.}
    \label{fig:stiffness}
\end{figure}

\paragraph{Stiffness reduction with SEEDS.} 
In Fig. \ref{fig:stiffness}, we illustrate the impact of different choices of discretization
steps, noise schedule and dynamic scaling on SEEDS and stochastic EDM. We see
that choosing the EDM discretization over the linear one has the
effect of flattening the pixel trajectories at latest stages of the simulation
procedure. Also, choosing the parameters \eqref{edm-coefs} over those in
\eqref{vp-coefs} has the effect of greatly changing the distribution variances
as the trajectories evolve. Notice that all the SEEDS trajectories seem
perceptually more stable than those from EDM. It would be interesting
to relate this to the \tmtextit{stiffness} of the semi-linear DE describing these
trajectories, and to the magnitude of the parameters involved in the noise 
injection for EDM solver amplifying this phenomenon.

\paragraph{Ablation studies.} 
As said earlier, our principled use of the Chasles rule to enforce independence 
only on non-overlapping paths for SEEDS-2/3 ensures that the set of resulting 
iterations of our solvers satisfy the Markov property, is new and is the central 
key of success of SEEDS. To highlight this, we conduct an ablation study on how 4 
different combinations of the noise components $\Av$ and $\Bv$ in Algorithms 
\ref{alg:SERK-solver-2} and \ref{alg:SERK-solver-3} have an impact on the sampling 
quality of SEEDS.

For simplicity, we explain this for SEEDS-2 (Alg. \ref{alg:SERK-solver-2}). Set 
$(z^1,z^2,z^3)$ three independent standard Gaussian random variables. Denote 
$\Av=\bar\sigma_{s_1} \sqrt{e^h - 1} z^1$ for the noise contribution in the 
$\mathbf{u}$ term. We have the following choices for the noise contribution $\Bv$ 
in $\tilde{\mathbf{x}}_t$:
\begin{itemize}
    \item SEEDS-2-Correct: our noise combination $\Bv=\bar\sigma_t \left( \sqrt{e^{2 h} - e^h} z^1 + \sqrt{e^h - 1} z^2 \right)$
    \item SEEDS-2-Naive-1: one noise per stage $\Bv=\bar\sigma_t \left( \sqrt{e^{2 h} - e^h} + \sqrt{e^h - 1} \right) z^2$
    \item SEEDS-2-Naive-2: one noise per step $\Bv=\bar\sigma_t \left( \sqrt{e^{2 h} - e^h} + \sqrt{e^h - 1} \right) z^1$
    \item SEEDS-2-Naive-3: one noise per integral evaluation $\Bv=\bar\sigma_t \left( \sqrt{e^{2 h} - e^h} z^3 + \sqrt{e^h - 1} z^2 \right)$
    \item SEEDS-2-Naive-4: noises in inverse position $\Bv=\bar\sigma_t \left( \sqrt{e^{2 h} - e^h} z^2 + \sqrt{e^h - 1} z^1 \right)$.
\end{itemize}
Figure \ref{fig:fid-cifar10-cond-cont-noise-term-comparision} experimentally shows 
that all naive combinations of noises for SEEDS-2 and SEEDS-3 lead to FID/NFE 
curves with different behavior and a sharp drop of performance both in quality and 
speed in all of them, a phenomenon having no deterministic parallel.

\begin{figure*}[t]
 \noindent\resizebox{\textwidth}{!}{
  \hspace*{-0.15cm}
\pgfplotsset{xtick style={draw=none}, xtickmin=0, xtickmax=150, xtick align=outside}

\pgfplotsset{every axis plot/.style={line width=2pt, mark size=3.5pt, mark=x}}


\begin{tikzpicture}
\begin{axis}[
  xmin={0}, xmax={170}, domain={0:170}, xmode={linear}, xtick={0, 8, 30, 60, 90, 120, 150}, xticklabels={, \tickNFE{8}, $30$, $60$, $90$, $120$, $150$},
  ymin={-30}, ymax={450}, restrict y to domain*=-30:450, ymode={linear}, ytick={2, 90, 180, 270, 360, 450}, yticklabels={$2$, $90$, $180$, $270$, $360$, \tickFID},
  grid={major}, legend pos={north east}, legend cell align={left}, legend style={nodes={scale=0.9, transform shape}, font=\normalsize, /tikz/every even column/.append style={column sep=1.6mm}},
  label style={font=\large},
        tick label style={font=\large} 
]
\addplot[C0, line width=1pt] coordinates {\drawSeedsTwoCorrect};
\addplot[C1, line width=1pt] coordinates {\drawSeedsTwoNaiveOne};
\addplot[C2, line width=1pt] coordinates {\drawSeedsTwoNaiveTwo};
\addplot[C3, line width=1pt] coordinates {\drawSeedsTwoNaiveThree};
\addplot[C4, line width=1pt] coordinates {\drawSeedsTwoNaiveFour};

\addplot[C0, only marks, forget plot]  coordinates {(88,	2.332)};\node at (axis cs:(88,	2.332) [anchor={north east}] {\textcolor{C0}{\large $2.33$}};
\addplot[C0, only marks, forget plot]  coordinates {(148, 2.362)};\node at (axis cs:(148, 2.362) [anchor={north east}] {\textcolor{C0}{\large $2.36$}};
\addplot[C1, only marks, forget plot]  coordinates {(148 ,	261.243)};\node at (axis cs:(148 ,	261.243) [anchor={north east}] {\textcolor{C1}{\large $261.24$}};
\addplot[C2, only marks, forget plot]  coordinates {(28, 45.10)};\node at (axis cs:(28, 45.10) [anchor={north east}] {\textcolor{C2}{\large $45.1$}};
\addplot[C3, only marks, forget plot]  coordinates {(148,65.0644)};\node at (axis cs:(148,65.0644) [anchor={north east}] {\textcolor{C3}{\large $65.06$}};
\addplot[C3, only marks, forget plot]  coordinates {(160 ,	106.735)};\node at (axis cs:(160 ,	106.735) [anchor={south east}] {\textcolor{C3}{\large $106.73$}};
\addplot[C4, only marks, forget plot]  coordinates {(148, 18.5319)};\node at (axis cs:(148, 18.5319) [anchor={south east}] {\textcolor{C4}{\large $18.53$}};
\addplot[C4, only marks, forget plot]  coordinates {(88,23.8983)};\node at (axis cs:(88,23.8983) [anchor={south}] {\textcolor{C4}{\large $23.89$}};
\legend{
  {SEEDS-2-Correct},
  {SEEDS-2-Naive-1},
  {SEEDS-2-Naive-2},
  {SEEDS-2-Naive-3},
  {SEEDS-2-Naive-4}
}
\end{axis}
\end{tikzpicture}
 \hspace*{0.4cm}
\pgfplotsset{xtick style={draw=none}, xtickmin=0, xtickmax=150, xtick align=outside}

\pgfplotsset{every axis plot/.style={line width=2pt, mark size=3.5pt, mark=x}}


\begin{tikzpicture}
\begin{axis}[
  xmin={25}, xmax={190}, domain={25:190}, xmode={linear}, xtick={25, 36, 60, 90, 120, 150, 180}, xticklabels={, \tickNFE{36}, $60$, $90$, $120$, $150$, $180$},
  ymin={-30}, ymax={420}, restrict y to domain*=-30:420, ymode={linear}, ytick={2, 80, 160, 240, 320, 400, 420}, yticklabels={$2$, $80$, $160$, $240$, $320$, $400$, \tickFID},
  grid={major}, legend pos={north east}, legend cell align={left}, legend style={nodes={scale=0.9, transform shape}, font=\normalsize, /tikz/every even column/.append style={column sep=1.6mm}},
  label style={font=\large},
        tick label style={font=\large} 
]

\addplot[C0, line width=1pt] coordinates {\drawSeedsThreeCorrect};
\addplot[C1, line width=1pt] coordinates {\drawSeedsThreeNaiveOne};
\addplot[C2, line width=1pt] coordinates {\drawSeedsThreeNaiveTwo};
\addplot[C3, line width=1pt] coordinates {\drawSeedsThreeNaiveThree};
\addplot[C4, line width=1pt] coordinates {\drawSeedsThreeNaiveFour};

\addplot[C0, only marks, forget plot]  coordinates {(129, 2.082)};\node at (axis cs:(129, 2.082) [anchor={north east}] {\textcolor{C0}{\large $2.08$}};
\addplot[C0, only marks, forget plot]  coordinates {(180, 2.199)};\node at (axis cs:(180, 2.199) [anchor={north east}] {\textcolor{C0}{\large $2.19$}};
\addplot[C1, only marks, forget plot]  coordinates {(180, 307.622)};\node at (axis cs:(180, 307.622) [anchor={north east}] {\textcolor{C1}{\large $307.62$}};
\addplot[C2, only marks, forget plot]  coordinates {(42, 275.881)};\node at (axis cs:(42, 275.881) [anchor={north west}] {\textcolor{C2}{\large $275.88$}};
\addplot[C3, only marks, forget plot]  coordinates {(180,	90.80)};\node at (axis cs:(180,	90.80)[anchor={north east}] {\textcolor{C3}{\large $90.80$}};
\addplot[C4, only marks, forget plot]  coordinates {(120,	6.6872)};\node at (axis cs:(120, 6.6872) [anchor={south}] {\textcolor{C4}{\large $6.68$}};
\addplot[C4, only marks, forget plot]  coordinates {(180, 2.43)};\node at (axis cs:(180,	2.43) [anchor={south east}] {\textcolor{C4}{\large $2.43$}};
\legend{
  {SEEDS-3-Correct},
  {SEEDS-3-Naive-1},
  {SEEDS-3-Naive-2},
  {SEEDS-3-Naive-3},
  {SEEDS-3-Naive-4}
}
\end{axis}
\end{tikzpicture}
 }
 \\\hfill%
\makebox[0.49\linewidth]{\footnotesize (a) 2-stage SDE methods  }\hfill%
\makebox[0.49\linewidth]{\hspace*{-.30em}\footnotesize (b) 3-stage SDE methods}\hfill%
\caption{\label{fig:fid-cifar10-cond-cont-noise-term-comparision}Quality (measured 
by FID at increasing NFEs) comparison of SEEDS-2/3 with the enumerated ablation 
versions of it on CIFAR-10 in the (baseline) VP conditional framework. }
 \end{figure*}


\section{Conclusions}

Our focus is on addressing the challenge of training-free sampling from DPMs
without compromising sampling quality. To achieve this, we introduce SEEDS, an
off-the-shelf solution for solving diffusion SDEs. SEEDS capitalize on the
semi-linearity of diffusion SDEs by approximating a simplified formulation of
their exact solutions. Inspired by numerical methods for stochastic
exponential integrators, we propose three SEEDS schemes with proven convergence 
order. They transform the integrals involved in the exact solution into exponentially
weighted integrals, and estimate the deterministic one while analytically
computing the variance of the stochastic integral. We extend our approach to
handle other isotropic DPMs, and evaluate its performance on various benchmark
tests. Our experiments demonstrate that SEEDS can generate images of optimal
quality, outperforming existing SDE solvers while being $3 \sim 5 \times$
faster. \tmtextbf{Limitations and broader impact.} While SEEDS prioritize
optimal quality sampling, they may require substantial computational resources
and energy consumption, making them less suitable for scenarios where speed is
the primary concern. In such cases, alternative ODE methods may be more
appropriate. Additionally, as with other generative models, DPMs can be
employed to create misleading or harmful content, and our proposed solver
could inadvertently amplify the negative impact of generative AI for malicious
purposes.

\tmtextbf{Acknowledgments.} This work has been supported by the French government 
under the ”France 2030” program, as part of the SystemX Technological Research 
Institute. 

{
\small
\bibliographystyle{acm}
\bibliography{biblio}

}

\newpage

\tableofcontents
 
\appendix
\section{Discussion}
\label{app:discussion}

\paragraph{Why do SEEDS exhibit high FID scores in the low NFE regime?}

In Appendix \ref{app:tables}, we provide all tables with FID values at 
increasing NFEs corresponding to Fig. \ref{fig:fid-cont}. Notice that SEEDS 
exhibit high FID scores in the low NFE regime. Let us discuss some 
theoretical facts that might explain this phenomenon. In \cite[Theorem 1.]{xu2023restart}, 
there is a formal explanation on why ODE samplers outperform SDE samplers in 
the small NFE regime and fall short in the large NFE regime. In particular, 
it is theoretically shown that, at large step sizes, it is the 
discretization error that dominates sampling errors while, at small step 
sizes, it is the approximation error that dominates it. One can infer that 
in the large NFE regime, SDE methods (with proven convergence orders) will 
outperform ODE methods in terms of sampling quality. It would be interesting 
to see if combining SEEDS with the new ideas in \cite{xu2023restart} might 
improve SEEDS in the low NFE regime. 

Additionally, SDE solvers and ODE solvers may have different sample quality, 
even if the score models are optimal. Indeed, as proven in \cite[Appendix B.]{lu2022maximum}, 
even if the score model has been trained to the optimal score function, the 
distributions between SDEs and ODEs are still different. This is because the 
distribution at time $T$ in the forward process is always not exactly a 
standard normal distribution as in the reverse SDE/ODE process, and thus the 
distributions at time 0 are different. 

Finally, \cite{xu2023restart, chen2023sampling} show that the overall 
generalization error divides into discretisation and approximation errors, 
indicating the theoretical role of the sampling error as a contribution to a 
DPM's performance. This is also consistent with the suggestion in 
\cite[Section 5]{Karras2022edm}, stating that more diverse datasets continue 
to benefit from stochastic sampling rather than deterministic sampling.

\paragraph{Do SEEDS maintain good performances for higher resolution image generation?}

As training-free and optimization-free SDE solvers, SEEDS naturally maintain 
good performances in higher resolution images in the realm of unguided image 
generation (unconditional and conditional). For unconditional generation 
using Latent Diffusion Model, SEEDS are able to generate good quality images 
already at 100 NFEs. Now, for guided image generation, higher-stage SEEDS 
will, as expected, see their performance sharply drop as the guidance scale 
grows for the same reasons DPM-solver do (see \cite{lu2022dpm} for details). 
Yet, SEEDS-1 still maintains high quality sampling in this scenario.
In Appendix \ref{app:figs}, Fig. \ref{fig:SD-512x512} exhibits a $512^2$ 
image generated with SEEDS-1 at 90 NFEs on Stable Diffusion with default 
guidance scale.

\paragraph{Do SEEDS exhibit other distinguishing features beyond generation optimal quality?}

As SDE solvers, SEEDS have a distinctive (although indirect) capability in 
the realm of adversarial robustness compared to DPM-Solver:
\begin{enumerate}
    \item For more than 2 years, the leader-board of 
    \href{https://robustbench.github.io}{RobustBench} has been dominated by 
    Diffusion-based data augmentation techniques on top of Adversarial Training. 
    The current SOTA \cite{peng2023robust,wang2023better} uses EDM-preconditioned 
    DPMs and need to generate as many as 50M images to achieve SOTA robustness 
    results. For ImageNet-64, \cite{wang2023better} use the EDM pretrained model 
    and optimization-based stochastic EDM sampler for data augmentation, leading 
    to a 5\% robust accuracy improvement compared to doing so for the baseline ADM 
    pretrained model. Since SEEDS reach same FID quality as EDM, but twice faster, 
    we believe it will have a positive impact in this domain, making diffusion-
    based data augmentation schemes more affordable with limited computational 
    capacity. 
    \item The work \cite{nie2022DiffPure} uses DPM-based adversarial purification 
    as a test-time adversarial defense. The idea is to use off-the-shelf DPMs to 
    annihilate the adversarial content in inputs in test-time before feeding it to 
    a pretrained classifier. In \cite[Table 6]{nie2022DiffPure}, one can see that 
    SDE solvers show substantial robustness capabilities compared to ODE solvers.  
\end{enumerate}
We'd like to stress out that, although indirect, such capabilities are on 
the side of the sampling methods rather than on the side of intrinsic 
robustness properties of score-based learning method. More broadly, 
robustness properties of ML models determined as neural differential 
equations (DPMs being in this scope) has been studied in 
\cite{yan2022robustness, gonzalez}.

\paragraph{How do SEEDS compare to Stochastic Runge-Kutta methods?}

Contrary to the ODE case, there are many stochastic Runge-Kutta approaches, 
usually tailored for SDEs of a specific form. Nonetheless, a common way of 
distinguishing solvers with same strong order is to assign them couples 
$(p_d,p_s)$, where $p_s$ is the (stochastic) strong convergence order and 
$p_d$ is the resulting (deterministic) order determined if setting $g=0$ in 
the considered SDE i.e. when they are deterministic. For instance, 
\cite[Tab. 6.2]{Rossler} determines solvers with orders (1,1.0) and (2,1.0) 
respectively and \cite[Tab. 6.3]{Rossler} determines solvers SRA1 and SRA3 
with orders (2,1.5) and (3,1.5) respectively. Many of these solvers' speed 
was already tested in \cite[Table 3]{jolicoeur2021gotta} on CIFAR-10 (VP). 
Experimentally, SEEDS-1 shows to be 6.83x faster than the baseline Euler-Maruyama scheme.

To our knowledge, the only available strong order SERKs method for SDEs with 
in-homogeneous diffusion coefficients are the exponential Euler-Maruyama 
(EEM) method \cite{komori2016} and the stochastic RK Lawson (SRKL) schemes 
\cite{debrabant2021runge}. In short, the SRKL schemes only compute 
analytically the linear coefficient and use the Integrating Factor (IF) 
method to approximate the integrals in the representation of the exact 
solution given by the variation-of-parameters formula. This way, by a 
special change of variables (see \cite[Alg. 1]{debrabant2021runge}), one can 
create exponential integrator versions of many SDE methods. We implemented 
our own version of the SRKL schemes, which we denote SRKL-1/2/3, to take 
into account that $\sigma,\alpha$ are not constant and used the Integrating 
Factor method to approximate the integrals in the representation of the 
exact solution given by the variation-of-parameters formula and under the VP 
$\lambda$-change-of-variables. Interestingly, the SRKL schemes stabilize at 
increasing NFEs but at much higher FID values than their SETD counterparts.

In Table \ref{table:SEEDS-vs-SRK} below, we draw a comparison of SEEDS with 
current Stochastic RKL methods (on the $\lambda$ temporal parameter space) 
on CIFAR-10 for the discretely trained DPM in the VP unconditional regime.

\begin{table}[ht]
  \caption{\label{table:SEEDS-vs-SRK}Comparison of SEEDS with adapted Stochastic RKL methods on
  CIFAR-10 in the VP unconditional discrete framework.}
  \begin{center}
    \begin{center}
      {  \textsc{ \begin{tabular}{lrrrrr}
        \toprule
        Method $\backslash$ NFE & 10 & 20 & 50 & 100 & Best known\\
        \midrule
        SRKL-1($\lambda$) & 332.52 & 282.96 & 33.42 & 8.62 & -\\
        SEEDS-1 & 303.48 & 153.21 & 22.70 & 7.97 & (500 NFE) 3.13\\
        SRKL-2($\lambda$) & 475.20 & 469.64 & 134.82 & 7.74 & -\\
        SEEDS-2 & 476.90 & 226.70 & 7.17 & 3.23 & (90 NFE) 3.21\\
        SRKL-3($\lambda$) & 462.24 & 376.15 & 8.36 & 7.46 & -\\
        SEEDS-3 & 483.00 & 428.60 & 43.30 & 3.41 & (201 NFE) 3.08 \\
        \bottomrule
      \end{tabular}}}
    \end{center}
  \end{center}

\end{table}

\paragraph{Is the proven convergence order of each of the SEEDS methods optimal?}

The proposed convergence orders for each SEEDS-1/2/3 is optimal: this is a 
consequence of the general result from \cite{CC} about maximum convergence 
rates for SDE schemes with uncorrelated Gaussian increments. The underlying 
idea, fully detailed in Appendix \ref{app:solvers}, is that any solver with 
strong order $\geq 1.5$ has to account for double stochastic integrals in 
the non-truncated Itô-Taylor expansion, ultimately forcing any SRK-like 
solver to use correlated random variables (see \cite[Tab. 6.3]{Rossler} and 
\cite{KP1} more generally). SEEDS avoid this additional complexity but an 
interesting future avenue would be to extend SEEDS to the higher strong 
order case (and not in the IF approach but the SETD approach) as long as it 
doesn't incur into an explosion of needed NFEs per step to craft such 
solvers. Another interesting path would be to craft weak second order SERK 
methods for DPMs (the work \cite{komori} addressed this only for homogeneous 
semi-linear SDEs with constant linear coefficients).

\section{Detailed Derivation of the SEEDS Design Space}
\label{app:design}

In Sections \ref{sec:background} and \ref{sec:ingredients} we proposed a
simplified presentation of the design space of diffusion models and of the
ingredients that constitute our proposed SEEDS methodology. In this section,
we further develop our presentation in a technical manner, making explicit the
formalization of our design choices.

\subsection{The Isotropic General SDE framework}

In Section \ref{sec:background}, we presented a parametric family of
differential equations \eqref{eqq} driving the generative process for DPMs,
based on time-reversing the forward noising diffusion process \eqref{diff1}.
While doing so, we presented two parameters - the noise schedule $\sigma_t$
and the scaling $\alpha_t$ - for which the effects on DPMs have been widely
studied in {\cite{Karras2022edm}}.

{\vparagraph{Expressing the forward and reverse SDEs in terms of $\alpha_t$
and $\sigma_t$.}}As the shape of the trajectories of \eqref{diff1} and
\eqref{eqq} (for $\ell = 0, 1$) are defined by $\alpha_t$ and $\sigma_t$, we
start by writing down, for the scaling $\x_t = \alpha_t
\widehat{\mathbf{x}}_t$, the scaled generalization of the proposed SDEs in
{\cite[Eq. 103]{Karras2022edm}} which unifies in a single framework the
forward and reverse trajectories:
\begin{equation}
  \mathd \mathbf{x}^{\pm}_t = \left[ \frac{\dot{\alpha}_t}{\alpha_t}
  \mathbf{x}^{\pm}_t - \alpha^2_t  \dot{\sigma}_t \sigma_t
  \nabla_{\mathbf{x}^{\pm}_t} \log p (\widehat{\mathbf{x}}^{\pm}_t ; \sigma_t)
  \pm \alpha^2_t  \dot{\sigma}_t \sigma_t \nabla_{\mathbf{x}^{\pm}_t} \log p
  (\widehat{\mathbf{x}}^{\pm}_t ; \sigma_t) \right] \mathd t + \alpha_t
  \sigma_t  \sqrt{2 \frac{\dot{\sigma}_t}{\sigma_t}} \mathd
  \tmmathbf{\omega}^{\pm}_t . \label{gen-sde}
\end{equation}
Following {\cite{Karras2022edm}}, the previous VP, VE, iDDPM, DDIM and EDM
frameworks all are unified as different choices of $\alpha_t$, $\sigma_t$,
among other choices presented in {\cite[Tab. 1]{Karras2022edm}} and we 
will use this as a basis for all the proofs contained in this Appendix. In
particular, forward time means taking $\mathbf{x}^+_t$ for which the score
vanishes in this context. Now set $\mathbf{x}_t = \mathbf{x}^-_t$.

The formulation in \eqref{eqq} involves a family of backwards differential
equations controlled by a parameter $\ell \in [0, 1]$ which all yield
reverse-time processes for \eqref{diff1}, a fact that can be obtained by
studying the Fokker-Planck equation for marginals $p (\widehat{\mathbf{x}}^+_t
; \sigma_t)$ of \eqref{gen-sde}.

When $\ell = 1$, \eqref{eqq} the obtained SDE is known as the reverse SDE
(RSDE) and, when $\ell = 0$, we obtain an ODE that is known as the Probability
Flow ODE (PFO):
\begin{eqnarray}
  \mathd \mathbf{x}_t & = & \left[ \frac{\dot{\alpha}_t}{\alpha_t}
  \mathbf{x}_t - \alpha^2_t  \dot{\sigma}_t \sigma_t \nabla_{\mathbf{x}_t}
  \log p (\widehat{\mathbf{x}}_t ; \sigma_t) \right] \mathd t. 
  \label{pfo-gen}
\end{eqnarray}
Now, finding a minimum for the loss function in {\cite[Eq. 51]{Karras2022edm}}
is formulated as a convex optimization problem. As such, for the ideal model
$D (\mathbf{x}_t ; \sigma_t) = \arg \min_D \mathcal{L} (D ; \mathbf{x}_t,
\sigma_t)$, the score function with scaled input is expressed as
\begin{eqnarray*}
  \nabla_{\mathbf{x}_t} \log p (\widehat{\mathbf{x}}_t ; \sigma_t) & = &
  \frac{D (\widehat{\mathbf{x}}_t ; \sigma_t) -
  \widehat{\mathbf{x}}_t}{\alpha_t \sigma^2_t} .
\end{eqnarray*}
This ideal model is usually subtracted by a \tmtextit{raw} network $F$ in the
form of a time-dependent preconditioning:
\[ D (\mathbf{x}_t ; \sigma_t) \assign c_1 (t) \mathbf{x}_t + c_2 (t) F (c_3
   (t) \mathbf{x}_t ; c_4 (t)), \qquad c_i (t) \in \mathbb{R}^d, \quad i = 1,
   \ldots, 4. \]
As such, we can express the score function as two parameterizations involving
$D$ or $F$ as follows:
\begin{eqnarray}
  \nabla_{\mathbf{x}_t} \log p (\widehat{\mathbf{x}}_t ; \sigma_t) & = &
  \frac{D (\widehat{\mathbf{x}}_t ; \sigma_t) -
  \widehat{\mathbf{x}}_t}{\alpha_t \sigma^2_t} = \frac{(c_1 (t) - 1)
  \widehat{\mathbf{x}}_t + c_2 (t) F (c_3 (t) \widehat{\mathbf{x}}_t ; c_4
  (t))}{\alpha_t \sigma^2_t} .  \label{precond}
\end{eqnarray}
Let us now denote $D^1_{\theta, t} \assign D_{\theta} (\widehat{\mathbf{x}}_t
; \sigma_t)$ for a pre-trained network approximating the ideal denoiser and
let $D^2_{\theta, t} \assign F_{\theta} (c_3 (t) \widehat{\mathbf{x}}_t ; c_4
(t))$ be the corresponding raw pre-trained network. Substituting the score
function in the RSDE and the PFO with each of these models yields four
\tmtextit{different} differential equations with a neural network as one of
their components. These are given, for $i = 1, 2$, by
\begin{eqnarray}
  \mathd \mathbf{x}_t & = & [A^i (t) \mathbf{x}_t + B^i (t) D^i_{\theta, t}]
  \mathd t + g (t) \mathd \tmmathbf{\bar{\omega}}_t,  \label{NRSDE-1}\\
  \mathd \mathbf{x}_t & = & [A^{i + 2} (t) \mathbf{x}_t + B^{i + 2} (t)
  D^i_{\theta, t}] \mathd t.  \label{NPFO-1}
\end{eqnarray}
When $D^1_{\theta, t}$ (resp. $D^2_{\theta, t}$) \ is employed to replace the
score function in \eqref{gen-sde} using \eqref{precond}, the resulting SDE
\eqref{NRSDE-1} will be called \tmtextit{data (resp. noise) prediction neural
SDE}. Proceeding analogously for the PFO \eqref{pfo-gen} yield two ODEs
\eqref{NPFO-1} which will be called \tmtextit{data (resp. noise) prediction
neural PFO}. The general form of the $A^i$ and $B^i$ coefficients determining
each of these DEs is as follows:
\begin{align*}
       A^1 (t) &= \frac{\dot{\alpha}_t}{\alpha_t} + 2
  \frac{\dot{\sigma}_t}{\sigma_t} & B^1 (t) &= - 2 \alpha_t
  \frac{\dot{\sigma}_t}{\sigma_t} \tag{DP NRSDE} \\
    A^2 (t) &= \frac{\dot{\alpha}_t}{\alpha_t} + 2
  \frac{\dot{\sigma}_t}{\sigma_t} (1 - c_1 (t)) & B^2 (t) &= - 2 \alpha_t
  \frac{\dot{\sigma}_t}{\sigma_t} c_2 (t) \tag{NP NRSDE}\\
   A^3 (t) &= \frac{\dot{\alpha}_t}{\alpha_t} +
  \frac{\dot{\sigma}_t}{\sigma_t} & B^3 (t) &= - \alpha_t
  \frac{\dot{\sigma}_t}{\sigma_t} \tag{DP NPFO}\\
    A^4 (t) &= \frac{\dot{\alpha}_t}{\alpha_t} +
  \frac{\dot{\sigma}_t}{\sigma_t} (1 - c_1 (t)) & B^4 (t) &= - \alpha_t
  \frac{\dot{\sigma}_t}{\sigma_t} c_2 (t) \tag{NP NPFO}
\end{align*}
\begin{remark}
  \label{four-modes}At first glance, it would seem misleading to differentiate
  four DEs as these essentially correspond to different choices of $\alpha_t$,
  $\sigma_t$, $c_1 (t), \ldots, c_4 (t)$. But the reason why we do so is that,
  after applying the \tmtextit{variation of constants} formula, each of
  these DEs will yield a different representation of their exact solutions
  (see \eqref{exp-rep-sde} and \eqref{exp-rep-ode} below). As we will see
  below, constructing exponential integrators heavily depends on such
  representation and will show to lead to four different modes of SEEDS
  solvers, each one showing different behavior and performance for sampling
  from pre-trained DPMs. As such, we will articulate this difference already
  at the DE formulation.
\end{remark}

For $t < s$, the variation of constants formulae for NSDEs allows to represent
the exact solutions of \eqref{NRSDE-1} as
\begin{equation}
  \mathbf{x}_t = \Phi^i (t, s) \mathbf{x}_s + \int^t_s \Phi^i (t, \tau) B^i
  (\tau) D^i_{\theta, \tau} \mathd \tau + \int^t_s \Phi^i (t, \tau) g (\tau)
  \mathd \tmmathbf{\bar{\omega}}_t, \qquad i = 1, 2 \label{exp-rep-sde}
\end{equation}
and those for NPFOs \eqref{NPFO-1} as
\begin{equation}
  \mathbf{x}_t = \Phi^i (t, s) \mathbf{x}_s + \int^t_s \Phi^i (t, \tau) B^i
  (\tau) D^{i - 2}_{\theta, \tau} \mathd \tau, \qquad i = 3, 4
  \label{exp-rep-ode}
\end{equation}
where
\begin{equation}
  \Phi_{A^i} (t, s) = \exp \left( \int^t_s A^i (\tau) \mathd \tau \right)
  \label{Phi}
\end{equation}
is called the transition matrix associated with $A^i (t)$ and is defined as
the solution to
\[ \frac{\partial}{\partial t} \Phi_{A^i} (t, s) = A^i (t) \Phi_{A^i} (t, s),
   \qquad \Phi_{A^i} (s, s) =\tmmathbf{I}_d . \]
When $A^i (t)$ is constant and $B^i (t) = 1$, there is a well-established
literature on exponential ODE and SDE solvers with explicit \tmtextit{stiff
order conditions} and prescribed by different forms of Butcher tableaux. When
$A^i (t)$ is not constant, for the expression in \eqref{Phi} to make sense in
the usual sense (in terms of exponential series expansion) instead of having
to make use of time-ordered exponentials/ Magnus expansions, the $f (t)
\assign A^i (t)$ coefficients must satisfy $[f^{(k)} (t), f^{(l)} (s)] = 0$.
This condition is trivially satisfied here as the $A^i (t)$ considered here
are $d$-dimensional diagonal matrices.

Notice that, if $A^i \neq A^j$ for some $i \neq j$, their associated
transition matrices will not be equal. In particular, if $A^1 \neq A^2$, then
the variances of the stochastic integrals in \eqref{exp-rep-sde} are different
for $i = 1$ and $i = 2$. This is the first step in explaining the statement in
Rem. \ref{four-modes}, and we refer the reader to the proof of Proposition
\ref{related} where we put into evidence its validity.
\subsection{Re-framing and Generalizing Previous Exponential Solvers}
\label{app:refr}

\subsubsection{The VP case}\label{case}

Let $\tilde{\alpha}_t \assign \int_0^t (\beta_d \tau - \beta_m) \mathd \tau =
\frac{1}{2} \beta_d t^2 + \beta_m t$, where $\beta_d > \beta_m > 0$. Set
\[ f (t) \assign \frac{\mathd \log \alpha_t}{\mathd t}, \quad g (t) = \alpha_t
   \sqrt{\frac{\mathd [\sigma^2_t]}{\mathd t}}, \qquad \sigma_t =
   \sqrt{e^{\tilde{\alpha}_t} - 1}, \qquad \alpha_t = e^{- \frac{1}{2} 
   \tilde{\alpha}_t} = \frac{1}{\sqrt{\sigma^2_t + 1}} . \]
Recall that in the VP case, and the \tmtextit{noise prediction mode},
{\cite{dpm-solver}} construct exponential solvers on the base of the following
ODE
\begin{eqnarray}
  \mathd \mathbf{x}_t & = & \left[ f (t) \mathbf{x}_t + \frac{g^2 (t)}{2
  \bar{\sigma}_t} \epsilon_{\theta} (\mathbf{x}_t ; t) \right] \mathd t, \quad
  t \in [T, 0],  \label{dpm-ode}
\end{eqnarray}
where $\bar{\sigma}_t \assign \alpha_t \sigma_t$. The ODE \eqref{dpm-ode}
identifies with that in {\cite{Karras2022edm}} for the VP case for which the
authors identify the preconditioning
\[ c_1 (t) = 1, \quad c_2 (t) = - \sigma_t, \quad c_3 (t) =
   \dfrac{1}{\sqrt{\sigma_t^2 + 1}}, \quad c_4 (t) = (M - 1) \sigma^{- 1}
   (\sigma_t) = (M - 1) t. \]
As such, we obtain the following coefficients for the NP NPFO:
\[ A^4 (t) = \frac{\dot{\alpha}_t}{\alpha_t}, \quad B^4 (t) = \alpha_t 
   \dot{\sigma}_t, \quad \Phi^4 (t, s) = \frac{\alpha_t}{\alpha_s} \]
and
\begin{eqnarray*}
  \nabla_{\mathbf{x}_t} \log p (\widehat{\mathbf{x}}_t ; \sigma_t) =
  \frac{D_{\theta} (\widehat{\mathbf{x}}_t ; \sigma_t) -
  \widehat{\mathbf{x}}_t}{\alpha_t \sigma^2_t} & = & \frac{(c_1 (t) - 1)
  \widehat{\mathbf{x}}_t + c_2 (t) F (c_3 (t) \widehat{\mathbf{x}}_t ; c_4
  (t))}{\alpha_t \sigma^2_t}  \\
  & = &  \frac{- \sigma_t F_{\theta} (\mathbf{x}_t, (M -
  1) t)}{\alpha_t \sigma^2_t} .
\end{eqnarray*}
\subsubsection{Proof of Proposition \ref{Prop1}}
First of all, denote $F_{\theta} (\mathbf{x}_t, (M - 1) t)
=\tmmathbf{\epsilon}_{\theta} (\mathbf{x}_t, t) .$ We have
\[ f (t) = \frac{\mathd \log \alpha_t}{\mathd t}, \quad g^2 (t) = 2
   \bar{\sigma}_t^2  \left( \frac{\mathd \log \bar{\sigma}_t}{\mathd t} -
   \frac{\mathd \log \alpha_t}{\mathd t} \right) = - 2 \bar{\sigma}_t^2 
   \frac{\mathd \lambda_t}{\mathd t}, \qquad \dfrac{\bar{\sigma}_t}{\alpha_t}
   = e^{- \lambda_t} . \]
This way, one can directly relate $\lambda_t$ with the
\tmtextit{signal-to-noise ratio} $\tmop{SNR} (t) = \alpha_t^2 /
\bar{\sigma}^2_t$, also being used in {\cite{dpm-solver}}. As such,
$\tmop{SNR} (t)$ is strictly monotonically decreasing in time. Thus, the
analytic solution to \eqref{eqq} yields
\begin{eqnarray*}
  \mathbf{x}_t & = & e^{\int_s^t f (\tau) \mathd \tau} \mathbf{x}_s + \int_s^t
  \left( e^{\int_{\tau}^t f (r) \mathd r}  \frac{g^2
  (\tau)}{\bar{\sigma}_{\tau}} \tmmathbf{\epsilon}_{\theta}
  (\mathbf{x}_{\tau}, \tau) \right) \mathd \tau + \int_s^t \left(
  e^{\int_{\tau}^t f (r) \mathd r} g (\tau) \right) \mathd
  \tmmathbf{\bar{\omega}} (\tau)\\
  & = & \frac{\alpha_t}{\alpha_s} \mathbf{x}_s + \alpha_t  \int_s^t \frac{g^2
  (\tau)}{\alpha_{\tau}  \bar{\sigma}_{\tau}} \tmmathbf{\epsilon}_{\theta}
  (\mathbf{x}_{\tau}, \tau) \mathd \tau + \alpha_t  \int_s^t \frac{g
  (\tau)}{\alpha_{\tau}} \mathd \tmmathbf{\bar{\omega}} (\tau)\\
  & = & \frac{\alpha_t}{\alpha_s} \mathbf{x}_s - \alpha_t  \int_s^t \frac{2
  \sigma_{\tau}^2}{\alpha_{\tau}  \bar{\sigma}_{\tau}}  \frac{\mathd
  \lambda_{\tau}}{\mathd \tau} \tmmathbf{\epsilon}_{\theta}
  (\mathbf{x}_{\tau}, \tau) \mathd \tau + \alpha_t  \int_s^t \frac{g
  (\tau)}{\alpha_{\tau}} \mathd \tmmathbf{\bar{\omega}} (\tau)\\
  & = & \frac{\alpha_t}{\alpha_s} \mathbf{x}_s - 2 \alpha_t  \int_s^t
  \frac{\bar{\sigma}_{\tau}}{\alpha_{\tau}}  \frac{\mathd
  \lambda_{\tau}}{\mathd \tau} \tmmathbf{\epsilon}_{\theta}
  (\mathbf{x}_{\tau}, \tau) \mathd \tau + \alpha_t  \int_s^t \frac{g
  (\tau)}{\alpha_{\tau}} \mathd \tmmathbf{\bar{\omega}} (\tau)\\
  & = & \frac{\alpha_t}{\alpha_s} \mathbf{x}_s - 2 \alpha_t  \int_s^t e^{-
  \lambda_{\tau}}  \frac{\mathd \lambda_{\tau}}{\mathd \tau}
  \tmmathbf{\epsilon}_{\theta} (\mathbf{x}_{\tau}, \tau) \mathd \tau -
  \sqrt{2} \alpha_t  \int_s^t e^{- \lambda_{\tau}}  \sqrt{\frac{\mathd
  \lambda_{\tau}}{\mathd \tau}} \mathd \tmmathbf{\bar{\omega}} (\tau) .
\end{eqnarray*}
By using the change of variables to $\lambda (t)$, our equation now reads
\begin{eqnarray}
  \mathbf{x}_t & = & \frac{\alpha_t}{\alpha_s} \mathbf{x}_s - 2 \alpha_t 
  \int_{\lambda_s}^{\lambda_t} e^{- \lambda}
  \hat{\tmmathbf{\epsilon}}_{\theta} (\widehat{\mathbf{x}}_{\lambda}, \lambda)
  \mathd \lambda - \sqrt{2} \alpha_t  \int_{\lambda_s}^{\lambda_t} e^{-
  \lambda} \mathd \tmmathbf{\bar{\omega}} (\lambda).  \label{ante}
\end{eqnarray}
Finally, notice that $\alpha_t = \sqrt{\frac{1}{1 + e^{- 2 \lambda_t}}}$ and
$\bar{\sigma}_t = \sqrt{\frac{1}{1 + e^{2 \lambda_t}}}$ so that \eqref{ante}
is equivalent to
\begin{eqnarray*}
  \label{sde} \widehat{\mathbf{x}}_{\lambda_t} & = &
  \frac{\hat{\alpha}_{\lambda_t}}{\hat{\alpha}_{\lambda_s}}
  \widehat{\mathbf{x}}_{\lambda_s} - 2 \hat{\alpha}_{\lambda_t} 
  \int_{\lambda_s}^{\lambda_t} e^{- \lambda}
  \hat{\tmmathbf{\epsilon}}_{\theta} (\widehat{\mathbf{x}}_{\lambda}, \lambda)
  \mathd \lambda - \sqrt{2}  \hat{\alpha}_{\lambda_t} 
  \int_{\lambda_s}^{\lambda_t} e^{- \lambda} \mathd \tmmathbf{\bar{\omega}}
  (\lambda) .
\end{eqnarray*}
This finishes the proof.
\subsubsection{Proof of Proposition \ref{Prop2}}

Recall that the functions $\varphi_k$ are the integrals
\[ \varphi_{k + 1} (t) = \int_0^1 e^{(1 - \delta) t}  \frac{\delta^k}{k!}
   \mathd \delta, \]
which satisfy $\varphi_k (0) = \frac{1}{k!}$. The truncated It{\^o}-Taylor
expansion of $\widehat{\epsilonv}_{\theta}$ with respect to $\lambda$ reads
\[ \widehat{\epsilonv}_{\theta} (\widehat{\mathbf{x}}_{\lambda}, \lambda) =
   \sum_{k = 0}^n \frac{(\lambda - \lambda_s)^k}{k!}
   \widehat{\epsilonv}_{\theta}^{(k)} (\widehat{\mathbf{x}}_{\lambda_s},
   \lambda_s) +\mathcal{R}_{n + 1}, \]
where here $\widehat{\epsilonv}_{\theta}^{(k)}$ denotes the $L_t^k$ operators
defined in \eqref{diff-op1} applied to $\widehat{\epsilonv}_{\theta}$. On the
one hand, since $\int_{\lambda_s}^{\lambda_t} e^{- \lambda} d \lambda =
\dfrac{\bar{\sigma}_t}{\alpha_t} (e^h - 1)$, we obtain by iteratively
integrating by parts
\begin{eqnarray*}
  \int_{\lambda_s}^{\lambda_t} e^{- \lambda} \widehat{\epsilonv}_{\theta}
  (\widehat{\mathbf{x}}_{\lambda}, \lambda) \mathd \lambda & = & \sum_{k =
  0}^n \widehat{\epsilonv}_{\theta}^{(k)} (\widehat{\mathbf{x}}_{\lambda_s},
  \lambda_s)  \int_{\lambda_s}^{\lambda_t} e^{- \lambda}  \frac{(\lambda -
  \lambda_s)^k}{k!} d \lambda +\mathcal{R}_{n + 2}\\
  & = & \frac{\bar{\sigma}_t}{\alpha_t}  \sum_{k = 0}^n
  \widehat{\epsilonv}_{\theta}^{(k)} (\widehat{\mathbf{x}}_{\lambda_s},
  \lambda_s) h^{k + 1} \varphi_{k + 1} (h) +\mathcal{R}_{n + 2} .
\end{eqnarray*}
On the other hand, we have $s > t, h = \lambda_t - \lambda_s > 0$. Note that
since the stochastic integrals $\int_{\lambda_s}^{\lambda_t} e^{- \lambda}
\mathd \tmmathbf{\bar{\omega}} (\lambda)$ are measurable with respect to
$(\tmmathbf{\bar{\omega}} (\lambda) - \tmmathbf{\bar{\omega}} (\lambda_s), 0
\le \lambda \leq \lambda_t - \lambda_s)$, they are independent on disjoint
time intervals by the independence of increments property of Brownian motion.
Thus the random variable $\epsilon \assign \epsilon_{s, t}$ in our algorithms
are independent on disjoint time intervals. We then write
\begin{eqnarray*}
  \int_{\lambda_s}^{\lambda_t} e^{- \lambda} \mathd \tmmathbf{\bar{\omega}}
  (\lambda) & = & \mathcal{N} \left( 0, \int_{\lambda_s}^{\lambda_t} e^{- 2
  \lambda} \mathd \lambda \right)\\
  & = & \frac{1}{\sqrt{2}}  \sqrt{e^{- 2 \lambda_s} - e^{- 2 \lambda_t}}
  \epsilon, \quad \epsilon \sim \mathcal{N} (\tmmathbf{0}, \mathbf{I}_d)\\
  & = & \frac{1}{\sqrt{2}}  \sqrt{\left( \frac{\bar{\sigma}_s}{\alpha_s}
  \right. - \left. \frac{\bar{\sigma}_t}{\alpha_t} \right) \left(
  \frac{\bar{\sigma}_s}{\alpha_s} \right. + \left.
  \frac{\bar{\sigma}_t}{\alpha_t} \right)} \epsilon, \quad \epsilon \sim
  \mathcal{N} (\tmmathbf{0}, \mathbf{I}_d)\\
  & = & \frac{1}{\sqrt{2}}  \frac{\bar{\sigma}_t}{\alpha_t}  \sqrt{e^{2 h} -
  1} \epsilon, \quad \epsilon \sim \mathcal{N} (\tmmathbf{0}, \mathbf{I}_d) .
\end{eqnarray*}
In conclusion, the truncated It{\^o}-Taylor expansion of the analytic
expression
\begin{eqnarray}
  \mathbf{x}_t & = & \frac{\alpha_t}{\alpha_s} \mathbf{x}_s - 2 \alpha_t 
  \int_{\lambda_s}^{\lambda_t} e^{- \lambda} \widehat{\epsilonv}_{\theta}
  (\widehat{\mathbf{x}}_{\lambda}, \lambda) \mathd \lambda - \sqrt{2} \alpha_t
  \int_{\lambda_s}^{\lambda_t} e^{- \lambda} \mathd \tmmathbf{\bar{\omega}}
  (\lambda) 
\end{eqnarray}
simplifies to
\begin{eqnarray*}
  \mathbf{x}_t & = & \cfrac{\alpha_t}{\alpha_s} \mathbf{x}_s - 2
  \bar{\sigma}_t  \sum_{k = 0}^n h^{k + 1} \varphi_{k + 1} (h)
  \widehat{\epsilonv}_{\theta}^{(k)} (\widehat{\mathbf{x}}_{\lambda_s},
  \lambda_s) - \bar{\sigma}_t  \sqrt{e^{2 h} - 1} \epsilon +\mathcal{R}_{n +
  2}, \quad \epsilon \sim \mathcal{N} (\tmmathbf{0}, \mathbf{I}_d) .
\end{eqnarray*}
This finishes the proof.
\subsubsection{Generalization to the remaining data prediction and
deterministic modes}

Propositions \ref{Prop1} and \ref{Prop2} consist on the first steps for
crafting SEEDS solvers in the VP case associated to the noise prediction
neural RSDE (NP NRSDE). Generalizing the above procedure for crafting SEEDS
for the 4 modes associated to \eqref{exp-rep-sde} and \eqref{exp-rep-ode}
yields the following sets of coefficients
\begin{align*}
      A^1 (t) &= \frac{\dot{\alpha}_t}{\alpha_t} + 2
  \frac{\dot{\sigma}_t}{\sigma_t} & B^1 (t) &= - 2 \alpha_t
  \frac{\dot{\sigma}_t}{\sigma_t} \tag{DP NRSDE}\\
    A^2 (t) &= \frac{\dot{\alpha}_t}{\alpha_t} & B^2 (t) &= 2
  \alpha_t \dot{\sigma}_t \tag{NP NRSDE}\\
    A^3 (t) &= \frac{\dot{\alpha}_t}{\alpha_t} +
  \frac{\dot{\sigma}_t}{\sigma_t} & B^3 (t) &= - \alpha_t
  \frac{\dot{\sigma}_t}{\sigma_t} \tag{DP NPFO}\\
    A^4 (t) &= \frac{\dot{\alpha}_t}{\alpha_t} & B^4 (t) &=
  \alpha_t \dot{\sigma}_t\tag{NP NPFO}
\end{align*}
We readily obtain
\[ \Phi^2 (t, s) = \Phi^4 (t, s) = \frac{\alpha_t}{\alpha_s}, \qquad \Phi^3
   (t, s) = \frac{\bar{\sigma}_t}{\bar{\sigma}_s}, \qquad \Phi^1 (t, s) =
   \frac{\sigma^2_t \alpha_t}{\sigma^2_s \alpha_s} . \]
Then, by setting the simpler change of variables $\lambda_t : = - \log
(\sigma_t)$, we obtain
\[ \int^t_s \Phi^4 (t, \tau) B^4 (\tau) \mathd \tau = \alpha_t  \int^t_s
   \frac{1}{\alpha_{\tau}} \alpha_{\tau}  \dot{\sigma}_{\tau} \mathd \tau =
   \alpha_t  \int^t_s \dot{\sigma}_{\tau} \mathd \tau = - \alpha_t 
   \int^{\lambda_t}_{\lambda_s} e^{- \lambda} \mathd \lambda = -
   \bar{\sigma}_t  (e^h - 1) . \]
By recursion we obtain
\[ \int^t_s \Phi^4 (t, \tau) B^4 (\tau) F_{\theta, \tau} \mathd \tau = -
   \bar{\sigma}_t  \sum^{n - 1}_{k = 0} h^{k + 1} \varphi_{k + 1} (h)
   F^{(k)}_{\theta, s} +\mathcal{O} (h^{n + 1}) . \]
In the same way, we obtain
\begin{eqnarray*}
  \int^t_s \Phi^3 (t, \tau) B^3 (\tau) \mathd \tau = \sigma_t \alpha_t 
  \int^t_s \frac{- 1}{\alpha_{\tau} \sigma_{\tau}} \alpha_{\tau} 
  \frac{\dot{\sigma}_{\tau}}{\sigma_{\tau}} \mathd \tau & = & \sigma_t
  \alpha_t  \int^t_s \frac{- \dot{\sigma}_{\tau}}{\sigma^2_{\tau}} \mathd
  \tau\\
  & = & \sigma_t \alpha_t  \int^{\lambda_t}_{\lambda_s} e^{\lambda} \mathd
  \lambda = - \alpha_t  (e^{- h} - 1) .
\end{eqnarray*}
Next,
\[ \int^t_s \Phi^2 (t, \tau) B^2 (\tau) \mathd \tau = \alpha_t  \int^t_s
   \frac{2}{\alpha_{\tau}} \alpha_{\tau}  \dot{\sigma}_{\tau} \mathd \tau = -
   2 \bar{\sigma}_t  (e^h - 1), \]
and finally, as already shown in Propositions \ref{Prop1} and \ref{Prop2}:
\begin{eqnarray*}
  \int^t_s \Phi^1 (t, \tau) B^1 (\tau) \mathd \tau = \sigma^2_t \alpha_t 
  \int^t_s \frac{- 2 \dot{\sigma}_{\tau}}{\sigma^3_{\tau}} \mathd \tau & = &
  \sigma^2_t \alpha_t  \int^t_s - 2 \frac{\mathd \lambda_{\tau}}{\mathd \tau}
  e^{2 \lambda_{\tau}} \mathd \tau\\
  & = & \sigma^2_t \alpha_t  \int^{\lambda_t}_{\lambda_s} e^{2 \lambda}
  \mathd \lambda = - \alpha_t  (e^{- 2 h} - 1) .
\end{eqnarray*}
Now, since $g^2 (t) = 2 \alpha^2_t  \dot{\sigma}_t \sigma_t$, the stochastic
integrals $\int^t_s \Phi^i (t, \tau) g (\tau) \mathd
\tmmathbf{\bar{\omega}}_{\tau}$, for $i = 1, 2$, have zero mean and variances:
\begin{eqnarray*}
  \int^t_s (\Phi^1 (t, \tau))^2 g^2 (\tau) \mathd \tau & = & \sigma^4_t
  \alpha^2_t  \int^t_s \frac{1}{\sigma^4_{\tau} \alpha^2_{\tau}} g^2 (\tau)
  \mathd \tau = \bar{\sigma}_t^2  (1 - e^{- 2 h})\\
  \int^t_s (\Phi^2 (t, \tau))^2 g^2 (\tau) \mathd \tau & = & \alpha^2_t 
  \int^t_s 2 \dot{\sigma}_{\tau} \sigma_{\tau} \mathd \tau = - \bar{\sigma}_t^2  (e^{2 h} - 1) .
\end{eqnarray*}
We deduce from this the SEEDS-1 schemes in all four modes as given by
iterates:
\begin{eqnarray}
  \widetilde{\mathbf{x}}_t & = & \frac{\sigma^2_t \alpha_t}{\sigma^2_s
  \alpha_s} \widetilde{\mathbf{x}}_s - \alpha_t  (e^{- 2 h} - 1) D_{\theta}
  (\widetilde{\mathbf{x}}_s, s) + \bar{\sigma}_t  \sqrt{1 - e^{- 2 h}}
  \epsilon \quad \epsilon \sim \mathcal{N} (\tmmathbf{0}, \mathbf{I}_d) 
  \label{dp-seeds1}\\
  \widetilde{\mathbf{x}}_t & = & \cfrac{\alpha_t}{\alpha_s}
  \widetilde{\mathbf{x}}_s - 2 \bar{\sigma}_t  (e^h - 1)
  \tmmathbf{\epsilon}_{\theta} (\widetilde{\mathbf{x}}_s, s) - \bar{\sigma}_t 
  \sqrt{e^{2 h} - 1} \epsilon \quad \epsilon \sim \mathcal{N} (\tmmathbf{0},
  \mathbf{I}_d)  \label{np-seeds1}\\
  \widetilde{\mathbf{x}}_t & = & \frac{\bar{\sigma}_t}{\bar{\sigma}_s}
  \widetilde{\mathbf{x}}_s - \alpha_t  (e^{- h} - 1) D_{\theta}
  (\widetilde{\mathbf{x}}_s, s)  \label{dpmpp1}\\
  \widetilde{\mathbf{x}}_t & = & \cfrac{\alpha_t}{\alpha_s}
  \widetilde{\mathbf{x}}_s - \bar{\sigma}_t  (e^h - 1)
  \tmmathbf{\epsilon}_{\theta} (\widetilde{\mathbf{x}}_s, s) .  \label{dpm1}
\end{eqnarray}
Notice that the iterates \eqref{dpmpp1} and \eqref{dpm1} are exactly the
iterates of the first stage solvers in {\cite{dpm-solver}} and
{\cite{lu2022dpm}} with $F_{\theta} (\mathbf{x}_t ; (M - 1) t)
=\tmmathbf{\epsilon}_{\theta}  (\mathbf{x}_t, t)$. The iterates
\eqref{np-seeds1} coincide with the SEEDS-1 method presented in
\eqref{seeds-1} and \eqref{dp-seeds1} consist on our SEEDS-1 method in the
\tmtextit{data prediction mode}, which we will use in the following section.
\subsubsection{Proof of Proposition \ref{related}}

We will write in \tmtextbf{bold} the statement to be proven.

 \tmtextbf{If we set $g = 0$ in \eqref{nrsde-vp}, the resulting SEEDS
  solvers do not yield DPM-Solver.} Indeed, if we set $g = 0$ in
  \eqref{nrsde-vp}, then the method \eqref{np-seeds1} does not contain a noise
  contribution and we readily see that it cannot be equal to \eqref{dpm1}. As
  the latter has been shown to be DPM-Solver-1, the conclusion follows.
  
{\bf{If we parameterize \eqref{nrsde-vp} in terms of the data
  prediction model $D_{\theta}$, the resulting SEEDS solvers are not
  equivalent to their noise prediction counterparts defined in Alg.
  \ref{alg:iter} to \ref{alg:SERK-solver-3}.}}
  
  One can check that the SEEDS solver are not the same between the noise and
  data prediction modes by simply noticing that the noise contributions in
  \eqref{dp-seeds1} and \eqref{np-seeds1} do not equate.
  
{\bf{The gDDIM solver {\cite{zhang2022gddim}} Theorem 1, for
  $\ell = 1$, is equal to SEEDS-1 in the data prediction mode.}}

As shown in \eqref{np-seeds1}, our proposed method SEEDS-1 in the data
prediction mode for the VP case has iterates of the form
\begin{eqnarray}
  \widetilde{\mathbf{x}}_t & = & \frac{\sigma_t^2 \alpha_t}{\sigma_s^2
  \alpha_s} \widetilde{\mathbf{x}}_s - \alpha_t  (e^{- 2 h} - 1) D_{\theta}
  (\widetilde{\mathbf{x}}_s, s) + \bar{\sigma}_t  \sqrt{1 - e^{- 2 h}}
  \epsilon,  \label{eq:seeds1-dp}
\end{eqnarray}
where $\epsilon \sim \mathcal{N} (\tmmathbf{0}, \mathbf{I}_d)$,
$\bar{\sigma}_t = \alpha_t \sigma_t$ and $h = \log
\dfrac{\sigma_s}{\sigma_t}$. As our notation and that of {\cite[Theorem
1]{zhang2022gddim}} overlap, we will use \tmcolor{blue}{blue color} when
referring to their notation.

On the one hand, gDDIM constructs iterates over a representation of the exact
solution of the following family of neural differential equations:
\begin{eqnarray}
  \tmcolor{blue}{\mathd \mathbf{u}_t} & = & \tmcolor{blue}{\left[ f (t)
  \textbf{$\mathbf{u}$}_t + \frac{1 + \lambda^2}{2} \frac{g^2 (t)}{\sqrt{1 -
  \alpha_t}} \epsilon_{\theta} (\textbf{$\mathbf{u}$}_t ; t) \right] \mathd t
  + \lambda g (t) \mathd \tmmathbf{\bar{\omega}}_t},  \label{gDDIM-ode}
\end{eqnarray}
where $\tmcolor{blue}{\alpha_t}$ decreases from $\tmcolor{blue}{\alpha_0} = 1$
to $\tmcolor{blue}{\alpha_T} = 0$, and with coefficients
\[ \tmcolor{blue}{f (t) \assign \frac{1}{2} \frac{\mathd \log
   \alpha_t}{\mathd t}, \quad g (t) = \sqrt{- \frac{\mathd \log
   \alpha_t}{\mathd t}}}. \]
In particular, they choose an approximation, for $\tmcolor{blue}{\tau \in [t -
\Delta t, t]}$, given by
\[ \tmcolor{blue}{\tmmathbf{s}_{\theta} (\mathbf{u}, \tau) =
   \frac{\epsilon_{\theta}  (\textbf{$\mathbf{u}$}_{\tau}, \tau)}{\sqrt{1 -
   \alpha_t}} \approx \frac{1 - \alpha_t}{1 - \alpha_{\tau}} 
   \sqrt{\frac{\alpha_{\tau}}{\alpha_t}} \tmmathbf{} \tmmathbf{s}_{\theta}
   (\mathbf{u}(t), t) - \frac{1}{1 - \alpha_{\tau}}  (\mathbf{u}-
   \sqrt{\frac{\alpha_{\tau}}{\alpha_t}} \mathbf{u}(t))} . \]
The gDDIM iterates, for $\tmcolor{blue}{\lambda} = 1 = \ell$, are then written
as follows:
\begin{equation}
  \tmcolor{blue}{\mathbf{u} (t - \Delta t) = \sqrt{\frac{\alpha_{t - \Delta
  t}}{\alpha_t}} \mathbf{u} (t) + \left[ - \sqrt{\frac{\alpha_{t - \Delta
  t}}{\alpha_t}}  \sqrt{1 - \alpha_t} + \sqrt{1 - \alpha_{t - \Delta t} -
  \sigma_t^2} \right] \epsilon_{\theta} (\mathbf{u}(t), t) + \sigma_t
  \epsilon}, \label{gddim}
\end{equation}
where $\epsilon \sim \mathcal{N} (\tmmathbf{0}, \mathbf{I}_d)$ and
\begin{equation}
  \tmcolor{blue}{\sigma^2_t = (1 - \alpha_{t - \Delta t})  \left[ 1 - \left(
  \frac{1 - \alpha_{t - \Delta t}}{1 - \alpha_t} \right) \left(
  \frac{\alpha_t}{\alpha_{t - \Delta t}} \right) \right]} . \label{gddim-var}
\end{equation}
Now set $\tmcolor{blue}{(s, t) \leftarrow (t, t - \Delta t)}$. Then
\begin{eqnarray*}
  \tmcolor{blue}{\sigma^2_s} & = & \tmcolor{blue}{(1 - \alpha_t) \left[ 1 - \left( \frac{1 - \alpha_t}{1 -
  \alpha_s} \right) \left( \frac{\alpha_s}{\alpha_t} \right) \right]},\\
  \tmcolor{blue}{\mathbf{u} (t)} & = &
  \tmcolor{blue}{\sqrt{\frac{\alpha_t}{\alpha_s}} \mathbf{u} (s) + \left[
  \sqrt{1 - \alpha_t - \sigma_s^2} - \sqrt{\frac{\alpha_t}{\alpha_s}}  \sqrt{1
  - \alpha_s} \right] \epsilon_{\theta} (\mathbf{u}(s), s) + \sigma_s
  \epsilon} .
\end{eqnarray*}
Next, we identify $\alpha_t = \tmcolor{blue}{\sqrt{\alpha_t}}$, and $\bar{\sigma}_t =
\tmcolor{blue}{\sqrt{1 - \alpha_t}}$. Then the variance of the noise in
\eqref{gddim-var} is
\begin{eqnarray*}
  {\color[HTML]{000000}\tmcolor{blue}{\sigma_s^2}} & = & \bar{\sigma}_t^2 
  \left[ 1 - \left( \frac{\bar{\sigma}_t}{\bar{\sigma}_s} \right)^2 \left(
  \cfrac{\alpha_s}{\alpha_t} \right)^2 \right] = \bar{\sigma}_t^2  \left[ 1 -
  \left( \frac{\alpha_t \sigma_t}{\alpha_s \sigma_s} \right)^2 \left(
  \cfrac{\alpha_s}{\alpha_t} \right)^2 \right]\\
  & = & \bar{\sigma}_t^2  \left[ 1 - \left( \frac{\sigma_t}{\sigma_s}
  \right)^2 \right]\\
  & = & \bar{\sigma}_t^2  (1 - e^{- 2 h}) .
\end{eqnarray*}
Hence, by denoting $\widetilde{\mathbf{x}}_t = \tmcolor{blue}{\mathbf{u}
(t)}$, the gDDIM iterate \eqref{gddim} reads, for $\epsilon \sim \mathcal{N}
(\tmmathbf{0}, \mathbf{I}_d)$:
\begin{eqnarray}
  \widetilde{\mathbf{x}}_t & = & \cfrac{\alpha_t}{\alpha_s}
  \widetilde{\mathbf{x}}_s + \left[ \sqrt{\bar{\sigma}_t^2 - \bar{\sigma}_t^2 
  (1 - e^{- 2 h})} - \cfrac{\alpha_t}{\alpha_s} \bar{\sigma}_s \right]
  \epsilon_{\theta}  (\widetilde{\mathbf{x}}_s, s) + \sqrt{\bar{\sigma}_t^2 
  (1 - e^{- 2 h})} \epsilon \nonumber\\
  & = & \cfrac{\alpha_t}{\alpha_s} \widetilde{\mathbf{x}}_s + \left[
  \bar{\sigma}_t  \sqrt{e^{- 2 h}} - \alpha_t \sigma_s \right]
  \epsilon_{\theta}  (\widetilde{\mathbf{x}}_s, s) + \bar{\sigma}_t  \sqrt{1 -
  e^{- 2 h}} \epsilon \nonumber\\
  & = & \cfrac{\alpha_t}{\alpha_s} \widetilde{\mathbf{x}}_s + \bar{\sigma}_t 
  \left[ \frac{\sigma_t}{\sigma_s} - \frac{\sigma_s}{\sigma_t} \right]
  \epsilon_{\theta}  (\widetilde{\mathbf{x}}_s, s) + \bar{\sigma}_t  \sqrt{1 -
  e^{- 2 h}} \epsilon .  \label{eq:gddim-re}
\end{eqnarray}
On the other hand, the data and noise prediction models in our case are
related by the following equation:
\[ D_{\theta} (\widetilde{\mathbf{x}}_s, s) = c_1 (s)
   \dfrac{\widetilde{\mathbf{x}}_s}{\alpha_s} + c_2 (s)
   \tmmathbf{\epsilon}_{\theta}  (\widetilde{\mathbf{x}}_s, s) =
   \dfrac{\widetilde{\mathbf{x}}_s}{\alpha_s} - \sigma_s
   \tmmathbf{\epsilon}_{\theta}  (\widetilde{\mathbf{x}}_s, s) . \]
As such, and as $h = \lambda_t - \lambda_s = \log \dfrac{\sigma_s}{\sigma_t}$,
one can rewrite \eqref{eq:seeds1-dp} in terms of
$\tmmathbf{\epsilon}_{\theta}$ as follows:
\begin{eqnarray*}
  \widetilde{\mathbf{x}}_t & = & \frac{\sigma_t^2 \alpha_t}{\sigma_s^2
  \alpha_s} \widetilde{\mathbf{x}}_s - \alpha_t  (e^{- 2 h} - 1) D_{\theta}
  (\widetilde{\mathbf{x}}_s, s) + \bar{\sigma}_t  \sqrt{1 - e^{- 2 h}}
  \epsilon\\
  & = & \frac{\sigma_t^2 \alpha_t}{\sigma_s^2 \alpha_s}
  \widetilde{\mathbf{x}}_s - \alpha_t  \left( \frac{\sigma_t^2}{\sigma_s^2} -
  1 \right)  \left[ \dfrac{\widetilde{\mathbf{x}}_s}{\alpha_s} - \sigma_s
  \tmmathbf{\epsilon}_{\theta} (\widetilde{\mathbf{x}}_s, s) \right] +
  \bar{\sigma}_t  \sqrt{1 - e^{- 2 h}} \epsilon\\
  & = & \left[ \frac{\sigma_t^2 \alpha_t}{\sigma_s^2 \alpha_s} -
  \frac{\alpha_t}{\alpha_s}  \left( \frac{\sigma_t^2}{\sigma_s^2} - 1 \right)
  \right] \widetilde{\mathbf{x}}_s + \alpha_t  \left(
  \frac{\sigma_t^2}{\sigma_s^2} - 1 \right) \sigma_s
  \tmmathbf{\epsilon}_{\theta}  (\widetilde{\mathbf{x}}_s, s) + \bar{\sigma}_t
  \sqrt{1 - e^{- 2 h}} \epsilon\\
  & = & \cfrac{\alpha_t}{\alpha_s} \widetilde{\mathbf{x}}_s + \bar{\sigma}_t 
  \left( \frac{\sigma_t^2}{\sigma_s^2} - 1 \right)  \frac{\sigma_s}{\sigma_t}
  \tmmathbf{\epsilon}_{\theta}  (\widetilde{\mathbf{x}}_s, s) + \bar{\sigma}_t
  \sqrt{1 - e^{- 2 h}} \epsilon\\
  & = & \cfrac{\alpha_t}{\alpha_s} \widetilde{\mathbf{x}}_s + \bar{\sigma}_t 
  \left( \frac{\sigma_t}{\sigma_s} - \frac{\sigma_s}{\sigma_t} \right)
  \tmmathbf{\epsilon}_{\theta}  (\widetilde{\mathbf{x}}_s, s) + \bar{\sigma}_t
  \sqrt{1 - e^{- 2 h}} \epsilon,
\end{eqnarray*}
which coincides with the gDDIM iterate in Equation \eqref{eq:gddim-re}.
\subsubsection{The VE, DDIM and iDDPM cases}

Following {\cite[Eq. 217]{Karras2022edm}}, here $\alpha_t = 1$ and $c_1 (t) =
1$ and so the only possibilities incurring into semi-linear differential
equations are
\begin{align*}
      A^1 (t) &= 2 \frac{\dot{\sigma}_t}{\sigma_t} & B^1 (t) &= - 2
  \frac{\dot{\sigma}_t}{\sigma_t} \tag{DP SDE}\\
    A^3 (t) &= \frac{\dot{\sigma}_t}{\sigma_t} & B^3 (t) &= -
  \frac{\dot{\sigma}_t}{\sigma_t}. \tag{DP PFO}
\end{align*}
We obtain, again with the choice $\lambda_t = - \log (\sigma_t)$ and setting
$h = \lambda_t - \lambda_s$, the following:
\[ \Phi^1 (t, s) = \frac{\sigma^2_t}{\sigma^2_s}, \qquad \Phi^3 (t, s) =
   \frac{\sigma_t}{\sigma_s}, \]
\begin{eqnarray*}
  \int^t_s \Phi^1 (t, \tau) B^1 (\tau) \mathd \tau & = & - \sigma^2_t 
  \int^{\lambda_t}_{\lambda_s} e^{2 \lambda} \mathd \lambda = \frac{1}{2} 
  (e^{- 2 h} - 1)\\
  \int^t_s \Phi^3 (t, \tau) B^3 (\tau) \mathd \tau & = & - \sigma_t 
  \int^{\lambda_t}_{\lambda_s} e^{\lambda} \mathd \lambda = e^{- h} - 1.
\end{eqnarray*}
Next,
\begin{eqnarray*}
  \int^t_s (\Phi^1 (t, \tau))^2 g^2 (\tau) \mathd \tau & = & \sigma^4_t 
  \int^t_s \frac{1}{\sigma^4_{\tau}} \sigma^2_{\tau} 2
  \frac{\dot{\sigma}_{\tau}}{\sigma_{\tau}} \mathd \tau = \sigma^4_t  \int^t_s
  \frac{\dot{2 \sigma}_{\tau}}{\sigma^3_{\tau}} \mathd \tau = \sigma_t^2 
  (e^{- 2 h} - 1) .
\end{eqnarray*}
We readily see that these cases are identical to the VP case with $\alpha_t =
1$. In particular, the obtained SEEDS-1 iterates are
\begin{eqnarray*}
  \widetilde{\mathbf{x}}_t & = & \frac{\sigma^2_t}{\sigma^2_s}
  \widetilde{\mathbf{x}}_s - (e^{- 2 h} - 1) D_{\theta}
  (\widetilde{\mathbf{x}}_s ; \sigma_s) + \sigma_t  \sqrt{1 - e^{- 2 h}}
  \epsilon .\\
  \widetilde{\mathbf{x}}_t & = & \frac{\sigma_t}{\sigma_s}
  \widetilde{\mathbf{x}}_s - (e^{- h} - 1) D_{\theta}
  (\widetilde{\mathbf{x}}_s ; \sigma_s).
\end{eqnarray*}
Now denote $s_1 = t_{\lambda}  (\lambda_s + rh)$, for $0 < r \leqslant 1$,
where $t_{\lambda} = e^{- \lambda}$. There are two families of single-step
one-parameter two-stage exponential ODE schemes:
\begin{eqnarray*}
  \widetilde{\mathbf{x}}_t & = & \frac{\sigma_t}{\sigma_s}
  \widetilde{\mathbf{x}}_s - (e^{- h} - 1)  \left[ \left( 1 - \frac{1}{2 r}
  \right) D_{\theta} (\widetilde{\mathbf{x}}_s ; \sigma_s) + \frac{1}{2 r}
  D_{\theta} (\widetilde{\mathbf{x}}_1 ; \sigma_{s_1}) \right]\\
  \widetilde{\mathbf{x}}_t & = & \frac{\sigma_t}{\sigma_s}
  \widetilde{\mathbf{x}}_s - (e^{- h} - 1) D_{\theta}
  (\widetilde{\mathbf{x}}_s ; \sigma_s) + \frac{1}{r}  \left( \frac{e^{- h} -
  1}{h} + 1 \right)  [D_{\theta} (\widetilde{\mathbf{x}}_1 ; \sigma_{s_1}) -
  D_{\theta} (\widetilde{\mathbf{x}}_s ; \sigma_s)],
\end{eqnarray*}
with same supporting value
\begin{eqnarray*}
  \widetilde{\mathbf{x}}_1 & = & \frac{\sigma_{s_1}}{\sigma_s}
  \widetilde{\mathbf{x}}_s - (e^{- rh} - 1) D_{\theta}
  (\widetilde{\mathbf{x}}_s ; \sigma_s) .
\end{eqnarray*}
In the same vein we define a single-step two stage exponential SDE scheme:
\begin{eqnarray*}
  \widetilde{\mathbf{x}}_1 & = & \frac{\sigma^2_{s_1}}{\sigma^2_s}
  \widetilde{\mathbf{x}}_s - (e^{- 2 rh} - 1) D_{\theta}
  (\widetilde{\mathbf{x}}_s ; \sigma_s) + \sigma_{s_1}  \sqrt{e^{- 2 rh} - 1}
  \epsilon_1\\
  \widetilde{\mathbf{x}}_t & = & \frac{\sigma^2_t}{\sigma^2_s}
  \widetilde{\mathbf{x}}_s - (e^{- 2 h} - 1)  \left[ \left( 1 - \frac{1}{2 r}
  \right) D_{\theta} (\widetilde{\mathbf{x}}_s ; \sigma_s) + \frac{1}{2 r}
  D_{\theta} (\widetilde{\mathbf{x}}_1 ; \sigma_{s_1}) \right]\\
  &  & + \sigma_t  \left[ \sqrt{e^{- 2 h} - e^{- 2 rh}} \epsilon_1 +
  \sqrt{e^{- 2 rh} - 1} \epsilon_2 \right] .
\end{eqnarray*}
\subsubsection{The EDM case}

In the EDM-preconditioned case {\cite[App. B.6]{Karras2022edm}}, we set
$\sigma_t : = t$ and $\alpha_t : = 1$. We denote $\sigma_d : =
\sigma_{\tmop{data}}$ the variance of the considered initial dataset and we
set
\[ c_1 (t) = \frac{\sigma^2_d}{t^2 + \sigma^2_d}, \quad c_2 (t) = \frac{t
   \sigma_d}{\sqrt{t^2 + \sigma^2_d}}, \quad c_3 (t) = \frac{1}{\sqrt{t^2 +
   \sigma^2_d}}, \quad c_4 (t) = \frac{1}{4} \log (t), \]
so we obtain the following coefficients:
\begin{align*}
       A^1 (t) &= \frac{2}{t} & B^1 (t) &= - \frac{2}{t} \tag{DP NRSDE} \\
   A^2 (t) &= \frac{2}{t} \left( 1 - \frac{\sigma^2_d}{t^2 +
  \sigma^2_d} \right) & B^2 (t) &= \frac{- 2\sigma_d}{\sqrt{t^2 + \sigma^2_d}} \tag{NP NRSDE} \\
    A^3 (t) &= \frac{1}{t} & B^3 (t) &= - \frac{1}{t} \tag{DP NPFO}\\
   A^4 (t) &= \frac{1}{t} \left( 1 - \frac{\sigma^2_d}{t^2 +
  \sigma^2_d} \right) & B^4 (t) &= \frac{- \sigma_d}{\sqrt{t^2 + \sigma^2_d}} \tag{NP NPFO}
\end{align*}
In particular, the data prediction neural SDE/PFO are identical to those in the VE case with $\sigma_t = t$. So let us concentrate on the noise prediction regime, leading us to prove
Proposition \ref{Prop1-edm}.
\subsubsection{Proof of Proposition \ref{Prop1-edm}}
In the Noise Prediction case, we have
\[ \Phi^2 (t, s) = \frac{t^2 + \sigma_d^2}{s^2 + \sigma_d^2}, \quad \Phi^4 (t,
   s) = \sqrt{\frac{t^2 + \sigma_d^2}{s^2 + \sigma_d^2}}, \]
so we readily compute:
\begin{eqnarray*}
  \int^t_s \Phi^2 (t, \tau) B^2 (\tau) \mathd \tau & = & (t^2 + \sigma_d^2) 
  \int^t_s \frac{1}{\tau^2 + \sigma_d^2} \cdot \frac{- 2
  \sigma_d}{\sqrt{\tau^2 + \sigma^2_d}} \mathd \tau,\\
  \int^t_s \Phi^4 (t, \tau) B^4 (\tau) \mathd \tau & = & \sqrt{t^2 +
  \sigma_d^2}  \int^t_s \frac{\sigma_d}{\sqrt{\tau^2 + \sigma_d^2}} \cdot
  \frac{- 1}{\sqrt{\tau^2 + \sigma^2_d}} \mathd \tau .
\end{eqnarray*}
Let us consider two different changes of variables:
\begin{equation}
\label{edm-lambdas}
    \lambda_t \assign - \log \left( \arctan \left( \dfrac{t}{\sigma_d} \right)
   \right)  \qquad \text{and} \qquad \lambda_t \assign - \log \left(
   \dfrac{t}{\sigma_d  \sqrt{t^2 + \sigma_d^2}} \right),
\end{equation}
that will be used for the (NP NPFO) and (NP NRSDE), respectively. For the
former case, we have
\[ e^{- \lambda_t} \mathd \lambda_t = - \frac{\frac{1}{\sigma_d} \mathd t}{1 +
   \frac{t^2}{\sigma_d^2}} = - \frac{\sigma_d \mathd t}{\sigma_d^2 + t^2} . \]
Therefore, we can deduce that
\begin{eqnarray*}
  \int_s^t \Phi^4 (t ; \tau) B^4 (\tau) \mathd \tau & = & \int_s^t
  \sqrt{\frac{t^2 + \sigma_d^2}{\tau^2 + \sigma_d^2}} . \frac{-
  \sigma_d}{\sqrt{\tau^2 + \sigma_d^2}} d \tau\\
  & = & \sqrt{t^2 + \sigma_d^2}  \int_{\lambda_s}^{\lambda_t} \frac{-
  \sigma_d}{\tau^2 + \sigma_d^2} \mathd \tau\\
  & = & \sqrt{t^2 + \sigma_d^2}  \int_{\lambda_s}^{\lambda_t} e^{- \lambda}
  \mathd \lambda\\
  & = & \sqrt{t^2 + \sigma_d^2} \arctan \left( \frac{t}{\sigma_d} \right) 
  (e^h - 1) .
\end{eqnarray*}
For the latter case, we have
\begin{eqnarray*}
  e^{- \lambda_t} \mathd \lambda_t & = & - \frac{\sigma_d  \sqrt{t^2 +
  \sigma_d^2} - t \sigma_d \dfrac{t}{\sqrt{t^2 + \sigma_d^2}}}{\sigma_d^2 
  (t^2 + \sigma_d^2)}\\
  & = & - \frac{\sigma_d^2}{\sigma_d  (t^2 + \sigma_d^2)  \sqrt{t^2 +
  \sigma_d^2}} = - \frac{\sigma_d}{(t^2 + \sigma_d^2)  \sqrt{t^2 +
  \sigma_d^2}} .
\end{eqnarray*}
We then obtain
\begin{eqnarray*}
  \int_s^t \Phi^2 (t ; \tau) B^2 (\tau) \mathd \tau & = & \int_s^t \frac{t^2 +
  \sigma_d^2}{\tau^2 + \sigma_d^2} \cdot \frac{- 2 \sigma_d}{\sqrt{\tau^2 +
  \sigma_d^2}} \mathd \tau\\
  & = & 2 (t^2 + \sigma_d^2)  \int_s^t e^{- \lambda} \mathd \lambda\\
  & = & \frac{2 t \sqrt{t^2 + \sigma_d^2}}{\sigma_d}  (e^h - 1) .
\end{eqnarray*}
The stochastic integral $\int^t_s \Phi^2 (t, \tau) g (\tau) \mathd
\tmmathbf{\bar{\omega}}_{\tau}$ in noise prediction case is a Gaussian random
variable with zero mean and whose variance can be computed by It{\^o} isometry
as
\begin{eqnarray*}
  \int_t^s [\Phi^2 (t ; \tau) g^2 (\tau)]^2 \mathd \tau & = & (t^2 +
  \sigma_d^2)^2  \int_t^s \frac{1}{[\tau^2 + \sigma_d^2]^2} 2 \tau \mathd
  \tau\\
  & = & (t^2 + \sigma_d^2)^2  \int_t^s \frac{1}{[\tau^2 + \sigma_d^2]^2}
  \mathd \tau^2\\
  & = & (t^2 + \sigma_d^2)^2  \left( - \frac{1}{s^2 + \sigma_d^2} +
  \frac{1}{t^2 + \sigma_d^2} \right)\\
  & = & \frac{(t^2 + \sigma_d^2)  (s^2 - t^2)}{(s^2 + \sigma_d^2)} .
\end{eqnarray*}
Putting everything together, we obtain the analytic solution at time $t$ of
\eqref{eqq} with coefficients \eqref{edm-coefs} and initial value
$\mathbf{x}_s$ for the (NP NRSDE):
\begin{equation}
  \mathbf{x}_t = \frac{t^2 + \sigma^2_d}{s^2 + \sigma^2_d} \mathbf{x}_s + 2
  (t^2 + \sigma^2_d)  \int_{\lambda_s}^{\lambda_t} e^{- \lambda} 
  \hat{F}_{\theta} (\widehat{\mathbf{x}}_{\lambda}, \lambda) \mathd \lambda -
  \sqrt{2} (t^2 + \sigma^2_d) \int_{\lambda_s}^{\lambda_t} e^{- \lambda }
  \mathd \overline{\tmmathbf{\omega} }_{\lambda},
\end{equation}
where $\lambda_t : = - \log \left[ \frac{t}{\sigma_d \sqrt{t^2 + \sigma^2_d}}
\right]$. For the (NP NPFO), it is given by
\begin{equation}
  \mathbf{x}_t = \sqrt{\frac{t^2 + \sigma^2_d}{s^2 + \sigma^2_d}} \mathbf{x}_s
  + \sqrt{t^2 + \sigma^2_d} \int_{\lambda_s}^{\lambda_t} e^{- \lambda} 
  \hat{F}_{\theta} (\widehat{\mathbf{x}}_{\lambda}, \lambda) \mathd \lambda,
  \quad \lambda_t : = - \log \left[ \arctan \left[ \frac{t}{\sigma_d} \right]
  \right] .
\end{equation}
This finishes the proof.
\begin{remark}
  From the above proof, we immediately deduce the SEEDS-1 iterates in the
  EDM-preconditioned noise prediction case. These are given, for the (NP NPFO)
  and (NP NRSDE) respectively, by
  \begin{eqnarray}
    \widetilde{\mathbf{x}}_t & = & \sqrt{\frac{t^2 + \sigma_d^2}{s^2 +
    \sigma_d^2}} \widetilde{\mathbf{x}}_s + \sqrt{t^2 + \sigma_d^2} \arctan
    \left( \frac{t}{\sigma_d} \right)  (e^h - 1) \epsilon_{\theta} 
    (\widetilde{\mathbf{x}}_s, s), \\
    \widetilde{\mathbf{x}}_t & = & \frac{t^2 + \sigma_d^2}{s^2 + \sigma_d^2}
    \widetilde{\mathbf{x}}_s + \frac{2 t \sqrt{t^2 + \sigma_d^2}}{\sigma_d} 
    (e^h - 1) \epsilon_{\theta}  (\widetilde{\mathbf{x}}_s, s) +
    \sqrt{\frac{(t^2 + \sigma_d^2)  (s^2 - t^2)}{(s^2 + \sigma_d^2)}}
    \epsilon, 
  \end{eqnarray}
  where $\epsilon \sim \mathcal{N} (\tmmathbf{0}, \mathbf{I}_d)$.
\end{remark}

\subsubsection{The SEEDS algorithms in the EDM Noise Prediction case}

In the same way we presented the SEEDS algorithms \ref{alg:SERK-solver-2} and \ref{alg:SERK-solver-3} for the VP case, the change-of-variables optimized for the EDM framework in Proposition \ref{Prop1-edm} induces algorithms \ref{alg:SERK-solver-2-edm} for SEEDS-2 and \ref{alg:SERK-solver-3-edm} for SEEDS-3 respectively, under the Noise Prediction regime. This is the version of SEEDS that we use to achieve same performance as EDM solver but with twice less NFEs than the latter for ImageNet-64 (see Table \ref{table-fid-all} and Fig. \ref{fig:fid-cont} (c) in the main part of this article).

\begin{algorithm}[ht]
   \caption{SEEDS-2 (EDM NP)}
   \label{alg:SERK-solver-2-edm}
\small 
\begin{algorithmic}
\State {\bfseries Input:} initial value $\x_T$, steps $\{t_i\}^M_{i=0}$, data prediction model $D_\theta$, $r=1/2 $
   \State Initialize $\tilde\x_{t_0}\gets\x_T$
   \For{$i=1$ {\bfseries to} $M-1$}
   \State $(t,s) \gets (t_{i},t_{i-1}), \quad (z^1,z^2) \gets \mathcal{N}(0,\text{Id})^{\otimes2} $
   \State $\lambda (t) \gets - \log \left( \frac{t}{\sigma_d \sqrt{t^2 + \sigma^2_d}}
\right) \qquad t_{\lambda} (\lambda) \gets
\frac{\sigma_d}{\sqrt{\frac{1}{\sigma^2_d e^{- 2 \lambda}} - 1}}, \quad
\sigma_d \gets \sigma_{data}$
   \State $h \gets \lambda_t - \lambda_s, \quad \lambda_{s_1} \gets \lambda_s +
rh , \quad s_1 \gets t_{\lambda} (\lambda_{s_1})$
   \State $\epsilon_{\theta} \left( \widetilde{\mathbf{x}}_s, s \right) \gets \left[ D_{\theta}
\left( \frac{\widetilde{\mathbf{x}}_s}{\alpha_s} ; \sigma_s \right) -
\frac{\sigma_d^2}{s^2 + \sigma_d^2} \frac{\widetilde{\mathbf{x}}_s}{\alpha_s} \right]
\cdot \frac{\sqrt{s^2 + \sigma_d^2}}{s \sigma_d},$
    \State $\widetilde{\mathbf{x}}_{s_1} \gets \frac{s_1^2 + \sigma_d^2}{s^2 + \sigma_d^2}
\widetilde{\mathbf{x}}_s + 2 \frac{s_1  \sqrt{s_1^2 + \sigma_d^2}}{\sigma_d} (e^{r h} - 1) \epsilon_{\theta} \left( \widetilde{\mathbf{x}}_s, s \right) + \sqrt{\frac{(s_1^2 + \sigma_d^2) (s^2 - s_1^2)}{(s^2 + \sigma_d^2)}} z^1$
    \State $\widetilde{\mathbf{x}}_t \gets \frac{t^2 + \sigma_d^2}{s^2 + \sigma_d^2} \widetilde{\mathbf{x}}_s + 2 \frac{t  \sqrt{t^2 + \sigma_d^2}}{\sigma_d} (e^{ h} - 1) \epsilon_{\theta} \left({\widetilde{\mathbf{x}}_{s_1}}, s_1 \right) + \sqrt{\frac{(t^2 + \sigma_d^2) (s_1^2 - t^2)}{(s_1^2 + \sigma_d^2)}} (e^{r h} z^1 + z^2)$
    \EndFor
   \State Return $\tilde\x_{t_M}\gets \text{last-step}(\tilde\x_{t_{M-1}}, t_{M-1}, t_M)$
\end{algorithmic}
\end{algorithm}

\begin{algorithm}[ht]
   \caption{SEEDS-3 (EDM NP)}
   \label{alg:SERK-solver-3-edm}
\small 
\begin{algorithmic}
\State {\bfseries Input:} initial value $\x_T$, steps $\{t_i\}^M_{i=0}$, data prediction model $D_\theta$, $ r_1=1/3, r_2=2/3 $
   \State Initialize $\tilde\x_{t_0}\gets\x_T$
   \For{$i=1$ {\bfseries to} $M-1$}
   \State $(t,s) \gets (t_{i},t_{i-1}), \quad (z^1,z^2,z^3) \gets \mathcal{N}(0,\text{Id})^{\otimes3} $
   \State $\lambda (t) \gets - \log \left( \frac{t}{\sigma_d \sqrt{t^2 + \sigma^2_d}}
\right) \qquad t_{\lambda} (\lambda) \gets
\frac{\sigma_d}{\sqrt{\frac{1}{\sigma^2_d e^{- 2 \lambda}} - 1}}, \quad
\sigma_d \gets \sigma_{data}$
   \State $h \gets \lambda_t - \lambda_s, \quad \lambda_{s_1} \gets \lambda_s +
r_1h , \quad \lambda_{s_2} \gets \lambda_s +
r_2h , \quad s_1 \gets t_{\lambda} (\lambda_{s_1}), \quad s_2 \gets t_{\lambda} (\lambda_{s_2})$
   \State $\epsilon_{\theta} \left( \widetilde{\mathbf{x}}_s, s \right) \gets \left[ D_{\theta}
\left( \frac{\widetilde{\mathbf{x}}_s}{\alpha_s} ; \sigma_s \right) -
\frac{\sigma_d^2}{s^2 + \sigma_d^2} \frac{\widetilde{\mathbf{x}}_s}{\alpha_s} \right]
\cdot \frac{\sqrt{s^2 + \sigma_d^2}}{s \sigma_d},$
    \State $\widetilde{\mathbf{x}}_{s_1} \gets \frac{s_1^2 + \sigma_d^2}{s^2 + \sigma_d^2}
\widetilde{\mathbf{x}}_s + 2 \frac{s_1  \sqrt{s_1^2 + \sigma_d^2}}{\sigma_d} (e^{r_1 h} - 1) \epsilon_{\theta} \left( \widetilde{\mathbf{x}}_s, s \right) + \sqrt{\frac{(s_1^2 + \sigma_d^2) (s^2 - s_1^2)}{(s^2 + \sigma_d^2)}} z^1$
    \State $\Av \gets \sqrt{\frac{(s_2^2 + \sigma_d^2) (s_1^2 -  s_2^2)}{(s_1^2 +
\sigma_d^2)}} (e^{r_1 h} z^1 + z^2), \quad P_1 \gets \epsilon_{\theta} \left({\widetilde{\x}_{s_1}}, s_1 \right) -\epsilon_{\theta} \left( \x_s, s \right) $
    \State $\widetilde{\mathbf{x}}_{s_2} \gets \frac{s_2^2 + \sigma_d^2}{s^2 + \sigma_d^2}
\widetilde{\mathbf{x}}_s + 2 \frac{s_2  \sqrt{s_2^2 + \sigma_d^2}}{\sigma_d} (e^{r_2 h} - 1) \epsilon_{\theta} \left( \widetilde{\mathbf{x}}_s, s \right) + \frac{2 r_2}{r_1} \frac{s_2   \sqrt{s_2^2 + \sigma_d^2}}{\sigma_d} \left(
\frac{e^{r_2 h} - 1}{r_2 h} - 1 \right) P_1  + \Av$
    \State $\Bv \gets \sqrt{\frac{(t^2 + \sigma_d^2) (s_2^2 - t^2)}{(s_2^2 +
\sigma_d^2)}} \left( e^{r_2 h} z^1 + e^{r_1 h} z^2 + z^3 \right), \quad P_2 \gets \epsilon_{\theta} \left({\widetilde{\x}_{s_2}}, s_2 \right) -\epsilon_{\theta} \left( \x_s, s \right)$
    \State $\widetilde{\mathbf{x}}_t \gets \frac{t^2 + \sigma_d^2}{s^2 + \sigma_d^2} \widetilde{\mathbf{x}}_s + 2 \frac{t  \sqrt{t^2 + \sigma_d^2}}{\sigma_d} (e^{ h} - 1) \epsilon_{\theta} \left({\widetilde{\mathbf{x}}_{s}}, s \right) + \frac{2}{r_2} \frac{t  \sqrt{t^2 + \sigma_d^2}}{\sigma_d} \left( \frac{e^h -
1}{h} - 1 \right) P_2 + \Bv$
    \EndFor
   \State Return $\tilde\x_{t_M}\gets \text{last-step}(\tilde\x_{t_{M-1}}, t_{M-1}, t_M)$
\end{algorithmic}
\end{algorithm}

\section{Convergence Proofs}\label{app:proofs}

In this Section, we give detailed proofs of Theorem \ref{theorem1}, Proposition \ref{noise-dec} and
Corollary \ref{cor:wocv} stated in the main part of this paper. Let us start
recalling its framework. We start by considering the NP NRSDE \eqref{nrsde-vp}
with VP coefficients:
\begin{eqnarray}
  \mathd \mathbf{x}_t & = & \left[ f (t) \mathbf{x}_t + \frac{g^2
  (t)}{\bar{\sigma}_t} \epsilonv_{\theta} (\mathbf{x}_t, t) \right] \mathd t +
  g (t) \mathd \tmmathbf{\bar{\omega}}_t  \label{eq:proof1}\\
  & = & \left[ f (t) \mathbf{x}_t + \frac{2 \alpha^2_t  \dot{\sigma}_t
  \sigma_t}{\bar{\sigma}_t} \epsilonv_{\theta} (\mathbf{x}_t, t) \right]
  \mathd t + g (t) \mathd \tmmathbf{\bar{\omega}}_t \nonumber\\
  & = & \left[ \frac{\mathd \log \alpha_t}{\mathd t} \mathbf{x}_t + 2
  \alpha_t  \dot{\sigma}_t \epsilonv_{\theta} (\mathbf{x}_t, t) \right] \mathd
  t + \alpha_t  \sqrt{\frac{\mathd [\sigma^2_t]}{\mathd t}} \mathd
  \tmmathbf{\bar{\omega}}_t. \nonumber
\end{eqnarray}
Denote $t_{\lambda}$ the inverse of $\lambda_t : = - \log (\sigma_t)$ (which
is a strictly decreasing function of $t$) and denote
$\widehat{\mathbf{x}}_{\lambda} \assign \mathbf{x} (t_{\lambda} (\lambda)),
\hat{\tmmathbf{\epsilon}}_{\theta} (\widehat{\mathbf{x}}_{\lambda}, \lambda)
\assign \tmmathbf{\epsilon}_{\theta} (\mathbf{x} (t_{\lambda} (\lambda)),
t_{\lambda} (\lambda))$. We consider a time discretization $\{t_i \}^{M +
1}_{i = 0}$ going backwards in time starting from $t_0 = T$ to $t_{M + 1} = 0$
and to ease the notation we will always denote $t < s$ for two consecutive
time-steps $t_i < t_{i - 1}$. The analytic solution at time $t$ of the RSDE
\eqref{eqq} with coefficients \eqref{vp-coefs} and initial value
$\mathbf{x}_s$ reads
\begin{eqnarray}
  \mathbf{x}_t & = & \frac{\alpha_t}{\alpha_s} \mathbf{x}_s - 2 \alpha_t 
  \int_{\lambda_s}^{\lambda_t} e^{- \lambda}
  \hat{\tmmathbf{\epsilon}}_{\theta} (\widehat{\mathbf{x}}_{\lambda}, \lambda)
  \mathd \lambda - \sqrt{2} \alpha_t  \int_{\lambda_s}^{\lambda_t} e^{-
  \lambda} \mathd \tmmathbf{\bar{\omega}} (\lambda).  \label{exact-1}
\end{eqnarray}
Given an initial condition $\widetilde{\mathbf{x}}_{t_0} = \mathbf{x}_T$, the
SEEDS-1 iterates read, for $h_i = \lambda_{t_i} - \lambda_{t_{i - 1}} =
\lambda_t - \lambda_s$,
\[ \widetilde{\mathbf{x}}_t = \cfrac{\alpha_t}{\alpha_s}
   \widetilde{\mathbf{x}}_s - 2 \bar{\sigma}_t  (e^{h_i} - 1)
   \tmmathbf{\epsilon}_{\theta} (\widetilde{\mathbf{x}}_s, s) - \bar{\sigma}_t
   \sqrt{e^{2 h_i} - 1} \epsilon \quad \epsilon \sim \mathcal{N}
   (\tmmathbf{0}, \mathbf{I}_d) . \]
{\assumption{\label{assumption-1}
\begin{enumerate}
  \item The function $\epsilonv_{\theta} (\x, t)$ is continuous (and hence
  bounded) on $[0, T]$, Lipschitz with respect to $\x$ and there is a constant
  $C$ such that, for $t, s \in [0, T]$ with $t < s$, we have
  \begin{eqnarray}
    | \epsilonv_{\theta} (\mathbf{x}, t) - \epsilonv_{\theta} (\mathbf{y}, t)
    |^2 & \leqslant & L_1 | \mathbf{x} -\mathbf{y} |^2 \\
    | \epsilonv_{\theta} (\mathbf{x}, t) |^2 \vee | g (t) |^2 & \leqslant &
    L_2 (1 + | \mathbf{x} |^2) \\
    | \epsilonv_{\theta} (\mathbf{x}, t) - \epsilonv_{\theta} (\mathbf{x}, s)
    |^2 & \leqslant & L_3 (1 + | \mathbf{x} |^2)  |t - s|^2 \label{lip:time2} 
  \end{eqnarray}
  \item $h = \max_{1 \le i \le M} |h_i | \sim O (1 / M)$, where $h_i =
  \lambda_{t_i} - \lambda_{t_{i - 1}}$.
\end{enumerate}}}

Let $\mathcal{C}^l_P (\mathbb{R}^d, \mathbb{R})$ denote the family of $l$
times continuously differentiable real-valued functions on $\mathbb{R}^d$
whose partial derivatives of order $\leqslant l$ have polynomial growth and
let $\mathcal{C}^{k, l}_P (I \times \mathbb{R}^d, \mathbb{R})$ be the space of
functions $g (\cdot, \cdot)$ such that, for all $(t, x) \in I \times
\mathbb{R}^d$, $g (\cdot, x) \in \mathcal{C}^k (I, \mathbb{R})$ and $g (t,
\cdot) \in \mathcal{C}^l_P (\mathbb{R}^d, \mathbb{R})$.

{\assumption{\label{assumption-2}In addition to Assumption \ref{assumption-1},
assume that all the components of $\tmmathbf{\epsilon}_{\theta}$ belong to $\mathcal{C}^{4,2}_P (
  \mathbb{R}^d \times [0,1], \mathbb{R})$.}}

Before going into the proofs, we give some context that lead us to necessitate
such assumptions.
\subsection{Preliminaries}

For an interval $I = [t_0, T]$, let $\mathbf{x} = (\mathbf{x} (t))_I$ the
solution of the following SDE
\begin{equation}
    \label{ref:sde}
    \mathd \mathbf{x} (t) = f (\mathbf{x} (t), t) \mathd t + g (t) \mathd
   \tmmathbf{\omega} (t),
\end{equation}
where $g (t) = \hat{g} (t) \cdot \tmop{Id}_d$ is considered here as a diagonal
matrix with identical diagonal entries $\hat{g} (t)$. Suppose that $f, g$ are
continuous, and satisfy a linear growth and Lipschitz condition so that the
conditions of the Existence and Uniqueness Theorem are fulfilled for the SDE
\eqref{ref:sde}.

Let $I_h = \{t_0, \ldots, t_M \}$ be a time discretization of $I$ with step
sizes $h_n = t_{n + 1} - t_n$ for $n = 0, \ldots, M - 1$ and let $h = \max_{0
\leqslant n < M} h_n$. A time discrete approximation scheme
$\widehat{\mathbf{x}} = (\widehat{\mathbf{x}}_n)_{I_h}$ will be defined as a
sequence $\widehat{\mathbf{x}}_0 = \mathbf{x} (t_0)$ and
\[ \widehat{\mathbf{x}}_{n + 1} = \Phi (\widehat{\mathbf{x}}_n, h_n, I_n), 
   \qquad n = 0, \ldots, M - 1, \]
where $I_0$ is independent of $\widehat{\mathbf{x}}_0$, with $I_n
=\tmmathbf{\omega} (t_{n + 1} - t_n)$ Wiener increments drawn from the normal
distributions with zero mean and variance $h_n$ and which are independent of
$\widehat{\mathbf{x}}_0, \ldots, \widehat{\mathbf{x}}_n$ and $I_0, \ldots,
I_{n - 1}$.

A scheme $\widehat{\mathbf{x}}$ converges in the strong (resp. weak) sense,
with global order $p > 0$, to the solution $\mathbf{x}$ of the SDE
\eqref{ref:sde} if there is a constant $C > 0$, independent of $h$ and $\delta
> 0$, such that, for each $h \in] 0, \delta]$, we have
\[ (\mathbb{E}[| \mathbf{x} (T) - \widehat{\mathbf{x}}_M |^2])^{1 / 2} \le
   Ch^p, \quad (\tmop{resp} .|\mathbb{E}[G (\mathbf{x} (T))] -\mathbb{E}[G
   (\widehat{\mathbf{x}}_M)] | \le Ch^p, \forall G \in \mathcal{C}^{2 (p +
   1)}_P (\mathbb{R}^d, \mathbb{R})) . \]
Notice that if $(\mathbb{E}[| \mathbf{x} (T) - \widehat{\mathbf{x}}_M |^2])^{1
/ 2} =\mathcal{O} (h^p)$ then for every function $f$ satisfying a Lipschitz
condition, we have $|\mathbb{E}[f (\mathbf{x} (T))] -\mathbb{E}[f
(\widehat{\mathbf{x}}_M)] | =\mathcal{O} (h^p)$. Nevertheless, this is not
enough to infer the optimal weak order of convergence of such method.

Strong convergence is concerned with the precision of the path, while the weak
convergence is with the precision of the moments. As, for DPMs,
the center of attention is the evolution of the probability densities rather
than that of the noising process of single data samples, weak convergence is
enough to guarantee the well-conditioning of our numerical schemes.
Moreover, when the diffusion coefficient vanishes, then both strong and weak
convergence (with the choice $G = \tmop{id}$) reduce to the usual
deterministic convergence criterion for ODEs.

Let us now state some useful results that will be used later on.
\begin{assumption}
  \label{assum1}All moments of the initial value $\widehat{\mathbf{x}}_0$
  exist, $f$ is continuous, satisfy a linear growth and globally Lipschitz
  condition.
\end{assumption}
In particular, since $I$ is a closed finite interval in $\mathbb{R}$, $f
(\cdot, x)$ and $g$ are bounded by some constant.
\begin{theorem}[{\cite{MT}}]
  \label{th:strong}In addition to \ref{assum1}, suppose that
  \begin{eqnarray*}
    |\mathbb{E}[\mathbf{x} (t_1) - \widehat{\mathbf{x}}_1] | & \leqslant &
    Ch^{p_1}\\
    (\mathbb{E}[| \mathbf{x} (t_1) - \widehat{\mathbf{x}}_1 |^2])^{1 / 2} &
    \leqslant & Ch^{p_2}
  \end{eqnarray*}
  with $p_2 \geqslant 1 / 2$ and $p_1 \geqslant p_2 + 1 / 2$. Then
  $\widehat{\mathbf{x}}$ is of strong global order $p = p_2 - 1 / 2$.
\end{theorem}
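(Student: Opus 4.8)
The plan is to prove this fundamental mean-square convergence theorem by reducing the global error to a controlled accumulation of one-step errors, following the error-propagation strategy of Milstein--Tretyakov. The hypotheses supply the local (one-step) estimates: the one-step scheme $\overline{X}_{t,x}(t+h)$ approximating the exact flow $X_{t,x}(t+h)$ has mean deviation of order $h^{p_1}$ and mean-square deviation of order $h^{p_2}$, which I read as holding uniformly over admissible base points $(t,x)$ with a multiplicative factor $(1+|x|^2)^{1/2}$ (the estimates stated at $t_1$ being the instance at the initial step). Throughout I would use Assumption \ref{assum1} to guarantee existence, uniqueness, and, crucially, two one-step regularity properties of the exact flow: mean-square stability $\mathbb{E}[|X_{t,x}(t+h)-X_{t,y}(t+h)|^2 \mid \mathcal{F}_t] \le (1+Ch)|x-y|^2$ and a matching contraction-up-to-$O(h)$ estimate for the conditional mean. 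A preliminary lemma, established from the linear-growth hypothesis, would give uniform-in-$h$ moment bounds $\sup_k \mathbb{E}|\overline{X}_k|^2 < \infty$, so that the factors $(1+|\overline{X}_k|^2)$ appearing in the local bounds are harmless after taking expectations.

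First I would set up the one-step error recursion. Writing $R_k \assign X_{t_0,X_0}(t_k) - \overline{X}_k$ for the global error (with $R_0 = 0$) and invoking the flow/semigroup property $X_{t_0,X_0}(t_{k+1}) = X_{t_k,\, X_{t_0,X_0}(t_k)}(t_{k+1})$, I would decompose
\[ R_{k+1} = \underbrace{\big(X_{t_k,X(t_k)}(t_{k+1}) - X_{t_k,\overline{X}_k}(t_{k+1})\big)}_{A_k} + \underbrace{\big(X_{t_k,\overline{X}_k}(t_{k+1}) - \overline{X}_{t_k,\overline{X}_k}(t_{k+1})\big)}_{B_k}, \]
where $A_k$ propagates the previous error $R_k$ through one exact step and $B_k$ is the local error of the scheme started afresh at $\overline{X}_k$. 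The local estimates apply verbatim to $B_k$ conditionally on $\mathcal{F}_{t_k}$, while the flow-regularity estimates control $A_k$ in terms of $R_k$.

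Then I would square, condition on $\mathcal{F}_{t_k}$, and take total expectation, splitting $B_k = \mathbb{E}[B_k\mid\mathcal{F}_{t_k}] + \widetilde{B}_k$ into its conditional mean (of size $h^{p_1}$) and its centered fluctuation (of mean-square size $h^{2p_2}$). This yields a recursion of the schematic form $\mathbb{E}|R_{k+1}|^2 \le (1+Ch)\,\mathbb{E}|R_k|^2 + C h^{p_1}(\mathbb{E}|R_k|^2)^{1/2} + C h^{2p_2}$. Summing the $M \sim 1/h$ local mean-square contributions gives $M\,h^{2p_2} \sim h^{2p_2-1}$, and a discrete Gronwall inequality closes the induction to give $\mathbb{E}|R_k|^2 \le C h^{2p_2-1}$, hence $(\mathbb{E}|R_k|^2)^{1/2} \le C h^{p_2-1/2}$, uniformly in $k$. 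The condition $p_1 \ge p_2 + 1/2$ enters precisely here: it guarantees that the accumulated mean contribution $\sum_k h^{p_1}(\mathbb{E}|R_k|^2)^{1/2} \sim h^{p_1+p_2-3/2}$ is dominated by $h^{2p_2-1}$, while $p_2 \ge 1/2$ ensures the Taylor and stability remainders stay subdominant.

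The main obstacle I anticipate is the orthogonality bookkeeping that produces the characteristic loss of half an order. The reason the mean-square order is $p_2 - 1/2$ rather than $p_2$ is that the $M$ centered fluctuations $\widetilde{B}_k$ add in quadratic variation, $\mathbb{E}|\sum_k \widetilde{B}_k|^2 \approx \sum_k \mathbb{E}|\widetilde{B}_k|^2$, rather than by their norms; making this rigorous requires showing that the cross terms $\mathbb{E}\langle\,\cdot\,,\widetilde{B}_k\rangle$ arising after propagation through the flow are negligible. This rests on the conditional-mean-zero property of $\widetilde{B}_k$ together with the adaptedness of the propagation factors, and it is the step where the interplay between the independence of Brownian increments on disjoint intervals and the flow regularity from Assumption \ref{assum1} must be handled with care; the remaining cross and remainder terms are then dispatched by Cauchy--Schwarz and the uniform moment bounds.
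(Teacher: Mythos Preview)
The paper does not give its own proof of this theorem: it is stated as a quoted result from \cite{MT} and used as a black-box tool in the convergence analysis of SEEDS, so there is no in-paper argument to compare against. Your proposal is a faithful outline of the standard Milstein--Tretyakov fundamental theorem proof (one-step error decomposition via the flow property, splitting the local error into its conditional mean and centered fluctuation, discrete Gronwall), which is exactly how the cited reference establishes it, and the role you assign to the conditions $p_1\ge p_2+1/2$ and $p_2\ge 1/2$ is correct.
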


\begin{assumption}
  \label{assum2}All moments of the initial value $\widehat{\mathbf{x}}_0$
  exist, $f$ is continuous, satisfy a linear growth and Lipschitz condition
  with all their components belonging to $\mathcal{C}^{p + 1, 2 (p + 1)}_P (I
  \times \mathbb{R}^d, \mathbb{R})$ and $g \in \mathcal{C}^{p + 1} (I,
  \mathbb{R})$.
\end{assumption}

\begin{theorem}[{\cite{MT}}]
  \label{wlocal}In addition to \ref{assum2}, suppose that
  \begin{enumerate}
    \item for large enough $r \in \mathbb{N}$, the moments $\mathbb{E} [|
    \widehat{\mathbf{x}}_n |^{2 r}]$ exist and are uniformingly bounded with
    respect to $M$ and $n = 0, \ldots, M$
    
    \item for all $G \in \mathcal{C}^{2 (p + 1)}_P (\mathbb{R}^d,
    \mathbb{R})$, if $\widehat{\mathbf{x}}_n = \mathbf{x} (t_n)$, then for
    some $K \in \mathcal{C}^0_P (\mathbb{R}^d, \mathbb{R})$, we have
    \[ |\mathbb{E}[G (\mathbf{x} (t_{n + 1}))] -\mathbb{E}[G
       (\widehat{\mathbf{x}}_{n + 1})] | \le K (\widehat{\mathbf{x}}_n) h^{p +
       1} . \]
  \end{enumerate}
  Then $\widehat{\mathbf{x}}$ is of weak global order $p$.
\end{theorem}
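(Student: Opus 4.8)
The statement is the classical local-to-global principle for weak approximation of SDEs, and the proof I would give follows {\cite{MT}}: the key device is the Kolmogorov backward function together with a telescoping decomposition of the global error along the exact flow. The plan is to reduce the global weak error to a sum of one-step weak errors, each of which is controlled by hypothesis~(2), and then to absorb the sum using the conservation $\sum_n h_n = T - t_0$ together with the uniform moment bounds of hypothesis~(1).

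First I would fix a test function $G \in \mathcal{C}^{2(p+1)}_P(\mathbb{R}^d,\mathbb{R})$ and introduce $u(t,x) := \mathbb{E}[G(\mathbf{x}^{t,x}(T))]$, where $\mathbf{x}^{t,x}$ denotes the solution of \eqref{ref:sde} started at $x$ at time $t$. Under the coefficient regularity gathered in Assumption~\ref{assum2}, $u$ solves the backward Kolmogorov equation on $I \times \mathbb{R}^d$ and, crucially, $u(t,\cdot) \in \mathcal{C}^{2(p+1)}_P(\mathbb{R}^d,\mathbb{R})$ with all partial derivatives up to order $2(p+1)$ dominated by a single polynomial in $|x|$ \emph{uniformly in} $t \in I$. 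I would also record the Markov/flow identity $u(t_n,x) = \mathbb{E}[u(t_{n+1},\mathbf{x}^{t_n,x}(t_{n+1}))]$, which is what lets the exact solution be peeled off one step at a time.

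Next I would write the global error as a telescoping sum. Since $\widehat{\mathbf{x}}_0 = \mathbf{x}(t_0)$ and $u(T,\cdot) = G$, we have $\mathbb{E}[G(\mathbf{x}(T))] = u(t_0,\widehat{\mathbf{x}}_0)$ and $\mathbb{E}[G(\widehat{\mathbf{x}}_M)] = \mathbb{E}[u(T,\widehat{\mathbf{x}}_M)]$, hence
\[ \mathbb{E}[G(\mathbf{x}(T))] - \mathbb{E}[G(\widehat{\mathbf{x}}_M)] = \sum_{n=0}^{M-1}\Big(\mathbb{E}[u(t_n,\widehat{\mathbf{x}}_n)] - \mathbb{E}[u(t_{n+1},\widehat{\mathbf{x}}_{n+1})]\Big). \]
Conditioning the $n$-th summand on $\widehat{\mathbf{x}}_n$ and using the flow identity together with the independence of the Wiener increment $I_n$ from $\widehat{\mathbf{x}}_n$, it equals $\mathbb{E}\big[\,\mathbb{E}[u(t_{n+1},\mathbf{x}^{t_n,\widehat{\mathbf{x}}_n}(t_{n+1})) \mid \widehat{\mathbf{x}}_n] - \mathbb{E}[u(t_{n+1},\widehat{\mathbf{x}}_{n+1}) \mid \widehat{\mathbf{x}}_n]\big]$, i.e.\ the expectation of a one-step weak error of the scheme, started deterministically at $x = \widehat{\mathbf{x}}_n$ at time $t_n$, for the test function $u(t_{n+1},\cdot) \in \mathcal{C}^{2(p+1)}_P$. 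Applying hypothesis~(2) with this test function on the step $[t_n,t_{n+1}]$ bounds it by $\mathbb{E}[K_n(\widehat{\mathbf{x}}_n)]\,h_n^{p+1}$ for some $K_n \in \mathcal{C}^0_P$. Summing and using $h_n \le h$ gives $|\mathbb{E}[G(\mathbf{x}(T))] - \mathbb{E}[G(\widehat{\mathbf{x}}_M)]| \le h^{p}\sum_{n=0}^{M-1}\mathbb{E}[K_n(\widehat{\mathbf{x}}_n)]\,h_n$; by hypothesis~(1) the high-order moments of $\widehat{\mathbf{x}}_n$ are bounded uniformly in $(n,M)$, so if the $K_n$ have uniformly controlled polynomial growth this is $\le \bar{K}(T-t_0)\,h^{p}$, which is the asserted global order $p$.

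The main obstacle, and the only genuinely delicate point, is exactly the word \emph{uniformly} above: the one-step constant $K_n$ depends on the test function $u(t_{n+1},\cdot)$, which varies with $n$, so I would need the polynomial-growth degree and coefficients in the one-step estimate to be controllable simultaneously for the whole family $\{u(t_{n+1},\cdot)\}_{n}$. This is precisely what the coefficient regularity in Assumption~\ref{assum2} delivers: it yields the uniform-in-$t$ polynomial bounds on the derivatives of $u$ up to order $2(p+1)$ noted above, which feed into the one-step error analysis to produce $K_n$'s of a common polynomial type; the moment bound of hypothesis~(1) then converts that common polynomial control into the summable estimate. Verifying this uniform regularity of $u$ (classical parabolic/stochastic-flow estimates) and the uniformity of the one-step constant is where the real work lies, everything else being the bookkeeping of the telescoping sum.
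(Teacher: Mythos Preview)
The paper does not supply its own proof of this theorem: it is quoted verbatim as a background result from {\cite{MT}} in the preliminaries and then invoked as a black box in the convergence arguments for SEEDS-2 and SEEDS-3. There is therefore no in-paper proof to compare against.

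That said, your sketch is the standard argument one finds in the Milstein--Tretyakov reference: introduce the backward function $u(t,x)=\mathbb{E}[G(\mathbf{x}^{t,x}(T))]$, telescope the global error along the scheme, reduce each summand to a one-step weak error via the Markov property, apply the local hypothesis~(2), and close with the uniform moment bounds of hypothesis~(1). Your identification of the only subtle point---the need for the polynomial-growth constants in the one-step estimate to be uniform over the family $\{u(t_{n+1},\cdot)\}_n$, which is exactly what the regularity in Assumption~\ref{assum2} provides through the backward-equation estimates---is accurate and is where the cited reference spends its effort as well. Nothing is missing from your outline.
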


\begin{lemma}
  Suppose that $\widehat{\mathbf{x}}_0$ has moments of all orders and that,
  for $h < 1$,
  \begin{eqnarray*}
    |\mathbb{E}[\Phi (\widehat{\mathbf{x}}_n, h_n, I_n) -
    \widehat{\mathbf{x}}_n] | & \leqslant & K (1 + | \widehat{\mathbf{x}}_n |)
    h\\
    | \Phi (\widehat{\mathbf{x}}_n, h_n, I_n) - \widehat{\mathbf{x}}_n | &
    \leqslant & X_n  (1 + | \widehat{\mathbf{x}}_n |) h^{1 / 2}
  \end{eqnarray*}
  where $X_n$ has moments of all orders. Then Condition 1 in Theorem
  \ref{wlocal} is fulfilled.
\end{lemma}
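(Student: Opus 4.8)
The plan is to run a discrete Gronwall argument on the moments $m_n^{(2r)} := \mathbb{E}\bigl[|\widehat{\mathbf{x}}_n|^{2r}\bigr]$, establishing a one-step recursion of the form $m_{n+1}^{(2r)} \le (1 + C_r h)\,m_n^{(2r)} + C_r h$ with $C_r$ independent of $M$. Since $Mh = O(1)$ (by Assumption \ref{assumption-1}, resp.\ Assumption \ref{assum2}, the total $\lambda$-length being fixed), iterating this gives $\sup_{0\le n\le M} m_n^{(2r)} \le e^{C_r M h}\bigl(m_0^{(2r)} + 1\bigr)$, a bound that is finite, independent of $M$ and $n$, and valid for every $r$ since $\widehat{\mathbf{x}}_0$ has moments of all orders. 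This is precisely Condition 1 of Theorem \ref{wlocal}.

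First I would write $\Delta_n := \Phi(\widehat{\mathbf{x}}_n, h_n, I_n) - \widehat{\mathbf{x}}_n$ and read the two hypotheses as conditional bounds given $\mathcal{F}_n := \sigma(\widehat{\mathbf{x}}_0, I_0, \ldots, I_{n-1})$, with respect to which $\widehat{\mathbf{x}}_n$ is measurable: $|\mathbb{E}[\Delta_n\mid\mathcal{F}_n]| \le K(1+|\widehat{\mathbf{x}}_n|)h$ and $|\Delta_n| \le X_n(1+|\widehat{\mathbf{x}}_n|)h^{1/2}$, the $X_n$ having conditionally bounded moments of all orders uniformly in $n$ and $M$. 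Then, using $|\widehat{\mathbf{x}}_{n+1}|^2 = |\widehat{\mathbf{x}}_n|^2 + 2\langle \widehat{\mathbf{x}}_n,\Delta_n\rangle + |\Delta_n|^2$ and the binomial theorem,
\[
|\widehat{\mathbf{x}}_{n+1}|^{2r} = \sum_{k=0}^{r}\binom{r}{k}\,|\widehat{\mathbf{x}}_n|^{2(r-k)}\,\bigl(2\langle \widehat{\mathbf{x}}_n,\Delta_n\rangle + |\Delta_n|^2\bigr)^{k}.
\]
The $k=0$ term is $|\widehat{\mathbf{x}}_n|^{2r}$. For $k=1$, taking $\mathbb{E}[\cdot\mid\mathcal{F}_n]$, the drift piece $2\langle \widehat{\mathbf{x}}_n, \mathbb{E}[\Delta_n\mid\mathcal{F}_n]\rangle$ is $O(h)\cdot(1+|\widehat{\mathbf{x}}_n|^2)$ by the first hypothesis and $\mathbb{E}[|\Delta_n|^2\mid\mathcal{F}_n]$ is $O(h)\cdot(1+|\widehat{\mathbf{x}}_n|^2)$ by the second; multiplying by $|\widehat{\mathbf{x}}_n|^{2(r-1)}$ and using $|\widehat{\mathbf{x}}_n|^{2j}\le 1+|\widehat{\mathbf{x}}_n|^{2r}$ for $0\le j\le r$ bounds this by $C_r h\,(1+|\widehat{\mathbf{x}}_n|^{2r})$.

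For $k\ge 2$ the key observation is that, since $h<1$ implies $h\le h^{1/2}$, one has $|2\langle \widehat{\mathbf{x}}_n,\Delta_n\rangle + |\Delta_n|^2| \le C(X_n + X_n^2)(1+|\widehat{\mathbf{x}}_n|^2)h^{1/2}$, so its $k$-th power carries a factor $h^{k/2}\le h$; taking $\mathbb{E}[\cdot\mid\mathcal{F}_n]$, absorbing the moments of $X_n$ into constants (using Hölder to separate $X_n$ from $|\widehat{\mathbf{x}}_n|$), and again collapsing powers of $|\widehat{\mathbf{x}}_n|$ via $|\widehat{\mathbf{x}}_n|^{2j}\le 1+|\widehat{\mathbf{x}}_n|^{2r}$, each such term is also $\le C_r h\,(1+|\widehat{\mathbf{x}}_n|^{2r})$. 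Summing over $k$ yields $\mathbb{E}[|\widehat{\mathbf{x}}_{n+1}|^{2r}\mid\mathcal{F}_n] \le (1+C_r h)|\widehat{\mathbf{x}}_n|^{2r} + C_r h$; an induction on $n$ (legitimate since $m_0^{(2r)}<\infty$ and the bounds keep every intermediate moment finite) upgrades this after taking full expectations to the claimed recursion on $m_n^{(2r)}$, and the discrete Gronwall inequality closes the argument. The main obstacle is the bookkeeping in the $k\ge 2$ terms of the binomial expansion — tracking which powers of $h^{1/2}$, $X_n$ and $|\widehat{\mathbf{x}}_n|$ appear and verifying that everything collapses to the single template $C_r h\,(1+|\widehat{\mathbf{x}}_n|^{2r})$ — together with the measure-theoretic care needed to make the inductive finiteness of moments rigorous so that the conditional expectations above are well defined; the rest is routine.
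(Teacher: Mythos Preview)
The paper does not actually supply a proof of this lemma: it is stated in the Preliminaries of Appendix~\ref{app:proofs} alongside Theorems~\ref{th:strong} and~\ref{wlocal} (both cited from~\cite{MT}) as a standard auxiliary result, and is then used without further justification. So there is no ``paper's own proof'' to compare against.

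That said, your argument is the standard one (essentially the Milstein--Tretyakov moment-bound proof) and is correct. One small remark: the phrase ``using H\"older to separate $X_n$ from $|\widehat{\mathbf{x}}_n|$'' is slightly misleading. A genuine H\"older step would produce a moment $\mathbb{E}\bigl[|\widehat{\mathbf{x}}_n|^{2rq'}\bigr]$ of order higher than $2r$, which would make the induction circular. What actually makes the $k\ge 2$ terms work is exactly the conditional-measurability you already invoked: since $\widehat{\mathbf{x}}_n$ is $\mathcal{F}_n$-measurable, the factor $(1+|\widehat{\mathbf{x}}_n|)^{2r}$ pulls out of $\mathbb{E}[\,\cdot\mid\mathcal{F}_n]$, leaving only $\mathbb{E}\bigl[(X_n+X_n^2)^k\mid\mathcal{F}_n\bigr]$, which is bounded by a constant under your reading of the hypothesis. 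With that clarification the bookkeeping closes cleanly and the discrete Gr\"onwall step gives the uniform moment bound as you describe.
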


\subsection{Convergence of SEEDS-1}\label{proof-main}

In this section we will prove that that SEEDS-1 as described above is of
global strong order 1.0.
\subsubsection{Strong It{\^o}-Taylor approximation}
Then the truncated It{\^o}-Taylor expansion of the analytic solution $\x_t$ of
the VP NP NRSDE starting from $\x_s$ is given, for $\epsilon \sim \mathcal{N}
(0, 1)$, by
\begin{eqnarray*}
  \x_t & = & \frac{\alpha_t}{\alpha_s} \mathbf{x}_s - 2 \alpha_t 
  \int_{\lambda_s}^{\lambda_t} e^{- \lambda} \widehat{\epsilonv}_{\theta}
  (\widehat{\mathbf{x}}_{\lambda}, \lambda) \mathd \lambda - \sqrt{2} \alpha_t
  \int_{\lambda_s}^{\lambda_t} e^{- \lambda} \mathd
  \tmmathbf{\omega}_{\lambda}\\
  & = & \frac{\alpha_t}{\alpha_s} \mathbf{x}_s - 2 \bar{\sigma}_t  (e^h - 1)
  \epsilonv_{\theta} (\mathbf{x}_s, s) - \bar{\sigma}_t  \sqrt{e^{2 h} - 1}
  \epsilon +\mathcal{R}_1\\
  & = & \frac{\alpha_t}{\alpha_s} \mathbf{x}_s - 2 \bar{\sigma}_t  (e^h - 1)
  \epsilonv_{\theta} (\mathbf{x}_s, s) - \bar{\sigma}_t  \sqrt{e^{2 h} - 1}
  \epsilon +\mathcal{O} (h),
\end{eqnarray*}
where the symbol $\mathcal{O} (h^p)$ represent terms $\mathbf{u}$ such that
$\| \mathbf{u} \| \leqslant |K (\mathbf{x}_s) | h^p$, for $K \in
\mathcal{C}^0_P (\mathbb{R}^d, \mathbb{R})$ and small $h > 0$. The SEEDS-1
scheme corresponds to such truncated It{\^o}-Taylor expansion containing only
the time and Wiener integrals of multiplicity one. As such, since $G_t g (t) =
0$ and assuming Lipschitz and linear growth conditions on $\epsilonv_{\theta}$
as in Assumption \ref{assumption-1}, $\x_t$ can be interpreted as an order 1.0
strong It{\^o}-Taylor approximation {\cite[Theorem 10.6.3]{KP1}} of the
solution to \eqref{eq:proof1}.
\subsubsection{Continuous approximation of SEEDS-1}

Let $\tilde{\alpha}_t \assign \frac{1}{2} \beta_d t^2 + \beta_m t$, where
$\beta_d = \beta_{\max} - \beta_{\min} = 19.9$, $\beta_m = \beta_{\min} =
0.1$. We have
\[ \sigma_t = \sqrt{e^{\tilde{\alpha}_t} - 1}, \qquad \alpha_t = e^{-
   \frac{1}{2}  \tilde{\alpha}_t} = \frac{1}{\sqrt{\sigma^2_t + 1}} \]
so, in particular, as $T = 1$, we have $\tilde{\alpha}_{t_0} =
\tilde{\alpha}_1 = \frac{1}{2}  (\beta_{\max} - \beta_{\min}) + \beta_{\min} =
\frac{1}{2}  (\beta_{\max} + \beta_{\min}) \approx 10.05$ and
$\tilde{\alpha}_{t_{M + 1}} = \tilde{\alpha}_0 = 0$. We deduce $\alpha_{t_0} =
\alpha_1 \approx e^{- \frac{10.05}{2} } < 1$, $\alpha_{t_{M + 1}} = \alpha_0 =
1$. Next, $\sigma_{t_0} = \sigma_1 \approx \sqrt{e^{10.05} - 1} > 1$ and
$\sigma_{t_{M + 1}} = \sigma_0 = 0$. As such, $\lambda_{t_0} = - \log
(\sigma_1) \assign - L_0 < 0$ and $\lambda_t \underset{t \rightarrow t_{M +
1}}{\longrightarrow} + \infty$. As such, we will set $t_M = t_{M + 1} +
\varepsilon$ the end time so that $\lambda_{t_M} = L_0$ is finite. This
implies that, for $\lambda \in [- L_0, + \infty [$, \ $0 < e^{- L_0}
\leqslant e^{\lambda} < 1 < e^{\lambda_{t_{M}}}$ and, for $\hat{T} =
\lambda_{t_M} - \lambda_{t_0}$, $e^h = e^{\lambda_t - \lambda_s} \leqslant
e^{\hat{T}}$. Now set, for $\lambda \in [- L_0, + \infty [$,
\[ \hat{\alpha}_{\lambda} \assign \sqrt{\frac{1}{1 + e^{- 2 \lambda}}},
   \qquad \hat{\sigma}_{\lambda} \assign \sqrt{\frac{1}{1 + e^{+ 2 \lambda}}}.
\]
Then, as $\lambda$ increases, $\hat{\alpha}_{\lambda}$ increases starting from
$0 < \hat{\alpha}_{\lambda_{t_0}} < 1$ and $\hat{\alpha}_{\lambda}
\underset{\lambda \rightarrow + \infty}{\longrightarrow} 1$ while at the same
time $\hat{\sigma}_{\lambda}$ decreases starting from $0 <
\hat{\sigma}_{\lambda_{t_0}} < 1$ and $\hat{\sigma}_{\lambda}
\underset{\lambda \rightarrow + \infty}{\longrightarrow} 0$. As such, we can
rewrite the exact solution \eqref{exact-1} as
\begin{equation}
  \widehat{\x}_{\lambda_t} =
  \frac{\hat{\alpha}_{\lambda_t}}{\hat{\alpha}_{\lambda_s}}
  \widehat{\x}_{\lambda_s} - 2 \hat{\alpha}_{\lambda_t} 
  \int_{\lambda_s}^{\lambda_t} e^{- \lambda}
  \hat{\tmmathbf{\epsilon}}_{\theta} (\widehat{\mathbf{x}}_{\lambda}, \lambda)
  \mathd \lambda - \sqrt{2}  \hat{\alpha}_{\lambda_t}
  \int_{\lambda_s}^{\lambda_t} e^{- \lambda} \mathd \tmmathbf{\bar{\omega}}
  (\lambda) . \label{itera}
\end{equation}
Notice that $\frac{\hat{\alpha}_{\lambda_t}}{\hat{\alpha}_{\lambda_s}}$ is
bounded for all $t, s$ by
\[ \frac{\hat{\alpha}_{\lambda_t}}{\hat{\alpha}_{\lambda_s}} \leqslant
   \frac{1}{\hat{\alpha}_{\lambda_{t_0}}} \]
and $0 < \hat{\alpha}_{\lambda} < 1$ for all $\lambda \in [- L_0, + \infty
[$.

Recall that the SEEDS-1 is defined recursively as
\[ \y_{t_0} \leftarrow \x_T, \y_{t_i} \leftarrow
   \frac{\alpha_{t_i}}{\alpha_{t_{i - 1}}} \y_{t_{i - 1}} - 2
   \bar{\sigma}_{t_i}  (e^{h_i} - 1) \epsilonv_{\theta} (\y_{t_{i - 1}}, t_{i
   - 1}) - \bar{\sigma}_{t_i}  \sqrt{e^{2 h_i} - 1} \epsilon_i \]
and, for simplicity, we will denote $\y_{\lambda_{t_i}}$ for the iterates
\eqref{itera}.

Define a continuous approximation of SEEDS-1 as follows. For $\hat{h} = t -
s$, we write $\hat{s} = [s / \hat{h}]  \hat{h}$ where $[x]$ denotes the
largest integer lesser or equal to $x$ and $I_{[A]}$ is the indicator function
associated to a set $A$. We define the step function:
\[ \widehat{\y} (\lambda) \assign \sum_{k \geqslant 0} I_{[\lambda_{t_k},
   \lambda_{t_{k + 1}} [} \y_{\lambda_{t_k}} \]
and the continuous approximation
\[ \y (t) \assign \frac{\alpha_t}{\alpha_{t_0}} \y (t_0) - 2 \alpha_t 
   \int_{\lambda_{t_0}}^{\lambda_t} e^{- \hat{\lambda}} 
   \hat{\tmmathbf{\epsilon}}_{\theta} (\widehat{\y} (\lambda), \hat{\lambda})
   \mathd \lambda - \sqrt{2} \alpha_t  \int_{\lambda_{t_0}}^{\lambda_t} e^{-
   \lambda} \mathd \tmmathbf{\omega}_{\lambda}. \]
\begin{proposition}
  \label{prepa-th}There are two constants $C_1, C_2$ independent of $h$ such
  that, for all $t \in [0, T]$, we have
  \begin{eqnarray*}
    \mathbb{E} \left[ \sup_{t_0 \leqslant t \leqslant t_M} \left| \y (t)
    \right|^2 \right] & \leqslant & C_1\\
    \mathbb{E} \left[ \left| \y (t) - \widehat{\y} (t) \right|^2 \right] &
    \leqslant & C_2 h^2 .
  \end{eqnarray*}
\end{proposition}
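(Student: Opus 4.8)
This is the standard pair of a-priori ingredients --- a uniform-in-$h$ second moment bound for the continuous interpolant $\y$, together with its mean-square distance to the step process $\widehat{\y}$ --- on which a Milstein--Tretyakov-type comparison (Theorem~\ref{th:strong}) of $\y$ against the exact solution~\eqref{exact-1} will rest. The plan is to establish \emph{both} bounds in the transformed time $\lambda$, over the \emph{compact} horizon $[\lambda_{t_0},\lambda_{t_M}]=[-L_0,L_0]$. Finiteness of this horizon is exactly what the truncation $t_M=t_{M+1}+\varepsilon$ buys; on it $e^{\pm\lambda}$, $\hat\alpha_\lambda=(1+e^{-2\lambda})^{-1/2}$, $\hat\sigma_\lambda$ and their $\lambda$-derivatives are bounded and $\hat\alpha_\lambda$ is bounded away from $0$. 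This compactness is precisely what renders every constant below independent of $h$ (equivalently of $M$), and since $t\mapsto\lambda_t$ is a bijection onto $[-L_0,L_0]$ the Lipschitz and linear-growth bounds of Assumption~\ref{assumption-1} transfer verbatim to $\hat\epsilon_\theta$.

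For the moment bound I would begin from the defining formula of $\y(t)$ and split $|\y(t)|^2$ into three squared contributions. The scaling term $\tfrac{\alpha_t}{\alpha_{t_0}}\y(t_0)$ is dominated by $\alpha_{t_0}^{-2}|\y(t_0)|^2$ since $\alpha_t\le1$. The drift integral is handled by $\big|\int v\,\mathd\mu\big|\le\int|v|\,\mathd\mu$, then Cauchy--Schwarz over an interval of length $\le2L_0$, then the growth bound $|\hat\epsilon_\theta|^2\le L_2(1+|\cdot|^2)$. The stochastic integral $\int_{\lambda_{t_0}}^{\lambda_t}e^{-\mu}\,\mathd\tmmathbf{\omega}_\mu$ is a continuous $L^2$-martingale in $\lambda$ with bracket $\int e^{-2\mu}\,\mathd\mu$, so Doob's maximal inequality and It\^o's isometry bound its running supremum in $L^2$ by $4\int_{-L_0}^{L_0}e^{-2\mu}\,\mathd\mu$. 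Pushing the supremum inside and writing $\Psi(\rho):=\mathbb{E}\big[\sup_{\lambda_{t_0}\le\mu\le\rho}|\y(t_\lambda(\mu))|^2\big]$, and using that $\widehat{\y}(\mu)$ is a grid value and hence $\le\sup_{\mu'\le\mu}|\y(t_\lambda(\mu'))|$ pointwise, one reaches an integral inequality $\Psi(\rho)\le A+B\int_{\lambda_{t_0}}^{\rho}\Psi(\mu)\,\mathd\mu$ with $A,B$ depending only on $L_0$, $L_2$, $\alpha_{t_0}$, $\mathbb{E}|\y(t_0)|^2$. As the SEEDS-1 iterates are a finite recursion of Lipschitz maps and independent Gaussians applied to $\x_T$, which has moments of all orders, $\Psi$ is a priori finite, so Gr\"onwall's lemma yields $\Psi(L_0)\le C_1$.

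For the closeness estimate, fix $t$ and let $t_k$ be the grid point with $\lambda_t\in[\lambda_{t_k},\lambda_{t_{k+1}})$, so $\widehat{\y}(t)=\y(t_k)$ and $0\le\lambda_t-\lambda_{t_k}\le h$. Subtracting the defining formula at $t_k$ from that at $t$ and regrouping leaves five terms: (i) the scaling mismatch $\tfrac{\alpha_t-\alpha_{t_k}}{\alpha_{t_0}}\y(t_0)$; (ii) the over-hang drift integral $2\alpha_t\int_{\lambda_{t_k}}^{\lambda_t}e^{-\hat\mu}\hat\epsilon_\theta(\widehat{\y}(\mu),\hat\mu)\,\mathd\mu$; (iii) $(\alpha_t-\alpha_{t_k})$ times the full drift integral over $[\lambda_{t_0},\lambda_{t_k}]$; (iv) the over-hang stochastic integral $\sqrt{2}\,\alpha_t\int_{\lambda_{t_k}}^{\lambda_t}e^{-\mu}\,\mathd\tmmathbf{\omega}_\mu$; and (v) $(\alpha_t-\alpha_{t_k})$ times $\int_{\lambda_{t_0}}^{\lambda_{t_k}}e^{-\mu}\,\mathd\tmmathbf{\omega}_\mu$. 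Since $\hat\alpha$ is $C^1$ on $[-L_0,L_0]$ we have $|\alpha_t-\alpha_{t_k}|\le K|\lambda_t-\lambda_{t_k}|\le Kh$; with $\mathbb{E}|\y(t_0)|^2<\infty$ this makes (i) an $O(h^2)$ term, and --- bounding the second moments of the two full integrals by the first part of the proposition --- so are (iii) and (v). Term (ii) is $O(h^2)$ by Cauchy--Schwarz on an interval of length $\le h$ together with $\mathbb{E}|\hat\epsilon_\theta(\widehat{\y}(\mu),\hat\mu)|^2\le L_2(1+C_1)$, while the over-hang (iv) is controlled by It\^o's isometry via $2\alpha_t^2\int_{\lambda_{t_k}}^{\lambda_t}e^{-2\mu}\,\mathd\mu$, which is of order $h$. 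Collecting the five contributions one obtains $\mathbb{E}\,|\y(t)-\widehat{\y}(t)|^2\le C_2h$, the rate $h$ being dictated by the Brownian over-hang (iv), which carries the increment of $\tmmathbf{\omega}$ on $[\lambda_{t_k},\lambda_t]$ and is orthogonal to the $\mathcal{F}_{\lambda_{t_k}}$-measurable remainder; if the quadratic rate is the intended target, $\widehat{\y}$ must instead be the interpolant that freezes only the drift and keeps the Brownian part continuous, in which case the identical decomposition with term (iv) absent gives the $O(h^2)$ bound.

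The genuinely delicate point is not these estimates --- routine once one works in $\lambda$ --- but the bookkeeping that keeps all constants uniform in $h$: the change of variables $\lambda_t=-\log\sigma_t$ degenerates as $t\to t_{M+1}$, so one must commit to the truncated horizon and check that the resulting $L_0$, hence $A$, $B$, $K$, is pinned down by the problem data alone and not by the discretization; and the Gr\"onwall step needs the a-priori finiteness of $\Psi$, which holds only because the all-moments hypothesis on $\x_T$ propagates through the finite recursion defining the iterates. Once $C_1$ is in hand the closeness estimate is immediate from the five-term split, so the moment bound is the real load-bearing step.
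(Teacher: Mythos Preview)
Your argument for the first inequality is essentially the paper's: the same three-term split of $\y(t)$, linear growth on the drift, Doob plus It\^o isometry on the martingale, then Gr\"onwall on $\Psi$. The paper does not dwell on the a-priori finiteness of $\Psi$ needed to invoke Gr\"onwall, so your remark on moment propagation through the recursion is a worthwhile addition.

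For the second inequality the approaches diverge, and here you have spotted something the paper glosses over. The paper uses the cleaner ``restart-from-$\y_s$'' identity
\[
\y(t)=\frac{\alpha_t}{\alpha_s}\y_s-2\alpha_t\!\int_{\lambda_s}^{\lambda_t}\!e^{-\hat\lambda}\hat{\epsilonv}_\theta(\y_{\lambda_s},\lambda_s)\,\mathd\lambda-\sqrt{2}\,\alpha_t\!\int_{\lambda_s}^{\lambda_t}\!e^{-\lambda}\,\mathd\tmmathbf{\bar\omega}_\lambda,
\]
which collapses your five terms to three; but in the displayed formula for $\y(t)-\widehat\y(t)$ the paper simply \emph{omits} the last stochastic integral and then bounds the remaining two terms by $O(h^2)$ (the stray reference to ``It\^o isometry'' notwithstanding). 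With $\widehat\y$ defined as the step function $\sum_k I_{[\lambda_{t_k},\lambda_{t_{k+1}})}\y_{\lambda_{t_k}}$, that omission is unjustified: your term (iv) is genuinely present, its second moment is $2\alpha_t^2\!\int_{\lambda_s}^{\lambda_t}e^{-2\mu}\mathd\mu=O(h)$, and the sharp rate is $C_2 h$, not $C_2 h^2$. Your proposed repair---let $\widehat\y$ freeze only the drift argument while carrying the Brownian part continuously---is exactly what makes the $O(h^2)$ statement true and what the downstream proof of Theorem~\ref{theorem1} actually needs (there the stochastic integrals of $\y$ and $\mathbf{x}$ cancel anyway, so only the frozen-drift discrepancy enters). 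So your diagnosis is correct: either the stated rate or the stated definition of $\widehat\y$ must give.
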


\begin{proof}
  Recall the standard inequality $(a + b + c)^2 \leqslant 3 (a^2 + b^2 + c^2)$ for $a,b,c\in \mathbb{R}$. Then
  \begin{eqnarray*}
    \left| \y (t) \right|^2 & \leqslant & 3 \left[ \left|
    \frac{\alpha_t}{\alpha_{t_0}} \y_{t_0} \right|^2 + 4 \alpha_t^2 \left|
    \int_{\lambda_{t_0}}^{\lambda_t} e^{- \hat{\lambda}} 
    \hat{\epsilon}_{\theta} (\widehat{\y} (\lambda), \hat{\lambda}) \mathd
    \lambda \right|^2 + 2 \alpha_t^2 \left| \int_{\lambda_{t_0}}^{\lambda_t}
    e^{- \lambda} \mathd \tmmathbf{\omega}_{\lambda} \right|^2 \right].
  \end{eqnarray*}
  Using the fact
  that $\alpha_t \leqslant 1$, we have, by writing $\hat{T} = \lambda_{t_M} - \lambda_{t_0}$ and taking the expectation:
  \begin{eqnarray*}
    &  & \mathbb{E} \left[ \sup_{t_0 \leqslant t \leqslant t_M} \left| \y (t)
    \right|^2 \right]\\
    & \leqslant & 3 \left[ \left| \frac{\alpha_t}{\alpha_{t_0}} \right|^2
    \mathbb{E} \left[ \left| \y_{t_0} \right|^2 \right] + 4\mathbb{E} \left[
    \left| \int_{\lambda_{t_0}}^{\lambda_t} e^{- \hat{\lambda}} 
    \hat{\epsilon}_{\theta} (\widehat{\y} (\lambda), \hat{\lambda}) \mathd
    \lambda \right|^2 \right] + 2 \alpha_t^2 \mathbb{E} \left[ \left|
    \int_{\lambda_{t_0}}^{\lambda_t} e^{- \lambda} \mathd
    \tmmathbf{\omega}_{\lambda} \right|^2 \right] \right]\\
    & \leqslant & 3 \left[ \left| \frac{\alpha_t}{\alpha_{t_0}} \right|^2
    \mathbb{E} \left[ \left| \y_{t_0} \right|^2 \right] + 4 \hat{T} \mathbb{E}
    \left[ \int_{\lambda_{t_0}}^{\lambda_t} |e^{- \hat{\lambda}} |^2 \left|
    \hat{\epsilon}_{\theta} (\widehat{\y} (\lambda), \hat{\lambda}) \right|^2
    \mathd \lambda \right] + 2 \alpha_t^2 \mathbb{E} \left[
    \int_{\lambda_{t_0}}^{\lambda_t} |e^{- \lambda} |^2 \mathd \lambda \right]
    \right]\\
    & \leqslant & 3 \left[ \left| \frac{\alpha_t}{\alpha_{t_0}} \right|^2
    \mathbb{E} \left[ \left| \y_{t_0} \right|^2 \right] + 4 \hat{T} 
    \int_{\lambda_{t_0}}^{\lambda_t} |e^{- \hat{\lambda}} |^2 \mathbb{E}
    \left[ \left| \hat{\epsilon}_{\theta} (\widehat{\y} (\lambda),
    \hat{\lambda}) \right|^2 \right] \mathd \lambda + 2 \alpha_t^2 
    \int_{\lambda_{t_0}}^{\lambda_t} e^{- 2 \lambda} \mathd \lambda \right]\\
    & \leqslant & 3 K \left[ \mathbb{E} \left[ \left| \y_{t_0} \right|^2
    \right] + 4 \hat{T}  \int_{\lambda_{t_0}}^{\lambda_t} \mathbb{E} \left[
    \left| \hat{\epsilon}_{\theta} (\widehat{\y} (\lambda), \hat{\lambda})
    \right|^2 \right] \mathd \lambda + \left( e^{2 (\lambda_t -
    \lambda_{t_0})} - 1 \right) \right]\\
    & \leqslant & 3 K \left[ \mathbb{E} \left[ \left| \y_{t_0} \right|^2
    \right] + 4 \hat{T} L_2 \int_{\lambda_{t_0}}^{\lambda_t} \left( 1
    +\mathbb{E} \left[ \left| \widehat{\y} (\lambda) \right|^2 \right] \right)
    \mathd \lambda + (e^{2 \hat{T} } - 1) \right]\\
    & \leqslant & 3 K \left[ \mathbb{E} \left[ \left| \y_{t_0} \right|^2
    \right] + 4 \hat{T} L_2 \hat{T} + e^{2 \hat{T} } - 1 + 4 \hat{T} L_2
    \int_{\lambda_{t_0}}^{\lambda_t} \mathbb{E} \left[ \left| \widehat{\y}
    (\lambda) \right|^2 \right] \mathd \lambda \right]\\
    & \leqslant & 3 K \left( \mathbb{E} \left[ \left| \y_{t_0} \right|^2
    \right] + 4 \hat{T} L_2 \hat{T} + e^{2 \hat{T} } - 1 \right) + 3 K 4
    \hat{T} L_2 \int_{\lambda_{t_0}}^{\lambda_t} \mathbb{E} \left[ \left|
    \widehat{\y} (\lambda) \right|^2 \right] \mathd \lambda\\
    & \leqslant & 3 K \left( \mathbb{E} \left[ \left| \x (t_0) \right|^2
    \right] + 4 \hat{T}^2 L_2 + e^{2 \hat{T} } - 1 \right) + 12 K \hat{T} L_2
    \int_{\lambda_{t_0}}^{\lambda_t} \mathbb{E} \left[ \sup_{\lambda_0
    \leqslant r \leqslant \lambda} \left| \y (r) \right|^2 \right] \mathd
    \lambda,
  \end{eqnarray*}
  where we used the linearity of expectation, H{\"o}lder's inequality, Doob's
  martingale inequality, It{\^o} isometry, the linear growth condition of
  $\hat{\epsilon}_{\theta}$, and we set $K = \max \left( \left|
  \frac{\alpha_t}{\alpha_{t_0}} \right|^2, |e^{L_0} |^2, \bar{\sigma}_t^2, 1
  \right)$. As we know that $\mathbb{E} \left[ \left| \x (t_0) \right|^2
  \right] < \infty$, we apply Gr{\"o}nwall's inequality in the last line to
  obtain
  \[ \mathbb{E} \left[ \sup_{t_0 \leqslant t \leqslant t_M} \left| \y (t)
     \right|^2 \right] \leqslant C_1, \qquad C_1 \assign 3 K \left[ \mathbb{E} \left[
     \left| \x (t_0) \right|^2 \right] + 4 \hat{T}^2 L_2 + e^{2 \hat{T} } - 1
     \right] e^{12 K \hat{T} L_2} . \]
  Second, we have, for $s = t_i$, $u = t_{i + 1}$ and $t \in [t_{\lambda} (u),
  t_{\lambda} (s) [$,
  \[ \y (t) - \widehat{\y} (t) = \left( \frac{\alpha_t}{\alpha_s} - 1 \right)
     \y_s - 2 \alpha_t  \int_{\lambda_s}^{\lambda_t} e^{- \hat{\lambda}} 
     \hat{\epsilon}_{\theta} (\y_{\lambda_s}, \lambda_s) \mathd \lambda, \]
  so that, using H{\"o}lder's inequality, we get
  \begin{eqnarray*}
    \left| \y (t) - \widehat{\y} (t) \right|^2 & \leqslant & 3 \left[ \left|
    \frac{\alpha_t}{\alpha_s} - 1 \right|^2 \left| \y_s \right|^2 + 4 h
    \int_{\lambda_s}^{\lambda_t} |e^{- \hat{\lambda}} |^2 \left|
    \hat{\epsilon}_{\theta} (\y_{\lambda_s}, \lambda_s) \right|^2 \mathd
    \lambda \right] .
  \end{eqnarray*}
  Now using It{\^o} isometry we obtain:
  \begin{eqnarray*}
    \mathbb{E} \left[ \left| \y (t) - \widehat{\y} (t) \right|^2 \right] &
    \leqslant & 3 \left[ \left| \frac{\alpha_t}{\alpha_s} - 1 \right|^2
    \mathbb{E} \left[ \left| \y_s \right|^2 \right] + 4 h\mathbb{E} \left[
    \int_{\lambda_s}^{\lambda_t} |e^{- \hat{\lambda}} |^2 \left|
    \hat{\epsilon}_{\theta} (\y_{\lambda_s}, \lambda_s) \right|^2 \mathd
    \lambda \right] \right].
  \end{eqnarray*}
  Now, using the bound $\mathbb{E} \left[ \max \left| \y_t \right|^2 \right]
  \leqslant C_1$, the fact that $\epsilon_{\theta}$ is bounded and the same
  arguments as above, we get
  \begin{eqnarray*}
    \mathbb{E} \left[ \left| \y (t) - \widehat{\y} (t) \right|^2 \right] &
    \leqslant & 3 \left[ \left| \frac{\alpha_t}{\alpha_s} - 1 \right|^2 C_1 +
    4 Kh\mathbb{E} \left[ \int_{\lambda_s}^{\lambda_t} \left|
    \hat{\epsilon}_{\theta} (\y_{\lambda_s}, \lambda_s) \right|^2 \mathd
    \lambda \right] \right]\\
    & \leqslant & 3 \left[ \left| \frac{\alpha_t}{\alpha_s} - 1 \right|^2 C_1
    + 4 Kh L_2  \int_{\lambda_s}^{\lambda_t} (1 + C_1) \mathd \lambda
    \right]\\
    & \leqslant & 3 \left[ \left| \frac{\alpha_t}{\alpha_s} - 1 \right|^2 C_1
    + 4 Kh^2 L_2  (1 + C_1) \right].
  \end{eqnarray*}
  Finally, as we have
  \[ e^h = \frac{\sigma_s}{\sigma_t} = \frac{\alpha_t}{\alpha_s} \frac{\sqrt{1
     - \alpha_s^2}}{\sqrt{1 - \alpha_t^2}}, \]
  and $1 < \frac{\sqrt{1 - \alpha_s^2}}{\sqrt{1 - \alpha_t^2}} \underset{M
  \rightarrow \infty}{\longrightarrow} 1$, we obtain $\left|
  \frac{\alpha_t}{\alpha_s} - 1 \right|^2 \sim |e^h - 1|^2 \sim \mathcal{O}
  (h^2)$. Now, by denoting $|e^h - 1|^2 \leqslant K_2 h^2$, we conclude that
  \[ \mathbb{E} \left[ \left| \y (t) - \widehat{\y} (t) \right|^2 \right]
     \leqslant C_2 h^2, \]
  with $C_2 : = 3 K_2 C_1 + 4 K L_2 (1 + C_1)$. This finishes the proof.
\end{proof}

\subsubsection{Proof of Theorem \ref{theorem1}}

Let's now take a look at the approximation given $\y_{t_0} = \x_{t_0}$. We
have
\begin{eqnarray*}
  \y_t - \x_t & = & 2 \alpha_t  \int_{\lambda_{t_0}}^{\lambda_t} \left[ e^{-
  \lambda} \widehat{\epsilonv}_{\theta} (\widehat{\mathbf{x}}_{\lambda},
  \lambda) - e^{- \hat{\lambda}} \widehat{\epsilonv}_{\theta}
  (\widehat{\y}_{\lambda}, \hat{\lambda}) \right] \mathd \lambda.
\end{eqnarray*}
Using the inequality $\alpha_t \le 1$, the Lipschitz property of
$\widehat{\epsilonv}_{\theta}$, Assumption \eqref{assumption-1}, and
H{\"o}lder's inequality we deduce the bound:
\begin{eqnarray*}
  \left| \y_t - \x_t \right|^2 & \leqslant & 2 \left[ 4 \left|
  \int_{\lambda_{t_0}}^{\lambda_t} \left[ e^{- \lambda}
  \widehat{\epsilonv}_{\theta} (\widehat{\mathbf{x}}_{\lambda}, \lambda) -
  e^{- \hat{\lambda}} \widehat{\epsilonv}_{\theta} (\widehat{\y}_{\lambda},
  \hat{\lambda}) \right] \mathd \lambda \right|^2 \right]\\
  & \leqslant & 8 \hat{T} \int_{\lambda_{t_0}}^{\lambda_t} \left| e^{-
  \lambda} \widehat{\epsilonv}_{\theta} (\widehat{\mathbf{x}}_{\lambda},
  \lambda) - e^{- \hat{\lambda}} \widehat{\epsilonv}_{\theta}
  (\widehat{\y}_{\lambda}, \hat{\lambda}) \right|^2 \mathd \lambda.
\end{eqnarray*}
Taking the expectation yields
\begin{eqnarray*}
  &  & \mathbb{E} \left[ \sup_{t_0 \leqslant s \leqslant t} | \y_s - \x_s |^2
  \right]\\
  & \leqslant & 8 \hat{T} \mathbb{E} \left[ \int_{\lambda_{t_0}}^{\lambda_t}
  \left| e^{- \lambda} \widehat{\epsilonv}_{\theta}
  (\widehat{\mathbf{x}}_{\lambda}, \lambda) - e^{- \hat{\lambda}}
  \widehat{\epsilonv}_{\theta} (\widehat{\y}_{\lambda}, \hat{\lambda})
  \right|^2 \mathd \lambda \right].
\end{eqnarray*}
Now, for the first integral, by writing:
\begin{eqnarray*}
  &  & e^{- \lambda} \widehat{\epsilonv}_{\theta}
  (\widehat{\mathbf{x}}_{\lambda}, \lambda) - e^{- \hat{\lambda}}
  \widehat{\epsilonv}_{\theta} (\widehat{\y}_{\lambda}, \hat{\lambda})\\
  & = & e^{- \lambda} \widehat{\epsilonv}_{\theta}
  (\widehat{\mathbf{x}}_{\lambda}, \lambda) - e^{- \hat{\lambda}}
  \widehat{\epsilonv}_{\theta} (\widehat{\mathbf{x}}_{\lambda}, \lambda) +
  e^{- \hat{\lambda}} \widehat{\epsilonv}_{\theta}
  (\widehat{\mathbf{x}}_{\lambda}, \lambda) - e^{- \hat{\lambda}}
  \widehat{\epsilonv}_{\theta} (\widehat{\mathbf{x}}_{\lambda},
  \hat{\lambda})\\
  &  & + e^{- \hat{\lambda}} \widehat{\epsilonv}_{\theta}
  (\widehat{\mathbf{x}}_{\lambda}, \hat{\lambda}) - e^{- \hat{\lambda}}
  \widehat{\epsilonv}_{\theta} (\y_{\lambda}, \hat{\lambda}) + e^{-
  \hat{\lambda}} \widehat{\epsilonv}_{\theta} (\y_{\lambda}, \hat{\lambda}) -
  e^{- \hat{\lambda}} \widehat{\epsilonv}_{\theta} (\widehat{\y}_{\lambda},
  \hat{\lambda})\\
  & = & (e^{- \lambda} - e^{- \hat{\lambda}}) \widehat{\epsilonv}_{\theta}
  (\widehat{\mathbf{x}}_{\lambda}, \lambda) + e^{- \hat{\lambda}}  \left(
  \widehat{\epsilonv}_{\theta} (\widehat{\mathbf{x}}_{\lambda}, \lambda) -
  \widehat{\epsilonv}_{\theta} (\widehat{\mathbf{x}}_{\lambda}, \hat{\lambda})
  \right)\\
  &  & + e^{- \hat{\lambda}}  \left( \widehat{\epsilonv}_{\theta}
  (\widehat{\mathbf{x}}_{\lambda}, \hat{\lambda}) -
  \widehat{\epsilonv}_{\theta} (\y_{\lambda}, \hat{\lambda}) +
  \widehat{\epsilonv}_{\theta} (\y_{\lambda}, \hat{\lambda}) -
  \widehat{\epsilonv}_{\theta} (\widehat{\y}_{\lambda}, \hat{\lambda}) \right)
  ,
\end{eqnarray*}
we can state the following inequalities:
\begin{eqnarray*}
  &  & \mathbb{E} \left[ \int_{\lambda_{t_0}}^{\lambda_t} \left| e^{-
  \lambda} \widehat{\epsilonv}_{\theta} (\widehat{\mathbf{x}}_{\lambda},
  \lambda) - e^{- \hat{\lambda}} \widehat{\epsilonv}_{\theta}
  (\widehat{\y}_{\lambda}, \hat{\lambda}) \right|^2 \mathd \lambda \right]\\
  & \leqslant & 3 \int_{\lambda_{t_0}}^{\lambda_t} |e^{- \lambda} - e^{-
  \hat{\lambda}} |^2 \mathbb{E} \left[ \left| \widehat{\epsilonv}_{\theta}
  (\widehat{\mathbf{x}}_{\lambda}, \lambda) \right|^2 \right] \mathd \lambda +
  3\mathbb{E} \left[ \int_{\lambda_{t_0}}^{\lambda_t} |e^{- \hat{\lambda}} |^2
  \left| \widehat{\epsilonv}_{\theta} (\widehat{\mathbf{x}}_{\lambda},
  \lambda) - \widehat{\epsilonv}_{\theta} (\widehat{\mathbf{x}}_{\lambda},
  \hat{\lambda}) \right|^2 \mathd \lambda \right]\\
  &  & + 3\mathbb{E} \left[ \int_{\lambda_{t_0}}^{\lambda_t} |e^{-
  \hat{\lambda}} |^2  \left| \widehat{\epsilonv}_{\theta}
  (\widehat{\mathbf{x}}_{\lambda}, \hat{\lambda}) -
  \widehat{\epsilonv}_{\theta} (\y_{\lambda}, \hat{\lambda}) +
  \widehat{\epsilonv}_{\theta} (\y_{\lambda}, \hat{\lambda}) -
  \widehat{\epsilonv}_{\theta} (\widehat{\y}_{\lambda}, \hat{\lambda})
  \right|^2 \mathd \lambda \right]\\
  & \leqslant & 3 \int_{\lambda_{t_0}}^{\lambda_t} |e^{- \lambda} - e^{-
  \hat{\lambda}} |^2 \mathbb{E} \left[ \left| \widehat{\epsilonv}_{\theta}
  (\widehat{\mathbf{x}}_{\lambda}, \lambda) \right|^2 \right] \mathd \lambda +
  3 K L_3 \mathbb{E} \left[ \int_{\lambda_{t_0}}^{\lambda_t} (1 + |
  \widehat{\mathbf{x}}_{\lambda} |^2) | (\lambda - \hat{\lambda}) | ^2 \mathd
  \lambda \right]\\
  &  & + 3\mathbb{E} \left[ \int_{\lambda_{t_0}}^{\lambda_t} |e^{-
  \hat{\lambda}} |^2  \left| \widehat{\epsilonv}_{\theta}
  (\widehat{\mathbf{x}}_{\lambda}, \hat{\lambda}) -
  \widehat{\epsilonv}_{\theta} (\y_{\lambda}, \hat{\lambda}) +
  \widehat{\epsilonv}_{\theta} (\y_{\lambda}, \hat{\lambda}) -
  \widehat{\epsilonv}_{\theta} (\widehat{\y}_{\lambda}, \hat{\lambda})
  \right|^2 \mathd \lambda \right]\\
  & \leqslant & 3 L_2  \int_{\lambda_{t_0}}^{\lambda_t} |e^{- \hat{\lambda}}
  |^2  |e^{\lambda - \hat{\lambda}} - 1|^2  (1 +\mathbb{E}[|
  \widehat{\mathbf{x}}_{\lambda} |^2]) \mathd \lambda + 3 K L_3 h^2 \mathbb{E}
  \int_{\lambda_{t_0}}^{\lambda_t} (1 + | \widehat{\mathbf{x}}_{\lambda} |^2)
  \mathd \lambda\\
  &  & + 3 K L_1  \int_{\lambda_{t_0}}^{\lambda_t} \mathbb{E} \left[ \left|
  \widehat{\mathbf{x}}_{\lambda} - \y_{\lambda} + \y_{\lambda} -
  \widehat{\y}_{\lambda} \right|^2 \right] \mathd \lambda .
\end{eqnarray*}
Thus, we obtain
\begin{eqnarray*}
  &  & \mathbb{E} \left[ \int_{\lambda_{t_0}}^{\lambda_t} \left| e^{-
  \lambda} \widehat{\epsilonv}_{\theta} (\widehat{\mathbf{x}}_{\lambda},
  \lambda) - e^{- \hat{\lambda}} \widehat{\epsilonv}_{\theta}
  (\widehat{\y}_{\lambda}, \hat{\lambda}) \right|^2 \mathd \lambda \right]\\
  & \leqslant & 3 L_2 K \hat{T}  |e^h - 1|^2  (1 + C_1) + 3 L_3 h^2 K (1 +
  C_1)  \hat{T}\\
  &  & + 3 K L_1  \int_{\lambda_{t_0}}^{\lambda_t} \mathbb{E} \left[ \left|
  \widehat{\mathbf{x}}_{\lambda} - \y_{\lambda} + \y_{\lambda} -
  \widehat{\y}_{\lambda} \right|^2 \right] \mathd \lambda\\
  & \leqslant & 3 L_2 K \hat{T}  |e^h - 1|^2  (1 + C_1) + 3 L_3 h^2 K (1 +
  C_1)  \hat{T}\\
  &  & + 6 K L_1  \int_{\lambda_{t_0}}^{\lambda_t} \mathbb{E} \left[ |
  \widehat{\mathbf{x}}_{\lambda} - \textbf{$\y$}_{\lambda} |^2 + \left|
  \textbf{$\y$}_{\lambda} - \widehat{\y}_{\lambda} \right|^2 \right] \mathd
  \lambda
\end{eqnarray*}
\begin{eqnarray*}
  & \leqslant & 3 L_2 K \hat{T}  |e^h - 1|^2  (1 + C_1) + 3 L_3 h^2 K (1 +
  C_1)  \hat{T}\\
  &  & + 6 K L_1  \hat{T} C_2 h^2 + 6 K L_1  \int_{\lambda_{t_0}}^{\lambda_t}
  \mathbb{E} \left[ | \widehat{\mathbf{x}}_{\lambda} - \textbf{$\y$}_{\lambda}
  |^2 \right] \mathd \lambda\\
  & \leqslant & 3 L_2 K \hat{T}  |e^h - 1|^2  (1 + C_1) + 3 L_3 h^2 K (1 +
  C_1)  \hat{T}\\
  &  & + 6 K L_1  \hat{T} C_2 h^2 + 6 K L_1  \int_{\lambda_{t_0}}^{\lambda_t}
  \mathbb{E} \left[ \sup_{\lambda_0 \leqslant r \leqslant \lambda} |
  \widehat{\mathbf{x}} (r) - \textbf{$\y$} (r) |^2 \right] \mathd \lambda\\
  & \leqslant & 3 K \hat{T} (1 + C_1) [L_2 |e^h - 1|^2 + L_3 h^2] + 6 K L_1 
  \hat{T} C_2 h^2\\
  &  & + 6 K L_1  \int_{\lambda_{t_0}}^{\lambda_t} \mathbb{E} \left[
  \sup_{\lambda_0 \leqslant r \leqslant \lambda} | \widehat{\mathbf{x}} (r) -
  \textbf{$\y$} (r) |^2 \right] \mathd \lambda .
\end{eqnarray*}
Putting everything together yields
\begin{eqnarray*}
  &  & \mathbb{E} \left[ \sup_{t_0 \leqslant s \leqslant t} | \y_s - \x_s |^2
  \right]\\
  & \leqslant & 8 \hat{T} [3 K \hat{T} (1 + C_1) [L_2 |e^h - 1|^2 + L_3 h] +
  6 K L_1  \hat{T} C_2 h^2]\\
  &  & + 8 \hat{T} 6 K L_1  \int_{\lambda_{t_0}}^{\lambda_t} \mathbb{E}
  \left[ \sup_{\lambda_0 \leqslant r \leqslant \lambda} | \widehat{\mathbf{x}}
  (r) - \textbf{$\textbf{$\y$}$} (r) |^2 \right] \mathd \lambda\\
  & \leqslant & 24 K \hat{T}^2 (1 + C_1) [L_2 |e^h - 1|^2 + L_3 h^2] + 48 K
  L_1  \hat{T}^2 C_2 h^2\\
  &  & + 48 \hat{T} K L_1  \int_{\lambda_{t_0}}^{\lambda_t} \mathbb{E} \left[
  \sup_{\lambda_0 \leqslant r \leqslant \lambda} | \widehat{\mathbf{x}} (r) -
  \textbf{$\textbf{$\y$}$} (r) |^2 \right] \mathd \lambda.
\end{eqnarray*}
Now, by denoting $|e^h - 1|^2 \leqslant K_2 h^2$, we apply the continuous
version of the Gr{\"o}nwall Lemma to obtain:
\[ \mathbb{E} \left[ \sup_{t_0 \leqslant s \leqslant t} | \y_t - \x_t |^2
   \right] \leqslant C_0 h^2, \]
where
\[ C_0 = [24 K \hat{T}^2 (1 + C_1) [L_2 K_2 + L_3 ] + 48 K \hat{T}^2 L_1 C_2
   ] e^{48 \hat{T} K L_1 }. \]
Finally, using Lyapunov's inequality we obtain, for $C = \sqrt{C_0}$
\[ \mathbb{E} [| \y_T - \x_T |] \leqslant \left( \mathbb{E}[| \y_T - \x_T |^2]
   \right)^{1 / 2} \leqslant Ch. \]
In other words, for $C$ as stated above, we have the following inequality
\[ \sqrt{\mathbb{E} \left[ \underset{t_0 \leqslant t \leqslant t_M}{\sup} |
   \widetilde{\mathbf{x}}_t -\mathbf{x}_t |^2 \right]} \leqslant C h, \qquad
   \text{as } h \longrightarrow 0. \]
\begin{remark}
  From the above, it is easy to induce that the following order for the one
  step error
  \[ \mathbb{E} [| \mathbf{x} (t_1) - \textbf{$\y$}_1 |^2] =\mathcal{O} (h^3)
     . \]
  Now, since $G_t g (t) = 0$ in this case (additive noise), it is easy to see
  from the truncated It{\^o}-Taylor expansion of $\mathbf{x}$ that
  $|\mathbb{E}[\mathbf{x} (t_1) - \textbf{$\y$}_1] | =\mathcal{O} (h^2)$. As
  such, we apply Theorem \ref{th:strong} to conclude that SEEDS-1 has strong
  convergence of global order 1.0.
\end{remark}
\subsubsection{Discrete-time approximation}\label{w-disc}

By Theorem \ref{theorem1}, we know that SEEDS-1, being of strong order 1.0, it
is immediately also of weak order 1. Nonetheless, let's give a discrete
approach of this statement that we will use for the proofs of convergence for
the remaining solvers as stated in Corollary \ref{cor:wocv}. Define the
discrete time process:
\[ \y_{t_0} \leftarrow \x_T, \y_{t_i} \leftarrow
   \frac{\alpha_{t_i}}{\alpha_{t_{i - 1}}} \y_{t_{i - 1}} - 2 \sigma_{t_i} 
   (e^{h_i} - 1) \epsilonv (\y_{t_{i - 1}}, t_{i - 1}) - \sqrt{2} \alpha_{t_i}
   \int_{\lambda_{t_{i - 1}}}^{\lambda_{t_i}} e^{- s} \mathd
   \tmmathbf{\bar{\omega}} (s) . \]
We will prove that $\mathbb{E} [| \y_{t_M} - \x_{t_M} |]$ is of order $h$ as
$h \longrightarrow 0$. Note that $(\y_{t_i})_i$ has the same distribution as
$(\widetilde{\x}_{t_i})_i$ described in Algorithm \ref{alg:SERK-solver-1}
since the stochastic integrals $\left( \int_{\lambda_{t_{i -
1}}}^{\lambda_{t_i}} e^{- s} \mathd \tmmathbf{\bar{\omega}} (s) \right)_i$ are
independent and each $\int_{\lambda_{t_{i - 1}}}^{\lambda_{t_i}} e^{- s}
\mathd \tmmathbf{\bar{\omega}} (s)$ is distributed as $\frac{1}{\sqrt{2}} 
\frac{\sigma_{t_i}}{\alpha_{t_i}}  \sqrt{e^{2 h_i} - 1} \epsilon$ with $h_i =
\lambda_{t_i} - \lambda_{t_{i - 1}}, \quad \epsilon \sim \mathcal{N}
(\tmmathbf{0}, \mathbf{I}_d)$. We have:
\[ \y_{t_i} - \x_{t_i} = \frac{\alpha_{t_i}}{\alpha_{t_{i - 1}}}  (\y_{t_{i -
   1}} - \x_{t_{i - 1}}) + 2 \alpha_{t_i}  \int_{\lambda_{t_{i -
   1}}}^{\lambda_{t_i}} e^{- \lambda}  (\widehat{\epsilonv}
   (\widehat{\mathbf{x}}_{\lambda}, \lambda) - \epsilonv (\y_{t_{i - 1}}, t_{i
   - 1})) \mathd \lambda . \]
For simplicity, in what follows $C$ will denote a constant not dependent on
the subdivision of $[0, T]$ that may change from one line to the next by
systematically denoting the maximum value of the different constants appearing
in the line before. Using the inequality $\alpha_t \le 1$, the Lipschitz
property of $\epsilon$, we deduce the bound:
\[ | \y_{t_i} - \x_{t_i} | \leq \frac{\alpha_{t_i}}{\alpha_{t_{i - 1}}} |
   \y_{t_{i - 1}} - \x_{t_{i - 1}} | + C \left[ \int_{\lambda_{t_{i -
   1}}}^{\lambda_{t_i}} e^{- \lambda} | \widehat{\mathbf{x}}_{\lambda} -
   \y_{t_{i - 1}} | \mathd \lambda + \int_{\lambda_{t_{i -
   1}}}^{\lambda_{t_i}} e^{- \lambda} |t_{\lambda} (\lambda) - t_{i - 1} |
   \mathd \lambda \right] . \]
Notice that
\begin{eqnarray*}
  \int_{\lambda_{t_{i - 1}}}^{\lambda_{t_i}} e^{- \lambda}  |
  \widehat{\mathbf{x}}_{\lambda} - \y_{t_{i - 1}} | \mathd \lambda & \leq &
  \int_{\lambda_{t_{i - 1}}}^{\lambda_{t_i}} e^{- \lambda}  |
  \widehat{\mathbf{x}}_{\lambda} - \x_{t_{i - 1}} | \mathd \lambda +
  \int_{\lambda_{t_{i - 1}}}^{\lambda_{t_i}} e^{- \lambda}  | \x_{t_{i - 1}} -
  \y_{t_{i - 1}} | \mathd \lambda\\
  & \leq & \int_{\lambda_{t_{i - 1}}}^{\lambda_{t_i}} e^{- \lambda}  |
  \widehat{\mathbf{x}}_{\lambda} - \x_{t_{i - 1}} | \mathd \lambda + Ch |
  \x_{t_{i - 1}} - \y_{t_{i - 1}} |,
\end{eqnarray*}
and recall that $\widehat{\mathbf{x}}_u = \x_{t_{\lambda} (u)}$. Using the
fact that $t_{\lambda}$ is increasing and Lemma \ref{bound}, we have:
\[ \mathbb{E} [\int_{\lambda_{t_{i - 1}}}^{\lambda_{t_i}} e^{- \lambda} |
   \widehat{\mathbf{x}}_{\lambda} - \x_{t_{i - 1}} | \mathd \lambda] \leq C
   \int_{\lambda_{t_{i - 1}}}^{\lambda_{t_i}} e^{- u}  \sqrt{|t_{\lambda} -
   t_{i - 1} |} \mathd \lambda \leq Ch^{3 / 2} . \]
Introduce $U_i =\mathbb{E} [| \y_{t_i} - \x_{t_i} |]$. Since
$\int_{\lambda_{t_{i - 1}}}^{\lambda_{t_i}} e^{- \lambda}  |t_{\lambda}
(\lambda) - t_{i - 1} | \mathd \lambda \leq Ch^2$:
\[ U_i \leq \left( \frac{\alpha_{t_i}}{\alpha_{t_{i - 1}}} + Ch \right) U_{i -
   1} + Ch^2 . \]
Let $a_i = \frac{\alpha_{t_i}}{\alpha_{t_{i - 1}}} + Ch$ and $b_i = Ch^2$. By
applying Lemma \ref{disc}, we have: $U_M \leq A_M U_0 + \sum_{k = 1}^M A_{k,
n} b_k$ with $A_M = \prod_{k = 1}^M a_k$ and $A_{k, M} = A_M / A_k = \prod_{i
= k + 1}^M a_i$. Note that $U_0 = 0$ since $\y_{t_0} = \x_T$ and so:
\[ U_M \leq Ch^2  \sum_{k = 0}^{M - 1} (\sup_{1 \leq i \le M} a_i)^k . \]
Using our hypothesis, we can bound:
\[ \sum_{k = 0}^{M - 1} (\sup_{1 \leq i \le M} a_i)^k \le \sum_{k = 0}^{M - 1}
   (\exp (Ch) + Ch)^k = \frac{(\exp (Ch) + Ch)^{1 / h} - 1}{(\exp (Ch) + Ch) -
   1} . \]
The quantity on the right is of order $C / h$. Indeed, as $h$ goes to $0$,
$\exp (Ch) + Ch - 1$ is equivalent to $2 Ch$ and $(\exp (Ch) + Ch)^{1 / h}$
converges to a constant. This gives the bound $U_M \leq Ch$, when $h \to 0$ and
by using Proposition \ref{prepa-th} and Theorem \ref{wlocal} we conclude that
SEEDS-1 is convergent of weak order 1.

\subsubsection{Useful Lemmas}\label{cv_lemmas}

\begin{lemma}
  \label{continuous}(Continuous Gr{\"o}nwall Lemma) Let $I = [a, b]$ denote a
  compact interval of the real line with $a < b$. Let $\alpha, \beta, u$ be
  continuous real-valued functions defined on $I$. Assume $\beta$ is
  non-negative, $\alpha$ is non-decreasing and $u$ satisfies the integral
  inequality
  \[ u (t) \leq \alpha (t) + \int_a^t \beta (s) u (s) \hspace{0.17em}
     \mathrm{d} s, \qquad \forall t \in I. \]
  Then
  \[ u (t) \leq \alpha (t) \exp \left( \int_a^t \beta (s) \hspace{0.17em}
     \mathrm{d} s \right), \qquad \forall t \in I. \]
\end{lemma}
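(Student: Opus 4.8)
The plan is to run the classical integrating-factor argument, taking care only of the fact that $\alpha$ is assumed non-decreasing rather than constant. First I would introduce the auxiliary function
\[ v(t) \assign \int_a^t \beta(s)\, u(s)\, \mathrm{d}s, \qquad t \in I, \]
which is well-defined and continuously differentiable on $I$ because $\beta u$ is continuous on the compact interval $I$. By the assumed integral inequality, $v'(t) = \beta(t)\, u(t) \leq \beta(t)\bigl(\alpha(t) + v(t)\bigr)$ for every $t \in I$, so that $v' - \beta v \leq \beta \alpha$ holds pointwise on $I$.

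Next I would multiply this differential inequality by the strictly positive integrating factor $\mu(t) \assign \exp\!\bigl(-\int_a^t \beta(s)\, \mathrm{d}s\bigr)$, which satisfies $\mu'(t) = -\beta(t)\mu(t)$ and $\mu(a) = 1$. This gives
\[ \frac{\mathrm{d}}{\mathrm{d}t}\bigl( v(t)\mu(t) \bigr) = \bigl( v'(t) - \beta(t) v(t) \bigr)\mu(t) \leq \beta(t)\alpha(t)\mu(t). \]
Integrating from $a$ to $t$ and using $v(a) = 0$ yields $v(t)\mu(t) \leq \int_a^t \beta(s)\alpha(s)\mu(s)\, \mathrm{d}s$. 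This is the one place where monotonicity of $\alpha$ enters: since $\alpha(s) \leq \alpha(t)$ for $s \leq t$ and $\beta \geq 0$, the right-hand side is bounded by $\alpha(t)\int_a^t \beta(s)\mu(s)\, \mathrm{d}s$, and because $\beta(s)\mu(s) = -\mu'(s)$ this last integral equals $\mu(a) - \mu(t) = 1 - \mu(t)$. Hence $v(t)\mu(t) \leq \alpha(t)\bigl(1 - \mu(t)\bigr)$, i.e.\ $v(t) \leq \alpha(t)\bigl(\mu(t)^{-1} - 1\bigr) = \alpha(t)\bigl(\exp(\int_a^t \beta) - 1\bigr)$.

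Finally I would combine this with the original bound $u(t) \leq \alpha(t) + v(t)$ to conclude
\[ u(t) \leq \alpha(t) + \alpha(t)\Bigl(\exp\bigl(\textstyle\int_a^t \beta\bigr) - 1\Bigr) = \alpha(t)\exp\!\Bigl(\int_a^t \beta(s)\, \mathrm{d}s\Bigr), \]
which is the claimed inequality. I do not anticipate any genuine obstacle: the argument is entirely elementary. The only subtlety is that $\alpha$ is merely non-decreasing (not constant and not assumed differentiable), which is handled exactly as above by applying the monotonicity bound after integrating the factored inequality — note that continuity of $\alpha$ is used only to guarantee integrability, never differentiability. An alternative route by Picard-type iteration of the inequality would also work, but the integrating-factor proof is shorter and cleaner.
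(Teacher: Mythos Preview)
Your proof is correct; it is the standard integrating-factor argument for Gr\"onwall's inequality with a non-decreasing forcing term, and every step goes through as written. The paper itself does not give a proof of this lemma --- it is simply stated as a well-known auxiliary result in the ``Useful Lemmas'' subsection --- so there is nothing to compare against beyond noting that your argument is the classical one.
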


\begin{lemma}
  \label{bound}Assume the following forward SDE is satisfied $dX_t = F (t,
  X_t) dt + G (t) dW_t, t \in [0, T]$, where $T > 0$, $F (t, x)$ is Lipschitz
  with respect to $(t, x)$, $G$ is continuous and $X_0$ is an integrable
  random variable. Then, there exists $C > 0$ such that for all $s < t \in [0,
  T]$ with $t - s \le 1$,
  \[ \mathbb{E} [|X_t - X_s |] \le C \sqrt{t - s} . \]
\end{lemma}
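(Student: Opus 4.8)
The plan is to start from the integrated form of the SDE and split the increment into its drift and martingale parts: for $s<t$,
\[
X_t - X_s = \int_s^t F(u,X_u)\,\mathrm{d}u + \int_s^t G(u)\,\mathrm{d}W_u,
\]
so that $\mathbb{E}[|X_t-X_s|] \le \int_s^t \mathbb{E}[|F(u,X_u)|]\,\mathrm{d}u + \mathbb{E}\bigl[\,\bigl|\int_s^t G(u)\,\mathrm{d}W_u\bigr|\,\bigr]$. The stochastic term is the easy one: by Jensen's inequality and the It\^o isometry it is bounded by $\bigl(\int_s^t |G(u)|^2\,\mathrm{d}u\bigr)^{1/2} \le \bigl(\sup_{[0,T]}|G|\bigr)\sqrt{t-s}$, using that $G$ is continuous, hence bounded, on the compact interval $[0,T]$. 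For the drift term I would first note that a function Lipschitz on $[0,T]\times\mathbb{R}^d$ has at most linear growth there, $|F(u,x)| \le |F(0,0)| + L_F(T + |x|) \le C(1+|x|)$, so the drift contribution is at most $C\int_s^t (1+\mathbb{E}[|X_u|])\,\mathrm{d}u \le C(t-s)\bigl(1+\sup_{u\in[0,T]}\mathbb{E}[|X_u|]\bigr)$; everything then reduces to the a priori first-moment estimate $\sup_{u\in[0,T]}\mathbb{E}[|X_u|]<\infty$.

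To obtain that estimate --- the only step requiring any care, precisely because $X_0$ is assumed merely integrable rather than square-integrable --- I would apply the same decomposition on $[0,u]$: $\mathbb{E}[|X_u|] \le \mathbb{E}[|X_0|] + C\int_0^u (1+\mathbb{E}[|X_r|])\,\mathrm{d}r + \bigl(\int_0^u |G(r)|^2\,\mathrm{d}r\bigr)^{1/2}$, where the last term is again controlled by Jensen and It\^o isometry and is bounded uniformly by a constant depending only on $T$ and $\sup_{[0,T]}|G|$. This yields $\mathbb{E}[|X_u|] \le C_1 + C_2\int_0^u \mathbb{E}[|X_r|]\,\mathrm{d}r$ with $C_1,C_2$ independent of $u$, and Gr\"onwall's Lemma (Lemma~\ref{continuous}) gives $\sup_{u\in[0,T]}\mathbb{E}[|X_u|] \le C_1 e^{C_2 T} < \infty$. (A localization/stopping argument would give the same conclusion, but the $L^1$ Gr\"onwall argument avoids needing finite second moments and is cleanest.)

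Combining the two pieces, for any $s<t$ in $[0,T]$,
\[
\mathbb{E}[|X_t-X_s|] \le C(t-s) + C\sqrt{t-s},
\]
with $C$ depending only on $T$, $L_F$, $|F(0,0)|$, $\sup_{[0,T]}|G|$ and $\mathbb{E}[|X_0|]$. Restricting to $t-s\le 1$ gives $t-s \le \sqrt{t-s}$, so the right-hand side is at most $2C\sqrt{t-s}$, which is the claimed bound. I expect the only genuinely substantive point to be the uniform-in-time first-moment bound; once that is established, the remainder is a direct application of the It\^o isometry together with the linear-growth consequence of the Lipschitz hypothesis.
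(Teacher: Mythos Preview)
Your proof is correct and uses the same core ingredients as the paper: split the increment into drift and martingale parts, control the martingale part via Jensen and It\^o isometry, and handle the drift via Gr\"onwall. The only organizational difference is where Gr\"onwall enters. The paper writes $F(u,X_u) = \bigl(F(u,X_u)-F(0,X_0)\bigr)+F(0,X_0)$, uses the joint Lipschitz hypothesis to bound $|F(u,X_u)-F(0,X_0)|\le K(u+|X_u-X_0|)$, and thereby gets a self-referential inequality for $u(t):=\mathbb{E}[|X_t-X_0|]$ directly, to which Gr\"onwall is applied once. You instead pass through an auxiliary uniform moment bound $\sup_{u}\mathbb{E}[|X_u|]<\infty$ (itself obtained by Gr\"onwall) and then bound the drift increment in one shot. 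Both routes are standard; the paper's is marginally more direct (one Gr\"onwall application on the quantity of interest), while yours isolates the moment bound as a reusable lemma.
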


\begin{proof}
  We take $s = 0$ and apply the triangular inequality:
  \[ |X_t - X_0 | \leq \int_0^t |F (u, X_u) - F (0, X_0)) | \mathd u + |
     \int_0^t G (u) dW_u | + t|F (0, X_0) | . \]
  Setting $u (t) =\mathbb{E} [|X_t - X_0 |]$ and taking the expectation, we
  deduce:
  \[ u (t) \leq K \frac{t^2}{2} + t\mathbb{E} [|F (0, X_0) |] +\mathbb{E}
     [(\int_0^t G (u) dW_u)^2]^{\frac{1}{2}} + K \int_0^t u (s) ds, \]
  where $K$ is a positive constant. Note that $\mathbb{E} [(\int_0^t G (u)
  dW_u)^2]^{\frac{1}{2}} = [\int_0^t G^2 (s) ds]^{\frac{1}{2}}$ by the It{\^o}
  isometry property which is less than $C \sqrt{t}$ since $G$ is continuous.
  Thus, we have proved that:
  \[ u (t) \leq \alpha (t) + K \int_0^t u (s) ds, \]
  where $\alpha (t) = K \frac{t^2}{2} + t\mathbb{E} [|F (0, X_0) |] + C
  \sqrt{t}$ is non-decreasing. By Lemma \ref{continuous}: $u (t) \le \alpha
  (t) \exp (Kt) \leq \alpha (t) \exp (KT)$. Since $\alpha (t) \leq C \sqrt{t}$
  for $t \in [0, 1]$, the lemma holds.
\end{proof}

\begin{lemma}
  \label{disc}(Discrete Gr{\"o}nwall Lemma) Consider a real number sequence
  $(u_n)_n$ such that
  \[ u_{n + 1} \leq a_{n + 1} u_n + b_{n + 1}, n \ge 0 \]
  where $(a_n)$ and $(b_n)_n$ are two given sequences such that $b_n$ is
  positive. Then
  \[ u_n \leq A_n u_0 + \sum_{k = 1}^n A_{k, n} b_k, \]
  where $A_n = \prod_{k = 1}^n a_k$ and $A_{k, n} = A_n / A_k = \prod_{i = k +
  1}^n a_i$.
\end{lemma}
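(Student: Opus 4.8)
The plan is to prove the bound by induction on $n$, exploiting the telescoping structure of the products $A_n$ and $A_{k,n}$. First I would fix conventions for empty products and sums: $A_0 = \prod_{k=1}^{0} a_k = 1$, $A_{n,n} = \prod_{i=n+1}^{n} a_i = 1$, and $\sum_{k=1}^{0}(\cdots) = 0$. With these conventions the base case $n=0$ reads $u_0 \leq A_0 u_0 = u_0$, which holds trivially; the case $n=1$ is just the hypothesis $u_1 \leq a_1 u_0 + b_1 = A_1 u_0 + A_{1,1} b_1$.

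For the inductive step, assume $u_n \leq A_n u_0 + \sum_{k=1}^{n} A_{k,n} b_k$. From the recursive hypothesis $u_{n+1} \leq a_{n+1} u_n + b_{n+1}$, and using $a_{n+1} \geq 0$ so that multiplying the inductive bound by $a_{n+1}$ preserves the inequality, I obtain
\[
u_{n+1} \leq a_{n+1} A_n u_0 + \sum_{k=1}^{n} a_{n+1} A_{k,n} b_k + b_{n+1}.
\]
It then remains to recognize the three product identities $a_{n+1} A_n = A_{n+1}$, $a_{n+1} A_{k,n} = A_{k,n+1}$ for $1 \leq k \leq n$, and $b_{n+1} = A_{n+1,n+1}\, b_{n+1}$ (the last by the empty-product convention), which recombine the right-hand side into $A_{n+1} u_0 + \sum_{k=1}^{n+1} A_{k,n+1} b_k$. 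This closes the induction and yields the claim.

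The proof is elementary, so there is no genuine obstacle; the only point requiring a word of care is the implicit hypothesis $a_n \geq 0$, which is needed to keep the direction of the inequality intact when passing through the recursion and which holds in every invocation of this lemma in the paper (there $a_i = \alpha_{t_i}/\alpha_{t_{i-1}} + Ch > 0$). I would also note that positivity of $b_n$ is used only to present the conclusion as a clean one-sided bound, and that the argument never requires $u_n$ itself to be non-negative.
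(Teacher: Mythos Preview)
Your induction argument is correct and is the standard proof of this discrete Gr\"onwall inequality; the paper does not actually supply a proof of Lemma~\ref{disc} (it is stated among the ``Useful Lemmas'' and used without justification), so there is nothing to compare against. Your observation that the unstated hypothesis $a_n\geq 0$ is needed---and is satisfied in every place the paper invokes the lemma---is a worthwhile clarification.
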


\subsection{Proof of Proposition \ref{noise-dec}}
Let us prove this statement for SEEDS-2. On the one hand, we have
\[ \mathbf{u}_1 = \frac{\alpha_{s_1}}{\alpha_s} \tilde{\mathbf{x}}_s - 2
   \bar\sigma_{s_1} \left( e^{\frac{h}{2}} - 1 \right)
   F_{\theta} (\tilde{\mathbf{x}}_s, s) - B^1, \qquad B^1
   \assign \sqrt{2} \alpha_{s_1} \int_{\lambda_s}^{\lambda_{s_1}} e^{-
   \lambda} \mathd \omega_{\lambda} . \]
Now $B^1$ depends on the Brownian movement $\left( \omega_{\lambda_{s_1}} -
\omega_{\lambda_s} \right)_{\lambda_{s_1} \geqslant \lambda_s}$. By the Markov
property, this is independent of $(\omega_{\lambda_u})_{\lambda_u \leqslant
\lambda_s}$. Since $\tilde{\mathbf{x}}_s$ is a function of
$(\omega_{\lambda_u})_{\lambda_u \leqslant \lambda_s}$, we deduce that $B^1$
has to be independent of all $(\tilde{\mathbf{x}}_u)_{u \geqslant s}$. By
It{\^o} Isometry, we obtain
\[ B^1 = \bar{\sigma}_{s_1} \sqrt{e^h - 1} z^1, \quad z^1 \sim \mathcal{N}
   (\tmmathbf{0}, \mathbf{I}_d) . \]
On the other hand, the update $\tilde{\mathbf{x}}_t$ is
\[ \tilde{\mathbf{x}}_t = \frac{\alpha_t}{\alpha_s} \tilde{\mathbf{x}}_s - 2
   \bar\sigma_t (e^h - 1) F_{\theta} (\mathbf{u}_1,
   s_1) - B^0, \qquad B^0 = \sqrt{2} \alpha_t \int_{\lambda_s}^{\lambda_t}
   e^{- \lambda} \mathd \omega_{\lambda} . \]
Hence, the Wiener process $W = \{ \omega_{\lambda} : \lambda \in [\lambda_s,
\lambda_t] \}$ is predetermined on the interval $[\lambda_s, \lambda_{s_1}]$.
Then, by using independent increments property of Wiener process $W$, we can
deduce, for $0 \leq \lambda_s < \lambda_{s_1} < \lambda_t$, that
\[ \omega_{\lambda_{s_1}} - \omega_{\lambda_s} \quad \text{and} \quad
   \omega_{\lambda_t} - \omega_{\lambda_{s_1}} \text{ are independent} . \]
Then, by the above Brownian independence property, the random variables $B^0$
and $B^1$ are
\begin{itemize}
  \item Independent on non-overlapping time intervals
  \item Dependent on overlapping intervals.
\end{itemize}
By the Chasles rule, we then decompose
\[ B^0 = \sqrt{2} \alpha_t \int_{\lambda_s}^{\lambda_{s_1}} e^{- \lambda}
   \mathd \omega_{\lambda} + \sqrt{2} \alpha_t
   \int_{\lambda_{s_1}}^{\lambda_t} e^{- \lambda} \mathd \omega_{\lambda} \]
and, as
\[ \int_{\lambda_s}^{\lambda_{s_1}} e^{- \lambda} \mathd \omega_{\lambda} =
   \frac{B^1}{\sqrt{2} \alpha_{s_1}}, \]
we obtain
\[ B^0 = \sqrt{2} \alpha_t \left[ \frac{B^1}{\sqrt{2} \alpha_{s_1}} +
   \int_{\lambda_{s_1}}^{\lambda_t} e^{- \lambda} \mathd \omega_{\lambda}
   \right] . \]
Then we have
\begin{eqnarray*}
  B & = & \sqrt{2} \alpha_t \int_{\lambda_s}^{\lambda_{s_1}} e^{- \lambda}
  \mathd \omega_{\lambda} + \sqrt{2} \alpha_t \int_{\lambda_s}^{\lambda_t}
  e^{- \lambda} \mathd \omega_{\lambda}\\
  & = & \sqrt{2} \alpha_t \frac{1}{\sqrt{2}} \sigma_{s_1} \sqrt{e^{2
  (\lambda_{s_1} - \lambda_s)} - 1} z^1 + \sqrt{2} \alpha_t \frac{1}{\sqrt{2}}
  \sigma_t \sqrt{e^{2 (\lambda_t - \lambda_s)} - 1} z^2\\
  & = & \alpha_t \sigma_{s_1} \sqrt{e^h - 1} z^1 + \alpha_t \sigma_t
  \sqrt{e^{2 h} - 1} z^2\\
  & = & \bar{\sigma}_t \left( \frac{\sigma_{s_1}}{\sigma_t} \sqrt{e^h - 1}
  z^1 + \sqrt{e^{2 h} - 1} z^2 \right)\\
  & = & \bar{\sigma}_t \left( e^{\lambda_t - \lambda_{s_1}} \sqrt{e^h - 1}
  z^1 + \sqrt{e^{2 h} - 1} z^2 \right)\\
  & = & \bar{\sigma}_t \left( e^{\frac{h}{2}} \sqrt{e^h - 1} z^1 + \sqrt{e^{2
  h} - 1} z^2 \right)\\
  & = & \bar{\sigma}_t \left( \sqrt{e^{2 h} - e^h} z^1 + \sqrt{e^{2 h} - 1}
  z^2 \right),
\end{eqnarray*}
which completes the proof.

The proof for SEEDS-3 is straightforward from the proof for SEEDS-2.

\subsection{Proof of Corollary \ref{cor:wocv}}

\subsubsection{Convergence of SEEDS-2:}

Let $\{ \y_{t_i} \}_i$ be the discrete stochastic process defined as follows:
\[ \y_{t_0} \leftarrow \x_T, \y_{t_i} \leftarrow
   \frac{\alpha_{t_i}}{\alpha_{t_{i - 1}}} \y_{t_{i - 1}} - 2 \alpha_{t_i} 
   \int_{\lambda_{t_{i - 1}}}^{\lambda_{t_i}} e^{- u} \epsilonv (\uv_i, s_i)
   \mathd u - \sqrt{2} \alpha_{t_i}  \int_{\lambda_{t_{i -
   1}}}^{\lambda_{t_i}} e^{- s} \mathd \tmmathbf{\bar{\omega}} (s), \]
with $s_i \leftarrow t_{\lambda}  \left( \lambda_{t_{i - 1}} + \frac{h_i}{2}
\right)$ and
\[ \uv_i \leftarrow \frac{\alpha_{s_i}}{\alpha_{t_{i - 1}}} \y_{t_{i - 1}} - 2
   \sigma_{s_i}  \left( e^{\frac{h_i}{2}} - 1 \right) \epsilonv (\y_{t_{i -
   1}}, t_{i - 1}) - \sqrt{2} \alpha_{s_i}  \int_{\lambda_{t_{i -
   1}}}^{\lambda_{s_i}} e^{- s} \mathd \tmmathbf{\bar{\omega}} (s), \]
Then $\{ \y_{t_i} \}_i$ has the same distribution as $\{ \widetilde{\x}_{t_i}
\}_i$ in Algorithm \ref{alg:SERK-solver-2}. We can compute the difference:
\[ \y_{t_i} - \x_{t_i} = \frac{\alpha_{t_i}}{\alpha_{t_{i - 1}}}  (\y_{t_{i -
   1}} - \x_{t_{i - 1}}) + \Gamma ; \Gamma = \Gamma_1 + \Gamma_2 \]
and:
\begin{eqnarray*}
  \Gamma_1 & = & - 2 \alpha_{t_i}  \int_{\lambda_{t_{i - 1}}}^{\lambda_{t_i}}
  e^{- u}  \left[ \epsilonv (\uv_i, s_i) - \epsilonv (x_{s_i}, s_i) \right]
  \mathd u\\
  \Gamma_2 & = & - 2 \alpha_{t_i}  \int_{\lambda_{t_{i - 1}}}^{\lambda_{t_i}}
  e^{- u}  \left[ \epsilonv (x_{s_i}, s_i) - \widehat{\epsilonv}
  (\widehat{\mathbf{x}}_u, u) \right] \mathd u
\end{eqnarray*}
Similarly to the case $k = 1$, $\mathbb{E} [| \Gamma_2 |] \le Ch^2$. Note that
$\mathbb{E} [| \Gamma_1 |] \leq Ch \Delta_i$ with $\Delta_i =\mathbb{E} [|
\uv_i - \x_{s_i} |]$. Introduce: $U_i =\mathbb{E} [| \y_{t_i} - \x_{t_i} |]$.
We will bound $\Delta_i$ by a function of $U_{i - 1}$. For this, recall:
\[ \uv_i = \frac{\alpha_{s_i}}{\alpha_{t_{i - 1}}} \y_{t_{i - 1}} - 2
   \alpha_{s_i}  \int_{\lambda_{t_{i - 1}}}^{\lambda_{s_i}} e^{- \lambda}
   \epsilonv (\y_{t_{i - 1}}, t_{i - 1}) \mathd \lambda - \sqrt{2}
   \alpha_{s_i}  \int_{\lambda_{t_{i - 1}}}^{\lambda_{s_i}} e^{- \lambda}
   \mathd \tmmathbf{\bar{\omega}} (\lambda), \]
and write the difference:
\[ \uv_i - \x_{s_i} = \frac{\alpha_{s_i}}{\alpha_{t_{i - 1}}}  (\y_{t_{i - 1}}
   - \x_{t_{i - 1}}) + 2 \alpha_{t_i}  \int_{\lambda_{t_{i -
   1}}}^{\lambda_{t_i}} e^{- u}  (\widehat{\epsilonv} (\widehat{\mathbf{x}}_u,
   u) - \epsilonv_{\theta} (\y_{t_{i - 1}}, \lambda_{t_{i - 1}})) \mathd
   \lambda . \]
By the triangular inequality:
\[ | \widehat{\epsilonv} (\widehat{\mathbf{x}}_u, u) - \epsilonv (\y_{t_{i -
   1}}, \lambda_{t_{i - 1}}) | \leq C (| \mathbf{x}_{t_{\lambda} (u)} -
   \x_{t_{i - 1}} | + | \x_{t_{i - 1}} - \y_{t_{i - 1}} | + |t_{\lambda} (u) -
   t_{i - 1} |), \]
and so: $\mathbb{E} [| \widehat{\epsilonv} (\widehat{\mathbf{x}}_u, u) -
\epsilonv (\y_{t_{i - 1}}, \lambda_{t_{i - 1}}) |] \leq C (\sqrt{h} + U_{i -
1} + h)$. Finally:
\[ \Delta_i \leq \frac{\alpha_{s_i}}{\alpha_{t_{i - 1}}} U_{i - 1} + Ch \left(
   \sqrt{h} + U_{i - 1} + h \right) \leq U_{i - 1} + Ch \left( \sqrt{h} + U_{i
   - 1} \right), \]
and $\mathbb{E} [| \Gamma_1 |] \leq Ch^{5 / 2} + C (h + h^2) U_{i - 1}$. This
gives the bound:
\[ U_i \leq Ch^2 + (\frac{\alpha_{t_i}}{\alpha_{t_{i - 1}}} + Ch + Ch^2) U_{i
   - 1} . \]
Now using Theorem \ref{wlocal}, the proof can now be finished following as in
proof in Section \ref{w-disc}.

\subsubsection{Convergence of SEEDS-3:}

Continuing with the same notations as before, we will prove by analogy that:
\begin{equation}
  \label{eq} U_i \leq Ch^2 + (\frac{\alpha_{t_i}}{\alpha_{t_{i - 1}}} + Ch +
  Ch^2 + Ch^3) U_{i - 1},
\end{equation}
so that, by Theorem \ref{wlocal}, we obtain the desired result.

Let $\y_{t_i} = \widetilde{\x}_{t_i}$. We have:
\[ y_{t_i} = \frac{\alpha_{t_i}}{\alpha_{t_{i - 1}}} y_{t_{i - 1}} - 2
   \alpha_{t_i}  \int_{\lambda_{t_{i - 1}}}^{\lambda_{t_i}} e^{- \lambda}
   \epsilonv (y_{t_{i - 1}}, t_{i - 1}) d \lambda \]
\[ - \frac{2 \alpha_{t_i}}{h_i r_2}  \int_{\lambda_{t_{i -
   1}}}^{\lambda_{t_i}} e^{- \lambda}  (\lambda - \lambda_{t_{i - 1}}) 
   (\epsilonv (u_{2 i}, s_{2 i}) - \epsilonv (y_{t_{i - 1}}, t_{i - 1})) d
   \lambda + \text{Noise}, \]
and $x_{t_i} = \frac{\alpha_{t_i}}{\alpha_{t_{i - 1}}} x_{t_{i - 1}} - 2
\alpha_{t_i}  \int_{\lambda_{t_{i - 1}}}^{\lambda_{t_i}} e^{- \lambda}
\widehat{\epsilonv} (\widehat{\mathbf{x}}_{\lambda}, \lambda) \mathd \lambda
+\text{Noise}$, where Noise is the same in both equations. From the proof for $k =
1$, it suffices to bound:
\[ \Gamma = \frac{1}{h_i}  \int_{\lambda_{t_{i - 1}}}^{\lambda_{t_i}} e^{-
   \lambda}  (\lambda - \lambda_{t_{i - 1}})  (\epsilonv (u_{2 i}, s_{2 i}) -
   \epsilonv (y_{t_{i - 1}}, t_{i - 1})) d \lambda, \]
in the $L^1$ norm. By the Lipschitz property of $\epsilonv$:
\begin{eqnarray}
  \mathbb{E} [| \Gamma |] & \leq & Ch (\mathbb{E}[|u_{2 i} - x_{s_{2 i}} |]
  +\mathbb{E}[|x_{t_{i - 1}} - x_{s_{2 i}} |] +\mathbb{E}[|y_{t_{i - 1}} -
  x_{t_{i - 1}} |]) + Ch^2 \nonumber\\
  & \leq & Ch (\mathbb{E}[|u_{2 i} - x_{s_{2 i}} |] + \sqrt{h} + U_{i - 1}) +
  Ch^2 . \nonumber
\end{eqnarray}
Now, let us bound $\mathbb{E} [|u_{2 i} - x_{s_{2 i}} |]$ and for this, write
\[ u_{2 i} = \frac{\alpha_{s_{2 i}}}{\alpha_{t_{i - 1}}} y_{t_{i - 1}} - 2
   \alpha_{s_{2 i}}  \int_{\lambda_{t_{i - 1}}}^{\lambda_{s_{2 i}}} e^{-
   \lambda} \epsilonv (y_{t_{i - 1}}, t_{i - 1}) \mathd \lambda \]
\[ - 2 \alpha_{s_{2 i}}  \int_{\lambda_{t_{i - 1}}}^{\lambda_{s_{2 i}}} e^{-
   \lambda}  (\lambda - \lambda_{t_{i - 1}})  \frac{\epsilon (u_{2 i - 1},
   s_{2 i - 1}) - \epsilon (y_{t_{i - 1}}, t_{i - 1})}{r_1 h_i} \mathd \lambda
   + \text{Noise}, \]
and
\[ x_{s_{2 i}} = \frac{\alpha_{s_{2 i}}}{\alpha_{t_{i - 1}}} x_{t_{i - 1}} - 2
   \alpha_{s_{2 i}}  \int_{\lambda_{t_{i - 1}}}^{\lambda_{s_{2 i}}} e^{-
   \lambda} \widehat{\epsilonv} (\widehat{\mathbf{x}}_{\lambda}, \lambda)
   \mathd \lambda + \text{Noise}, \]
where Noise is the same in both equations. So
\begin{eqnarray*}
  u_{2 i} - x_{s_{2 i}} & = & \frac{\alpha_{s_{2 i}}}{\alpha_{t_{i - 1}}} 
  (y_{t_{i - 1}} - x_{t_{i - 1}}) - 2 \alpha_{s_{2 i}}  \int_{\lambda_{t_{i -
  1}}}^{\lambda_{s_{2 i}}} e^{- \lambda}  (\epsilonv (\y_{t_{i - 1}}, t_{i -
  1}) - \widehat{\epsilonv} (\widehat{\mathbf{x}}_{\lambda}, \lambda)) \mathd
  \lambda\\
  &  & - 2 \alpha_{s_{2 i}}  \int_{\lambda_{t_{i - 1}}}^{\lambda_{s_{2 i}}}
  e^{- \lambda}  (\lambda - \lambda_{t_{i - 1}}) D \mathd \lambda,
\end{eqnarray*}
where $D = \frac{\epsilon (u_{2 i - 1}, s_{2 i - 1}) - \epsilon (y_{t_{i -
1}}, t_{i - 1})}{r_1 h_i}$. From the convergence proof of SEEDS-1, and the
Lipschitz property of $\epsilon$, we obtain:
\[ \mathbb{E} [|u_{2 i} - x_{s_{2 i}} |] \leq U_{i - 1} + ChU_{i - 1} + Ch^2 +
   Ch\mathbb{E} [|u_{2 i - 1} - y_{t_{i - 1}} |] . \]
Again, by the triangular inequality:
\[ \mathbb{E} [|u_{2 i - 1} - y_{t_{i - 1}} |] \leq \mathbb{E} [|u_{2 i - 1} -
   x_{s_{2 i - 1}} |] + C \sqrt{h} + U_{i - 1}, \]
and since
\[ u_{2 i - 1} - x_{s_{2 i - 1}} = \frac{\alpha_{s_{2 i - 1}}}{\alpha_{t_{i -
   1}}}  (y_{t_{i - 1}} - x_{t_{i - 1}}) - 2 \alpha_{s_{2 i - 1}} 
   \int_{\lambda_{t_{i - 1}}}^{\lambda_{s_{2 i - 1}}} e^{- \lambda}
   (\widehat{\epsilonv} (\widehat{\mathbf{x}}_{\lambda}, \lambda) - \epsilonv
   (y_{t_{i - 1}}, t_{i - 1}) \mathd \lambda, \]
we have, as before,
\[ \mathbb{E} [|u_{2 i - 1} - x_{s_{2 i}} |] \leq U_{i - 1} + ChU_{i - 1} +
   Ch^2 . \]
Combining the previous inequalities leads to (\ref{eq}). This finishes the
proof of Corollary \ref{cor:wocv}.
\section{Implementation Details}
\label{app:imple-details}

The SEEDS solvers used in our experiments are exactly the variants we 
described in Algorithms \ref{alg:SERK-solver-1}, \ref{alg:SERK-solver-2}, 
\ref{alg:SERK-solver-3} in the main part of the paper. In particular, SEEDS-2 
and SEEDS-3 solvers are completely determined by one-parameter families of 
deterministic exponential integrators from \cite{hochbruck2010exponential} of 
order 2 and 3 respectively and prescribed by the following Butcher tableaux:
\[ \begin{array}{l|ll}
     0 &  & \\
     c_2 & c_2 \varphi_{1, 2} & \\
     \hline
     & \left( 1 - \frac{1}{2 c_2} \right) \varphi_1 & \frac{1}{2 c_2}
     \varphi_1
   \end{array} \qquad \begin{array}{l|lll}
     0 & 0 & 0 & 0\\
     c_2 & c_2 \varphi_{1, 2} & 0 & 0\\
     \frac{2}{3} & \frac{2}{3} \varphi_{1, 3} - \frac{4}{9 c_2} \varphi_{2, 3}
     & \frac{4}{9 c_2} \varphi_{2, 3} & 0\\
     \hline
     & \varphi_1 - \frac{3}{2} \varphi_2 & 0 & \frac{3}{2} \varphi_2
   \end{array} \]
In all experiments we fix the parameter $c_2=0.5$ for SEEDS-2 and 
$c_2=\frac{1}{3}$ for SEEDS-3. We point out that the Butcher tableau here 
associated to SEEDS-3 is the result of a weakening of the 
\textit{stiff order conditions} so it might suffer from order reduction in 
the deterministic case. We also point out that a full theory of stiff order 
conditions for stochastic exponential Runge-Kutta methods for semi-linear DEs 
with a time-varying linear coefficient have not been yet developed to the 
authors knowledge. As such, this is only an analogy whose purpose is to 
clarify how our solvers relate to well-known solvers from the literature, but 
such Butcher tableaux do not rigorously reflect their convergence order. 

\subsection{Stabilization of the exponential terms}

In the proposed algorithms, one subtle detail is to re-arrange all equations 
in order for them to make use only of the $\texttt{expm1}(h)$ function which 
computes $e^h-1$ with great numerical stability, specially for small values 
of $h$.
We have for values $r_1=1/3$, $r_2=2/3$, the following identity
\begin{eqnarray*}
  \sqrt{e^{2 h_i} - e^{h_i}} z_i + \sqrt{e^{h_i} - 1} v_i & = & \sqrt{e^{h_i}
  - 1} \left( \sqrt{e^{h_i}} z_i + v_i \right)\\
  & = & \sqrt{e^{h_i} - 1} \left( e^{\frac{h_i}{2}} z_i + v_i \right)\\
  & = & \sqrt{\texttt{expm1}(h_i)} \left( \left(\texttt{expm1}\left(
  \frac{h_i}{2} \right) + 1 \right) z_i + v_i \right)\\
  & = & \sqrt{\texttt{expm1}(h_i)} \left(\texttt{expm1} \left( \frac{h_i}{2}
  \right) + 1 \right) z_i + \sqrt{\texttt{expm1}(h_i)} v_i.
\end{eqnarray*}
Now by using
\begin{eqnarray*}
  e^{2 (r_2 - r_1) h_i} & = & e^{2 \left( \frac{2}{3} - \frac{1}{3} \right)
  h_i} = e^{2 \frac{1}{3} h_i} = e^{r_2 h_i},
\end{eqnarray*}
we now compute
\begin{eqnarray*}
  \sqrt{e^{2 r_2 h_i} - e^{r_2 h_i}} z^1_i + \sqrt{e^{r_2 h_i} - 1} z^2_i & =
  & \sqrt{e^{r_2 h_i} - 1} \left( \sqrt{e^{r_2 h_i}} z^1_i + z^2_i \right)\\
  & = & \sqrt{e^{r_2 h_i} - 1} \left( e^{\frac{h_i}{3}} z^1_i + z^2_i
  \right)\\
  & = & \sqrt{\texttt{expm1}(r_2 h_i)} ((\texttt{expm1}(r_1 h_i) + 1) z^1_i +
  z^2_i)\\
  &  & \\
  \sqrt{e^{2 h_i} - e^{2 r_2 h_i}} z^1_i & = & \sqrt{e^{3 r_2 h_i} - e^{2 r_2
  h_i}} z^1_i\\
  & = & e^{r_2 h_i} \sqrt{e^{r_2 h_i} - 1} z^1_i\\
  & = & \sqrt{\texttt{expm1}(r_2 h_i)} (\texttt{expm1} (r_2 h_i) + 1) z^1_i.
\end{eqnarray*}
Finally, we obtain
\begin{eqnarray*}
  &  & \sqrt{e^{2 h_i} - e^{2 r_2 h_i}} z^1_i + \sqrt{e^{2 r_2 h_i} - e^{r_2
  h_i}} z^2_i + \sqrt{e^{r_2 h_i} - 1} z^3_i\\
  & = & \sqrt{\texttt{expm1} (r_2 h_i)} ((\texttt{expm1} (r_2 h_i) + 1) z^1_i +
  (\texttt{expm1} (r_1 h_i) + 1) z^2_i + z^3_i) .
\end{eqnarray*}

\subsection{Noise schedules parameterizations}

The inverses of $\lambda (t)$  are given in the (VP) linear and cosine 
schedules respectively by
  \begin{eqnarray*}
    t_{\lambda} (\lambda) & = & \frac{2 \log (e^{- 2 \lambda} + 1)}{\sqrt{\beta_{\min}^2 + 2
    (\beta_{\max} - \beta_{\min}) \log (e^{- 2 \lambda} + 1)} + \beta_{\min}}, \\
    t_{\lambda} (\lambda) & = & \frac{2 (1 + s)}{\pi} \arccos \left( \text{exp} \Big( -
    \dfrac{1}{2} \log (e^{- 2 \lambda} + 1) + \log \cos \big( \cfrac{\pi s}{2
    (1 + s)} \big)\Big) \right) - s. 
  \end{eqnarray*}
  In EDM Noise Prediction case, the inverses of $\lambda(t)$, as given in 
  \eqref{edm-lambdas}, with respect to the NPFO and NRSDE are respectively 
  given by

  \begin{eqnarray*}
      t_\lambda (\lambda) = \sigma_d \tan (e^{- \lambda})
      \quad\text{and}\quad
      t_\lambda (\lambda) = \dfrac{\sigma_d}{\sqrt{\frac{1}{\sigma^2_d e^{-2\lambda}} - 1}}.
  \end{eqnarray*}

\subsection{EDM discretization}
We follow Karras et al. {\cite{Karras2022edm}} to implement the EDM 
discretization timesteps $\{t_i\}_{i=0}^M$ as $t_i=\sigma^{-1}(\sigma_i)$ 
such that, for $\rho>0$,
\begin{eqnarray*}
\sigma_{i<M}=\left(\sigma_\text{max}^{\frac{1}{\rho}}+\frac{i}{N-1}\left(\sigma_\text{min}^{\frac{1}{\rho}}-\sigma_\text{max}^{\frac{1}{\rho}}\right)
    \right)^{\rho}
    \quad\text{and}\quad 
    \sigma_N=0.
\end{eqnarray*}
From the definition, we note that $\sigma_0=\sigma_\text{max}$ and 
$\sigma_{M-1}=\sigma_\text{min}$, where $\sigma_\text{min}$ and 
$\sigma_\text{max}$ denote the minimum and maximum noise magnitude, 
respectively. We also keep default value $\rho=7$ as in 
{\cite{Karras2022edm}}. However, we figured out that when using EDM 
discretization with linear schedule, the noise schedule improvement in iDDPM 
pre-trained models \cite{nichol2021improved} would result in two consecutive 
time steps of the same value, i.e. $t_j=t_{j+1}$ for some index 
$j=1,\dots,M-1$ and large steps ($M\geq 61$). Thus, for SEEDS-3 and 
DPM-Solver-3 \cite{dpm-solver}, the usage of the function 
$\varphi_2(h)=\dfrac{e^h-h-1}{h^2}$ (see Appendix \ref{app:solvers} for more 
details) will cause zero division error where 
$h=\lambda_{t_{j+1}}-\lambda_{t_j}=0$. Therefore, in our implementation, we 
ignore the noise schedule improvement of iDDPM \cite{nichol2021improved} 
models when using solvers of order three.

\subsection{Final sampling step}
The sampling phase in DPMs using SEEDS follows the RSDE, which requires 
gradual computing through discretization time steps $\{t_i\}_{i=0}^M$ and the 
latter goes from $t_0=T$ to $t_M=0$. In our implementation,  to avoid the 
logarithm of zero error at the last step, i.e. $\log\sigma_{t_M}=\log(0)$, we 
stop the sampling phase at step $(M-1)$. Hence, the NFE used in a run will be 
given as
\begin{eqnarray*}
    \text{NFE}=k\times (M-1),
\end{eqnarray*}
where $k$ represents the order of the solver. We also do not use the 
``denoising'' trick, i.e., ignoring the random noise at the last step and 
leave it to further research. 
\section{Reminders on Stochastic Exponential Integrators}
\label{app:solvers}

Let us consider a SDE of the form
\begin{equation}
  \mathd \mathbf{x} (t) = [a (t) \mathbf{x} (t) + c (t) f (\mathbf{x} (t), t)]
  \mathd t + g (t) \mathd \tmmathbf{\omega} (t), \label{GSDE}
\end{equation}
where $a, c : [0, T] \rightarrow \mathbb{R}$ and $g : [0, T] \rightarrow
\mathbb{R}^{d \times d}$. In other words we concentrate to high-dimensional
semi-linear non autonomous SDEs with additive noise. The objective of this
section is to construct explicit stochastic exponential derivative-free
methods for the above equation following the Runge-Kutta (RK) approach. 
These methods ideally should fulfill the
following properties:
\begin{enumerate}
  \item If $f \equiv 0$, then \eqref{GSDE} can be solved \tmtextit{exactly};
  
  \item If $g \equiv 0$, then a SEEDS method for \eqref{GSDE} identifies with
  an exponential RK (ERK) method;
  
  \item If $a \equiv 0$, then a SEEDS method for \eqref{GSDE} identifies with
  a stochastic RK (SRK) method and if moreover $g \equiv 0$ then it identifies with a
  classical RK ODE method.
\end{enumerate}
Before tackling the aimed SEEDS problem let us rapidly recall elementary
constructions of RK, ERK, weak and strong SRK methods. We will not deal with
time-adaptive methods here.

\subsection{Derivative-free exponential ODE schemes}

\subsubsection{Runge-Kutta approach}\label{DFRK}

Derivative-free schemes are obtained by comparing the It{\^o}-Taylor expansion
of the above paragraph with expressions of $\mathbf{x} (t)$ in terms of its
intermediate evaluations between $s$ and $t = s + h$ and the Taylor expansions
of such evaluations. As a simple example, in the ODE regime
\begin{equation}
  \mathd \mathbf{x} (t) = f (\mathbf{x} (t), t) \mathd t. \label{ODE}
\end{equation}
As such, analytic solutions to the above equations are of the form
\begin{eqnarray*}
  \mathbf{x} (t) & = & \mathbf{x} (s) + \int_s^t f (\mathbf{x} (\tau), \tau)
  \mathd \tau .
\end{eqnarray*}
Now, Taylor expansion gives, up to order 2:
\begin{eqnarray*}
  \mathbf{x} (t) & = & \mathbf{x} (s) + h \mathbf{x}' (s) + \frac{h^2}{2}
  \mathbf{x}'' (s) +\mathcal{O} (h^3)\\
  & = & \mathbf{x} (s) + hf (\mathbf{x} (s), s) + \frac{h^2}{2}  (\mathbf{x}'
  (s) \partial_{\mathbf{x}} f (\mathbf{x} (s), s) + \partial_t f (\mathbf{x}
  (s), s)) +\mathcal{O} (h^3)\\
  & = & \mathbf{x} (s) + hf (\mathbf{x} (s), s) + \frac{h^2}{2}  (f
  (\mathbf{x} (s), s) \partial_{\mathbf{x}} f (\mathbf{x} (s), s) + \partial_t
  f (\mathbf{x} (s), s)) +\mathcal{O} (h^3) .
\end{eqnarray*}
A straightforward recursion yields a Taylor expansion
\begin{eqnarray*}
  \mathbf{x} (t) & = & \mathbf{x} (s) + \sum_{k = 1}^n \frac{h^k}{k!} L_t^{k -
  1} f (\mathbf{x} (t), t) + \int_s^t \cdots \int_s^{\tau} L_t^n f (\mathbf{x}
  (\tau), \tau) \mathd \tau^{n + 1},
\end{eqnarray*}
where we denote the generalized infinitesimal operator of the solution
$\mathbf{x}$ of \eqref{ODE} by
\[ L_t (\bullet) = \partial_t (\bullet) + f (\mathbf{x} (t), t) \cdot
   \partial_{\mathbf{x}} (\bullet) . \]
Derivative-free methods seek to get rid of the derivatives in $L_t$ by
computing Taylor expansions of $f$ at well-chosen points. In the explicit
one-step case, this amounts of defining
\[ \widehat{\mathbf{x}} (t) = \widehat{\mathbf{x}} (s) + h \sum_{k = 1}^n \Phi
   (\widehat{\mathbf{x}} (s), s, h), \]
where the $\Phi$ function does not contain derivatives of $f$. The general
high order case is given by well-tuned coefficients in the following scheme
\begin{eqnarray*}
  \widehat{\mathbf{x}} (t) & \assign & \widehat{\mathbf{x}} (s) + h \sum_{i =
  1}^n \alpha_i f (\mathbf{x}_i, s + c_i h)\\
  \mathbf{x}_i & = & \widehat{\mathbf{x}} (s) + h \sum_{j = 1}^n a_{i, j} f
  (\mathbf{x}_j, s + c_j h) .
\end{eqnarray*}
By denoting $\alpha = [\alpha_1 \ldots \alpha_n]$, $c = [c_1 \ldots c_n]^T$
and $A = (a_{i, j})$, these can be represented by a Butcher tableau of the
form
\[ \begin{array}{c|c}
     c & A\\
     \hline
     & \alpha
   \end{array} \]
In the following sections we will present extended version of this tableau for
representing more involved numerical schemes. As such, the Euler, midpoint,
Heun and general second order explicit methods are respectively:
\[ \begin{array}{c|c}
     0 & 0\\
     \hline
     & 1
   \end{array} \qquad \begin{array}{c|cc}
     0 & 0 & 0\\
     1 / 2 & 1 / 2 & 0\\
     \hline
     & 0 & 1
   \end{array} \qquad \begin{array}{c|cc}
     0 & 0 & 0\\
     1 & 1 & 0\\
     \hline
     & 1 / 2 & 1 / 2
   \end{array} \qquad \begin{array}{c|cc}
     0 & 0 & 0\\
     c_2 & c_2 & 0\\
     \hline
     & 1 - \frac{1}{2 c_2} & \frac{1}{2 c_2}
   \end{array} \]

\subsubsection{Exponential Runge-Kutta approach}

We now concentrate on a non-autonomous semi-linear ODE of the form
\begin{equation}
  \mathd \mathbf{x} (t) = [f (t) \mathbf{x} (t) + g (\mathbf{x} (t), t)]
  \mathd t, \label{slode}
\end{equation}
where $f (t) = \bar{f} (t) \cdot \tmop{Id}_d$ and $g (\mathbf{x} (t), t) =
\bar{g} (\mathbf{x} (t), t) \cdot \tmop{Id}_d$ are diagonal time-dependent
matrices with identical diagonal coefficients. In particular, we have
$[f^{(k)} (t), f^{(l)} (s)] = 0$, a property we will need to facilitate
exponential matrix multiplications. As $f (s)$ is a $d$-dimensional diagonal
matrix, the fundamental matrix for \eqref{slode}, usually given by the
Peano-Baker series, simplifies in this case to the form
\[ \Phi (t ; s) \assign e^{\int_s^t f (r) \mathd r} = \sum_{n = 0}^{\infty}
   \frac{1}{n!} \left( \int_s^t f (r) \mathd r \right)^n, \]
which satisfies $\Phi' (t ; s) = f (t) \Phi (t ; s)$ and $\Phi (s ; s) = 1$.
Then the exact solution for \eqref{slode} is given, via the variation of
constants formula, by
\begin{equation*}
      \mathbf{x} (t)  =  \Phi (t ; s)  \left( \mathbf{x} (s) + \int_s^t \Phi^{-
  1} (\tau ; s) g (\mathbf{x} (\tau), \tau) \mathd \tau \right) = \Phi (t ; s)
  \mathbf{x} (s) + \int_s^t \Phi (t ; \tau) g (\mathbf{x} (\tau), \tau) \mathd
  \tau .
\end{equation*}
In light of this integral form, one can formalize a general class of

exponential $n$-stage RK methods
\begin{eqnarray*}
  \mathbf{x}_i & = & \gamma_i (h, f_s) \mathbf{x} (s) + \sum_{j = 1}^n a_{ij} 
  (h, f_s) g (\mathbf{x}_j, s + c_j h)\\
  \mathbf{x} (t) & = & \gamma_0 (h, f_s) \mathbf{x} (s) + \sum_{i = 1}^n b_i
  (h, f_s) g (\mathbf{x}_i, s + c_i h),
\end{eqnarray*}
where $\gamma_0, \gamma_i, a_{ij}, b_j$ are functions of the step-size $h$,
$f$and $f_s (r) \assign \int_0^r f (s + \tau) \mathd \tau$. There are two
possible approaches to create exponential integrators, namely the exponential
time-differencing (ETD) approach that uses the variation of constants formula
and makes use of the $\varphi$ functions, and the Lawson approach, also know
as integrating factor (IF), which makes a change of variables on the above SDE
thus avoiding the use of the $\varphi$ functions but computing exponential
factors step-wise.

\tmtextbf{The Lawson and the ETD approaches for Exponential Euler schemes}

There are two choices one can make when computing first order approximations
of
\begin{eqnarray*}
  \mathbf{x} (t) & = & \Phi (t ; s) \mathbf{x} (s) + \int_s^t \Phi (t ; \tau)
  g (\mathbf{x} (\tau), \tau) \mathd \tau .
\end{eqnarray*}
First, by interpolating $g (\mathbf{x} (\tau), \tau)$ as $g (\mathbf{x} (s),
s)$ we obtain
\begin{eqnarray*}
  \mathbf{x} (t) & = & \Phi (t ; s) \mathbf{x} (s) + g (\mathbf{x} (s), s) 
  \int_s^t \Phi (t ; \tau) \mathd \tau .
\end{eqnarray*}
Now two choices remain. Either $\int_s^t \Phi (t ; \tau) \mathd \tau$ is
computed exactly or we again interpolate $\Phi (t ; \tau)$ as $\Phi (t ; s)$.
Taking for simplicity $f \equiv A$ to be constant and by denoting $h = t - s$,
the first case yields the ETD Euler method
\begin{eqnarray*}
  \widehat{\mathbf{x}} (t) & = & \Phi (t ; s) \widehat{\mathbf{x}} (s) + h
  \gamma_1 (f, t, s) g (\widehat{\mathbf{x}} (s), s)\\
  & = & e^{Ah} \widehat{\mathbf{x}} (s) + h \varphi_1  (Ah) g
  (\widehat{\mathbf{x}} (s), s)\\
  & = & \widehat{\mathbf{x}} (s) + h \varphi_1  (Ah)  [g
  (\widehat{\mathbf{x}} (s), s) - \widehat{\mathbf{x}} (s)],
\end{eqnarray*}
and the second yields the IF Euler (also called Lawson-Euler) method
\begin{eqnarray*}
  \widehat{\mathbf{x}} (t) & = & \Phi (t ; s) \widehat{\mathbf{x}} (s) + h
  \Phi (t ; s) g (\widehat{\mathbf{x}} (s), s)\\
  & = & e^{Ah}  (\widehat{\mathbf{x}} (s) + hg (\widehat{\mathbf{x}} (s), s))
  .
\end{eqnarray*}
Now, consider a solution
\begin{eqnarray*}
  \mathbf{x} (t) & = & \Phi (t ; s) \mathbf{x} (s) + \int_0^h \Phi (h ; \tau)
  g (\mathbf{x} (s + \tau), s + \tau) \mathd \tau .
\end{eqnarray*}
Exponential methods then aim to approximate the term $g (\mathbf{x} (s +
\tau), s + \tau)$ by its interpolation polynomial in certain non-confluent
quadrature nodes $c_1, \ldots, c_n$.

\tmtextbf{The ETD approach}

In this case, the variation of constants formula yields
\begin{eqnarray*}
  \mathbf{x} (s + ch) & = & e^{\int_s^{s + ch} a (\tau) \mathd \tau}
  \mathbf{x} (s) + \int_s^{s + ch} e^{\int_{\tau}^{s + ch} a (r) \mathd r} f
  (\mathbf{x} (\tau), \tau) \mathd \tau .\\
  & = & e^{\int_0^{ch} a (s + \tau) \mathd \tau} \mathbf{x} (t) + \int_0^{ch}
  e^{\int_{\tau - s}^{ch} a (s + r) \mathd r} f (\mathbf{x} (s + \tau), s +
  \tau) \mathd \tau .
\end{eqnarray*}
Now, as before, the Taylor expansion of $f$ yields
\begin{eqnarray*}
  f (\mathbf{x} (s + \tau), s + \tau) & = & \sum_{j = 1}^q \frac{\tau^{j -
  1}}{(j - 1) !} f^{(j - 1)} (\mathbf{x} (s), s)\\
  &  & + \int_0^{\tau} \frac{(\tau - \tau_1)^{q - 1}}{(q - 1) !} f^{(q)}
  (\mathbf{x} (s + \tau_1), s + \tau_1) \mathd \tau_1 .
\end{eqnarray*}
Recall that the $\varphi$ functions are given in an integral form as follows
\[ \varphi_{k + 1} (t) = \int_0^1 e^{(1 - \delta) t}  \frac{\delta^k}{k!}
   \mathd \delta, \]
which satisfy $\varphi_k (0) = \frac{1}{k!}$. Now denote
\begin{eqnarray*}
  \varphi_j (t, a) & \assign & \frac{1}{t^j}  \int_0^t e^{\int_{\tau}^t a (r)
  \mathd r}  \frac{\tau^{j - 1}}{(j - 1) !} \mathd \tau, \qquad j \geqslant
  1.\\
  \varphi_1 (h, a) & = & \frac{1}{h}  \int_0^h e^{\int_{\tau}^h a (r) \mathd
  r} \mathd \tau = \int_0^1 e^{h \int_{\theta}^1 a (r) \mathd r} \mathd \theta
  .
\end{eqnarray*}
The exact solution at $s + ch$ now reads
\begin{eqnarray*}
  \mathbf{x} (s + ch) & = & e^{\int_0^{ch} a (s + \tau) \mathd \tau}
  \mathbf{x} (s) + \sum_{j = 1}^q (ch)^j \varphi_j (ch, a) f^{(j - 1)}
  (\mathbf{x} (s), s)\\
  &  & + \int_0^{ch} e^{\int_{\tau}^{ch} a (s + r) \mathd r} \left(
  \int_0^{\tau} \frac{(\tau - \tau_1)^{q - 1}}{(q - 1) !} f^{(q)} (\mathbf{x}
  (s + \tau_1), s + \tau_1) \mathd s \right) \mathd \tau_1 .
\end{eqnarray*}
Using the left endpoint rule yields
\begin{eqnarray*}
  \mathbf{x} (t) & = & e^{\int_0^h a (s + \tau) \mathd \tau} \mathbf{x} (s) +
  f (\mathbf{x} (s), s)  \int_0^h e^{\int_{\tau}^h a (s + r) \mathd r} \mathd
  \tau +\mathcal{O} (h^2)\\
  & = & e^{\int_0^h a (s + \tau) \mathd \tau} \mathbf{x} (s) + h \varphi_1
  (h, a) f (\mathbf{x} (s), s) +\mathcal{O} (h^2) .
\end{eqnarray*}
\tmtextbf{Second order examples:}

The condition $b_2 (z) c_2 = \varphi_2 (z)$ implies $b_2 = \varphi_2 (z) /
c_2$ and we obtain
\begin{eqnarray}
  \widehat{\mathbf{x}} & = & e^{chA} \mathbf{x} (s) + ch \varphi_1  (chA) f
  (\mathbf{x} (s), s) \nonumber\\
  \mathbf{x} (t) & = & e^{hA} \mathbf{x} (s) + h \left( \varphi_1 (hA) -
  \frac{1}{c} \varphi_2 (hA) \right) f (\mathbf{x} (s), s) + \frac{h}{c}
  \varphi_2  (hA) f (\widehat{\mathbf{x}}, s + ch) . 
\end{eqnarray}
A second method is obtained by weakening the above condition to $b_2 (0) c_2 =
\varphi_2 (0) = 1 / 2$ giving
\begin{eqnarray}
  \widehat{\mathbf{x}} & = & e^{chA} \mathbf{x} (s) + ch \varphi_1  (chA) f
  (\mathbf{x} (s), s) \nonumber\\
  \mathbf{x} (t) & = & e^{hA} \mathbf{x} (s) + h \varphi_1  (hA)  \left( 1 -
  \frac{1}{2 c} \right) f (\mathbf{x} (s), s) + \frac{h}{2 c} \varphi_1  (hA)
  f (\widehat{\mathbf{x}}, s + ch) . 
\end{eqnarray}
In some cases this method can suffer from order reduction and not reach order
2 of convergence. Moreover, setting $c = 1 / 2$ gives the exponential midpoint
method:
\begin{eqnarray}
  \widehat{\mathbf{x}} & = & e^{chA} \mathbf{x} (s) + ch \varphi_1  (chA) f
  (\mathbf{x} (s), s) \nonumber\\
  \mathbf{x} (t) & = & e^{hA} \mathbf{x} (s) + h \varphi_1  (hA) f
  (\widehat{\mathbf{x}}, s + ch) . 
\end{eqnarray}
The above order 1 and 2 exponential methods are represented by the following
\tmtextit{exponential} Butcher tableaux
\[ \begin{array}{c|c}
     0 & 0\\
     \hline
     & \varphi_1
   \end{array} \qquad \begin{array}{l|ll}
     0 &  & \\
     c_2 & c_2 \varphi_{1, 2} & \\
     \hline
     & \varphi_1 - \frac{1}{c_2} \varphi_2 & \frac{1}{c_2} \varphi_2
   \end{array} \qquad \begin{array}{l|ll}
     0 &  & \\
     c_2 & c_2 \varphi_{1, 2} & \\
     \hline
     & \left( 1 - \frac{1}{2 c_2} \right) \varphi_1 & \frac{1}{2 c_2}
     \varphi_1
   \end{array} \qquad \begin{array}{l|ll}
     0 &  & \\
     c_2 & c_2 \varphi_{1, 2} & \\
     \hline
     & 0 & \varphi_1
   \end{array} \]
We check that when formally setting $A = 0$ then the first two methods are
identical and give the generic 2nd order RK method, the choice $c = 1$ gives
the Heun method and the choice $c = 1 / 2$ gives the midpoint method.

\tmtextbf{Fourth order methods}

It can be shown that ERK methods need at least 5 stages to achieve order 4. By
setting formally $A = 0$ these methods do not have a non-exponential
counterpart to our knowledge.

\tmtextbf{5-stage sequential} We have a fourth-order ERK scheme given by
\[ \begin{array}{c|ccccc}
     0 &  &  &  &  & \\
     \frac{1}{2} & \frac{1}{2} \varphi_{1, 2} &  &  &  & \\
     \frac{1}{2} & \frac{1}{2} \varphi_{1, 3} - \varphi_{2, 3} & \varphi_{2,
     3} &  &  & \\
     1 & \varphi_{1, 4} - 2 \varphi_{2, 4} & \varphi_{2, 4} & \varphi_{2, 4} &
     & \\
     \frac{1}{2} & \frac{1}{2} \varphi_{1, 5} - 2 a_{5, 2} - a_{5, 4} & a_{5,
     2} & a_{5, 2} & \frac{1}{4} \varphi_{2, 5} - a_{5, 2} & \\
     \hline
     & \varphi_1 - 3 \varphi_2 + 4 \varphi_3 & 0 & 0 & - \varphi_2 + 4
     \varphi_3 & 4 \varphi_2 - 8 \varphi_3
   \end{array} \]
with
\[ a_{5, 2} = c_5 \varphi_{2, 5} - \varphi_{3, 4} + c_5^2 \varphi_{2, 4} - c_5
   \varphi_{3, 5} . \]
This can also be represented by Rosenbrock-like Butcher tableau
\[ \begin{array}{c|ccccc}
     0 &  &  &  &  & \\
     \frac{1}{2} & \frac{1}{2} \varphi_{1, 2} &  &  &  & \\
     \frac{1}{2} & \frac{1}{2} \varphi_{1, 3} & \varphi_{2, 3} &  &  & \\
     1 & \varphi_1 & \varphi_2 & \varphi_2 &  & \\
     \frac{1}{2} & \frac{1}{2} \varphi_{1, 5} & a_{5, 2} & a_{5, 2} &
     \frac{1}{4} \varphi_{2, 5} - a_{5, 2} & \\
     \hline
     & \varphi_1 & 0 & 0 & 4 \varphi_3 - \varphi_2 & 4 \varphi_2 - 8
     \varphi_3
   \end{array} \]
with
\[ a_{5, 2} = c_5 \varphi_{2, 5} - c_3 \varphi_{3, 3} + c_5^2 \varphi_2 -
   \varphi_3 . \]
This traduces into (recall that $D_i = f (\mathbf{x}_i, s + c_i h) - f
(\mathbf{x} (s), s)$)
\begin{eqnarray*}
  \mathbf{x} (t) & = & \mathbf{x} (s) + h (\varphi_1 (hA)) F (\mathbf{x} (s),
  s) + h (4 \varphi_3 (hA) - \varphi_2 (hA)) D_4\\
  &  & + h (4 \varphi_2 (hA) - 8 \varphi_3 (hA)) D_5\\
  \mathbf{x}_2 & = & \mathbf{x} (s) + c_2 h \varphi_1  (c_2 hA) F (\mathbf{x}
  (s), s)\\
  \mathbf{x}_3 & = & \mathbf{x} (s) + hc_3 \varphi_1  (c_3 hA) F (\mathbf{x}
  (s), s) + h \varphi_2  (c_3 hA) D_2\\
  \mathbf{x}_4 & = & \mathbf{x} (s) + h \varphi_1  (hA) F (\mathbf{x} (s), s)
  + h \varphi_2  (hA)  (D_2 + D_3)\\
  \mathbf{x}_5 & = & \mathbf{x} (s) + h \varphi_1  (hA) F (\mathbf{x} (s), s)
  + ha_{5, 2}  (hA)  (D_2 + D_3) + h (c_5^2 \varphi_2 (c_5 hA) - a_{5, 2}
  (hA)) D_4
\end{eqnarray*}
Inspired by this, we deduce the following fourth order Algorithm
\ref{alg:dpm-solver-4-appendix} specialized for DPMs in the VP noise
prediction regime.

\begin{algorithm}[htbp]
    \centering
    \caption{DPM-Solver-4}\label{alg:dpm-solver-4-appendix}
    \begin{algorithmic}[1]
    \State \textbf{def} DPM-Solver-4$(\epsilonv_\theta,\tilde\x_{t_{i-1}}, t_{i-1}, t_i, r=0.5)$:
        \State\quad $h_i \gets \lambda_{t_{i}} - \lambda_{t_{i-1}}$
    \State
        \quad $s_{2} \gets t_\lambda\left(\lambda_{t_{i-1}} + r h_i\right), \quad s_{3} \gets t_\lambda\left(\lambda_{t_{i-1}} + r h_i\right)$
    \State
        \quad $s_{4} \gets t_\lambda\left(\lambda_{t_{i-1}} +  h_i\right), \quad s_{5} \gets t_\lambda\left(\lambda_{t_{i-1}} + r h_i\right)$
    \State 
        \quad $\kv_{1} \gets \epsilonv_\theta(\tilde \x_{t_{i-1}},t_{i-1})$
    \State 
        \quad $\kv_{2} \gets  \frac{\alpha_{s_2}}{\alpha_s} \tilde \x_{t_{i-1}} -
  \sigma_{s_2} (e^{r h} - 1)  \tmmathbf{k_1}$
    \State 
        \quad $\kv_{3} \gets \frac{\alpha_{s_3}}{\alpha_s} \tilde \x_{t_{i-1}} -
  \sigma_{s_3} (e^{r h} - 1) \tmmathbf{k_1} - \sigma_{s_3} \left( 4 \frac{e^{r
  h} - 1}{h } - 2 \right) [\epsilon_{\theta} (\tmmathbf{k_2}, s_2) -
  \tmmathbf{k_1}]$
    \State 
        \quad $\kv_{4} \gets \frac{\alpha_{s_4}}{\alpha_s} \tilde \x_{t_{i-1}} -
  \sigma_{s_4} (e^h - 1) \tmmathbf{k_1} - \sigma_{s_4} \left( \frac{e^h - 1}{h
  } - 1 \right) [\epsilon_{\theta} (\tmmathbf{k_3}, s_3) + \epsilon_{\theta}
  (\tmmathbf{k_2}, s_2) - 2 \tmmathbf{k_1}]$
    \State 
        \qquad $A= \sigma_{s_5} (e^{rh} - 1) \tmmathbf{k_1} - \frac{1}{4} \sigma_{s_5} \left(
  \frac{e^h - 1}{h } - 1 \right) [\tmmathbf{k_1} + \epsilon_{\theta}
  (\tmmathbf{k_2}, s_2) + \epsilon_{\theta} (\tmmathbf{k_3}, s_3)]$
    \State 
        \qquad $B= \sigma_{s_5} \left( \frac{e^{r h} - 1}{h } - \frac{1}{2} \right)
  [\tmmathbf{k_1} + 4 \epsilon_{\theta} (\tmmathbf{k_2}, s_2) + 4
  \epsilon_{\theta} (\tmmathbf{k_3}, s_3) - \epsilon_{\theta} (\tmmathbf{k_4},
  s_4)]$
    \State 
        \qquad $C= \sigma_{s_5} \left( \frac{e^h - 1 + 4 (e^{r h} - 1) - 3 h}{h^2} - 1
  \right) [- \tmmathbf{k_1} - \epsilon_{\theta} (\tmmathbf{k_2}, s_2) -
  \epsilon_{\theta} (\tmmathbf{k_3}, s_3) + \epsilon_{\theta} (\tmmathbf{k_4},
  s_4)]$
    \State 
        \quad $\kv_{5} \gets \frac{\alpha_{s_5}}{\alpha_{t_{i-1}}} \tilde \x_{t_{i-1}} - A -B - C$
    \State 
        \qquad $D=\sigma_t (e^h -
  1) \tmmathbf{k_1} - \sigma_t \left( \frac{e^h - 1}{h } - 1 \right) [4
  \epsilon_{\theta} (\tmmathbf{k_5}, s_5) - \epsilon_{\theta} (\tmmathbf{k_4},
  s_4) - 3 \tmmathbf{k_1}] $
    \State 
        \qquad $E=\sigma_t \left( 4 \frac{e^h - 1 - h}{h^2} - 2 \right)
  [\tmmathbf{k_1} + \epsilon_{\theta} (\tmmathbf{k_4}, s_4) - 2
  \epsilon_{\theta} (\tmmathbf{k_5}, s_5)]$
    \State
        \quad$\tilde \x_{t_{i}} \gets \frac{\alpha_t}{\alpha_{t_{i-1}}} \tilde \x_{t_{i-1}} - D - E$
    \State\quad Return $\x_{t_{i}}$
    \end{algorithmic}
\end{algorithm}

\subsection{Derivative-free exponential SDE schemes}

Let $\mathbf{x}_t$ be the path at the continuous limit $h \to 0$, and $\{
\widehat{\mathbf{x}}_t \}_{t_0}^{t_M}$ be the discretized numerical path,
computed by a numerical scheme $\mathcal{S}$ with $M = 1 / h$ steps of length
$h > 0$. Then, $\mathcal{S}$ has
\begin{enumerate}
  \item \tmtextit{strong order of convergence} $\gamma$ if there is $K > 0$
  such that
  \begin{equation}
    \mathbb{E} [| \mathbf{x}_{t_M} - \widehat{\mathbf{x}}_{t_M} |] \le
    Kh^{\gamma},
  \end{equation}
  \item \tmtextit{weak order of convergence} $\beta$ if there is $K > 0$ and a
  function class $\mathcal{K}$ such that
  \begin{equation}
    |\mathbb{E}[\phi (\mathbf{x}_{t_M})] -\mathbb{E}[\phi
    (\widehat{\mathbf{x}}_{t_M})] | \le Kh^{\beta}, \qquad \forall \phi
    (\cdot) \in \mathcal{K}.
  \end{equation}
\end{enumerate}
Strong convergence is concerned with the precision of the path, while the weak
convergence is with the precision of the moments. As, for diffusion models,
the center of attention is the evolution of the probability densities rather
than that of the noising process of single data samples, weak convergence is
enough to guarantee the well-conditioning \ of our numerical schemes.
Moreover, when the diffusion coefficient vanishes, then both strong and weak
convergence (with the choice $\phi = \tmop{id}$) reduce to the usual
deterministic convergence criterion for ODEs.

\subsubsection{Strong and Weak Stochastic Runge-Kutta approach}

In all what follows $\omega$ is considered a $d$-dimensional Wiener process
(with identity diffusion matrix).

Consider the following SDE
\[ \mathd \mathbf{x} (t) = f (\mathbf{x} (t), t) \mathd t + g (t) \mathd
   \omega (t), \]
where $g (t) = \hat{g} (t) \cdot \tmop{Id}_d$ is considered here as a diagonal
matrix with identical diagonal entries $\hat{g} (t)$. Given an initial value
independent of $\omega$, the integral form of $\mathbf{x} (t)$ is given by
\begin{eqnarray*}
  \mathbf{x} (t) & = & \mathbf{x} (s) + \int_s^t f (\mathbf{x} (\tau), \tau) d
  \tau + \int_s^t g (\tau) \mathd \omega (\tau) .
\end{eqnarray*}
The idea underlying stochastic numerical schemes is very similar to the one
used in the deterministic approach, that is to take expansions of the terms
inside the integrals based at the integral's initial value and replace the
obtained derivatives that appear by interpolated approximations. A key
difference here is that as we have to consider It{\^o}-Taylor expansions, the
infinitesimal operators are different but most importantly most of the
stochastic iterated integrals will need to be approximated in an appropriate
sense, whenever that is possible. We will develop this expansion up to triple
integrals.

\subsubsection{Truncated It{\^o}-Taylor expansions}\label{app:inf-op}

Applying It{\^o} formula to $h = f$ or $g$ yields
\begin{eqnarray*}
  h (\mathbf{x} (t), t) & = & h (\mathbf{x} (s), s) + \int_s^t g (\tau) \cdot
  \partial_{\mathbf{x}} h (\mathbf{x} (\tau), \tau) \mathd \omega (\tau)\\
  &  & + \int_s^t \left( \partial_t h (\mathbf{x} (\tau), \tau) + f
  (\mathbf{x} (\tau), \tau) \cdot \partial_{\mathbf{x}} h (\mathbf{x} (\tau),
  \tau) + \frac{g^2 (\tau)}{2} \partial_{\mathbf{x}^2}^2 h (\mathbf{x} (\tau),
  \tau) \right) \mathd \tau .
\end{eqnarray*}
This allows us to define two differential operators $L, G$ as
\begin{eqnarray}
  L_t & = & \partial_t + f (\mathbf{x} (t), t) \cdot \partial_{\mathbf{x}} +
  \frac{g^2 (t)}{2} \cdot \partial^2_{\mathbf{x}}  \label{diff-op1}\\
  G_t & = & g (t) \cdot \partial_{\mathbf{x}} . 
\end{eqnarray}
In particular $L_t g (t) = \partial_t g (t)$ and $G_t g (t) = 0$. With this
notation, we have
\begin{eqnarray*}
  h (\mathbf{x} (t), t) & = & h (\mathbf{x} (s), s) + \int_s^t L_t h
  (\mathbf{x} (\tau), \tau) \mathd \tau + \int_s^t G_t h (\mathbf{x} (\tau),
  \tau) \mathd \omega (\tau),
\end{eqnarray*}
so our solution reads
\begin{eqnarray*}
  \mathbf{x} (t) & = & \mathbf{x} (s) + \int_s^t f (\mathbf{x} (\tau_1),
  \tau_1) \mathd \tau_1 + \int_s^t g (\tau_1) \mathd \omega (\tau_1)\\
  & = & \mathbf{x} (s) + \int_s^t \left( f (\mathbf{x} (t), t) +
  \int_s^{\tau_1} L_t f (\mathbf{x} (\tau_2), \tau_2) \mathd \tau_2 +
  \int_s^{\tau_1} G_t f (\mathbf{x} (\tau_2), \tau_2) d \omega (\tau_2)
  \right) \mathd \tau_1\\
  &  & + \int_s^t \left( g (t) + \int_s^{\tau_1} L_t g (\tau_2) \mathd \tau_2
  + \int_s^{\tau} G_t g (\tau_2) \mathd \omega (\tau_2) \right) \mathd \omega
  (\tau_1)\\
  & = & \mathbf{x} (s) + f (\mathbf{x} (s), s) h + g (t)  (\omega (t) -
  \omega (s)) + R_1 .
\end{eqnarray*}
Now $G_t g (\tau_2) = 0$ and
\[ g (s) + \int_s^{\tau_1} L_t g (\tau_2) \mathd \tau_2 = g (s) +
   \int_s^{\tau_1} \partial_t g (\tau_2) \mathd \tau_2 = g (s) + g (\tau_1) -
   g (s) = g (\tau_1) . \]
So we have
\[ \int_s^t \left( g (s) + \int_s^{\tau_1} L_t g (\tau_2) \mathd \tau_2 +
   \int_s^{\tau} G_t g (\tau_2) \mathd \omega (\tau_2) \right) \mathd \omega
   (\tau_1) = g (s)  (\omega (t) - \omega (s)), \]
and now our solution reads
\[ \mathbf{x} (t) = \mathbf{x} (s) + f (\mathbf{x} (s), s) h + g (s)  (\omega
   (t) - \omega (s)) + R_1, \]
where
\begin{eqnarray*}
  R_1 & = & \int_s^t \int_s^{\tau_1} L_t f (\mathbf{x} (\tau_2), \tau_2)
  \mathd \tau_2 \mathd \tau_1 + \int_s^t \int_s^{\tau_1} G_t f (\mathbf{x}
  (\tau_2), \tau_2) \mathd \omega (\tau_2) \mathd \tau_1 .
\end{eqnarray*}
Now we have
\begin{eqnarray*}
  &  & L_t f (\mathbf{x} (t), t)\\
  & = & L_t f (\mathbf{x} (s), s) + \int_s^t g (\mathbf{x} (\tau), \tau)
  \cdot \partial_{\mathbf{x}} L_t f (\mathbf{x} (\tau), \tau) \mathd \omega
  (\tau)\\
  &  & + \int_s^t \left( \partial_t L_t f (\mathbf{x} (\tau), \tau) + f
  (\mathbf{x} (\tau), \tau) \cdot \partial_{\mathbf{x}} L_t f (\mathbf{x}
  (\tau), \tau) + \frac{g^2 (\mathbf{x} (\tau), \tau)}{2}
  \partial_{\mathbf{x}^2}^2 L_t f (\mathbf{x} (\tau), \tau) \right) \mathd
  \tau\\
  & = & L_t f (\mathbf{x} (s), s) + \int_s^t L^2_t f (\mathbf{x} (\tau),
  \tau) \mathd \tau + \int_s^t G_t L_t f (\mathbf{x} (\tau), \tau) \mathd
  \omega (\tau),
\end{eqnarray*}
and
\begin{eqnarray*}
  &  & G_t f (\mathbf{x} (t), t)\\
  & = & G_t f (\mathbf{x} (s), s) + \int_s^t g (\mathbf{x} (\tau), \tau)
  \cdot \partial_{\mathbf{x}} G_t f (\mathbf{x} (\tau), \tau) \mathd \omega
  (\tau) + \\
  &  & \int_s^t \left( \partial_t G_t f (\mathbf{x} (\tau), \tau) + f
  (\mathbf{x} (\tau), \tau) \cdot \partial_{\mathbf{x}} G_t f (\mathbf{x}
  (\tau), \tau) + \frac{g^2 (\mathbf{x} (\tau), \tau)}{2}
  \partial_{\mathbf{x}^2}^2 G_t f (\mathbf{x} (\tau), \tau) \right) \mathd
  \tau\\
  & = & G_t f (\mathbf{x} (s), s) + \int_s^t L_t G_t f (\mathbf{x} (\tau),
  \tau) \mathd \tau + \int_s^t G^2_t f (\mathbf{x} (\tau), \tau) \mathd \omega
  (\tau) .
\end{eqnarray*}
Now, if we denote $\mathd \omega^0 (\tau) = \mathd \tau$, $\mathd \omega^1
(\tau) = \mathd \omega (\tau)$ and
\[ I_{(i)} = \int_s^t \mathd \omega^i (\tau_1)  \qquad I_{(i, j)} = \int_s^t
   \int_s^{\tau_1} \mathd \omega^j (\tau_2) \mathd \omega^i (\tau_1)  \qquad
   i, j \in \{0, 1\}, \]
applying the same procedure to $R_1$ leads to
\begin{eqnarray*}
  &  & R_1\\
  & = & \int_s^t \int_s^{\tau_1} L_t f (\mathbf{x} (\tau_2), \tau_2) \mathd
  \tau_2 \mathd \tau_1 + \int_s^t \int_s^{\tau_1} G_t f (\mathbf{x} (\tau_2),
  \tau_2) \mathd \omega (\tau_2) \mathd \tau_1\\
  & = & \int_s^t \int_s^{\tau_1} \left[ L_t f (\mathbf{x} (t), t) +
  \int_s^{\tau_2} L^2_t f (\mathbf{x} (\tau_3), \tau_3) \mathd \tau_3 +
  \int_s^{\tau_2} G_t L_t f (\mathbf{x} (\tau_3), \tau_3) \mathd \omega
  (\tau_3) \right] \mathd \tau_2 \tau_1\\
  & + &  \int_s^t \int_s^{\tau_1} \left[ G_t f (\mathbf{x} (t), t) +
  \int_s^{\tau_2} L_t G_t f (\mathbf{x} (\tau_3), \tau_3) \mathd \tau +
  \int_s^{\tau_2} G^2_t f (\mathbf{x} (\tau_3), \tau_3) \mathd \omega (\tau_3)
  \right] \mathd \omega (\tau_2) \mathd \tau_1\\
  & = & L_t f (\mathbf{x} (t), t)  \int_s^t \int_s^{\tau_1} \mathd \tau_2
  \mathd \tau_1 + G_t f (\mathbf{x} (t), t)  \int_s^t \int_s^{\tau_1} \mathd
  \omega (\tau_2) \mathd \tau_1\\
  & + &  \int_s^t \int_s^{\tau_1} \int_s^{\tau_2} L^2_t f (\mathbf{x}
  (\tau_3), \tau_3) \mathd \tau_3 \mathd \tau_2 \mathd \tau_1 + \int_s^t
  \int_s^{\tau_1} \int_s^{\tau_2} G_t L_t f (\mathbf{x} (\tau_3), \tau_3)
  \mathd \omega (\tau_3) \mathd \tau_2 \mathd \tau_1\\
  & + &  \int_s^t \int_s^{\tau_1} \int_s^{\tau_2} L_t G_t f (\mathbf{x}
  (\tau_3), \tau_3) \mathd \tau_3 \mathd \omega (\tau_2) \mathd \tau_1\\
  & + &  \int_s^t \int_s^{\tau_1} \int_s^{\tau_2} G^2_t f (\mathbf{x}
  (\tau_3), \tau_3) \mathd \omega (\tau_3) \mathd \omega (\tau_2) \mathd
  \tau_1\\
  & = & L_t f (\mathbf{x} (s), s) I_{(0, 0)} + G_t f (\mathbf{x} (s), s)
  I_{(0, 1)} + L^2_t f (\mathbf{x} (s), s) I_{(0, 0, 0)} + G_t L_t f
  (\mathbf{x} (s), s) I_{(0, 0, 1)}\\
  &  & + L_t G_t f (\mathbf{x} (s), s) I_{(0, 1, 0)} + G^2_t f (\mathbf{x}
  (s), s) I_{(1, 1, 0)} + R_2,
\end{eqnarray*}
and with $R_2$ consisting on quadruple integrals. As such, our solution now
reads
\begin{eqnarray*}
  \mathbf{x} (t) & = & \mathbf{x} (s) + f (\mathbf{x} (s), s) h + g (s) 
  (\omega (t) - \omega (s)) + L_t f (\mathbf{x} (s), s) I_{(0, 0)} + G_t f
  (\mathbf{x} (s), s) I_{(0, 1)}\\
  &  & + L^2_t f (\mathbf{x} (s), s) I_{(0, 0, 0)} + G_t L_t f (\mathbf{x}
  (s), s) I_{(0, 0, 1)}\\
  &  & + L_t G_t f (\mathbf{x} (s), s) I_{(0, 1, 0)} + G^2_t f (\mathbf{x}
  (s), s) I_{(1, 1, 0)} + R_2 .
\end{eqnarray*}
Now, in the SDE regime, one cannot get rid of the iterated It{\^o} integral
and so stochastic RK methods cannot be derived as simple extensions of their
deterministic counterparts. In order to continue we now take into account the
fact that the diffusion SDE has additive and diagonal noise. In this case,
both the It{\^o} and the Stratonovich SDE coincide.

\tmtextbf{Iterated integrals}

Now, for simplicity, set $t = 0$. We then have $I_{(0)} = h$, $I_{(0, 0)} =
\int_0^h \int_0^{\tau_1} \mathd \tau_2 \mathd \tau_1 = \frac{h^2}{2}$, $I_{(0,
0, 0)} = \frac{h^3}{6}$. Now notice that
\begin{eqnarray*}
  I_{(1)} & \assign & \hat{w}_h \sim \mathcal{N} (0, h)\\
  I_{(0, 1)} & \assign & \hat{z}_h \assign \int_0^h \int_0^{\tau_1} \mathd
  \omega (\tau_2) \mathd \tau_1 = \underset{n \rightarrow \infty}{\lim}
  \frac{h}{n}  \sum^{n - 1}_{i = 0} \sum^i_{j = 1} \epsilon_j, \qquad
  \epsilon_j \sim \mathcal{N} \left( 0, \frac{h}{n} \right) .
\end{eqnarray*}
Additionally, $\hat{w}_h$ and $\hat{z}_h$ satisfy $\mathbb{E} [\hat{w}_h^2] =
h$,
\begin{eqnarray*}
  \mathbb{E} [(\hat{w}_h h - \hat{z}_h)^2] & = & \mathbb{E} \left[ \left(
  \int_0^h \tau \mathd \omega (\tau) \right)^2 \right] = \frac{1}{3} h^3\\
  \mathbb{E} [\hat{w}_h  \hat{z}_h] & = & \mathbb{E} \left[ \hat{w}_h 
  \int_0^h \tau \mathd \omega (\tau) \right] =\mathbb{E} \left[ \int_0^h \tau
  \mathd \tau \right] = \frac{1}{2} h^2\\
  \mathbb{E} [\hat{z}_h^2] & = & \mathbb{E} [(\hat{w}_h h - \hat{z}_h)^2 - h^2
  \hat{w}_h^2 + 2 h \hat{w}_h  \hat{z}_h] = \frac{1}{3} h^3 .
\end{eqnarray*}

\subsubsection{Integral approximations}

\tmtextbf{Weak Approximations}

When crafting weak stochastic approximations to SDEs one may replace multiple
It{\^o} integrals by other random variables satisfying the corresponding
moment conditions. We will denote $\hat{I}_{\alpha}$ the approximation of
$I_{\alpha}$ for $\alpha$ a multi-index following \cite[Corollary
5.12.1]{KP1}. Of course, the deterministic integrals $I_{(0, \ldots, 0)}$ need not
to be approximated.

\tmtextbf{First order approximations}

The random variable $\hat{I}_{(1)}$ must satisfy \ for some constant $K$:
\[ |\mathbb{E}[\hat{I}_{(1)}] | + |\mathbb{E}[(\hat{I}_{(1)})^3] | +
   |\mathbb{E}[(\hat{I}_{(1)})^2 - h] | \leqslant Kh^2 \]
Two possible choices for $\hat{I}_{(1)}$ are either $\hat{I}_{(1)} \sim
\mathcal{N} (0, h)$ or $\hat{I}_{(1)}$ is a two-pointed distributed discrete
random variable with
\[ \mathbb{P} \left[ \hat{I}_{(1)} = \pm \sqrt{h} \right] = \frac{1}{2} . \]
\tmtextbf{Second order approximations}

The random variable $\hat{I}_{(1)}$ must satisfy for some constant $K$:
\[ |\mathbb{E}[\hat{I}_{(1)}] | + |\mathbb{E}[(\hat{I}_{(1)})^3] | +
   |\mathbb{E}[(\hat{I}_{(1)})^5] | + |\mathbb{E}[(\hat{I}_{(1)})^2 - h] | +
   |\mathbb{E}[(\hat{I}_{(1)})^4 - 3 h^2] | \leqslant Kh^2 . \]
Two possible choices for $\hat{I}_{(1)}$ are either $\hat{I}_{(1)} \sim
\mathcal{N} (0, h)$ or $\hat{I}_{(1)}$ is a three-pointed distributed random
variable with
\[ \mathbb{P} \left[ \hat{I}_{(1)} = \pm \sqrt{3 h} \right] = \frac{1}{6},
   \qquad \mathbb{P} [\hat{I}_{(1)} = 0] = \frac{2}{3} . \]
The rest follows from the above calculations:
\[ \hat{I}_{(0, 1)} = \frac{1}{2} h \hat{I}_{(1)}, \qquad i = 0, 1. \]
\tmtextbf{Third order approximations}

One can choose $\hat{I}_{(1)} \sim \mathcal{N} (0, h)$, $\hat{I}_{(0, 1)} \sim
\mathcal{N} \left( 0, \frac{1}{3} h^3 \right)$ satisfying $\mathbb{E}
[\hat{I}_{(1)}  \hat{I}_{(0, 1)}] = \frac{1}{2} h^2$.

Then, one can deduce the following:
\begin{eqnarray*}
  \hat{I}_{(1, 0, 0)} = \hat{I}_{(0, 1, 0)} = \hat{I}_{(0, 0, 1)} & = &
  \frac{1}{6} h^2  \hat{I}_{(1)}\\
  \hat{I}_{(0, 1, 1)} = \hat{I}_{(1, 0, 1)} = \hat{I}_{(1, 1, 0)} & = &
  \frac{1}{6} h (\hat{I}_{(1)}^2 - h) .
\end{eqnarray*}
Thus, we can write the solution weak approximation as
\begin{eqnarray}
  \mathbf{x} (t) & = & \mathbf{x} (s) + f (\mathbf{x} (s), s) h + g (s) 
  \hat{I}_{(1)} + L_t f (\mathbf{x} (s), s) \frac{h^2}{2} + R_2 \nonumber\\
  &  & + G_t f (\mathbf{x} (s), s)  \hat{I}_{(0, 1)} + L^2_t f (\mathbf{x}
  (s), s) \frac{h^3}{6} + G_t L_t f (\mathbf{x} (s), s) \frac{1}{6} h^2  \hat{I}_{(1)}  \nonumber\\
  &  & + L_t
  G_t f (\mathbf{x} (s), s) \frac{1}{6} h^2  \hat{I}_{(1)} + G^2_t f
  (\mathbf{x} (s), s) \frac{1}{6} h (\hat{I}_{(1)}^2 - h). 
\end{eqnarray}
An example of such a pair $(\hat{I}_{(1)}, \hat{I}_{(0, 1)}) = (\hat{w}_h,
\hat{z}_h)$ can be easily obtained as follows
\begin{eqnarray*}
  \left[ \begin{array}{c}
    \hat{w}_h\\
    \hat{z}_h
  \end{array} \right] & = & \left[ \begin{array}{cc}
    \sqrt{h} & 0\\
    \frac{h \sqrt{h}}{2} & \frac{h \sqrt{h}}{2 \sqrt{3}}
  \end{array} \right] \left[ \begin{array}{c}
    u_1\\
    u_2
  \end{array} \right], \qquad u_1, u_2 \overset{\text{i.i.d.}}{\sim}
  \mathcal{N} (0, 1) .
\end{eqnarray*}
Indeed, for such a pair we have
\begin{eqnarray*}
  \mathbb{E} \left[ \left[ \begin{array}{c}
    \hat{w}_h\\
    \hat{z}_h
  \end{array} \right] \left[ \begin{array}{cc}
    \hat{w}_h & \hat{z}_h
  \end{array} \right] \right] & = & \left[ \begin{array}{cc}
    h & \frac{h^2}{2}\\
    \frac{h^2}{2} & \frac{h^3}{3}
  \end{array} \right] .
\end{eqnarray*}
In light of the above expression, the truncated Taylor expansions we refer in
the main part of the paper consists on the consideration of only the
coefficients in $L_t^k$. The only noise noise contribution we will consider
corresponds to $g (s)  \hat{I}_{(1)}$.

\section{Experiment Details}
\label{app:exp-details}

We evaluate the Fréchet inception distance (FID) after generating 50K samples with each solver, and compare with the statistics of real-data. 
In our experiments we make use of the code from \cite{Karras2022edm} for continuously trained models as well as their reference FID stats\footnote{\url{https://nvlabs-fi-cdn.nvidia.com/edm/fid-refs/}} and that of \cite{dpm-solver} for discretely trained models. 

All the experiments of SEEDS for continuous-time models are parameterized within the EDM framework with the discretization of type 
EDM, linear schedule, and scaling none as described on \cite{Karras2022edm} in noise prediction mode unless explicitly stated. We use the SEEDS-3 method that has 3 NFEs per step and fixed step-size and report FID scores at NFEs divisible by 3. 

We leverage the explicit Langevin-like ``churn'' trick using in \cite{Karras2022edm} to add or remove noise in the sampling phase. Specifically, \cite{Karras2022edm} uses 4 hyper-parameters
$S_\text{churn}, S_\text{min}, S_\text{max}$ and $S_\text{noise}$ in which $S_\text{churn}$ controls the overall amount of stochasticity added before giving the input to the SEEDS-3 method when the noise level (or time step in EDM configuration) $t_i$ is contained in the noise interval $[S_\text{min}, S_\text{max}]$. It means that the EDM proposed sampler is stochastic under some conditions of those hyper-parameters and deterministic otherwise, while our method is completely stochastic. In our experiments, we set $S_\text{churn}=0$ except for ImageNet-64 EDM optimized model. We noticed that using the additional stochasticity indeed helps to improve the image quality as in Fig. \ref{fig:fid-cont} (c). Moreover, setting $S_\text{noise}$ slightly above 1 might correct the errors in earlier steps more effectively as indicated in \cite{Karras2022edm}.

\subsection{Pre-trained model specifications}
For producing the CIFAR-10 time-continuous results in Table \ref{table-fid-all}, we use the VP DDPM++ continuous architecture. These models are publicly available in conditional\footnote{\url{https://nvlabs-fi-cdn.nvidia.com/edm/pretrained/baseline/baseline-cifar10-32x32-cond-vp.pkl}} and unconditional\footnote{\url{https://nvlabs-fi-cdn.nvidia.com/edm/pretrained/baseline/baseline-cifar10-32x32-uncond-vp.pkl}} versions and were directly derived from \cite{song2020score} under the Apache 2.0 license. On the unconditional mode (Figure \ref{fig:fid-cont} (a-b)), we first generate the FID curves of 3 types of DPM-Solver (with orders 1, 2 and 3) using the updates from their official implementation\footnote{\url{https://github.com/LuChengTHU/dpm-solver}} in noise prediction mode. Taking profit of the tuning advancements proposed by \cite{Karras2022edm}, we used a linear noise schedule with $\beta_{\text{d}}=19.1$ and $\beta_{\min}=0.1$ that slightly differs from the original parameters from \cite{song2020score} but were proven beneficial. We set the end time of sampling $\varepsilon$ to $1$e-$4$ as recommended by \cite[Appendix D.2]{dpm-solver}. The values of all benchmark models for Figure \ref{fig:fid-cont} (b) were taken directly from tables provided by \cite{dpm-solver}.

In our FFHQ-64 experiments, we employ the unconditional VP pretrained\footnote{\url{https://nvlabs-fi-cdn.nvidia.com/edm/pretrained/baseline/baseline-ffhq-64x64-uncond-vp.pkl}} model provided by \cite{Karras2022edm}. 

For the CelebA-64 experiments, we use the pre-trained VP unconditional model whose checkpoint\footnote{\url{https://drive.google.com/file/d/1R_H-fJYXSH79wfSKs9D-fuKQVan5L-GR/view?usp=sharing}} is provided by \cite{song2020denoising}. We use the Type-1 discretization proposed in \cite{dpm-solver} to ensure compatibility of our method with the prescribed trained steps of such model. 

For ImageNet-64, we both use the baseline and the optimized pre-trained models given in \cite{Karras2022edm}. We note that the baseline is trained on the iDDPM class of model \cite{nichol2021improved}, which actually uses different preconditioning and thus the change of variables compared to the optimized model. The Figure \ref{fig:fid-cont} (c) was obtained using the EDM-preconditioned checkpoint\footnote{\url{https://nvlabs-fi-cdn.nvidia.com/edm/pretrained/edm-imagenet-64x64-cond-adm.pkl}}. The added noise settings of SEEDS-3 solver  were not subject to a grid-search optimization procedure. The chosen hyper-parameters were $S_{\tmop{churn}}=11$, $S_{\tmop{noise}}=1.003$, $S_{\tmop{min}}=0.05$, and $S_{\tmop{max}}=15$ but we are confident that this configuration can be optimized to further improve our results.

\subsection{Noise vs. Data Prediction approaches}
In Appendix B of \cite{lu2022dpm}, the authors compare DPM-Solver2 and DPM-Solver++(2S), which amounts on comparing in our framework the difference between the obtained exponential integrators for the PFO in the noise and data prediction regimes to detect exactly a coefficient on the non-linear term that is absent in the noise prediction regime. The term they find corresponds exactly to the difference between applying the variation of constants formula before (instead of after) replacing the score function with the desired neural network. In Tab. \ref{tab:grid-cifar-cont-dp-vs-np} we report both data and noise prediction SEEDS-3. At low NFEs the DP approach gives better results but stabilizes in high NFEs at a FID score that is worse than the one the NP approach reaches. 
 \begin{center}
\begin{table*}[ht]
\caption{Comparison between noise prediction $F_{\theta,t}$ and data prediction $D_{\theta,t}$ modes of SEEDS-3 on CIFAR-10 (VP uncond. cont.).}
\label{tab:grid-cifar-cont-dp-vs-np}
\vskip 0.15in
\begin{center}
\begin{small}
\begin{sc}
\begin{tabular}{lrrrrrrr}
    \toprule
  Method $\backslash$ NFE & 9 & 30 & 60 & 90 & 150 & 165 & 180\\
  \midrule
  SEEDS-3 data prediction & 60.75 & 22.42 & 12.47 & 2.95 & 2.51 & 2.54 & 2.55 \\
  SEEDS-3 noise prediction & 471.29 & 288.20 & 33.92 & 3.76 & 2.40 & \textbf{2.39} & 2.47 \\
  \bottomrule
\end{tabular}
\end{sc}
\end{small}
\end{center}
\end{table*}
\end{center}
\subsection{Low vs. High stage Solvers}
Similar to DPM-Solver \cite{dpm-solver}, the FID scores in Tab. \ref{table:SEEDS-vs-DPMSolver-low-NFEs} and \ref{table:SEEDS-vs-DPMSolver-high-NFEs} show that at low NFEs, higher stage methods performs more poorly while at higher NFEs, DPM-Solver-3 and SEEDS-3 are better than their counterparts with 1 and 2 stages.
\begin{center}
\begin{table*}[ht]
 \centering
\caption{FID comparison between SEEDS (Ours) and DPM-Solver for low NFEs on CIFAR-10 VP uncond. discrete. We recomputed the DPM-Solver score using the "non-deep" model while \cite{dpm-solver} reports results for the "deep" architecture. The symbol $^{\star}$ is used when using 1 NFE more and $^{\dagger}$ when using 1 NFE less because the given NFE cannot be divided by 2 or 3. This corresponds to Figure \ref{fig:fid-cont} (a).}
\label{table:SEEDS-vs-DPMSolver-low-NFEs}
\begin{center}
\begin{small}
\begin{sc}
\begin{tabular}{lrrrrrrrr}
    \toprule
  Method  $\backslash$ NFE &  10 & 12 & 15 & 20 & 30 & 40 & 50 & 100\\
    \midrule
    DPM-Solver-1 & 22.90 & 17.73 & 13.36 & 9.78 & 6.87 & 5.77 & 5.17 & 4.22 \\
    DPM-Solver-2 & \textbf{12.22} & \textbf{6.52} & $^{\star}$\textbf{4.55} & -- & 3.75 & 3.68 & 3.64 & 3.60 \\
    DPM-Solver-3 & $^{\dagger}$66.92 & 9.72 & 5.32 & $^{\star}$3.83 & \textbf{3.66} & -- & $^{\star}$\textbf{3.61} & $^{\dagger}$3.58 \\
    SEEDS-1 & 303.48 & 239.79 & 279.84 & 192.68 & 84.78 & 45.26 & 28.18 & 8.24 \\
    SEEDS-2 & 481.09 & 473.48 & $^{\star}$430.98 & 305.88 & 223.01 & 51.41 & 11.10 & \textbf{3.19} \\
    SEEDS-3 & $^{\dagger}$483.04 & 482.19 & 479.63 & $^{\star}$462.61 & 280.48 & $^{\dagger}$247.44 & $^{\star}$62.62 & $^{\dagger}$3.53 \\
  \bottomrule
\end{tabular}
\end{sc}
\end{small}
\end{center}
\end{table*}
\end{center}
 \begin{center}
\begin{table*}[ht]
 \centering
\caption{FID comparison between SEEDS (Ours) and DPM-Solver for high NFEs on CIFAR-10 VP uncond. discrete. We recomputed the DPM-Solver score using the "non-deep" model while \cite{dpm-solver} reports results for the "deep" architecture. The symbol $^{\star}$ is used when using 1 NFE more because the given NFE cannot be divided by 2 or 3. This corresponds to Figure \ref{fig:fid-cont} (a).}
\label{table:SEEDS-vs-DPMSolver-high-NFEs}
 \vskip 0.15in
\begin{center}
\begin{small}
\begin{sc}
\begin{tabular}{lrrrr}
    \toprule
  Method  $\backslash$ NFE & 150 & 200 & 300 & 510 \\
  \midrule
  DPM-Solver-3 & 3.59 & $^{\star}$3.58 & -- & 3.58\\
  SEEDS-1 & -- &  4.07 & 3.40 & -- \\
  SEEDS-3 & 3.12 & $^{\star}$3.08 & 3.14 & 3.24 \\
  \bottomrule
\end{tabular}
\end{sc}
\end{small}
\end{center}
\end{table*}
\end{center}
\subsection{Deterministic vs. Stochastic Solvers}
\label{app:tables}

Deterministic solvers as DPM-Solver \cite{dpm-solver} are fast and well-adapted to applications in which speed is the most concern. As shown in \cite{dpm-solver}, Table \ref{table:SEEDS-vs-DPMSolver-low-NFEs} and \ref{table:SEEDS-vs-DPMSolver-high-NFEs}, DPM-Solver converges to a local minimum at early steps and cannot be improved in large NFEs. Moreover, preconditioned deterministic solver in EDM gives optimal quality on unconditional CIFAR-10 \cite[Fig. 5 (b)]{Karras2022edm}. However, for more complicated datasets as ImageNet64, the stochasticity indeed helps improve the samples quality \cite[Fig. 5 (c)]{Karras2022edm}. We can consider the random noise as a corrector that approaches a better local or even global minimum. At $S_{\text{churn}}=0$, our SEEDS-3 with stochasticity gives lower FID score than EDM deterministic Heun
(see Fig. \ref{fig:fid-cont} (c)). SEEDS-3 also reaches the best quality prior to the number steps needed in Euler Maruyama and other solvers as in Table \ref{table:SEEDS-vs-other-solvers}. Table \ref{table:SEEDS-vs-0} completes the analysis in Table \ref{table-fid-all} for CIFAR-10 showing FID scores for varying NFEs and multiple versions of the EDM solver, depending on the chosen optimization hyper-parameters. Tables \ref{table:SEEDS-vs-1} to \ref{table:SEEDS-vs-3} complete the findings in Table \ref{table-fid-all} for the remaining used pretrained models. In particular, we report the exact FID values of SEEDS in the low NFE regime.

\begin{center}
\begin{table*}[!t]
 \centering
\caption{FID comparison between SEEDS-3 (Ours) and other solvers on CIFAR-10 VP unconditional discrete. The symbol $^{\star}$ is used when using 1 NFE more and $^{\dagger}$ when using 1 NFE less because the given NFE cannot be divided by 2 or 3. This corresponds to  Figure \ref{fig:fid-cont} (b).}
\label{table:SEEDS-vs-other-solvers}
\vskip 0.15in
\begin{center}
\begin{small}
\begin{sc}
\begin{tabular}{lrrrrrrr}
    \toprule
  Method  $\backslash$ NFE & 10 & 12 & 15 & 20 & 50 & 200 & 1000 \\
  \midrule
  Euler-Maruyama & 278.67 & 246.29 & 197.63 & 137.34 & 32.63 & 4.03 & 3.16 \\
  Analytic DDPM & 35.03 & 27.69 & 20.82 & 15.35 & 7.34 & 4.11 & 3.84 \\
  Analytic DDIM & 14.74 & 11.68 & 9.16 & 7.20 & 4.28 & 3.60 & 3.86 \\
  DDIM & 13.58 & 11.02 & 8.92 & 6.94 & 4.73 & 4.07 & 3.95 \\
  DPM-Solver-3 & $^\dagger$\textbf{6.92} & \textbf{9.72} & \textbf{5.32} & $^{\star}$\textbf{3.83} & $^\star$\textbf{3.61} & $^\star$3.58 & --\\
  SEEDS-3 & $^\dagger$483.04 & 482.19 & 479.63 & $^\star$462.61 & $^\star$62.62 & $^\star$\textbf{3.08} & -- \\
  \bottomrule
\end{tabular}
\end{sc}
\end{small}
\end{center}
\end{table*}
\end{center}

\begin{table}[ht]
  \begin{center}
   \caption{FID comparison of different solvers on CIFAR-10-uncond-vp-continuous. The symbol $^{\star}$ is used when using 1 NFE more and $^{\dagger}$ when using 1 NFE less because the given NFE cannot be divided by 2 or 3. This corresponds to  Figure \ref{fig:fid-cont} (b). All values for EDM are retrieved from the Latex source file SdePlotNfe.tex in the arXiv version of \cite{Karras2022edm}.}
    \label{table:SEEDS-vs-0}
    \begin{center}
      {\small  \textsc{ \begin{tabular}{lrrrrrcc}
        \toprule
        Method $\backslash$ NFE & 30 & 48 & 63 & 126 & 165 & 180 & 511\\
        \midrule
        GottaGoFast & - & 82.42 & - & - &  2.75 & 2.44 & -\\
        EDM ($S_{\text{churn}} = 0$) & $^\star$3.10 &
        $^\dagger$2.99 & 2.94 & $^\star$2.98 & - & - & 2.93\\
        EDM ($S_{\text{tmin,tmax}} + S_{\text{noise}} = 1$) &
        $^\star$3.44 & $^\dagger$2.89 & 2.77 & 2.72 & - & - &
        2.69\\
        EDM ($S_{\text{noise}} = 1$) & $^\star$3.99 &
        $^\dagger$3.13 & 2.90 & 2.60 & - & - & 2.55\\
        EDM ($S_{\text{tmin,tmax}} = [0, \infty]$) & $^\star$3.43 &
        $^\dagger$2.87 & 2.71 & 2.52 & - & - & 2.54\\
        EDM (optimized) & $^\star$3.77 & $^\dagger$3.08 & 2.84
        & 2.47 & - & - & \textbf{2.27}\\
        DPM-Solver (non-deep) & 2.95 & 2.90 &  2.89 & - & - & - & -\\
        SEEDS ($S_{\text{churn}} = 0$) & $^\star$288.20 &
        $^\dagger$90.25 & 33.91 & 2.45 & \textbf{2.39} & 2.47 & -\\
        \bottomrule
      \end{tabular}}}
    \end{center}
  \end{center}
 
\end{table}

\begin{table}[ht]
  \begin{center}
    \caption{FID comparison of different solvers on CelebA 64x64 discrete. The symbol $^{\star}$ is used when using 1 NFE more and $^{\dagger}$ when using 1 NFE less because the given NFE cannot be divided by 2 or 3.}
    \label{table:SEEDS-vs-1}
    
    \begin{center}
      {\small  \textsc{ \begin{tabular}{lrrrrrrrrc}
        \toprule
        Method $\backslash$ NFE & 9 & 12 & 15 & 21 & 51 & 60 & 90 & 102 & 200\\
        \midrule
        E-M & $^\star$310.22 & 227.16 & 207.97 &
        $^\dagger$120.44 & $^\dagger$29.25 & - & - & - &
        3.90\\
        An.-DDPM & $^\star$28.99 & 25.27 & 21.80 &
        $^\dagger$18.14 & $^\dagger$11.23 & - & - & - &
        6.51\\
        An.-DDIM & $^\star$15.62 & 13.90 & 12.29 &
        $^\dagger$10.45 & $^\dagger$6.13 & - & - & - &
        3.46\\
        DDIM & $^\star$10.85 & 9.99 & 7.78 & $^\dagger$6.64 &
        $^\dagger$5.23 & - & - & - & 4.78\\
        DPM-Solver & 6.92 & 4.20 & 3.05 & 2.82 & 2.72 & -
        & - & - &- \\
        SEEDS & 460.87 & 374.48 & 301.66 & 261.87 & 3.84 & 6.58 &
        \textbf{1.88} & 1.97 & -\\
        \bottomrule
      \end{tabular}}}
    \end{center}
  \end{center}

\end{table}

\begin{table}[ht]
  \begin{center}
    \caption{SEEDS-3 on CIFAR-10-cond-vp-continuous (using the baseline model in \cite{Karras2022edm}).}
    \label{table:SEEDS-vs-2}
    
    \begin{center}
      {\small  \textsc{ \begin{tabular}{lrrrrrrrrrr}
        \toprule
        Method $\backslash$ NFE & 15 & 21 & 30 & 60 & 90 & 120 & 129 & 150 & 165 & 180\\
        \midrule
        SEEDS & 239.2 & 167.5 & 131.1 & 25.06 & 3.19 & 2.17
        & \textbf{2.08} & 2.15 & 2.16 & 2.19 \\
        \bottomrule
      \end{tabular}}}
    \end{center}
  \end{center}

\end{table}

\begin{table}[ht]
  \begin{center}
    \caption{Tailored SEEDS-3 for the EDM-preconditioned pretrained model \cite{Karras2022edm} on ImageNet 64x64.}
    \label{table:SEEDS-vs-3}
    
    \begin{center}
      {\small  \textsc{ \begin{tabular}{lrrrrrrr}
        \toprule
        Method $\backslash$ NFE & 12 & 15 & 21 & 51 & 102 & 201 & 270\\
        \midrule
        SEEDS & 209.12 & 197.79 & 153.72 & 63.75 & 16.35 & 1.56
        & \textbf{1.38} \\
        \bottomrule
      \end{tabular}}}
    \end{center}
  \end{center}
\end{table}

\subsection{Run-time Comparison}

As a sanity check, we provide a run-time comparison table for our experiments on different datasets (see Tab. \ref{table:runtime}). One can see that the run-time is linear with respect to the NFE also for SEEDS, as the main advantage of the SETD method is to analytically compute the stochastic components in our solver, making their computation cost negligible. Some mild improvements about repetitive terms allowed our implementation of SEEDS to be slightly more effective than EDM and even DPM-Solver at same NFE.

\begin{table}[ht]
  \caption{\label{table:runtime} Run-time comparison (second/batch + std) on a single NVIDIA V100 of EDM, DPM-Solver and SEEDS. Discretely-trained models are run with the implementation based on the code from \cite{dpm-solver}. Continuously-trained models are run using the code from \cite{Karras2022edm}.}
  \begin{center}
    \begin{center}
      {\small \textsc{ \begin{tabular}{lrrrr}
        \hline
        Method $\backslash$ NFE & 9 & 21 & 51 & 90 \\
        \midrule
        \multicolumn{4}{l}{\textbf{CIFAR-10 32$\times$32 continuous (batch size = 128)}} \\
        EDM & 2.096({$\pm$}0.003) & 4.891({$\pm$}0.004) & 11.592({$\pm$}0.019) &
        21.213({$\pm$}0.009) \\
        DPM-Solver & 2.099({$\pm$}0.002) & 4.871({$\pm$}0.004) &
        11.841({$\pm$}0.014) & 20.888({$\pm$}0.020) \\
        SEEDS & 2.086({$\pm$}0.002) & 4.867({$\pm$}0.003) & 11.817({$\pm$}0.006) &
        20.896({$\pm$}0.028) \\
        \midrule
        \multicolumn{4}{l}{\textbf{FFHQ 64$\times$64 continuous (batch size = 128)}}\\
        EDM & 4.361({$\pm$}0.005) & 10.179({$\pm$}0.005) & 24.738({$\pm$}0.018) &
        44.145({$\pm$}0.016) \\
        DPM-Solver & 4.344({$\pm$}0.005) & 10.148({$\pm$}0.009) &
        24.637({$\pm$}0.007) & 43.526({$\pm$}0.029) \\
        SEEDS & 4.353({$\pm$}0.003) & 10.157({$\pm$}0.004) & 24.661({$\pm$}0.011) &
        43.537({$\pm$}0.013) \\
        \midrule
        \multicolumn{4}{l}{\textbf{ImageNet 64$\times$64 continuous (batch size = 128)}}\\
        EDM & 7.525({$\pm$}0.007) & 17.579({$\pm$}0.004) & 42.696({$\pm$}0.010) &
        76.175({$\pm$}0.024) \\
        DPM-Solver & 7.535({$\pm$}0.048) & 17.556({$\pm$}0.009) &
        42.629({$\pm$}0.018) & 75.222({$\pm$}0.026) \\
        SEEDS & 7.429({$\pm$}0.006) & 17.572({$\pm$}0.013) & 42.654({$\pm$}0.014) &
        75.245({$\pm$}0.034) \\
        \midrule
        \multicolumn{4}{l}{\textbf{CIFAR-10 32$\times$32 discrete (batch size = 128)}}\\
        DPM-Solver & 0.272({$\pm$}0.004) & 0.529({$\pm$}0.007) & 1.324({$\pm$}0.007)
        & 2.632({$\pm$}0.004) \\
        SEEDS & 0.261({$\pm$}0.002) & 0.523({$\pm$}0.002) & 1.299({$\pm$}0.003) &
        2.598({$\pm$}0.003) \\
        \midrule
        \multicolumn{4}{l}{\textbf{CelebA 64$\times$64 discrete (batch size = 128)}}\\
        DPM-Solver & 0.936({$\pm$}0.004) & 1.812({$\pm$}0.003) & 4.558({$\pm$}0.008)
        & 9.108({$\pm$}0.008) \\
        SEEDS & 0.912({$\pm$}0.002) & 1.808({$\pm$}0.004) & 4.526({$\pm$}0.002) &
        9.033({$\pm$}0.004) \\
        \midrule
        \multicolumn{4}{l}{\textbf{LSUN-Bedroom 256$\times$256 discrete (batch size = 32)}}\\
        DPM-Solver & 5.566({$\pm$}0.019) & 11.124({$\pm$}0.018) &
        27.815({$\pm$}0.031) & 55.648({$\pm$}0.021) \\
        SEEDS & 5.498({$\pm$}0.008) & 11.001({$\pm$}0.022) & 27.503({$\pm$}0.029) &
        54.842({$\pm$}0.020) \\
        \midrule
        \multicolumn{4}{l}{\textbf{LDM-CelebAHQ 256$\times$256 (batch size = 64)}}\\
        DPM-Solver & 8.648({$\pm$}0.013) & 17.492({$\pm$}0.013) &
        39.569({$\pm$}0.015) & 68.205({$\pm$}0.017) \\
        SEEDS & 8.652({$\pm$}0.005) & 17.469({$\pm$}0.010) & 39.524({$\pm$}0.010) &
        68.240({$\pm$}0.026) \\
        \midrule
        \multicolumn{4}{l}{\textbf{Stable Diffusion 512$\times$512 (batch size = 16)}}\\
        DPM-Solver & 19.598({$\pm$}0.070) & 40.679({$\pm$}0.107) &
        92.914({$\pm$}0.079) & 163.902({$\pm$}0.116) \\
        SEEDS & 19.656({$\pm$}0.106) & 40.781({$\pm$}0.136) & 93.409({$\pm$}0.123) &
        161.451({$\pm$}0.518) \\
        \bottomrule
      \end{tabular}}}
    \end{center}
  \end{center}
\end{table}

\subsection{Hardware configuration} 
During the experiments, we used three Linux-based servers with 60GB memory each and 4 GPUs NVIDIA V100 32GB, 4 GPUs NVIDIA V100 16GB, and 2 GPUs NVIDIA V100 32GB, respectively.  Table \ref{table-hardware} shows the detail of the configuration utilized for each experiment. We noted that when using the 4 GPUs configuration, the FID results were slightly lower (around 2\%), even after using a stacked fixed random seed. We run the experiments multiple times and reported the minimum FID value each time.

\begin{table}[!t]
\caption{Details of GPUs utilized during the experiments.}
\label{table-hardware}
\begin{center}
\begin{sc}
\begin{tabular}{llcc}
    \toprule
  Experiment & Model & Number & GPU Size\\
  \midrule
  CIFAR-10 continuous & Nvidia V100 & 4 & 16 GB\\
  CIFAR-10 discrete & Nvidia V100  &  2 & 32 GB\\
  FFHQ64  & Nvidia V100 & 4  & 16 GB \\
  CelebA64  & Nvidia A100 & 2 & 16 GB \\
  ImageNet64 & Nvidia V100 & 4 & 32 GB \\
  \bottomrule
\end{tabular}
\end{sc}
\end{center}
\end{table}

\subsection{Licences}
Pre-trained models:  
\begin{itemize}
  \item CIFAR-10 models by \cite{song2020score}: Apache V2.0 license
   \item FFHQ-64 model by \cite{Karras2022edm}: Creative Commons Attribution-NonCommercial-ShareAlike 4.0 International License.
    \item CelebA-64 model by \cite{song2020denoising}: Apache V2.0 license
    \item ImageNet-64 model by \cite{Karras2022edm}: Apache V2.0 license
    \item Inception-v3 model by \cite{szegedy2016rethinking}: Apache V2.0 license
\end{itemize}

\subsection{Supporting samples}\label{app:figs}
In this subsection, we report the image grids supporting our claims in section \ref{sec:exps}.

\begin{figure}[!ht]
\begin{minipage}{.05\linewidth}
    NP\vspace{3cm}\\
    DP
\end{minipage}
\begin{minipage}{.95\linewidth}
    \centering

        \makebox[0.16\linewidth]{\footnotesize NFE=9 }\hfill%
\makebox[0.16\linewidth]{\hspace*{-.30em}\footnotesize NFE=21}\hfill%
\makebox[0.16\linewidth]{\hspace*{-.15em}\footnotesize NFE=30}\hfill
\makebox[0.16\linewidth]{\footnotesize NFE=90}\hfill%
\makebox[0.16\linewidth]{\hspace*{-.30em}\footnotesize NFE=129}\hfill%
\makebox[0.16\linewidth]{\hspace*{-.15em}\footnotesize NFE=165}\hfill
    \vspace{.3cm}
    \noindent\resizebox{\textwidth}{!}{
        \includegraphics{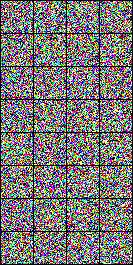}\hspace{.1cm}
        \includegraphics{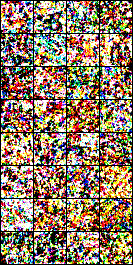}\hspace{.1cm}
        \includegraphics{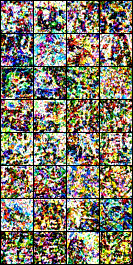}\hspace{.1cm}
        \includegraphics{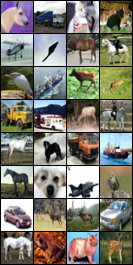}\hspace{.1cm}
        \includegraphics{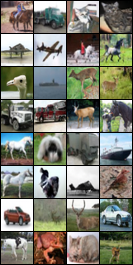}\hspace{.1cm}
        \includegraphics{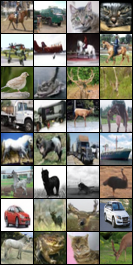}
    }

    \vspace{.1cm}
    \noindent\resizebox{\textwidth}{!}{
        \includegraphics{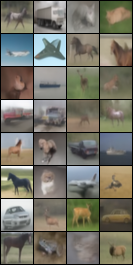}\hspace{.1cm}
        \includegraphics{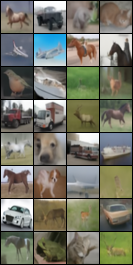}\hspace{.1cm}
        \includegraphics{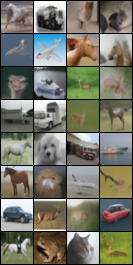}\hspace{.1cm}
        \includegraphics{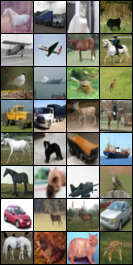}\hspace{.1cm}
        \includegraphics{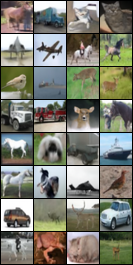}\hspace{.1cm}
        \includegraphics{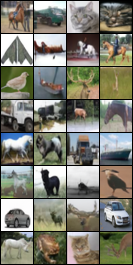}
    }

    \caption{\label{fig:grid-cifar10-cond}Samples on CIFAR-10 from low to high NFEs by SEEDS-3 in Noise Prediction (NP) and Data Prediction (DP) modes, using conditional VP continuous baseline model \cite{Karras2022edm}.}  
\end{minipage}
\end{figure}

\begin{figure}[!ht]
\begin{minipage}{.05\linewidth}
    NP\vspace{3cm}\\
    DP
\end{minipage}
\begin{minipage}{.95\linewidth}
    \centering

        \makebox[0.16\linewidth]{\footnotesize NFE=9 }\hfill%
\makebox[0.16\linewidth]{\hspace*{-.30em}\footnotesize NFE=21}\hfill%
\makebox[0.16\linewidth]{\hspace*{-.15em}\footnotesize NFE=30}\hfill
\makebox[0.16\linewidth]{\footnotesize NFE=90}\hfill%
\makebox[0.16\linewidth]{\hspace*{-.30em}\footnotesize NFE=129}\hfill%
\makebox[0.16\linewidth]{\hspace*{-.15em}\footnotesize NFE=165}\hfill
    \vspace{.3cm}
    \noindent\resizebox{\textwidth}{!}{
        \includegraphics{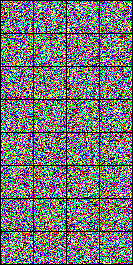}\hspace{.1cm}
        \includegraphics{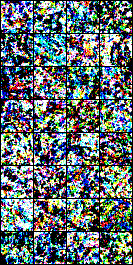}\hspace{.1cm}
        \includegraphics{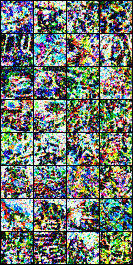}\hspace{.1cm}
        \includegraphics{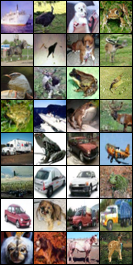}\hspace{.1cm}
        \includegraphics{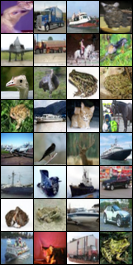}\hspace{.1cm}
        \includegraphics{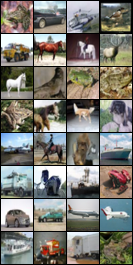}
    }

    \vspace{.1cm}
    \noindent\resizebox{\textwidth}{!}{
        \includegraphics{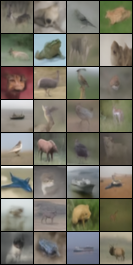}\hspace{.1cm}
        \includegraphics{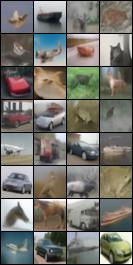}\hspace{.1cm}
        \includegraphics{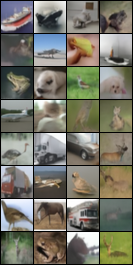}\hspace{.1cm}
        \includegraphics{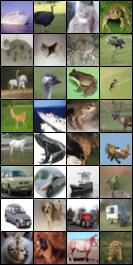}\hspace{.1cm}
        \includegraphics{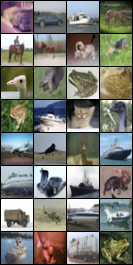}\hspace{.1cm}
        \includegraphics{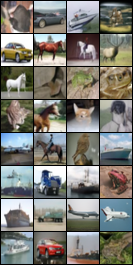}
    }

    \caption{\label{fig:grid-cifar10-uncond}Samples on CIFAR-10 from low to high NFEs by SEEDS-3 in Noise Prediction (NP) and Data Prediction (DP) modes, using unconditional VP continuous baseline model \cite{Karras2022edm}. }
\end{minipage}
\end{figure}

\begin{figure}[!ht]
    \begin{minipage}{.05\linewidth}
    \rotatebox{90}{SEEDS-3}\vspace{2.5cm}\\
    \rotatebox{90}{DPM-Solver-3}
\end{minipage}
\begin{minipage}{.95\linewidth}
    \centering

        \makebox[0.16\linewidth]{\footnotesize NFE=9 }\hfill%
\makebox[0.16\linewidth]{\hspace*{-.30em}\footnotesize NFE=21}\hfill%
\makebox[0.16\linewidth]{\hspace*{-.15em}\footnotesize NFE=30}\hfill
\makebox[0.16\linewidth]{\footnotesize NFE=60 }\hfill%
\makebox[0.16\linewidth]{\hspace*{-.30em}\footnotesize NFE=90}\hfill%
\makebox[0.16\linewidth]{\hspace*{-.15em}\footnotesize NFE=150}\hfill
    \vspace{.3cm}
    \noindent\resizebox{\textwidth}{!}{
        \includegraphics{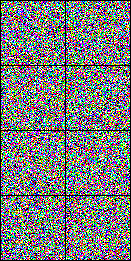}\hspace{.1cm}
        \includegraphics{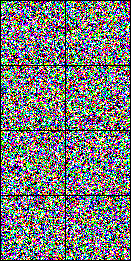}\hspace{.1cm}
        \includegraphics{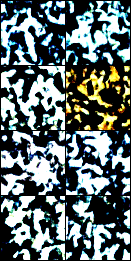}\hspace{.1cm}
        \includegraphics{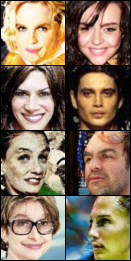}\hspace{.1cm}
        \includegraphics{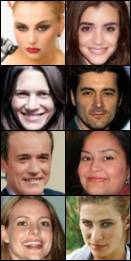}\hspace{.1cm}
        \includegraphics{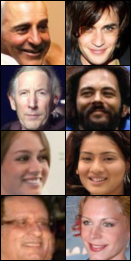}
    }

    \vspace{.1cm}
    \noindent\resizebox{\textwidth}{!}{
        \includegraphics{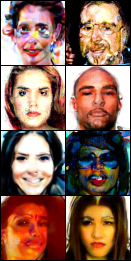}\hspace{.1cm}
        \includegraphics{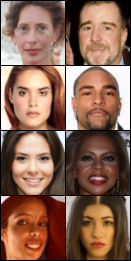}\hspace{.1cm}
        \includegraphics{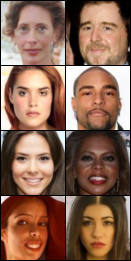}\hspace{.1cm}
        \includegraphics{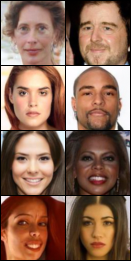}\hspace{.1cm}
        \includegraphics{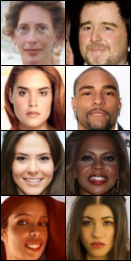}\hspace{.1cm}
        \includegraphics{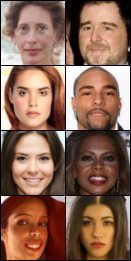}
    }
    \caption{\label{fig:grid-celeba64}Samples on CelebA-64 from low to high NFEs by SEEDS-3 and DPM-Solver-3, using pre-trained model from \cite{song2020denoising}. }
\end{minipage}
\end{figure}

\begin{figure}[!ht]
\begin{minipage}{.05\linewidth}
    NP\vspace{3cm}\\
    DP
\end{minipage}
\begin{minipage}{.95\linewidth}
    \centering

        \makebox[0.16\linewidth]{\footnotesize NFE=9 }\hfill%
\makebox[0.16\linewidth]{\hspace*{-.30em}\footnotesize NFE=21}\hfill%
\makebox[0.16\linewidth]{\hspace*{-.15em}\footnotesize NFE=30}\hfill
\makebox[0.16\linewidth]{\footnotesize NFE=60 }\hfill%
\makebox[0.16\linewidth]{\hspace*{-.30em}\footnotesize NFE=90}\hfill%
\makebox[0.16\linewidth]{\hspace*{-.15em}\footnotesize NFE=150}\hfill
    \vspace{.3cm}
    \noindent\resizebox{\textwidth}{!}{
        \includegraphics{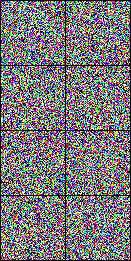}\hspace{.1cm}
        \includegraphics{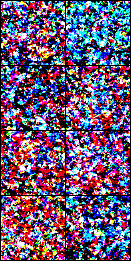}\hspace{.1cm}
        \includegraphics{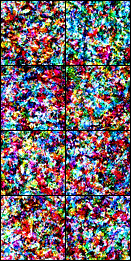}\hspace{.1cm}
        \includegraphics{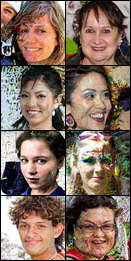}\hspace{.1cm}
        \includegraphics{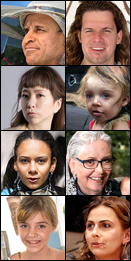}\hspace{.1cm}
        \includegraphics{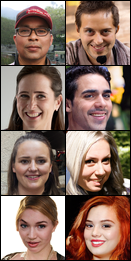}
    }

    \vspace{.1cm}
    \noindent\resizebox{\textwidth}{!}{
        \includegraphics{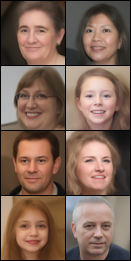}\hspace{.1cm}
        \includegraphics{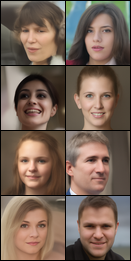}\hspace{.1cm}
        \includegraphics{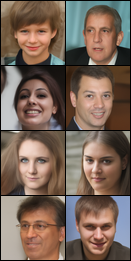}\hspace{.1cm}
        \includegraphics{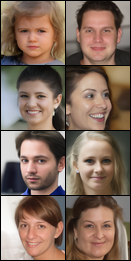}\hspace{.1cm}
        \includegraphics{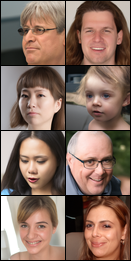}\hspace{.1cm}
        \includegraphics{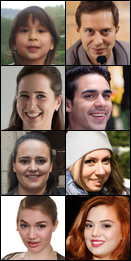}
    }
    \caption{\label{fig:grid-ffhq64}Samples on FFHQ-64 from low to high NFEs by SEEDS-3 in Noise Prediction (NP) and Data Prediction (DP) modes, using unconditional VP continuous baseline model \cite{Karras2022edm}. }
\end{minipage}
\end{figure}

\begin{figure}[!ht]
\begin{minipage}{.05\linewidth}
    NP\vspace{3cm}\\
    DP
\end{minipage}
\begin{minipage}{.95\linewidth}
    \centering

        \makebox[0.16\linewidth]{\footnotesize NFE=30 }\hfill%
\makebox[0.16\linewidth]{\hspace*{-.30em}\footnotesize NFE=60}\hfill%
\makebox[0.16\linewidth]{\hspace*{-.15em}\footnotesize NFE=90}\hfill
\makebox[0.16\linewidth]{\footnotesize NFE=150 }\hfill%
\makebox[0.16\linewidth]{\hspace*{-.30em}\footnotesize NFE=210}\hfill%
\makebox[0.16\linewidth]{\hspace*{-.15em}\footnotesize NFE=270}\hfill
    \vspace{.3cm}
    \noindent\resizebox{\textwidth}{!}{
        \includegraphics{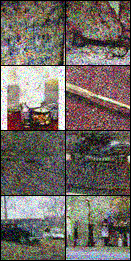}\hspace{.1cm}
        \includegraphics{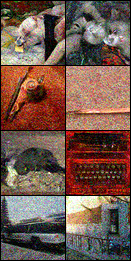}\hspace{.1cm}
        \includegraphics{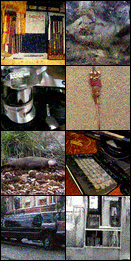}\hspace{.1cm}
        \includegraphics{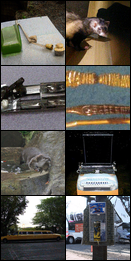}\hspace{.1cm}
        \includegraphics{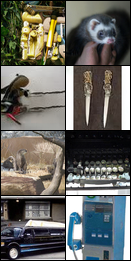}\hspace{.1cm}
        \includegraphics{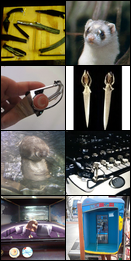}\hspace{.1cm}
    }

    \vspace{.1cm}
    \noindent\resizebox{\textwidth}{!}{
        \includegraphics{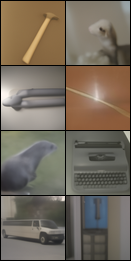}\hspace{.1cm}
        \includegraphics{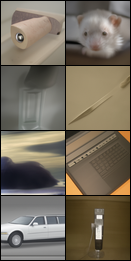}\hspace{.1cm}
        \includegraphics{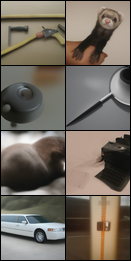}\hspace{.1cm}
        \includegraphics{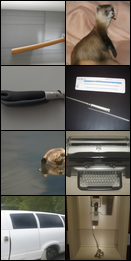}\hspace{.1cm}
        \includegraphics{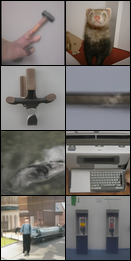}\hspace{.1cm}
        \includegraphics{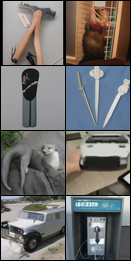}
    }

    \caption{\label{fig:grid-imagenet64-optimized}Samples on ImageNet-64 from low to high NFEs in Noise Prediction (NP) and Data Prediction (DP) modes, using conditional EDM optimized model \cite{Karras2022edm}. }
\end{minipage}
\end{figure}

\begin{figure}[!ht]
\begin{minipage}{.05\linewidth}
    NP\vspace{3cm}\\
    DP
\end{minipage}
\begin{minipage}{.95\linewidth}
    \centering

        \makebox[0.16\linewidth]{\footnotesize NFE=30 }\hfill%
\makebox[0.16\linewidth]{\hspace*{-.30em}\footnotesize NFE=60}\hfill%
\makebox[0.16\linewidth]{\hspace*{-.15em}\footnotesize NFE=90}\hfill
\makebox[0.16\linewidth]{\footnotesize NFE=150 }\hfill%
\makebox[0.16\linewidth]{\hspace*{-.30em}\footnotesize NFE=210}\hfill%
\makebox[0.16\linewidth]{\hspace*{-.15em}\footnotesize NFE=270}\hfill
    \vspace{.3cm}
    \noindent\resizebox{\textwidth}{!}{
        \includegraphics{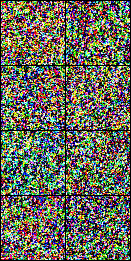}\hspace{.1cm}
        \includegraphics{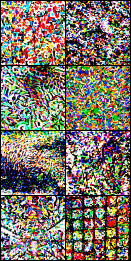}\hspace{.1cm}
        \includegraphics{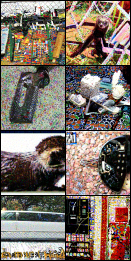}\hspace{.1cm}
        \includegraphics{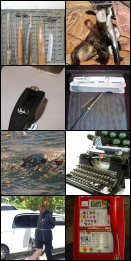}\hspace{.1cm}
        \includegraphics{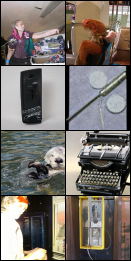}\hspace{.1cm}
        \includegraphics{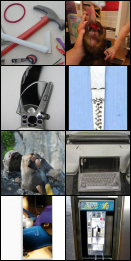}
    }

    \vspace{.1cm}
    \noindent\resizebox{\textwidth}{!}{
        \includegraphics{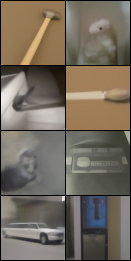}\hspace{.1cm}
        \includegraphics{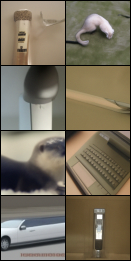}\hspace{.1cm}
        \includegraphics{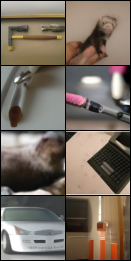}\hspace{.1cm}
        \includegraphics{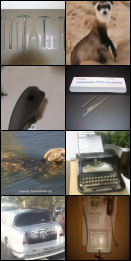}\hspace{.1cm}
        \includegraphics{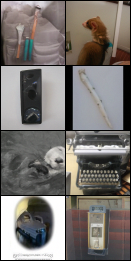}\hspace{.1cm}
        \includegraphics{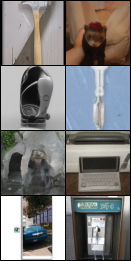}
    }

    \caption{\label{fig:grid-imagenet64-baseline}Samples on ImageNet-64 from low to high NFEs in Noise Prediction (NP) and Data Prediction (DP) modes, using conditional EDM baseline model \cite{Karras2022edm}. }
\end{minipage}
\end{figure}

\begin{figure}[!ht]
 \centering
\includegraphics[width=0.8\linewidth, height=0.8\linewidth]{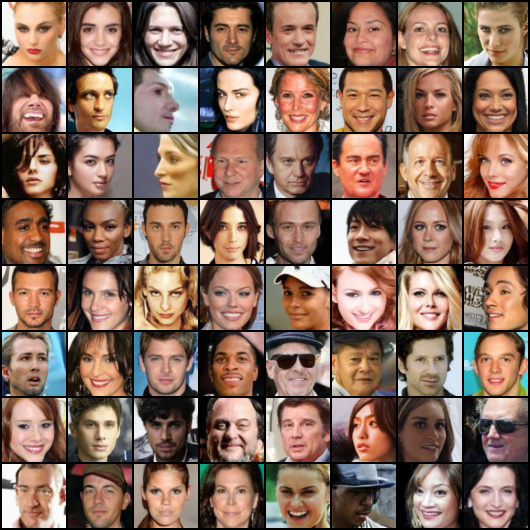}
\caption{\label{fig:grid-celeba-uncon-dis}Example of samples on CelebA-64 generated by SEEDS-3 in 90 NFEs, using pre-trained model from \cite{song2020denoising}. }
\vspace{-.2in}
\end{figure}

 \begin{figure*}[htb]
 \centering
   \subfigure[DPM-Solver-3 at 44 NFE.]{\includegraphics[width=0.49\linewidth,]{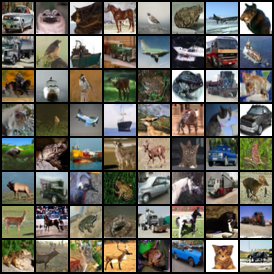}} 
   \subfigure[SEEDS-3 at 201 NFE.]{\includegraphics[width=0.49\linewidth]{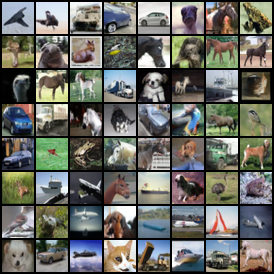}}
\caption{\label{fig:grid-cifar-uncon-dis}Visual sample quality comparison between DPM-Solver-3 and SEEDS-3 using their optimal settings and unconditional CIFAR-10 discrete model \cite{song2020score}. }
\vspace{-.2in}
 \end{figure*}

 \begin{figure*}[htb]
 \centering
   \subfigure[DPM-Solver-3 at 80 NFE.]{\includegraphics[width=0.32\linewidth]{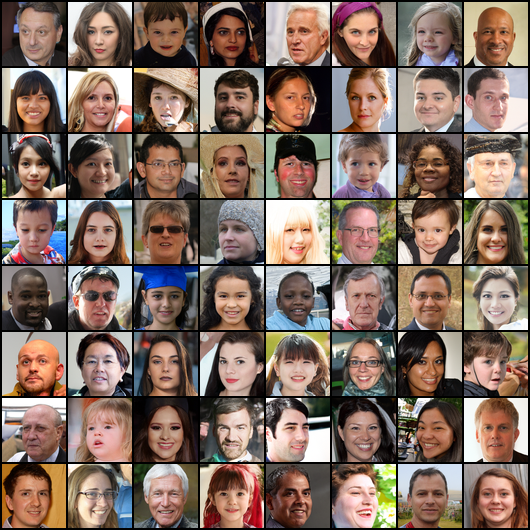}} 
   \subfigure[EDM at 80 NFE.]
   {\includegraphics[width=0.32\linewidth]{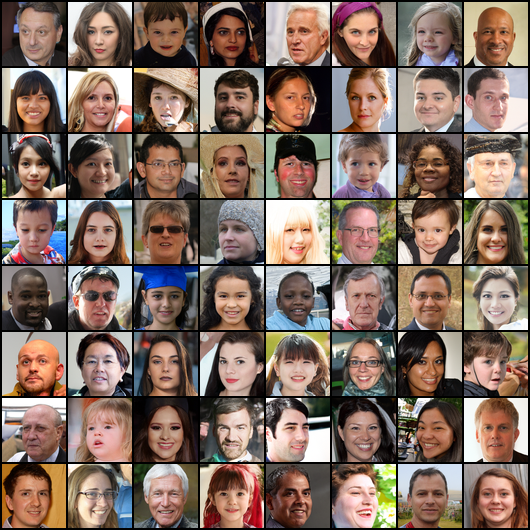}}
    \subfigure[SEEDS-3 at 150 NFE.]
   {\includegraphics[width=0.32\linewidth]{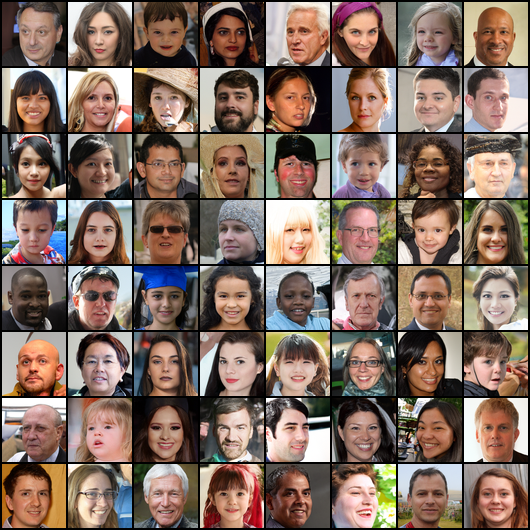}}
\caption{\label{fig:grid-ffqh-cont-dp-vs-np}Visual sample quality comparison between DPM-Solver-3, EDM and SEEDS-3 using their optimal settings and unconditional VP FFHQ-64 continuous model \cite{Karras2022edm}. }
\vspace{-.2in}
 \end{figure*}

\begin{figure*}[!b]
 \centering
   \subfigure[DPM-Solver-3 at 30 NFE.]{\includegraphics[width=0.32\linewidth]{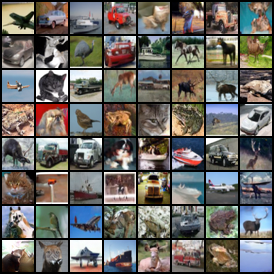}} 
   \subfigure[EDM at 90 NFE.]
   {\includegraphics[width=0.32\linewidth]{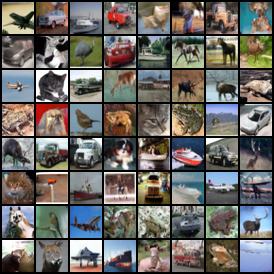}}
    \subfigure[SEEDS-3 at 150 NFE.]
   {\includegraphics[width=0.32\linewidth]{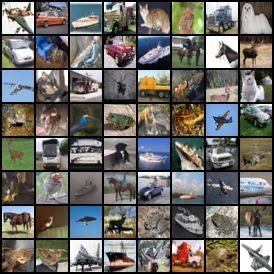}}
\caption{\label{fig:grid-cifar-cont-dp-vs-np}Visual sample quality comparison between DPM-Solver-3, EDM and SEEDS-3 using their optimal settings and conditional VP CIFAR-10 baseline continuous model \cite{Karras2022edm}. }
 \end{figure*}

\begin{figure}[!ht]
    \centering
    \includegraphics[width=0.85\linewidth]{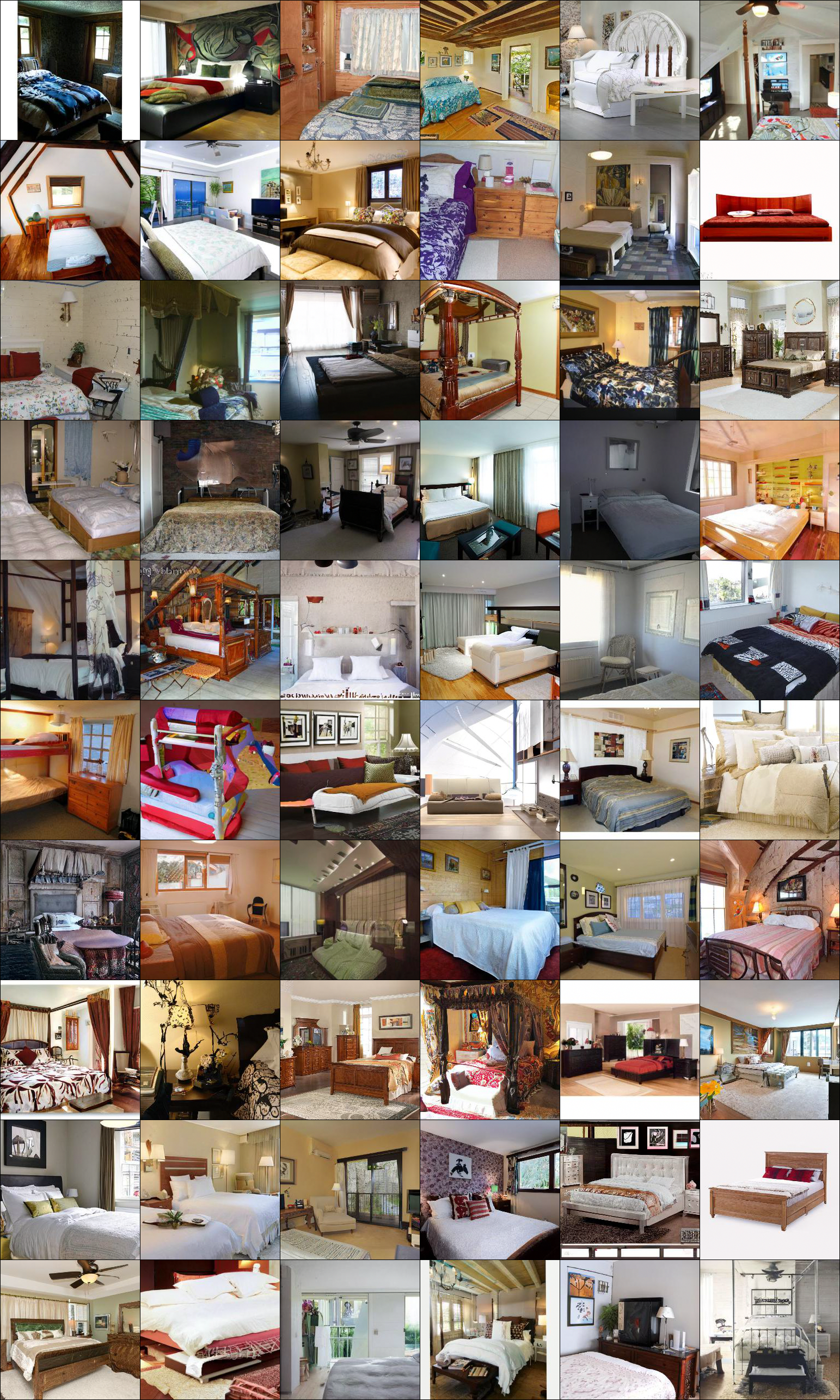}
    \caption{Example of samples on LSUN-Bedroom-256 by SEEDS-3 in 201 NFEs, using pre-trained model from \cite{dhariwal2021diffusion}.}
    \label{fig:grid-lsun-bedroom}
\end{figure}

 \begin{figure*}[t]
    \centering
     \includegraphics[width=.7\textwidth]{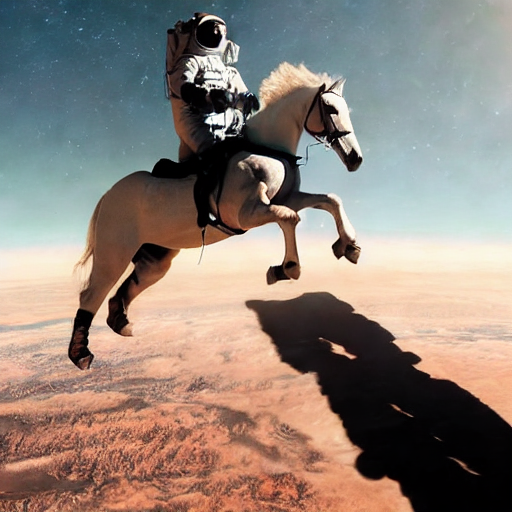}
     \caption{\label{fig:SD-512x512}StableDiffusion-$512\times 512$ by SEEDS-$1$ with prompt ``\textit{High quality photo of an astronaut riding a horse in space}''  and NFE $=90$.}
 \end{figure*}

\end{document}